\titlespacing*{\section}{0pt}{0.1ex plus .025ex minus .05ex}{0.05ex plus .025ex minus .05ex}
\titlespacing*{\subsection}{0pt}{0.1ex plus .05ex minus .05ex}{0.05ex plus .05ex minus .05ex}
\setlist[itemize]{leftmargin=*,itemsep=0em}
\setlist[enumerate]{leftmargin=*,itemsep=0em}
\crefname{figure}{Fig.}{Figs.}
\crefname{definition}{Defn.}{Defns.}
\crefname{corollary}{Corollary}{Corollaries}
\crefname{lemma}{Lemma}{Lemmas}
\crefname{proposition}{Prop.}{Props.}
\crefname{theorem}{Thm.}{Thms.}
\crefname{remark}{Remark}{Remarks}
\crefname{principle}{Principle}{Principles}
\crefname{lemma}{Lemma}{Lemmas}
\crefname{table}{Tab.}{Tabs.}
\crefname{section}{\S}{\S\S}
\crefname{subsection}{\S}{\S\S}
\crefname{subsubsection}{\S}{\S\S}
\definecolor{lightgray}{RGB}{196,196,196}
\definecolor{lightblue}{RGB}{34,94,196}
\definecolor{darkred}{RGB}{133,46,43}
\definecolor{darkgreen}{RGB}{0,163,0}
\definecolor{blue_cblind}{HTML}{1A85FF}
\definecolor{red_cblind}{HTML}{D41159}
\newcommand{\luigi}[1]{{\color{darkgreen} ~\textbf{Luigi:} #1}}
\newcommand{\julius}[1]{\textcolor{purple}{~\textbf{Julius:} #1}}
\newcommand{\tochange}[1]{{\color{black} { #1} }}
\newcommand{\paul}[1]{\textcolor{blue}{~\textbf{Paul:} #1}}
\renewcommand{\paul}[1]{}
\newcommand{\printfnsymbol}[1]{%
  \textsuperscript{\@fnsymbol{#1}}%
}
\title{Independent mechanism analysis, a new concept?}
\author{
Luigi Gresele\thanks{Equal contribution. Code available at: \href{https://github.com/lgresele/independent-mechanism-analysis}{https://github.com/lgresele/independent-mechanism-analysis}}\, $^{1}$
\And Julius von K\"ugelgen\footnotemark[1]\, $^{1,2}$
\And Vincent Stimper $^{1,2}$
\AND Bernhard Sch\"olkopf $^{1}$
\quad\quad \quad  Michel Besserve $^{1}$ \\[0.5em]
$^1$ Max Planck Institute for Intelligent Systems, T\"ubingen, Germany
\quad 
$^2$ University of Cambridge \\[.25em]
\texttt{\{luigi.gresele,jvk,vincent.stimper,bs,besserve\}@tue.mpg.de}
}
\begin{document}

\maketitle
\vspace{-1.5em}
\begin{abstract}
\vspace{-0.5em}
Independent component analysis provides a principled framework for %
unsupervised representation learning, with solid theory on the identifiability %
of the 
latent code that generated the data, given only observations of mixtures thereof. 
Unfortunately, when the mixing is nonlinear, the model is provably nonidentifiable, since statistical independence alone does not sufficiently constrain the problem. 
Identifiability can be recovered in settings where additional, typically observed variables are included in the generative process. %
We investigate an alternative path and consider instead including assumptions reflecting the principle of \textit{independent causal mechanisms} exploited in the field of causality. 
Specifically, our approach is motivated by thinking of each source as independently influencing  
the mixing process.
This gives rise to a framework which we term independent mechanism analysis. We provide theoretical and empirical evidence that our approach circumvents a number of nonidentifiability issues arising in nonlinear
blind source separation.

\end{abstract}
\setcounter{footnote}{0} 

\vspace{-.5em}

\section{Introduction}
\label{sec:introduction}
One of the goals of unsupervised learning is to uncover properties of the data generating process, such as latent structures
giving rise to the observed data.
Identifiability~\cite{lehmann2006theory} 
formalises this desideratum: 
under suitable assumptions, a model learnt from observations should match the ground truth, 
up to 
well-defined ambiguities.
Within
representation learning, identifiability has been 
studied mostly in the context of
independent component analysis
(ICA)~\cite{comon1994independent, ICAbook},
which assumes that the observed data~$\xb$ results from mixing
unobserved \textit{independent} random variables~$s_i$ referred to as \textit{sources}. The aim is to recover the sources based on the observed mixtures alone, also termed \textit{blind source separation}~(BSS).
A major obstacle to BSS 
is that, in the
nonlinear case, independent component estimation does not necessarily correspond to recovering the \textit{true} sources: %
it is 
possible to give counterexamples where the observations are transformed into components $y_i$ which %
are independent, yet still mixed with respect to %
the true sources $s_i$~\cite{darmois1951construction, hyvarinen1999nonlinear, taleb1999source}.
In other words, nonlinear ICA is not identifiable.

In order to achieve identifiability, a growing body of research postulates 
additional supervision or structure in the data generating process, often in the form of \textit{auxiliary variables}~\cite{hyvarinen2016unsupervised, hyvarinen2017nonlinear, hyvarinen2019nonlinear, gresele2020incomplete, halva2020hidden}. 
In the present work, we investigate a different route to identifiability by drawing inspiration from the field of \textit{causal inference}~\cite{pearl2009causality,peters2017elements} which has provided useful insights for a number of machine learning tasks, including semi-supervised~\cite{scholkopf2012causal,kugelgen2020semi}, transfer~\cite{kugelgen2019semi,greenfeld2020robust,subbaswamy2019preventing,rothenhausler2018anchor,arjovsky2019invariant,heinze2021conditional,magliacane2018domain,gong2016domain,pearl2014external,rojas2018invariant,ZhaSchMuaWan13},
reinforcement~\cite{bareinboim2015bandits,zhang2019near,lu2018deconfounding,lu2020sample,buesing2018woulda,lee2018structural,forney2017counterfactual,goyal2019recurrent}, and unsupervised~\cite{parascandolo2018learning,besserve2018group,von2020towards,scholkopf2021toward,besserve2020counterfactuals,shen2020disentangled,leeb2020structural,von2021self} learning.
To this end, we \textit{interpret the ICA mixing as a causal process} and apply the principle of independent causal mechanisms (ICM) which postulates that the generative process consists of independent modules which do not share information~\cite{janzing2010causal,scholkopf2012causal,peters2017elements}.
In this context, ``independent'' does not refer to \textit{statistical} independence 
of random variables, but rather to %
the notion that 
the distributions and functions composing the generative process are chosen independently by Nature \cite{janzing2010causal,JanChaSch16}.
While a formalisation of ICM~\cite{janzing2010causal, lemeire2013replacing} in terms of algorithmic (Kolmogorov) complexity~\cite{kolmogorov1963tables} exists, it is not computable, and hence applying ICM in practice requires 
assessing such non-statistical independence with suitable domain specific criteria \cite{SteJanSch10}.
The goal of our work is thus to \textit{constrain the nonlinear ICA problem, in particular the mixing function, via suitable ICM measures}, thereby ruling out common counterexamples to identifiability which intuitively violate the ICM principle.

\begin{figure}
    \vspace{-.5em}
    \begin{subfigure}[b]{0.5\textwidth}
      \centering
      \includegraphics[width=\textwidth,
      ]{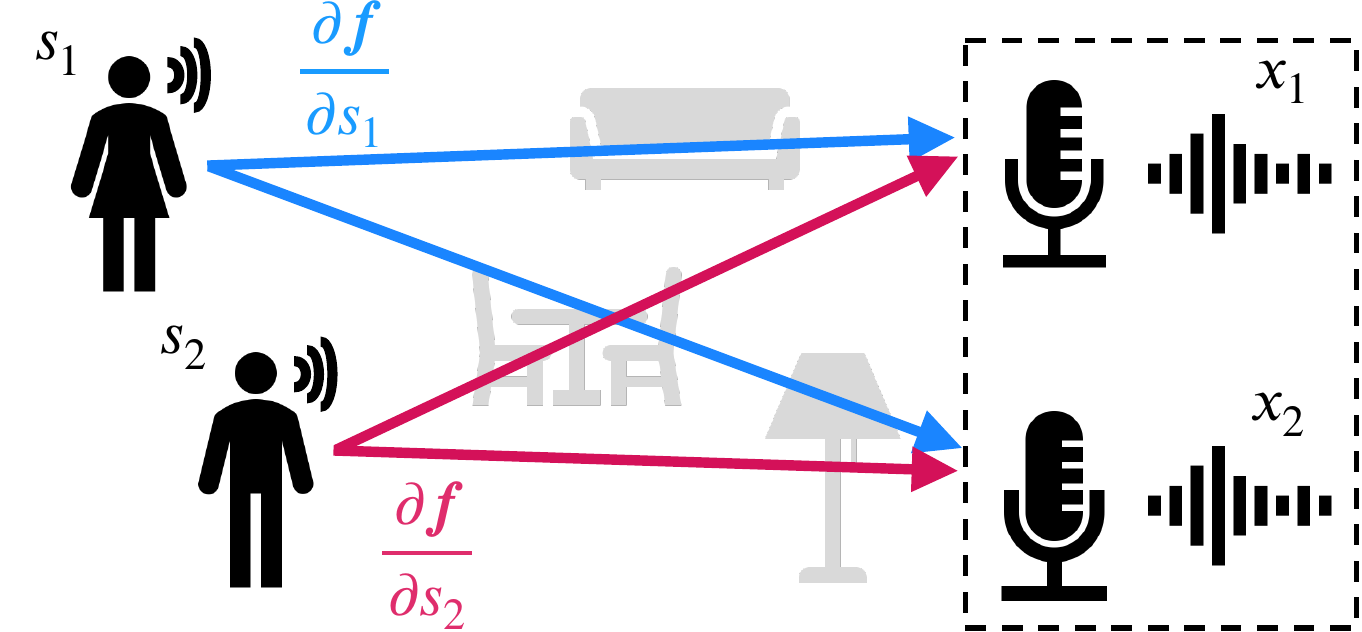}
      \vspace{-1.5em}
    \end{subfigure}%
    \begin{subfigure}[b]{0.5\textwidth}
        \centering
        \begin{tikzpicture}
            \fill[blue_cblind!5!white] (0,0) rectangle (2,2.6);
            \draw[->,thick,  color=blue_cblind] (0,0) -- (2,0) node[anchor= west] {$\frac{\partial \fb}{\partial s_1}$};
            \draw[->,thick,  color=red_cblind] (0,0) -- (0,2.6) node[anchor= east] {$\frac{\partial \fb}{\partial s_2}$} ;
            \draw[-,thick,  dashed, color=blue_cblind] (0,2.6) -- (2,2.6);
            \draw[-,thick,  dashed, color=red_cblind] (2,0) -- (2,2.6);
            \draw[] (1,1.3) node[] {\mbox{\small$\norm{\frac{\partial \fb}{\partial s_1}}\norm{\frac{\partial \fb}{\partial s_2}}$}};
        \end{tikzpicture}%
        \begin{tikzpicture}
            \fill[blue_cblind!5!white] (0,0) rectangle (2,2.6);
            \draw[->, thick, color=blue_cblind] (0,0) -- (2,0) 
            node[anchor= west] {$\frac{\partial \fb}{\partial s_1}$}
            ;
            \draw[->, thick, color=red_cblind] (0,0) -- (1,2.4) 
            node[anchor= south east] {$\frac{\partial \fb}{\partial s_2}$}
            ;
            \draw[-, thick, dashed, color=blue_cblind] (1,2.4) -- (3,2.4);
            \draw[-, thick, dashed, color=red_cblind] (2,0) -- (3,2.4);
            \draw[] (1.5,1.2) node[] {\mbox{\small $|\Jb_\fb|$}};
        \end{tikzpicture}
        \vspace{-.5em}
    \end{subfigure}%
    \caption{\small  
    \textit{(Left)} 
    For the cocktail party problem, the ICM principle \textit{as traditionally understood} would say that the content of speech $p_\sb$ is independent of the mixing or recording process $\fb$ (microphone placement, room acoustics).
    IMA refines, or extends, this idea \textit{at the level of the mixing function} by postulating that the contributions~$\nicefrac{\partial \fb}{\partial s_i}$ of each source to $\fb$, as captured
    by the speakers' positions relative to the recording process, should not be fine-tuned to each other.
    \textit{(Right)} We formalise this independence between the~$\nicefrac{\partial \fb}{\partial s_i}$, which are the columns of the Jacobian~$\Jb_\fb$, as an \textit{orthogonality condition}: the absolute value of the determinant~$|\Jb_\fb|$, i.e., the volume of the parallelepiped spanned by $\nicefrac{\partial \fb}{\partial s_i}$, should decompose as the product of the norms of the~$\nicefrac{\partial \fb}{\partial s_i}$. 
    }
    \label{fig:intuition}
    \vspace{-.75em}
\end{figure}

Traditionally, ICM criteria have been developed for 
causal discovery, where \textit{both  cause and effect are observed}~\cite{daniuvsis2010inferring,janzing2012information,janzing2010telling,zscheischler2011testing}.
They enforce an independence between (i) the cause (source) distribution and (ii) the conditional or mechanism (mixing function) generating the effect (observations), and thus rely on the fact that the \textit{observed} cause distribution is informative. 
As we will show, this renders them insufficient for nonlinear ICA, since the constraints they impose 
are satisfied by common counterexamples to identifiability.
With this in mind, we introduce a new way to characterise or \textit{refine} the ICM principle for
unsupervised representation learning tasks such as nonlinear ICA. 

\textbf{Motivating example.}
To build intuition, we turn to a famous example of ICA and BSS: the cocktail party problem, illustrated in~\cref{fig:intuition}~\textit{(Left)}.
Here, a number of conversations are happening in parallel, and the task is to recover the individual voices $s_i$ from the recorded mixtures $x_i$.
The mixing or recording process~$\fb$ is primarily determined by the
room acoustics and the locations at which microphones are placed.
Moreover, each speaker influences the recording through their positioning in the room, and we may think of this influence as $\nicefrac{\partial \fb}{\partial s_i}$.
Our independence postulate then amounts to stating that the speakers' positions are not fine-tuned to the
room acoustics and microphone placement, or to each other, i.e., \textit{the contributions $\nicefrac{\partial \fb}{\partial s_i}$ should be independent (in a non-statistical sense).%
}%
\footnote{For additional intuition and possible violations in the context of the cocktail party problem, see~\Cref{app:violations_icm_ima}.}

\textbf{Our approach.}
We formalise this notion of
independence between the contributions~$\nicefrac{\partial \fb}{\partial s_i}$ of each source 
to the mixing process
(i.e., the columns of the Jacobian matrix $\Jb_\fb$ of partial derivatives) as an orthogonality condition, see~\cref{fig:intuition}~\textit{(Right)}.  
Specifically, the absolute value of the determinant $|\Jb_\fb|$, which describes the local change in infinitesimal volume induced by mixing the sources, should factorise or decompose as the product of the norms of its columns. 
This can be seen as a decoupling of the local influence of each partial derivative in the pushforward operation (mixing function) mapping the source distribution to the observed one, and gives rise to a novel framework which we term independent mechanism analysis (IMA).
IMA can be understood as a refinement of the ICM principle that applies the idea of independence of mechanisms 
at the level of the mixing function.

\textbf{Contributions.} The structure and contributions of this paper can be summarised as follows:
\begin{itemize}[topsep=-3pt,itemsep=2pt]
    \item we review well-known obstacles to identifiability of nonlinear ICA~(\cref{sec:background_ICA}), as well as existing ICM criteria~(\cref{sec:background_causality}), and show that the latter do not sufficiently constrain nonlinear ICA~(\cref{sec:unsuitability_of_existing_ICM_measures});
    \item we propose a more suitable ICM criterion 
    for unsupervised representation learning which gives rise to a new framework that we term independent mechanism analysis (IMA) (\cref{sec:IMA}); 
    we provide geometric and information-theoretic interpretations of IMA~(\cref{sec:ima_intuition}),
    introduce an IMA contrast function which is invariant to the inherent ambiguities of nonlinear ICA%
    ~(\cref{sec:ima_definition_properties}), %
    and 
    show that it rules out a large class of counterexamples and is consistent with existing identifiability results~(\cref{sec:ima_theory});
    \item we experimentally validate our theoretical claims 
    and propose a regularised maximum-likelihood learning approach based on the IMA constrast which outperforms the unregularised baseline~(\cref{sec:experiments});
    additionally, we introduce a method to learn nonlinear ICA solutions with triangular Jacobian and a metric to assess BSS
    which can be of independent interest for the nonlinear ICA community.
\end{itemize}%

\section{Background and preliminaries}
\label{sec:background}
Our work builds on and connects related literature from the fields of independent component analysis~(\cref{sec:background_ICA}) and causal inference~(\cref{sec:background_causality}). 
We review the most important concepts below.

\subsection{Independent component analysis (ICA)}
\label{sec:background_ICA}
Assume the following data-generating process for independent component analysis (ICA)
\begin{equation}
\label{eq:gen}
\xb = \fb(\sbb)\,, 
\quad\quad\quad\quad\quad
\textstyle
    p_\sb(\sb) = \prod_{i=1}^n p_{s_i}(s_i)\,,
\end{equation}
where the \textit{observed mixtures} $\xb\in\RR^n$ result from applying a \textit{smooth and invertible mixing function} $\fb:\RR^n\rightarrow\RR^n$ to a set of \textit{unobserved, independent signals or sources} $\sbb \in \mathbb{R}^n$ with smooth, factorised density $p_\sb$ with connected support (see illustration~\cref{fig:ICA_graph}). 
The goal of ICA is to learn an \textit{unmixing function}
$\gb:\RR^n\rightarrow\RR^n$ such that $\yb=\gb(\xb)$ has independent components.
\textit{Blind source separation} (BSS), on the other hand, aims to recover the true unmixing $\fb^{-1}$ and thus the true sources $\sb$ (up to tolerable ambiguities, see below).
Whether performing ICA corresponds to solving BSS is related to the concept of \textit{identifiability} of the model class.
Intuitively, identifiability is the desirable property that 
\emph{all models which give rise to the same mixture
distribution should be ``equivalent'' up to certain ambiguities},
formally defined as follows.%
\begin{definition}[$\sim$-identifiability]
\label{def:identifiability}
Let $\Fcal$ be the set of all smooth, invertible functions $\fb:\RR^n\rightarrow \RR^n$,
and $\Pcal$ be the set of all
smooth, factorised densities $p_\sb$ with connected support on $\RR^n$.
Let
$\Mcal\subseteq \Fcal\times\Pcal$ be a \textit{subspace of models} and
let $\sim$ be an \textit{equivalence relation} on $\Mcal$. 
Denote by $\fb_*p_\sb$ the \textit{push-forward density} of $p_\sb$ via $\fb$.
Then the generative process~\eqref{eq:gen} is said to be $\sim$-\emph{identifiable
on $\Mcal$}
if
\begin{equation}
\label{eq:identifiability}
    \forall (\fb, p_\sb), (\fbt, p_{\sbt})\in \Mcal:
    \quad  \quad
    \fb_* p_\sb = \fbt_* p_\sbt
    \quad  \quad
    \implies 
    \quad \quad
    (\fb, p_\sb)\sim(\fbt,p_\sbt)\, .
\end{equation}%
\end{definition}%
\vspace{-0.5em}
If the true model belongs to the model class $\Mcal$, then $\sim$-identifiability ensures that any model in $\Mcal$ learnt from (infinite amounts of) data will be $\sim$-equivalent to the true one.
An example is \textit{linear} ICA which is identifiable up to permutation and rescaling of the sources on the subspace $\Mcal_\LIN$ of pairs of (i) invertible matrices (constraint on $\Fcal$) and (ii) factorizing densities for which at most one $s_i$ is Gaussian (constraint on $\Pcal$)~\cite{darmois1953analyse,skitovic1953property, comon1994independent}, see~\Cref{app:linear_ICA} for a more detailed account.
In the nonlinear case (i.e., without constraints on $\Fcal$), identifiability is much more challenging.
If $s_i$ and $s_j$ are independent, then so are
$h_i(s_i)$ and $h_j(s_j)$
for any functions $h_i$ and $h_j$.
In addition to permutation-ambiguity,
such \textit{element-wise} $\hb(\sb)=(h_1(s_1), ..., h_n(s_n))$ 
can therefore not be resolved either.
We thus define the desired form of identifiability for nonlinear BSS as follows.
\begin{definition}[$\sim_\BSS$]
\label{def:bss_identifiability}
The equivalence relation $\sim_\BSS$ on $\Fcal\times\Pcal$ defined as in~\Cref{def:identifiability} is given by
\begin{equation}
    (\fb, p_\sb)\sim_\BSS(\fbt,p_\sbt) \iff 
    \exists \Pb, \hb 
    \quad 
    \text{s.t.}
    \quad  
    (\fb, p_\sb)=(\fbt\circ \hb^{-1}\circ\Pb^{-1}, (\Pb\circ\hb)_*p_\sbt)
\end{equation}%
where $\Pb
$ is a permutation and $\hb(\sb)=(h_1(s_1), ..., h_n(s_n))$ is an invertible, element-wise function.
\end{definition}%
\paul{I found this quite confusing, isn't it the case that $[\fbt\circ \hb^{-1}\circ\Pb^{-1}]_*[(\Pb\circ\hb)_*p_\sbt] = \fbt_*p_\sbt$?}
\paul{Ah maybe on second thought that is the point :) Maybe worth stating that this is why this is an interesting definition.}
A 
fundamental obstacle---and a crucial difference to the linear problem---%
is that in the nonlinear case, different mixtures of $s_i$ and $s_j$ can be independent,
i.e.,
solving ICA
is \textit{not} equivalent to
solving BSS.
A prominent example of this is given by the \emph{Darmois construction}~\cite{darmois1951construction, hyvarinen1999nonlinear}.%
\begin{definition}[Darmois construction]
\label{def:darmois_solution}
The \emph{Darmois construction} $\gb^\mathrm{D}:\RR^n\rightarrow(0,1)^n$ 
is obtained
by recursively applying the conditional
cumulative distribution function (CDF) transform:%
\begin{equation}
\label{eq:Darmois_construction}
\textstyle
    g^\mathrm{D}_i(\xb_{1:i})
    :=\PP(X_i\leq x_i|\xb_{1:i-1})
    =\int_{-\infty}^{x_i}p(x_i'|\xb_{1:i-1}) dx_i'
    \quad \quad
    \quad \quad (i=1,...,n).
\end{equation}%
\end{definition}%
\vspace{-0.5em}
\vspace{-0.5em}
The resulting \textit{estimated} sources $\yb^\mathrm{D}=\gb^\mathrm{D}(\xb)$ are mutually-independent uniform r.v.s by construction, see~\cref{fig:IGCI_Darmois} for an illustration.
However, they need not be meaningfully
related to the \textit{true} sources $\sb$,
and will,
in general, still be a nonlinear mixing thereof~\cite{hyvarinen1999nonlinear}.\footnote{Consider, e.g., a mixing $\fb$ with full Jacobian which yields a contradiction to~\Cref{def:bss_identifiability}, due to~\Cref{remark:triangular_Jacobian}.\label{footnote:ctrxmpl}}
Denoting the mixing function corresponding to~\eqref{eq:Darmois_construction} by $\fb^\text{D}=(\gb^\text{D})^{-1}$ and the uniform density on $(0,1)^n$ by $p_\ub$, the \textit{Darmois solution} $(\fb^\text{D}, p_\ub)$
thus allows construction of counterexamples to 
$\sim_\BSS$-identifiability on $\Fcal\times\Pcal$.\footnote{
By applying a change of variables, we can see %
that the transformed variables in~\eqref{eq:Darmois_construction} are uniformly distributed in the open unit cube, thereby corresponding to independent components~\cite[][\S~2.2]{papamakarios2019normalizing}.}
\begin{remark}
\label{remark:triangular_Jacobian}
$\gb^\mathrm{D}$ has lower-triangular Jacobian, i.e., $\nicefrac{\partial g_i^\mathrm{D}}{\partial x_j}=0\,$ for~$i<j$.
Since the order of the $x_i$ is arbitrary, applying $\gb^\text{D}$ after a permutation yields a different Darmois solution.
Moreover, \eqref{eq:Darmois_construction} yields independent components $\yb^\text{D}$
even if the sources $s_i$
were not
independent
to begin with.%
\footnote{This has broad implications for unsupervised learning, as it shows that, for i.i.d. observations, not only factorised priors, but \textit{any} unconditional prior is insufficient for identifiability (see, e.g.,~\cite{khemakhem2020variational}, Appendix D.2).}%
\end{remark}%
\begin{figure}
    \begin{subfigure}[b]{0.35\textwidth}
      \centering
      \includegraphics[width=\textwidth]{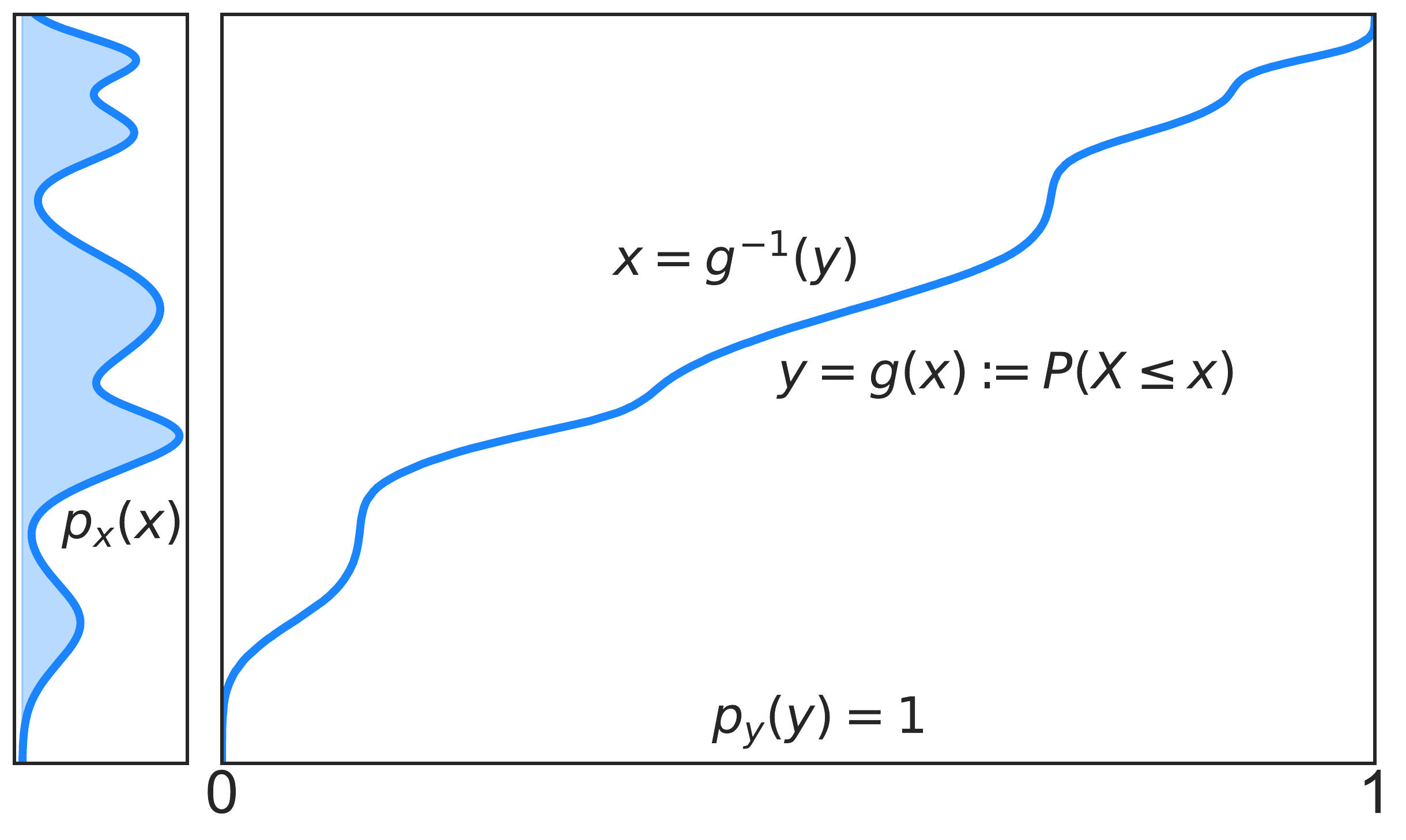}
      \vspace{-2.0em}
      \caption{CDF transform}
     \label{fig:IGCI_Darmois}
    \end{subfigure}%
    \begin{subfigure}[b]{0.225\textwidth}
        \centering
        \begin{tikzpicture}
            \centering
            \node (s1) [latent] {$s_1$};
            \node (s2) [latent, right=of s1, xshift=-1em] {$s_2$};
            \node (x1) [obs, below=of s1] {$x_1$};
            \node (x2) [obs, below=of s2] {$x_2$};
            \edge {s1,s2}{x1,x2};
        \end{tikzpicture}
        \caption{ICA}
        \label{fig:ICA_graph}
    \end{subfigure}%
    \begin{subfigure}[b]{0.2\textwidth}
        \centering
        \begin{tikzpicture}
            \centering
            \node (s1) [obs, draw=blue_cblind,line width=0.5mm] {$\cb$};
            \node (x1) [obs, below=of s1] {$\eb$};
            \edge[color=red_cblind] {s1}{x1};
        \end{tikzpicture}
        \caption{ICM}
        \label{fig:ICM_graph}
    \end{subfigure}%
    \begin{subfigure}[b]{0.225\textwidth}
        \centering
        \begin{tikzpicture}
            \centering
            \node (s1) [latent] {$s_1$};
            \node (s2) [latent, right=of s1, xshift=-1em] {$s_2$};
            \node (x) [obs, below=of s1, xshift=2em] {$\xb$};
            \edge [color=blue_cblind] {s1}{x};
            \edge [color=red_cblind] {s2}{x};
        \end{tikzpicture}
        \caption{IMA}
        \label{fig:IMA_graph}
    \end{subfigure}%
    \caption{\small (a) Any observed density $p_x$ can be mapped to a uniform $p_y$ via the CDF transform~$g(x)=\PP(X\leq x)$; Darmois solutions $(\fb^\text{D},p_\ub)$ constructed from~\eqref{eq:Darmois_construction} therefore automatically satisfy the independence postulated by IGCI~\eqref{eq:IGCI_condition}.
    (b) ICA setting with $n=2$ sources (shaded nodes are observed, white ones are unobserved).
    (c) Existing ICM criteria typically enforce independence between an observed input or cause distribution $p_\cb$ and a mechanism $p_{\eb|\cb}$ (independent objects are highlighted in blue and red). (d)  IMA enforces independence between the contributions of different sources $s_i$ to the mixing function $\fb$ as captured by $\nicefrac{\partial \fb}{\partial s_i}$.
    }
    \label{fig:graphical_models}
\end{figure}
\vspace{-0.5em}
Another 
well-known 
obstacle to identifiability
are
\emph{measure-preserving automorphisms} (MPAs) of the source distribution $p_\sb$: these are 
functions 
$\ab
$ 
which map the source space to itself without affecting its distribution, i.e., $\ab_* p_\sb=p_\sb$~\cite{hyvarinen1999nonlinear}.
A particularly
instructive class of MPAs 
is the following~\cite{locatello2019challenging,khemakhem2020variational}.%
\begin{definition}[``Rotated-Gaussian'' MPA]
\label{def:measure_preserving_automorphism_Gaussian}
Let $\Rb\in O(n)$ be an orthogonal matrix, and denote by 
$\Fb_\sb(\sb)=(F_{s_1}(s_1), ..., F_{s_n}(s_n))$ and
$\bm\Phi(\zb)=(\Phi(z_1), ..., \Phi(z_n))$
the element-wise
CDFs
of a smooth, factorised density $p_\sb$
and of a Gaussian, respectively.
Then the ``rotated-Gaussian'' MPA $\ab^{\Rb}(p_\sb)
$
is
\begin{equation}
\label{eq:measure_preserving_automorphism_Gaussian}
\textstyle
    \ab^{\Rb}(p_\sb) =\Fb_\sb^{-1} \circ \bm\Phi \circ \Rb \circ \bm\Phi^{-1} \circ \Fb_\sb\,.
\end{equation}%
\end{definition}%
\vspace{-0.5em}
$\ab^{\Rb}(p_\sb)$ 
first
maps
to the
(rotationally invariant)
standard isotropic Gaussian (via $\bm\Phi^{-1} \circ \Fb_\sb$), then applies
a rotation, and finally maps back, 
without affecting the distribution of the estimated sources.
Hence, if $(\fbt, p_\sbt)$ is a valid solution,
then so is~$(\fbt\circ\ab^\Rb(p_\sbt),p_\sbt)$ for any $\Rb\in O(n)$.
Unless $\Rb$ is a permutation, this constitutes another common counterexample to $\sim_\BSS$-identifiability on $\Fcal\times\Pcal$.
Identifiability results for nonlinear ICA have recently been established for settings where an 
auxiliary variable $\ub
$ (e.g., environment index, time stamp, class label) renders the sources \emph{conditionally} independent~\cite{hyvarinen2016unsupervised, hyvarinen2017nonlinear, hyvarinen2019nonlinear,khemakhem2020variational}.
The assumption on $p_\sb$
in~\eqref{eq:gen} is replaced with
    $p_{\sbb|\ub}(\sbb|\ub)=\prod_{i=1}^n p_{s_i|\ub}(s_i|\ub)$, thus restricting $\Pcal$ in~\Cref{def:identifiability}.
    In most cases,
    $\ub$ is assumed to be observed, though~\cite{halva2020hidden} is a notable exception.
    Similar results exist given access to a second noisy view $\tilde{\xb}$~\cite{gresele2020incomplete}.

\subsection{Causal inference and the principle of independent causal mechanisms (ICM)}
\label{sec:background_causality}
Rather than relying only on additional assumptions on $\Pcal$ (e.g., via auxiliary variables), we seek to further constrain~\eqref{eq:gen}
by also placing assumptions on the
set $\Fcal$ of 
mixing functions~$\fb$.
To this end, we draw inspiration from the field of causal inference~\cite{pearl2009causality, peters2017elements}.
Of central importance to our approach is the \emph{Principle of Independent Causal Mechanisms} (ICM)~\cite{lemeire2006causal,janzing2010causal,scholkopf2012causal}.%
\begin{principle}[ICM principle \cite{peters2017elements}]
\label{principle:ICM}
The causal generative process of a system's variables is composed of autonomous modules that do not inform or influence each other.
\end{principle}%
\vspace{-0.25em}
These ``modules'' are typically thought of as the conditional distributions of each variable given its direct causes.
Intuitively, the principle then states that these \textit{causal conditionals} correspond to \emph{independent mechanisms of nature} which do not share information.
Crucially, here ``independent'' does not refer to \textit{statistical} independence of random variables, but rather to independence of the underlying distributions as \textit{algorithmic} objects.
For a bivariate system comprising a cause $\cb$ and an effect $\eb$, this idea reduces to an independence of cause and mechanism, see~\Cref{fig:ICM_graph}. One way to formalise ICM uses Kolmogorov complexity $K(\cdot)$~\cite{kolmogorov1963tables} as a measure of algorithmic information%
~\cite{janzing2010causal}. %
However, since Kolmogorov complexity is is not computable, using ICM in practice requires
assessing~\cref{principle:ICM} with other suitable proxy criteria%
~\cite{hoyer2008nonlinear, janzing2010telling, zscheischler2011testing, peters2014causal, peters2014identifiability,
shajarisales2015telling, mooij2016distinguishing, peters2016causal, peters2017elements, blobaum2018cause, besserve2018group,janzing2021causal}.%
\footnote{``This can be seen as an algorithmic analog of replacing the empirically undecidable question of statistical independence with practical independence tests
that are based on
assumptions on the underlying distribution''~\cite{janzing2010causal}.}
Allowing for deterministic relations between cause (sources) and effect (observations),  the criterion which is most closely related to the ICA setting in~\eqref{eq:gen} is
\textit{information-geometric causal inference} (IGCI)~\cite{daniuvsis2010inferring,janzing2012information}.\footnote{\looseness-1 For a similar criterion which assumes linearity~\cite{janzing2010telling,zscheischler2011testing} and its relation to linear ICA, see~\Cref{app:trace_method}.}
IGCI assumes a nonlinear relation $\eb=\fb(\cb)$ and formulates 
a notion of independence between the cause distribution $p_\cb$ and the deterministic mechanism $\fb$ (which we think of as a degenerate conditional $p_{\eb|\cb}$) via the following condition (in practice, assumed to hold approximately),%
\begin{equation}
\label{eq:IGCI_condition}
\textstyle
    C_\IGCI(\fb,p_\cb):=
    \int \log  \left|\Jb_{\fb}(\cb)\right|p_\cb(\cb)d\cb 
    -
    \int \log \left|\Jb_{\fb}(\cb)\right|d\cb
    = 0 \, ,
\end{equation}%
where $(\Jb_\fb(\cb))_{ij}=\nicefrac{\partial f_i}{\partial c_j}(\cb)$ is the Jacobian matrix and $|\cdot |$ the absolute value of the determinant.
$C_\IGCI$
can
be understood as the covariance between $p_\cb$ and $\log \left|\Jb_{\fb}\right|$ (viewed as r.v.s on 
the unit cube w.r.t.\ the Lebesgue measure), so that 
$C_\IGCI=0$ rules out a form of fine-tuning between $p_\cb$ and $|\Jb_\fb|$. 
As its name suggests,  IGCI can, from an information-geometric perspective, also be seen as an orthogonality condition between cause and mechanism in the space of probability distributions~\cite{janzing2012information}, see~\Cref{sec:information_geometric_ICM}, particularly~\cref{eq:addIrregIGCI} for further details.

\section{Existing ICM measures are insufficient for nonlinear ICA}
\label{sec:unsuitability_of_existing_ICM_measures}
Our aim is to use the ICM~\cref{principle:ICM} to further constrain the space of models $\Mcal\subseteq\Fcal\times\Pcal$ and rule out common counterexamples to identifiability such as those presented in~\cref{sec:background_ICA}.
Intuitively, both the Darmois construction~\eqref{eq:Darmois_construction} and the rotated Gaussian MPA~\eqref{eq:measure_preserving_automorphism_Gaussian} give rise to ``\textit{non-generic}'' solutions which should violate ICM: the former, $(\fb^\text{D},p_\ub)$, due the triangular Jacobian of $\fb^\text{D}$ (see~\cref{remark:triangular_Jacobian}), meaning that each observation
$x_i=f^\text{D}_i(\yb_{1:i})$ only depends on a subset of the inferred independent components~$\yb_{1:i}$,
and the latter, $(\fb\circ\ab^\Rb(p_\sb),p_\sb)$, due to the dependence of $\fb\circ\ab^\Rb(p_\sb)$ on $p_\sb$~\eqref{eq:measure_preserving_automorphism_Gaussian}.

However, the ICM criteria described in~\cref{sec:background_causality} were developed for 
the task of cause-effect inference where \textit{both variables are observed}.
In contrast, in this work, we consider
an unsupervised representation learning task
where \textit{only the effects} (mixtures $\xb$) \textit{are observed}, but the causes (sources $\sb$) are not. 
It turns out that this
renders existing ICM criteria insufficient for BSS:
they can easily 
be satisfied by spurious solutions which are not equivalent to the true one.
We can show this for IGCI.
Denote by $ \Mcal_{\IGCI}=\{(\fb, p_\sb)\in\Fcal\times\Pcal: C_\IGCI(\fb, p_\sb)=0\}\subset \Fcal\times\Pcal$ the class of nonlinear ICA models satisfying IGCI~\eqref{eq:IGCI_condition}. Then the following negative result holds.%
\begin{proposition}[IGCI is insufficient for $\sim_\BSS$-identifiability]
\label{prop:IGCI_insufficient_for_BSS}
\eqref{eq:gen} is not $\sim_\BSS$-identifiable on $\Mcal_{\IGCI}$.%
\vspace{-0.75em}
\begin{proof}
IGCI~\eqref{eq:IGCI_condition} is
satisfied when $p_\sb$ is uniform.
However, the Darmois construction~\eqref{eq:Darmois_construction}
yields uniform sources, see~\cref{fig:IGCI_Darmois}.
This means that $(\fb^\text{D}\circ\ab^\Rb(p_\ub), p_\ub)\in\Mcal_\IGCI$%
, so
IGCI can be satisfied by solutions which do not 
separate the sources in the sense of~\cref{def:bss_identifiability}, see footnote~\ref{footnote:ctrxmpl} and~\cite{hyvarinen1999nonlinear}.%
\end{proof}%
\end{proposition}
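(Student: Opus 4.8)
The plan is to refute $\sim_\BSS$-identifiability directly from the definition: I would exhibit two models lying in $\Mcal_\IGCI$ that induce the \emph{same} observed density yet are \emph{not} $\sim_\BSS$-equivalent. The organising observation is that the IGCI constraint \eqref{eq:IGCI_condition} degenerates precisely when the source density is uniform, while the Darmois construction \eqref{eq:Darmois_construction} always returns uniform estimated sources; hence Darmois-type spurious solutions fall inside $\Mcal_\IGCI$ while still failing to separate the true sources.

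First I would record the elementary fact driving everything: if $p_\sb = p_\ub$ is uniform on $(0,1)^n$, then $p_\ub(\sb)=1$ on its support, so the weighting in the first integral of $C_\IGCI$ is trivial and the two integrals in \eqref{eq:IGCI_condition} coincide, giving $C_\IGCI(\fb, p_\ub)=0$ for \emph{every} smooth invertible $\fb$. Consequently $(\fb, p_\ub)\in\Mcal_\IGCI$ irrespective of $\fb$; the IGCI constraint carries no information once the sources are uniform.

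Next I would assemble the counterexample. Pick a true mixing $\fb$ with \emph{full} Jacobian together with uniform sources $p_\ub$; by the previous step $(\fb, p_\ub)\in\Mcal_\IGCI$. Applying the Darmois construction to the observed density $\fb_* p_\ub$ yields $(\fb^\text{D}, p_\ub)$, whose estimated sources are uniform by construction, so $\fb^\text{D}_* p_\ub = \fb_* p_\ub$ matches the observations and, again by the elementary fact, $(\fb^\text{D}, p_\ub)\in\Mcal_\IGCI$. One may further compose with a rotated-Gaussian MPA: since $\ab^\Rb(p_\ub)$ leaves $p_\ub$ (still uniform) invariant, $(\fb^\text{D}\circ\ab^\Rb(p_\ub), p_\ub)$ also lies in $\Mcal_\IGCI$ and induces the same observed density, so in fact the whole $O(n)$-family of spurious solutions is captured, exactly as stated in the claim.

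Finally, to establish non-equivalence I would invoke \cref{remark:triangular_Jacobian}: $\gb^\text{D}=(\fb^\text{D})^{-1}$ has lower-triangular Jacobian, whereas $\fb^{-1}$ does not, and composing a triangular Jacobian with a permutation $\Pb$ and an element-wise $\hb$ (as demanded by \cref{def:bss_identifiability}) can only produce a permuted-triangular Jacobian, never a full one. As spelled out in footnote~\ref{footnote:ctrxmpl}, this contradicts the form required for $\sim_\BSS$-equivalence. Thus $(\fb, p_\ub)$ and $(\fb^\text{D}, p_\ub)$ are two elements of $\Mcal_\IGCI$ with identical push-forwards that are not $\sim_\BSS$-equivalent, so \eqref{eq:gen} is not $\sim_\BSS$-identifiable on $\Mcal_\IGCI$. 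I expect this last step to be the only one needing real care: the vacuousness of IGCI on uniform sources and the matching of the push-forwards are both immediate from the definitions, whereas ruling out \emph{any} permutation-and-element-wise reparametrisation reconciling a triangular with a full Jacobian is the substantive point (already handled by \cref{remark:triangular_Jacobian} and footnote~\ref{footnote:ctrxmpl}).
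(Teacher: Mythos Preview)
Your proposal is correct and follows essentially the same approach as the paper's proof: both hinge on the observation that $C_\IGCI(\fb,p_\ub)=0$ trivially whenever the source density is uniform, so that the Darmois solution (which has uniform reconstructed sources by construction) automatically lies in $\Mcal_\IGCI$, and then invoke the triangular-vs-full Jacobian argument from \cref{remark:triangular_Jacobian} and footnote~\ref{footnote:ctrxmpl} to rule out $\sim_\BSS$-equivalence. Your write-up is simply a more explicit unpacking of the paper's terse three-sentence argument---in particular, you make the two-model structure required by \cref{def:identifiability} explicit by fixing the true sources to be uniform as well, which is a clean way to ensure the true model also sits in $\Mcal_\IGCI$.
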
%
\vspace{-0.75em}
As illustrated in~\cref{fig:ICM_graph}, condition%
~\eqref{eq:IGCI_condition} and other similar criteria enforce a notion of ``genericity'' or ``decoupling'' of the mechanism w.r.t.\ the \emph{observed} input distribution.\footnote{In fact, many ICM criteria can be phrased as special cases of a unifying group-invariance framework~\cite{besserve2018group}.} 
They thus rely on the fact that the cause (source) distribution is informative, and are generally not invariant to reparametrisation of the cause variables.
In the (nonlinear) ICA setting, on the other hand, the \emph{learnt} source distribution may be fairly uninformative.
This poses a challenge for existing ICM criteria since any mechanism is generic w.r.t.\ an uninformative (uniform) input distribution. 

\section{Independent mechanism analysis (IMA)}
\label{sec:IMA}
As argued in~\cref{sec:unsuitability_of_existing_ICM_measures}, enforcing independence between the input distribution and the mechanism (\cref{fig:ICM_graph}), as existing ICM criteria do, is insufficient for ruling out spurious solutions to nonlinear ICA.
We therefore propose a new ICM-inspired framework which
is more suitable for BSS and which
we term \textit{independent mechanism analysis} (IMA).\footnote{The title of the present work is  thus a reverence
to Pierre Comon's seminal 1994 paper~\cite{comon1994independent}.}
All proofs are provided in~\Cref{app:proofs}.

\subsection{Intuition behind IMA
}
\label{sec:ima_intuition}
As motivated using the cocktail party example in~\cref{sec:introduction} and~\cref{fig:intuition} \textit{(Left)},
our main idea is to enforce
a notion of 
\textit{independence between the contributions or influences of the different sources $s_i$ on the observations} $\xb=\fb(\sb)$
as illustrated in~\cref{fig:IMA_graph}---as opposed to between the source distribution and mixing function, cf.~\cref{fig:ICM_graph}.
These contributions or influences are captured by the vectors  of partial derivatives $\nicefrac{\partial \fb}{\partial s_i}$.
IMA can thus be understood as a \textit{refinement of ICM at the level of the mixing $\fb$}:
in addition to
\textit{statistically independent components}~$s_i$, we look for a mixing with \textit{contributions $\nicefrac{\partial \fb}{\partial s_i}$ which are independent}, in a non-statistical sense which we formalise %
as follows.
\begin{principle}[IMA]
\label{principle:IMA}
The mechanisms by which each source $s_i$ influences the observed distribution, as captured by the 
partial derivatives $\nicefrac{\partial\fb}{\partial s_i}$, are independent of each other in the sense that for all~$\sb$:%
\begin{equation}
\label{eq:IMA_principle}
  \log |\Jb_\fb(\sb)|
    = 
    \sum_{i=1}^n
    \log \norm{\frac{\partial \fb}{\partial s_i}(\sb)} 
\end{equation}
\end{principle}%

\textbf{Geometric interpretation.}
Geometrically, the IMA principle can be understood as an \textit{orthogonality condition}, as illustrated for $n=2$ in~\cref{fig:intuition}~\textit{(Right)}.
First, the vectors of partial derivatives~$\nicefrac{\partial \fb}{\partial s_i}$, for which the IMA principle postulates independence, are the \textit{columns} of%
~$\Jb_\fb$.
$|\Jb_\fb|$ 
thus measures the volume of the $n-$dimensional parallelepiped spanned by these columns, as shown on the right.
The product of their norms, 
on the other hand, corresponds to the volume of an $n$-dimensional box, or rectangular parallelepiped with side lengths $\smallnorm{\nicefrac{\partial \fb}{\partial s_i}}$, as shown on the left.
The two volumes are equal if and only if all columns $\nicefrac{\partial \fb}{\partial s_i}$ of $\Jb_\fb$ are orthogonal.
Note that%
~\eqref{eq:IMA_principle} is trivially satisfied for $n=1$, i.e., if there is no mixing, further highlighting its difference from ICM for causal discovery.

\textbf{Independent influences and orthogonality.} In a high dimensional setting (large $n$), this orthogonality can be intuitively interpreted from the ICM perspective as \emph{Nature choosing the direction of the influence of each source component in the observation space independently and from an isotropic prior}. Indeed, it can be shown that the scalar product of two independent isotropic random vectors in $\RR^n$ vanishes as the dimensionality $n$ increases (equivalently: two high-dimensional isotropic vectors are typically orthogonal). This property was previously exploited in other linear ICM-based criteria (see~\cite[][Lemma 5]{janzing2018detecting} and%
~\cite[][Lemma 1 \& Thm.\ 1]{janzing2010telling}%
).\footnote{This has also been used as a \textit{``leading intuition''} [sic] to interpret IGCI in~\cite{janzing2012information}.} %
The principle in~\eqref{eq:IMA_principle} can be seen as a constraint on the function space, enforcing such orthogonality between the columns of the Jacobian of $\fb$ at all points in the source domain, %
thus approximating the high-dimensional behavior described above.\footnote{To provide additional intuition on how IMA differs from existing principles of independence of cause and mechanism, we give examples, both technical and pictorial, of violations of both in~\Cref{app:violations_icm_ima}.}

\textbf{Information-geometric interpretation and comparison to IGCI.}
The additive contribution of the sources' influences $\nicefrac{\partial \fb}{\partial s_i}$ in~\eqref{eq:IMA_principle}
suggests their local \textit{decoupling at the level of the mechanism}~$\fb$.
\looseness-1 Note that IGCI~\eqref{eq:IGCI_condition}, on the other hand, postulates a different type of decoupling: one between $\log |\Jb_\fb|$ and~$p_\sb$. There, dependence between  cause and mechanism can be conceived as a fine tuning between the derivative of the mechanism and the input density. %
The IMA principle leads to a complementary, non-statistical measure of independence between the influences~$\nicefrac{\partial \fb}{\partial s_i}$ of the individual sources on the vector of observations.
Both the IGCI and IMA postulates have an information-geometric interpretation related to the influence of (``non-statistically'') independent modules on the observations: both lead to an \textit{additive decomposition of a KL-divergence between the effect distribution and a reference distribution.} 
For IGCI, independent modules correspond to the cause distribution and the mechanism mapping the cause to the effect (see~\eqref{eq:addIrregIGCI} in~\Cref{sec:information_geometric_ICM}). For IMA, on the other hand, these are the influences of each source component on the observations in an interventional setting (under soft interventions on individual sources), as measured by the KL-divergences between the original and intervened distributions. See~\Cref{app:ima_igci}, and  especially~\eqref{eq:KLdecompIMA}, for a more detailed account.  %

We finally remark that while recent work based on the ICM principle has mostly used the term ``mechanism'' to refer to causal Markov kernels $p(X_i|PA_i)$
or structural equations~\cite{peters2017elements}, we employ it %
in line with the broader use of this concept in the philosophical literature.\footnote{See Table 1 in~\cite{mahoney2001beyond} for a long list of definitions from the literature.%
} 
To highlight just two examples,~\cite{salmon2020scientific} states that {\it ``Causal processes, causal interactions, and causal laws provide the mechanisms by which the world works; to understand why certain things happen, we need to see how they are produced by these mechanisms''}; and~\cite{tilly2001historical} states that {\it ``Mechanisms are events that alter relations among some specified set of elements''}. Following this perspective, we argue that a causal mechanism can more generally denote any process that describes the way in which causes influence their effects: the partial derivative $\nicefrac{\partial\fb}{\partial s_i}$
thus reflects a causal mechanism in the sense that it describes the infinitesimal changes in the observations $\xb$, when an infinitesimal perturbation is applied to $s_i$.

\subsection{Definition and useful properties of the IMA contrast}
\label{sec:ima_definition_properties}
We now introduce a contrast function based on the IMA principle~\eqref{eq:IMA_principle} 
and show that it possesses several desirable properties in the context of nonlinear ICA.
First, we define a local contrast as the difference between the two integrands of~\eqref{eq:IMA_principle} for a particular value of the sources $\sb$.
\begin{definition}[Local IMA contrast]
\label{def:local_IMA_contrast}
The local IMA contrast $c_\IMA(\fb,\sb)$ of
$\fb$ at a point $\sb$ is
given by
\begin{equation}
\label{eq:adm_single_point} 
c_\IMA(\fb,\sb) = \sum_{i = 1}^n \log \norm{\frac{\partial \fb}{\partial s_i}(\sb)} - \log \left|\Jb_\fb(\sb)\right|\,.
\end{equation}%
\end{definition}%
\begin{remark}
\label{remark:left_KL}
This corresponds to the left KL measure of diagonality~\cite{alyani2017diagonality} for
$\sqrt{\Jb_\fb(\sb)^\top\Jb_\fb(\sb)}$.
\end{remark}%
The local IMA contrast $c_\IMA(\fb,\sb)$ 
quantifies the extent to which the IMA principle is violated at a given point $\sb$.
We summarise some of its properties in the following proposition.
\begin{restatable}[Properties of $c_\IMA(\fb,\sb)$]
{proposition}{impropties}
    \label{prop:local_IMA_contrast_properties}
    The local IMA contrast $c_\IMA(\fb,\sb)$ defined in~\eqref{eq:adm_single_point} satisfies:%
    \begin{enumerate}[(i), topsep=0pt,itemsep=0pt]
        \item $c_\IMA(\fb,\sb) \geq 0$, with equality if and only if all columns $\nicefrac{\partial \fb}{\partial s_i}(\sb)$ of $\Jb_\fb(\sb)$ are orthogonal. %
        \item $c_\IMA(\fb,\sb)$ is invariant to left multiplication of $\Jb_\fb(\sb)$ by an orthogonal matrix and to right multiplication by permutation and diagonal matrices.%
    \end{enumerate}%
\end{restatable}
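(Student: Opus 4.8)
The plan is to forget about $\fb$ and $\sb$ almost entirely and treat both parts as statements about the single $n\times n$ matrix $\Jb_\fb(\sb)$, writing $\mathbf{v}_i := \nicefrac{\partial \fb}{\partial s_i}(\sb)$ for its columns. Since $\fb$ is smooth and invertible, $\Jb_\fb(\sb)$ is nonsingular, so $\left|\Jb_\fb(\sb)\right|>0$ and every $\mathbf{v}_i\neq \mathbf{0}$; this is what lets me take all the logarithms in~\eqref{eq:adm_single_point} without worrying about undefined terms, and it also rules out the degenerate zero-column case in the equality analysis below.

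For part~(i) I would invoke Hadamard's inequality, $\left|\det \mathbf{M}\right| \leq \prod_{i} \norm{\mathbf{m}_i}$ for any square $\mathbf{M}$ with columns $\mathbf{m}_i$, with equality (given no column vanishes) if and only if the columns are mutually orthogonal. Applying it to $\Jb_\fb(\sb)$ and taking logarithms gives $\log\left|\Jb_\fb(\sb)\right| \leq \sum_i \log\norm{\mathbf{v}_i}$, i.e. $c_\IMA(\fb,\sb)\geq 0$, and the equality characterisation transfers verbatim. If I preferred a self-contained argument, I would use the QR factorisation $\Jb_\fb(\sb)=\mathbf{Q}\mathbf{R}$ with $\mathbf{Q}$ orthogonal and $\mathbf{R}$ upper-triangular: then $\left|\det \Jb_\fb(\sb)\right|=\prod_i |r_{ii}|$ while $\norm{\mathbf{v}_i}^2=\norm{\mathbf{R}\,\text{col}_i}^2=\sum_{j\leq i} r_{ji}^2 \geq r_{ii}^2$, yielding the inequality termwise, with equality exactly when $\mathbf{R}$ is diagonal, i.e. when the $\mathbf{v}_i$ are orthogonal.

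For part~(ii) each claim is a one-line computation using multiplicativity of the determinant and invariance of $\norm{\cdot}$. Left multiplication by an orthogonal $\mathbf{Q}$ sends $\mathbf{v}_i\mapsto \mathbf{Q}\mathbf{v}_i$ with $\norm{\mathbf{Q}\mathbf{v}_i}=\norm{\mathbf{v}_i}$ and $\left|\mathbf{Q}\Jb_\fb(\sb)\right|=\left|\mathbf{Q}\right|\left|\Jb_\fb(\sb)\right|=\left|\Jb_\fb(\sb)\right|$, so both terms of $c_\IMA$ are individually unchanged. Right multiplication by a permutation matrix $\Pb$ merely reorders the columns, leaving the symmetric sum $\sum_i \log\norm{\mathbf{v}_i}$ fixed and $\left|\Jb_\fb(\sb)\Pb\right|=\left|\Jb_\fb(\sb)\right|$. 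Right multiplication by an invertible diagonal $\mathbf{D}=\mathrm{diag}(d_1,\dots,d_n)$ scales $\mathbf{v}_i\mapsto d_i\mathbf{v}_i$: the first term picks up $\sum_i \log|d_i|$ and the determinant term picks up the same $\log\prod_i |d_i|$, so these contributions cancel in the difference and $c_\IMA$ is invariant.

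The only substantive content is part~(i): everything reduces to Hadamard's inequality together with its equality condition, so the main obstacle is simply recalling/justifying that result and checking the equality case carefully—in particular using nonsingularity to exclude the degenerate zero-column branch of the equality condition. Part~(ii) and the surrounding logarithm manipulations are entirely routine.
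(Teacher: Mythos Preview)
Your proof is correct and essentially identical to the paper's: part~(i) is Hadamard's inequality plus its equality case, and part~(ii) is the same three one-line checks using multiplicativity of the determinant and norm preservation. The only cosmetic difference is that for left multiplication by an orthogonal matrix the paper routes the argument through the $D_{KL}^{\text{left}}$ reformulation of $c_\IMA$ in terms of $\Jb_\fb(\sb)^\top\Jb_\fb(\sb)$, whereas you argue directly from $\norm{\mathbf{Q}\mathbf{v}_i}=\norm{\mathbf{v}_i}$; both are immediate.
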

Property \textit{(i)} formalises the geometric interpretation of IMA as an orthogonality condition on the columns of the Jacobian from~\cref{sec:ima_intuition}, and property \textit{(ii)} intuitively states that changes of orthonormal basis and permutations or rescalings of the columns of $\Jb_\fb$ do not affect their orthogonality. 
Next, we define a global IMA contrast w.r.t.\ a source distribution $p_\sb$ as the expected local IMA contrast.%
\begin{definition}[Global IMA contrast]
\label{def:global_IMA_contrast}
The 
global IMA contrast $C_\IMA(\fb,p_\sb)$ of
$\fb$ 
w.r.t.\
$p_\sbb$ is given by 
\begin{equation}
    C_\IMA(\fb, p_\sbb) = \EE_{\sb\sim p_\sb}[c_\IMA(\fb,\sb)] = \medint\int c_\IMA(\fb,\sb) p_\sb(\sb)  d\sb\,.
    \label{eq:adm_metric}
\end{equation}%
\end{definition}%
The global IMA contrast $C_\IMA(\fb,p_\sb)$ thus quantifies the extent to which the IMA principle is violated for a particular solution $(\fb, p_\sb)$ to the nonlinear ICA problem.
We summarise its properties as follows.%
\begin{restatable}[Properties of $C_\IMA(\fb,p_\sb)$]
{proposition}{admproperties}
\label{prop:global_IMA_contrast_properties}
The global IMA contrast $C_\IMA(\fb, p_\sbb)$ from~\eqref{eq:adm_metric}
satisfies:%
    \begin{enumerate}[(i), topsep=0pt,itemsep=0pt]
        \item $C_\IMA(\fb, p_\sbb) \geq 0$, with equality
        iff.\ $\Jb_{\fb}(\sb) = \Ob(\sb) \Db(\sb)$ almost surely w.r.t.\ $p_\sb$, where $\Ob(\sb), \Db(\sb)\in\RR^{n\times n}$ are orthogonal and  diagonal matrices, respectively;
        \item $C_\IMA(\fb, p_\sbb) = C_\IMA(\fbt, p_{\sbt})$ for any $\fbt=\fb\circ\hb^{-1}\circ \Pb^{-1}$ and $\sbt = \Pb\hb(\sb)$,  where $\Pb\in\RR^{n\times n}$ is a permutation and $\hb(\sb)=(h_1(s_1), ..., h_n(s_n))$ an invertible element-wise function.
    \end{enumerate}%
\end{restatable}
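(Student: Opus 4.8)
The plan is to deduce both properties of the global contrast $C_\IMA$ directly from the corresponding pointwise properties of the local contrast $c_\IMA$ established in \Cref{prop:local_IMA_contrast_properties}, combined with elementary facts about expectations and a change of variables. Since $C_\IMA(\fb,p_\sb)=\EE_{\sb\sim p_\sb}[c_\IMA(\fb,\sb)]$ by \Cref{def:global_IMA_contrast}, the substantive work has already been done at the level of a single Jacobian matrix, and what remains is bookkeeping.

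For part (i), I would first invoke property (i) of \Cref{prop:local_IMA_contrast_properties}, which gives $c_\IMA(\fb,\sb)\ge 0$ for every $\sb$; nonnegativity of $C_\IMA$ is then immediate from monotonicity of the integral. For the equality case, I would use the standard fact that the integral of a nonnegative function against a density vanishes iff the integrand is zero almost surely, so $C_\IMA(\fb,p_\sb)=0$ iff $c_\IMA(\fb,\sb)=0$ for $p_\sb$-almost every $\sb$. By the equality condition in property (i), this holds iff the columns $\nicefrac{\partial\fb}{\partial s_i}(\sb)$ of $\Jb_\fb(\sb)$ are mutually orthogonal almost surely. Finally, since $\fb$ is invertible, $\Jb_\fb(\sb)$ is nonsingular and its columns are nonzero; a matrix has mutually orthogonal nonzero columns iff it factors as $\Ob(\sb)\Db(\sb)$ with $\Ob(\sb)$ orthogonal (its columns the normalised columns of $\Jb_\fb$) and $\Db(\sb)$ diagonal (holding the column norms), which is exactly the stated characterisation.

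For part (ii), the key step is to express the Jacobian of $\fbt=\fb\circ\hb^{-1}\circ\Pb^{-1}$ at $\sbt=\Pb\hb(\sb)$ via the chain rule. Noting that $\hb^{-1}(\Pb^{-1}\sbt)=\sb$, one obtains
\begin{equation*}
\Jb_{\fbt}(\sbt)=\Jb_\fb(\sb)\,\Db\,\Pb^{-1},
\end{equation*}
where $\Db$ is the diagonal Jacobian of the element-wise map $\hb^{-1}$ and $\Pb^{-1}$ is a permutation matrix. Because $c_\IMA$ is invariant to right multiplication of the Jacobian by diagonal and permutation matrices (property (ii) of \Cref{prop:local_IMA_contrast_properties}), this yields $c_\IMA(\fbt,\sbt)=c_\IMA(\fb,\sb)$ pointwise. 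Taking expectations and using $p_{\sbt}=(\Pb\circ\hb)_*p_\sb$, so that $\EE_{\sbt\sim p_{\sbt}}[\phi(\sbt)]=\EE_{\sb\sim p_\sb}[\phi(\Pb\hb(\sb))]$ for any integrable $\phi$, gives $C_\IMA(\fbt,p_{\sbt})=\EE_{\sb\sim p_\sb}[c_\IMA(\fb,\sb)]=C_\IMA(\fb,p_\sb)$.

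I expect the main obstacle to be notational rather than conceptual: carefully tracking the composition order in $\fbt=\fb\circ\hb^{-1}\circ\Pb^{-1}$ through the chain rule, in particular that the element-wise $\hb^{-1}$ contributes a \emph{diagonal} factor and the permutation a permutation-matrix factor, both multiplying $\Jb_\fb(\sb)$ on the right, and matching this to the push-forward change of variables so that the two expectations are taken against consistent densities. Once the decomposition $\Jb_{\fbt}=\Jb_\fb\,\Db\,\Pb^{-1}$ is in hand, both invariances reduce to direct applications of the already-proven local properties.
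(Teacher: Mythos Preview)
Your proposal is correct and follows essentially the same route as the paper's proof: both parts reduce the global contrast to the local one via \Cref{prop:local_IMA_contrast_properties}, with (i) handled by nonnegativity of the integrand plus the orthogonal-columns $\Leftrightarrow$ $\Ob\Db$ factorisation, and (ii) by the chain-rule decomposition $\Jb_{\fbt}(\sbt)=\Jb_\fb(\sb)\,\Db\,\Pb^{-1}$ combined with the right-multiplication invariances and the push-forward change of variables. The only cosmetic difference is ordering: the paper first rewrites the integral via $p_{\sbt}(\sbt)\,d\sbt=p_\sb(\sb)\,d\sb$ and then proves the pointwise equality $c_\IMA(\fbt,\sbt)=c_\IMA(\fb,\sb)$, whereas you do these in the opposite order.
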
%
\begin{wrapfigure}{r}{0.45\textwidth}
\vspace{-1.3em}
\centering
\begin{subfigure}{0.2\textwidth}
\includegraphics[height=\textwidth]{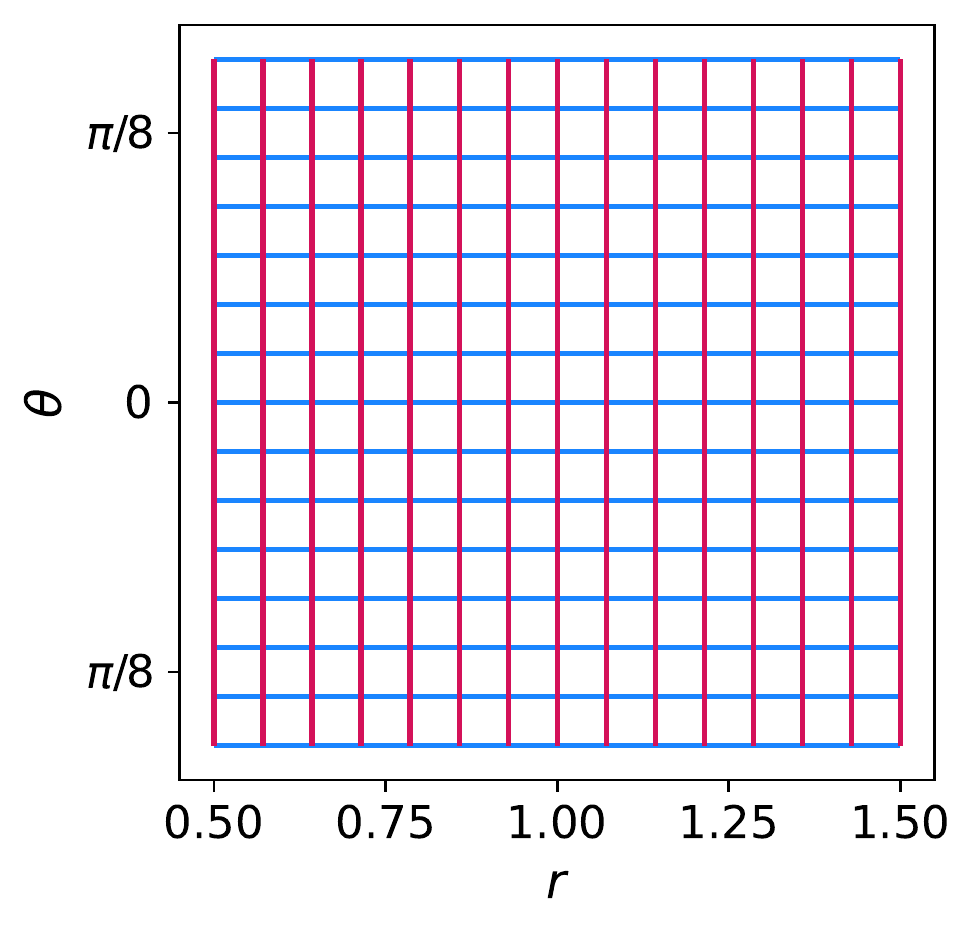}
\end{subfigure}%
\hspace{0.1em}
\begin{subfigure}{0.2\textwidth}
\includegraphics[height=\textwidth]{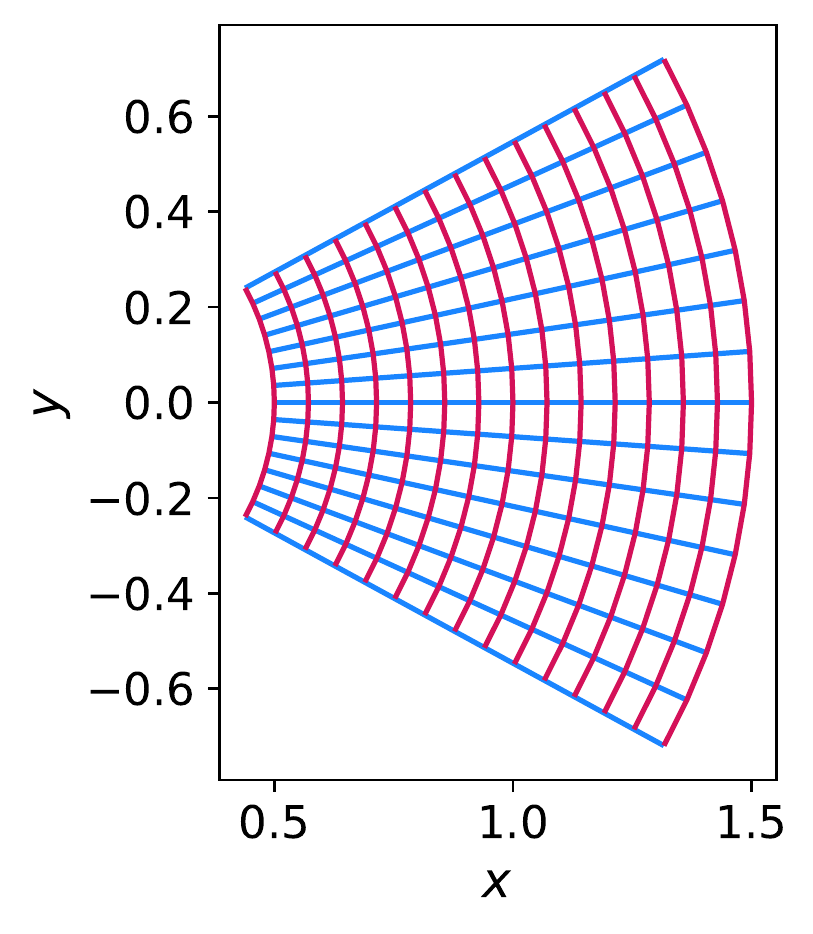}
\end{subfigure}%
\vspace{-0.75em}
\caption{\small An example of a (non-conformal) orthogonal coordinate transformation from polar (left) to Cartesian (right) coordinates.}
\label{fig:orthogonal_coordinate_transform}
\vspace{-1.5em}
\end{wrapfigure}
Property \textit{(i)} is the distribution-level analogue to \textit{(i)} of~\Cref{prop:local_IMA_contrast_properties} and only allows for orthogonality violations on sets of measure zero w.r.t.\ $p_\sb$.
This means that $C_\IMA$ can only be zero if $\fb$ is an \textit{orthogonal coordinate transformation} almost everywhere~\cite{lame1859leccons, darboux1910leccons,moon1971field}, see~\cref{fig:orthogonal_coordinate_transform} for an example.
We 
particularly
stress property \textit{(ii)}, as it precisely matches the inherent
indeterminacy
of nonlinear ICA: %
\textit{$C_\IMA$ is blind to reparametrisation of the sources by permutation and element wise transformation}.

\subsection{Theoretical analysis and justification of \texorpdfstring{$C_\IMA$}{theory}}
\label{sec:ima_theory}
%
%
%
%
%
%
%
%
%
%
%

%
%
%
We now show that, under suitable assumptions on the generative model~\eqref{eq:gen}, a large class of
spurious solutions---such as those based on the Darmois construction~\eqref{eq:Darmois_construction} or 
measure preserving automorphisms such as $\ab^\Rb$ from~\eqref{eq:measure_preserving_automorphism_Gaussian} as described in~\cref{sec:background_ICA}---%
exhibit nonzero IMA contrast.
Denote the class of nonlinear ICA models satisfying~\eqref{eq:IMA_principle} (IMA) by  ${\Mcal_{\IMA}=\{(\fb, p_\sb)\in \Fcal\times\Pcal: C_\IMA(\fb, p_\sb)=0\}\subset \Fcal\times\Pcal}$.
Our first main theoretical result is that, under mild assumptions on the observations, Darmois solutions will have strictly positive $C_\IMA$, making them distinguishable from those in~$\Mcal_\IMA$.%
\begin{restatable}{theorem}{admdarmois}
\label{thm:adm_darmois}
Assume the data generating process in~\eqref{eq:gen}
and assume that
$x_i \nindep x_j$ for some $i \neq j$. Then any  Darmois solution $(\fb^\text{D}, p_\ub)$ based on $\gb^\text{D}$ as defined in~\eqref{eq:Darmois_construction} satisfies $C_\IMA(\fb^\text{D}, p_\ub)>0$.
Thus a solution satisfying $C_\IMA(\fb, p_\sb)=0$ can be distinguished from $(\fb^\text{D}, p_\ub)$ based on the contrast $C_\IMA$.
\end{restatable}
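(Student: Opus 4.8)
The plan is to argue by contraposition: I would assume $C_\IMA(\fb^\text{D}, p_\ub) = 0$ and show that this forces the observations $x_1,\dots,x_n$ to be mutually independent, contradicting the hypothesis that $x_i \nindep x_j$ for some $i \neq j$. The first reduction is immediate from item (i) of \Cref{prop:global_IMA_contrast_properties}: vanishing of the global contrast forces $\Jb_{\fb^\text{D}}(\yb^\text{D}) = \Ob(\yb^\text{D})\Db(\yb^\text{D})$ for $p_\ub$-almost every $\yb^\text{D}$, with $\Ob$ orthogonal and $\Db$ diagonal; equivalently, since the local contrast is pointwise nonnegative (item (i) of \Cref{prop:local_IMA_contrast_properties}) and integrates to zero, the columns $\nicefrac{\partial \fb^\text{D}}{\partial s_i}$ of $\Jb_{\fb^\text{D}}$ are mutually orthogonal almost everywhere.

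Next I would bring in the triangular structure. By \cref{remark:triangular_Jacobian}, $\gb^\text{D}$ from \eqref{eq:Darmois_construction} has lower-triangular Jacobian, so its inverse $\fb^\text{D} = (\gb^\text{D})^{-1}$ has Jacobian $\Jb_{\fb^\text{D}} = (\Jb_{\gb^\text{D}})^{-1}$, which is again lower-triangular. The crux is then a purely linear-algebraic claim: an invertible lower-triangular matrix with mutually orthogonal columns is diagonal. I would prove this by downward induction on the columns. The last column is supported only on its diagonal entry, so its orthogonality with every earlier column forces all off-diagonal entries of the last row to vanish (the diagonal entry being nonzero by invertibility); this in turn makes the $(n-1)$-th column supported only on its diagonal entry, and repeating the argument peels off one column at a time until the whole matrix is diagonal. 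Applying this fact pointwise shows $\Jb_{\fb^\text{D}}$ is diagonal $p_\ub$-a.e.; since $\fb^\text{D}$ is smooth and $p_\ub$ has full, connected support on $(0,1)^n$, its continuous off-diagonal entries, being zero a.e., must vanish identically, so $\Jb_{\fb^\text{D}}$ is diagonal everywhere on the domain.

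A diagonal Jacobian on a connected domain means each coordinate $f^\text{D}_i$ depends only on $y^\text{D}_i$, i.e.\ $\fb^\text{D}$ is an element-wise map with $x_i = f^\text{D}_i(y^\text{D}_i)$. Because the Darmois components $y^\text{D}_i$ are mutually independent by construction, element-wise transformations preserve this independence, so the $x_i$ are mutually independent, contradicting the assumption; hence $C_\IMA(\fb^\text{D}, p_\ub) > 0$. I expect the main obstacle to be the linear-algebra lemma (triangular plus orthogonal columns implies diagonal) together with the upgrade from the measure-theoretic ``orthogonal almost everywhere'' statement to the genuinely pointwise element-wise structure: the latter is what legitimately transfers the independence of the $y^\text{D}_i$ to the $x_i$, and it relies essentially on smoothness and on $p_\ub$ having connected, full support so that off-diagonal Jacobian entries cannot be nonzero on a hidden measure-zero set.
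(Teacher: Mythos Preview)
Your proposal is correct and follows essentially the same route as the paper's proof: contrapose, use the triangular structure of $\Jb_{\fb^\text{D}}$, show that $C_\IMA=0$ forces diagonality almost everywhere, upgrade to everywhere via smoothness and full support, conclude element-wise and hence independence of the $x_i$. The only cosmetic difference is in how the step ``triangular Jacobian with $c_\IMA=0$ $\Rightarrow$ diagonal'' is argued: the paper packages this as a separate lemma via the direct computation $c_\IMA=\sum_i\log\sqrt{\sum_{j\ge i}w_{ji}^2}-\sum_i\log|w_{ii}|$, whereas you go through the orthogonal-columns characterisation and then your downward-induction linear-algebra lemma; both are equivalent and equally clean.
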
%
\vspace{-0.5em}
The proof is based on the fact that the Jacobian of $\gb^\text{D}$ is triangular (see~\Cref{remark:triangular_Jacobian}) and on the specific form of~\eqref{eq:Darmois_construction}.
A specific example %
of a mixing process satisfying the IMA assumption
is the case where $\fb$ is a conformal (angle-preserving) map.%
\begin{definition}[Conformal map]
\label{def:conformal_map}
A smooth map $\fb:\RR^n\rightarrow\RR^n$ is conformal if $\Jb_\fb(\sb)=\Ob(\sb)\lambda(\sb)$ $\forall\sb$, where $\lambda:\RR^n\rightarrow\RR$ is a scalar field,
and $\Ob\in O(n)$ is an orthogonal matrix.%
\end{definition}%
\begin{restatable}
{corollary}{confmapsadm}
\label{cor:IMA_identifiability_of_conformal_maps}
Under assumptions of~\Cref{thm:adm_darmois}, if additionally $\fb$ is a conformal map, then $(\fb,p_\sb)\in\Mcal_\IMA$ for any $p_\sb\in\Pcal$ due to~\cref{prop:global_IMA_contrast_properties} \textit{(i)}, see~\cref{def:conformal_map}. 
Based on~\cref{thm:adm_darmois}, $(\fb,p_\sb)$ is thus distinguishable from  Darmois solutions~$(\fb^\text{D}, p_\ub)$.%
\end{restatable}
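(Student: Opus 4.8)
The plan is to derive the statement directly from the characterisation of the zero set of $C_\IMA$ in \cref{prop:global_IMA_contrast_properties}~\textit{(i)} together with \cref{thm:adm_darmois}; no new machinery is required, so the argument is essentially a matching of algebraic forms. The corollary has two halves, which I would treat in turn: first that a conformal $\fb$ lands in $\Mcal_\IMA$, and second that this places $(\fb,p_\sb)$ on the opposite side of the contrast from any Darmois solution.

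For the first half, I would start from \cref{def:conformal_map}, which gives $\Jb_\fb(\sb)=\Ob(\sb)\lambda(\sb)$ with $\Ob(\sb)\in O(n)$ and $\lambda:\RR^n\to\RR$ a scalar field. The key (and only) observation is that a scalar factor can be rewritten as a diagonal one: setting $\Db(\sb):=\lambda(\sb)\mathbf{I}_n$, which is a diagonal matrix, we obtain $\Jb_\fb(\sb)=\Ob(\sb)\Db(\sb)$ for \emph{every} $\sb$. This is precisely the factorisation into an orthogonal times a diagonal matrix appearing in the equality condition of \cref{prop:global_IMA_contrast_properties}~\textit{(i)}, and since it holds pointwise it holds in particular almost surely with respect to any $p_\sb\in\Pcal$. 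Invoking that proposition then yields $C_\IMA(\fb,p_\sb)=0$, i.e.\ $(\fb,p_\sb)\in\Mcal_\IMA$ for every choice of source density $p_\sb\in\Pcal$.

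For the second half I would simply combine this with \cref{thm:adm_darmois}: under the standing assumption that $x_i\nindep x_j$ for some $i\neq j$, every Darmois solution satisfies $C_\IMA(\fb^\text{D},p_\ub)>0$. Hence $(\fb,p_\sb)$ and $(\fb^\text{D},p_\ub)$ realise distinct values of $C_\IMA$ (zero versus strictly positive), so they are distinguishable by the contrast, which is the assertion.

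I do not expect a genuine obstacle here: the whole content is recognising that the $\Ob\lambda$ form of a conformal Jacobian is a special case of the $\Ob\Db$ form, and that the identity holds everywhere, so no measure-zero exceptions can spoil the ``almost surely'' clause. The one hypothesis worth flagging explicitly is the invertibility of $\fb$, which forces $\lambda(\sb)\neq 0$; this makes $\Db(\sb)$ a nonsingular diagonal factor and guarantees $|\Jb_\fb(\sb)|\neq0$, and it is already built into membership of $\fb$ in $\Fcal$.
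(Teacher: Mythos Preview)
Your proposal is correct and matches the paper's own proof essentially verbatim: the paper also observes that the conformal form $\Jb_\fb(\sb)=\Ob(\sb)\lambda(\sb)$ is the special case $\Db(\sb)=\lambda(\sb)\Ib$ of the $\Ob(\sb)\Db(\sb)$ factorisation in \cref{prop:global_IMA_contrast_properties}~\textit{(i)}, and then defers the distinguishability claim to \cref{thm:adm_darmois}. Your additional remark about invertibility forcing $\lambda(\sb)\neq 0$ is a nice sanity check, though the paper does not bother to make it explicit.
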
%
\vspace{-0.5em}
This 
is consistent with
a result
that proves identifiability of conformal maps
for $n=2$ and conjectures it in general~\cite{hyvarinen1999nonlinear}.\footnote{Note that~\Cref{cor:IMA_identifiability_of_conformal_maps} holds for any dimensionality $n$.}
However, conformal maps are only a small subset of all maps for which $C_\IMA=0$, as is apparent from the more flexible condition of~\Cref{prop:global_IMA_contrast_properties}~\textit{(i)}, compared to the stricter~\Cref{def:conformal_map}.
\begin{example}[Polar to Cartesian coordinate transform]
Consider the \textit{non-conformal}
transformation from polar to Cartesian coordinates~(see~\cref{fig:orthogonal_coordinate_transform}), defined as $(x,y)=\fb(r,\theta):=(r\cos(\theta),r\sin(\theta))$ with independent sources $\sb=(r,\theta)$, with $r\sim U(0,R)$ and $\theta\sim U(0, 2\pi)$.\footnote{For different $p_\sb$, $(x,y)$ can be made to have independent Gaussian components~(\cite{taleb1999source}, II.B), and
$C_\IMA$-identifiability is lost; this shows that the assumption of~\Cref{thm:adm_darmois} that $x_i \nindep x_j$ for some $i \neq j$ is crucial.}
Then, $C_\IMA(\fb,p_\sb)=0$ and $C_\IMA(\fb^\text{D}, p_\ub)>0$ for any
Darmois solution $(\fb^\text{D}, p_\ub)$
---see~\Cref{app:examples} for details.
\end{example}%
Finally, for the case in which the true mixing is linear, we obtain the following result.%
\begin{restatable}{corollary}{admidentlinear}
\label{cor:IMA_identifiability_of_linear_ICA}
Consider a linear ICA model, $\xb=\Ab\sb$, with $\EE[\sb^\top\sb]=\Ib$, and $\Ab\in O(n)$ an orthogonal, non-trivial mixing matrix, i.e., not the product of a diagonal and a permutation matrix~$\Db \Pb$.
If at most one of the $s_i$ is Gaussian, then $C_\IMA(\Ab, p_\sb)=0$ and $C_\IMA(\fb^\text{D}, p_\ub)>0$.%
\end{restatable}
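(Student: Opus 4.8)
The statement splits into two independent claims about the same orthogonal linear model $\xb=\Ab\sb$: that the true mixing has vanishing contrast, $C_\IMA(\Ab,p_\sb)=0$, and that every associated Darmois solution has strictly positive contrast, $C_\IMA(\fb^\text{D},p_\ub)>0$. The first is essentially immediate; the work is all in setting up the second so that \Cref{thm:adm_darmois} applies.

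For the first claim I would observe that the map $\fb(\sb)=\Ab\sb$ is linear, so its Jacobian is constant and equal to $\Ab$ at every $\sb$. Since $\Ab\in O(n)$, we may write $\Jb_\fb(\sb)=\Ab=\Ab\,\Ib$ with $\Ab$ orthogonal and $\Ib$ diagonal, so the hypothesis of \Cref{prop:global_IMA_contrast_properties}~\textit{(i)} (with $\Ob(\sb)=\Ab$ and $\Db(\sb)=\Ib$) holds everywhere. Hence $C_\IMA(\Ab,p_\sb)=0$. Equivalently, the columns of $\Ab$ are orthonormal, so by \Cref{prop:local_IMA_contrast_properties}~\textit{(i)} the local contrast vanishes pointwise and its expectation is zero.

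For the second claim, \Cref{thm:adm_darmois} already gives $C_\IMA(\fb^\text{D},p_\ub)>0$ for any Darmois solution as soon as the observations satisfy $x_i\nindep x_j$ for some $i\neq j$; the only thing to verify is therefore that the orthogonal linear model produces dependent mixtures. I would argue this by contradiction: suppose all $x_i$ were mutually independent. Then $\xb$ and $\sb=\Ab^\top\xb$ are two representations of the same whitened random vector (note $\EE[\xb\xb^\top]=\Ab\,\EE[\sb\sb^\top]\,\Ab^\top=\Ib$) as invertible linear mixtures of independent components. Under the ``at most one Gaussian'' assumption, the classical uniqueness of linear ICA via the Darmois--Skitovich theorem~\cite{darmois1953analyse,skitovic1953property,comon1994independent} forces the two representations to coincide up to permutation and scaling, i.e.\ $\Ab=\Db\Pb$ for some diagonal $\Db$ and permutation $\Pb$. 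This contradicts the assumption that $\Ab$ is non-trivial. Hence $x_i\nindep x_j$ for some $i\neq j$, and \Cref{thm:adm_darmois} yields $C_\IMA(\fb^\text{D},p_\ub)>0$.

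The main obstacle is the careful invocation of linear-ICA identifiability in the dependence step: one must check that the hypotheses line up (both source and observation vectors are white because $\Ab$ is orthogonal and $\EE[\sb^\top\sb]=\Ib$, and at most one $s_i$ is Gaussian), and that the non-triviality clause ``$\Ab\neq\Db\Pb$'' is exactly what rules out the degenerate permutation/sign-scaling case that would make the mixtures independent. Everything else—the pointwise orthogonality computation for the first claim and the reduction of the second claim to the dependence condition—is routine once \Cref{prop:global_IMA_contrast_properties} and \Cref{thm:adm_darmois} are in hand.
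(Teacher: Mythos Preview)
Your proposal is correct and follows essentially the same route as the paper: first, $C_\IMA(\Ab,p_\sb)=0$ because $\Jb_\fb\equiv\Ab$ has orthogonal columns (\Cref{prop:global_IMA_contrast_properties}\,\textit{(i)}); second, the non-triviality of $\Ab$ together with the ``at most one Gaussian'' assumption forces $x_i\nindep x_j$ for some $i\neq j$ by linear-ICA identifiability (the paper invokes~\Cref{thm:identifiability_of_linear_ICA}, you cite Darmois--Skitovich/Comon), after which \Cref{thm:adm_darmois} gives $C_\IMA(\fb^\text{D},p_\ub)>0$. One minor point: when negating the dependence hypothesis you should assume \emph{pairwise} independence of the $x_i$ rather than mutual independence, but since Comon's theorem~\cite{comon1994independent} shows these are equivalent in this setting, the argument goes through unchanged.
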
%
In a ``blind'' setting, we may not know a priori whether the true mixing is linear or not, and thus choose to learn a nonlinear unmixing.
\Cref{cor:IMA_identifiability_of_linear_ICA} shows that, in this case, Darmois solutions are still distinguishable from the true mixing via $C_\IMA$.
Note that unlike in~\cref{cor:IMA_identifiability_of_conformal_maps}, the assumption that $x_i \nindep x_j$ for some $i \neq j$ is not required for~\cref{cor:IMA_identifiability_of_linear_ICA}. In fact, due to Theorem 11 of~\cite{comon1994independent}, it follows from the assumed linear ICA model with non-Gaussian sources, and the fact that the mixing matrix  is not the product of a diagonal and a permutation matrix
(see also%
~\Cref{app:identifiability_and_linear_ICA}). 

Having shown that the IMA principle allows to distinguish a class of models (including, but not limited to conformal maps) from Darmois solutions, we next turn to a second well-known counterexample to identifiability: the ``rotated-Gaussian'' MPA $\ab^\Rb(p_\sb)$~\eqref{eq:measure_preserving_automorphism_Gaussian} from~\Cref{def:measure_preserving_automorphism_Gaussian}.
Our second main theoretical result is that, under suitable assumptions, this class of MPAs can also be ruled out for ``non-trivial'' $\Rb$.%
\begin{restatable}{theorem}{thmMPA}
\label{thm:IMA_identifiability_measure_preserving_automorphism}
Let $(\fb,p_\sb)\in\Mcal_\IMA$ and assume that $\fb$ is a conformal map.
Given $\Rb\in O(n)$, assume
additionally
that 
    $\exists$ at least one non-Gaussian 
    $s_i$ whose associated canonical 
    basis vector $\eb_i$ is not transformed by $\Rb^{-1}=\Rb^\top$ into another canonical basis vector $\eb_j$. 
Then $C_\IMA(\fb\circ \ab^\Rb(p_\sb),p_\sb)>0$.%
\end{restatable}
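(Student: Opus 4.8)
The plan is to use~\Cref{prop:global_IMA_contrast_properties}~\textit{(i)}: since $C_\IMA\ge 0$ always, it suffices to show that the Jacobian of $\fb\circ\ab^\Rb(p_\sb)$ fails to be (orthogonal)$\times$(diagonal) on a set of positive $p_\sb$-measure. Writing $\ab:=\ab^\Rb(p_\sb)$, the chain rule gives $\Jb_{\fb\circ\ab}(\sb)=\Jb_\fb(\ab(\sb))\,\Jb_\ab(\sb)$, and conformality of $\fb$ means $\Jb_\fb(\ab(\sb))=\lambda(\ab(\sb))\,\Ob(\ab(\sb))$ with $\Ob$ orthogonal and $\lambda$ a scalar. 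Because a scalar can be moved to the right as $\lambda\Ib$, the local contrast is unaffected by both this factor and the leftmost orthogonal factor, by the left-orthogonal and right-diagonal invariances of~\Cref{prop:local_IMA_contrast_properties}~\textit{(ii)}. Hence the local contrast of $\fb\circ\ab$ at $\sb$ equals the local contrast of the bare matrix $\Jb_\ab(\sb)$, and the whole problem reduces to studying the Jacobian of the pure MPA $\ab$.

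Next I would compute $\Jb_\ab$ in the \emph{Gaussianised} coordinates $\zb:=\bm\Phi^{-1}(\Fb_\sb(\sb))$, which are distributed as $\mathcal N(\mathbf 0,\Ib)$ under $p_\sb$. Differentiating the five element-wise/linear factors of $\ab=\Fb_\sb^{-1}\circ\bm\Phi\circ\Rb\circ\bm\Phi^{-1}\circ\Fb_\sb$ yields $\Jb_\ab=\Db_5\Db_4\,\Rb\,\Db_2\Db_1$ with $\Db_1,\Db_2,\Db_4,\Db_5$ diagonal. Absorbing the rightmost diagonal factor $\Db_2\Db_1$ (again by right-diagonal invariance), the local contrast of $\fb\circ\ab$ at $\sb$ equals that of $\Db(\wb)\,\Rb$, where $\wb:=\Rb\zb$ and $\Db(\wb)=\Db_5\Db_4$ is diagonal with entries $d_i(w_i)=\phi(w_i)/p_{s_i}(F_{s_i}^{-1}(\Phi(w_i)))$ depending \emph{only} on the single coordinate $w_i$. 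Crucially, since the standard Gaussian is rotation-invariant, the change of variables $\zb\mapsto\wb=\Rb\zb$ turns the expectation into an integral of the nonnegative matrix-contrast of $\Db(\wb)\Rb$ against the standard Gaussian density $\phi_n(\wb)$.

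I would then argue by contradiction. If $C_\IMA(\fb\circ\ab,p_\sb)=0$, then since the integrand is nonnegative (\Cref{prop:local_IMA_contrast_properties}~\textit{(i)}) and $\phi_n>0$, the columns of $\Db(\wb)\Rb$ must be orthogonal for a.e.\ $\wb$, i.e.\ for every $j\ne k$, $\sum_i d_i(w_i)^2\,R_{ij}R_{ik}=0$ for a.e.\ $\wb$. For fixed $(j,k)$ this is a sum of univariate functions of the independent coordinates $w_i$ that vanishes a.e., so a separation-of-variables argument forces each summand $R_{ij}R_{ik}\,d_i(w_i)^2$ to be constant in $w_i$. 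Now I invoke the hypothesis: the non-Gaussian source $s_i$ whose $\eb_i$ is not mapped by $\Rb^\top$ to a canonical basis vector corresponds to a row of $\Rb$ with at least two nonzero entries, i.e.\ $R_{ij}R_{ik}\ne 0$ for some $j\ne k$. For that pair, constancy forces $d_i(w_i)$ constant; but $d_i$ constant means the per-coordinate Gaussianisation map $\Phi^{-1}\circ F_{s_i}$ has constant derivative, hence is affine, hence $s_i$ is Gaussian — contradicting the assumption. Therefore $C_\IMA(\fb\circ\ab^\Rb(p_\sb),p_\sb)>0$.

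I expect the main obstacle to be the measure-theoretic separation-of-variables step together with the characterisation \emph{$d_i$ constant $\iff$ $s_i$ Gaussian}: one must justify that a.e.-vanishing of $\sum_i d_i(w_i)^2R_{ij}R_{ik}$ (rather than everywhere-vanishing) already pins each term to a constant, and that smoothness upgrades ``constant a.e.''\ to an affine Gaussianisation map. By contrast, the chain-rule factorisation of $\Jb_\ab$ and the repeated use of the invariances in~\Cref{prop:local_IMA_contrast_properties}~\textit{(ii)} together with the rotation-invariance of the Gaussian are routine, but they are exactly what collapse the composite Jacobian to the single tractable matrix $\Db(\wb)\Rb$, which is the real engine of the argument.
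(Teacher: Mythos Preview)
Your proposal is correct and follows essentially the same strategy as the paper: strip off the element-wise factors using the invariances of $c_\IMA$/$C_\IMA$, reduce the Jacobian to a diagonal-times-$\Rb$ matrix, and derive a contradiction from the off-diagonal condition by showing that constancy of the relevant diagonal entry would force the corresponding source to be Gaussian. Your explicit change of variables $\wb=\Rb\zb$ (using rotation invariance of the Gaussian) and the ensuing separation-of-variables argument is in fact a slightly cleaner packaging of the paper's step of ``varying $z_1$ while the other terms stay constant,'' but the two arguments are otherwise the same.
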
%
\Cref{thm:IMA_identifiability_measure_preserving_automorphism} states that for conformal maps, applying the $\ab^\Rb(p_\sb)$ transformation at the level of the sources leads to an increase in $C_\IMA$ except for very specific rotations $\Rb$ that are ``fine-tuned'' to $p_\sb$ in the sense that they permute all non-Gaussian sources $s_i$ with another $s_j$.
Interestingly, as for the linear case, non-Gaussianity again plays an important role in the proof of~\Cref{thm:IMA_identifiability_measure_preserving_automorphism}.

\section{Experiments}
\label{sec:experiments}

\newcommand\width{2.25}
\newcommand\height{1.95}
\newcommand\gap{.005}
\newcommand\minipagewidth{.13}
\newcommand\folder{plots_hsv}
\newcommand\widthbottom{2.4}
\newcommand\heightbottom{2.85}

\newcommand\leftplace{-0.9}
\newcommand\lowplace{3.0}

\begin{figure}[t]
    \centering
        \begin{minipage}{\minipagewidth \textwidth}
            \begin{subfigure}{1.0\textwidth}
            \centering
            \includegraphics[height=\height cm, keepaspectratio]{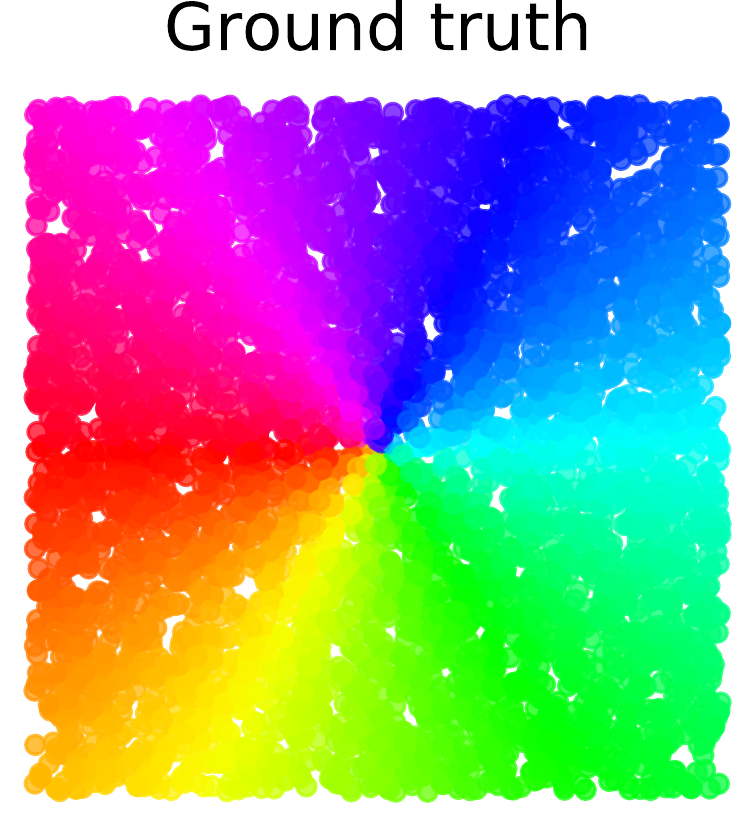}
            \end{subfigure}
        \end{minipage}%
        \hspace{\gap em}
        \begin{minipage}{\minipagewidth \textwidth}
            \begin{subfigure}{1.0\textwidth}
            \centering
            \includegraphics[height=\height cm, keepaspectratio]{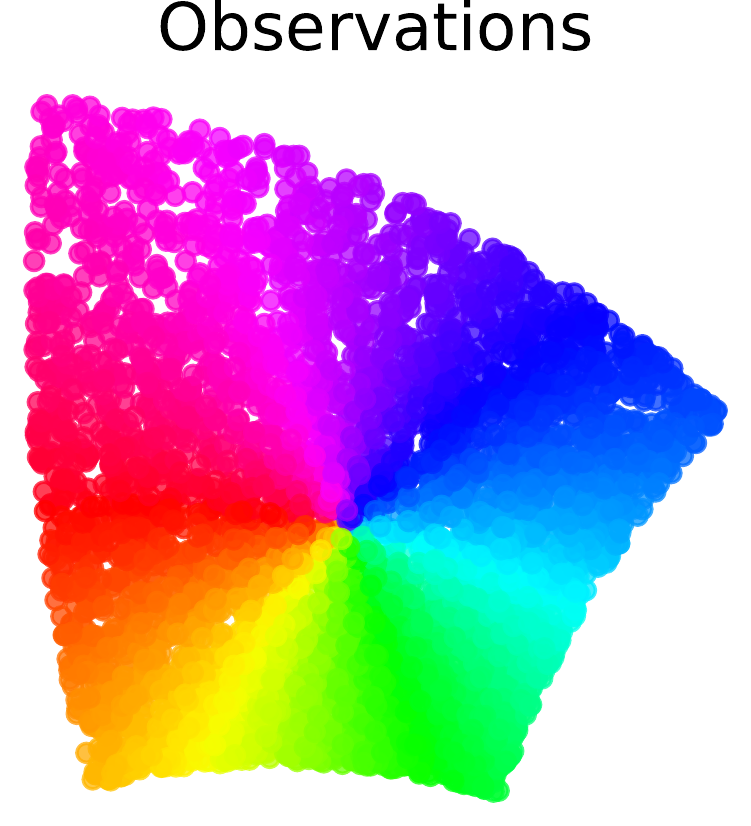}
            \end{subfigure}
        \end{minipage}%
        \hspace{\gap em}
        \begin{minipage}{\minipagewidth \textwidth}
            \begin{subfigure}{1.0\textwidth}
            \centering
            \includegraphics[height=\height cm, keepaspectratio]{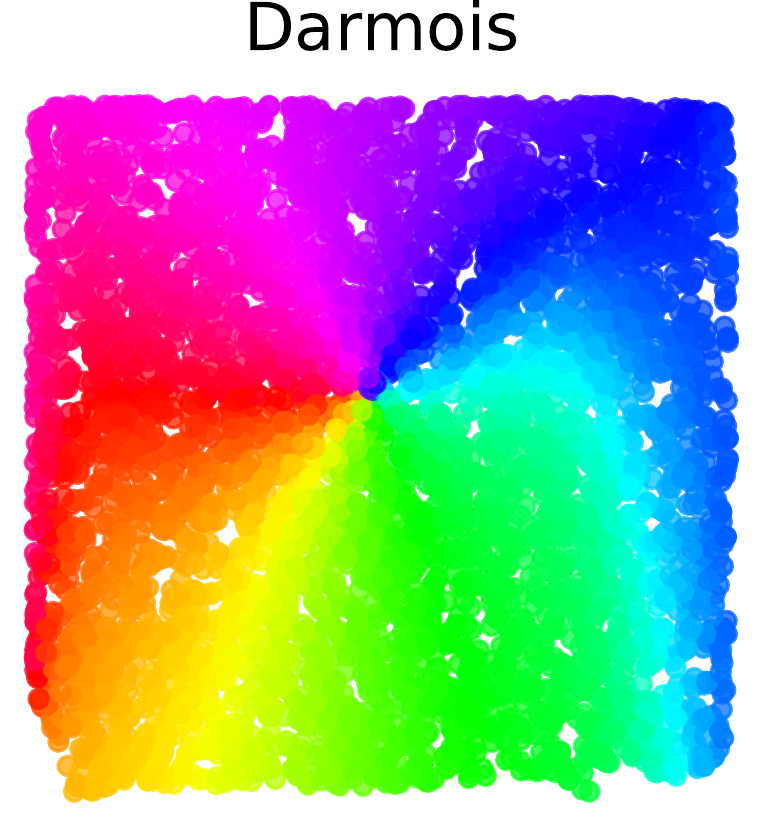}
            \end{subfigure}
        \end{minipage}%
        \hspace{\gap em}
        \begin{minipage}{\minipagewidth \textwidth}
            \begin{subfigure}{1.0\textwidth}
            \centering
            \includegraphics[height=\height cm, keepaspectratio]{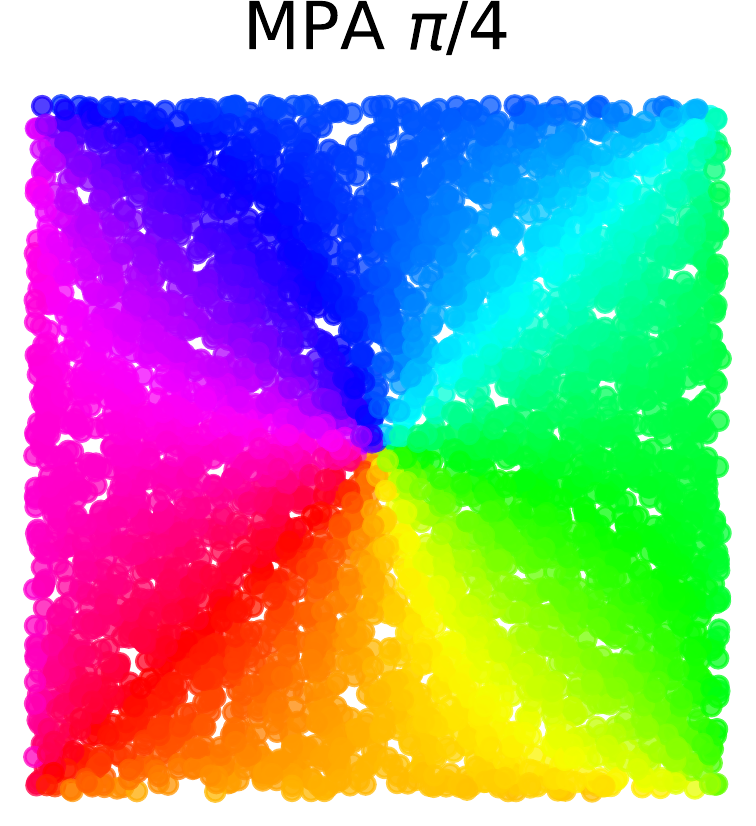}
            \end{subfigure}
        \end{minipage}%
        \hspace{\gap em}
        \begin{minipage}{\minipagewidth \textwidth}
            \begin{subfigure}{1.0\textwidth}
            \centering
            \includegraphics[height=\height cm, keepaspectratio]{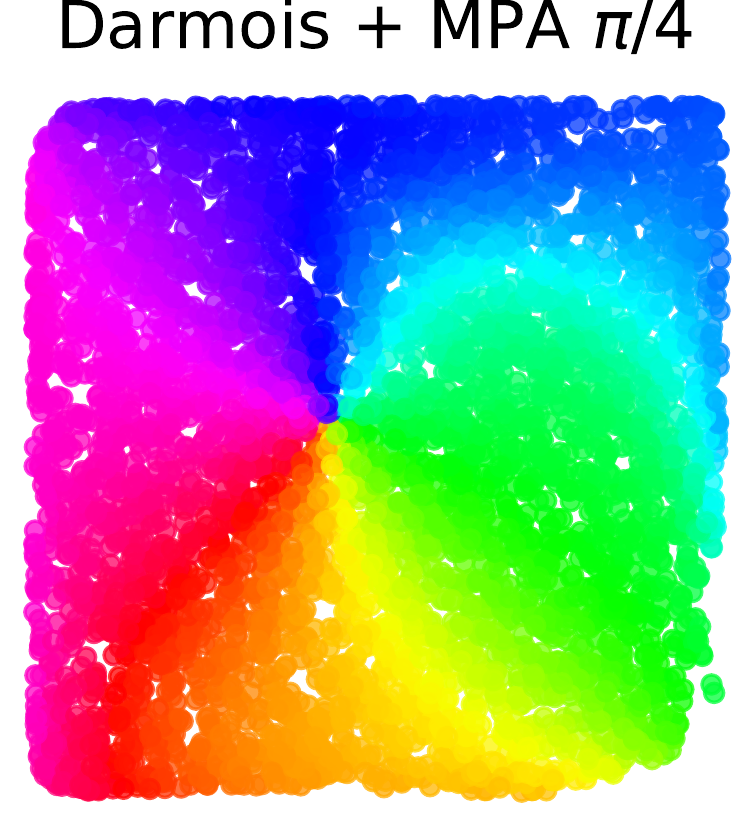}
            \end{subfigure}
        \end{minipage}%
        \hspace{\gap em}
        \begin{minipage}{\minipagewidth \textwidth}
            \begin{subfigure}{1.0\textwidth}
            \centering
            \includegraphics[height=\height cm, keepaspectratio]{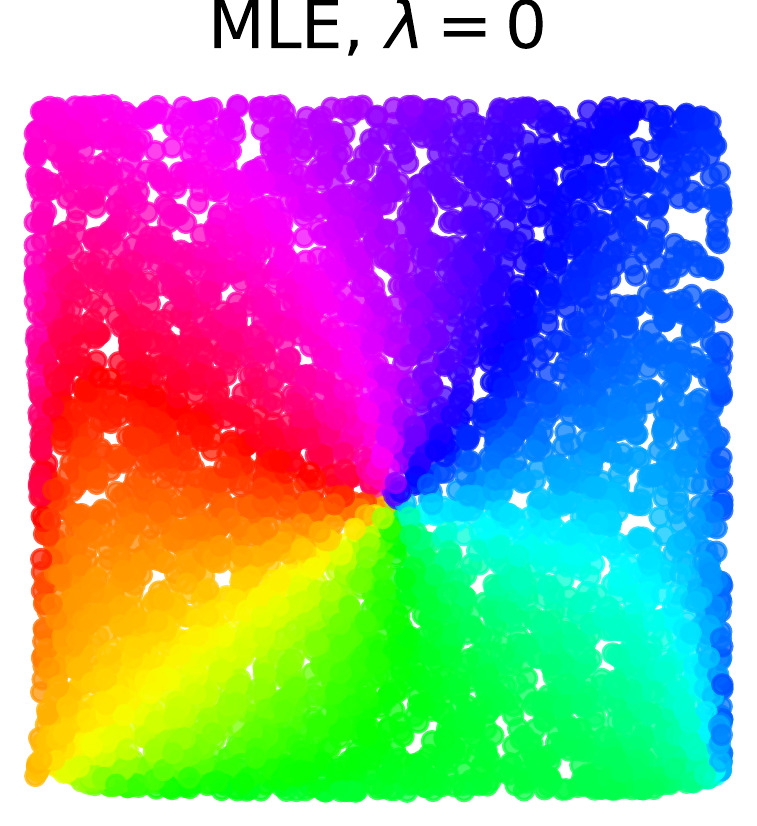}
            \end{subfigure}
        \end{minipage}
        \hspace{\gap em}
        \begin{minipage}{\minipagewidth \textwidth}
            \begin{subfigure}{1.0\textwidth}
            \centering
            \includegraphics[height=\height cm, keepaspectratio]{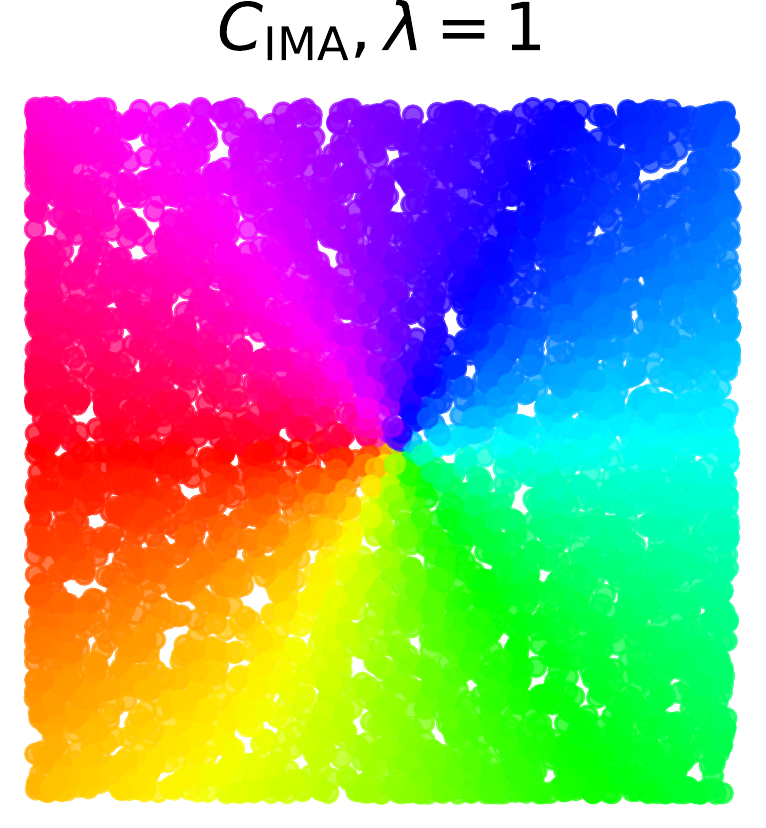}
            \end{subfigure}
        \end{minipage}
    \vspace{0.2em}
    \begin{subfigure}[b]{0.24\textwidth}
        \centering

        \begin{overpic}[height=\heightbottom cm]{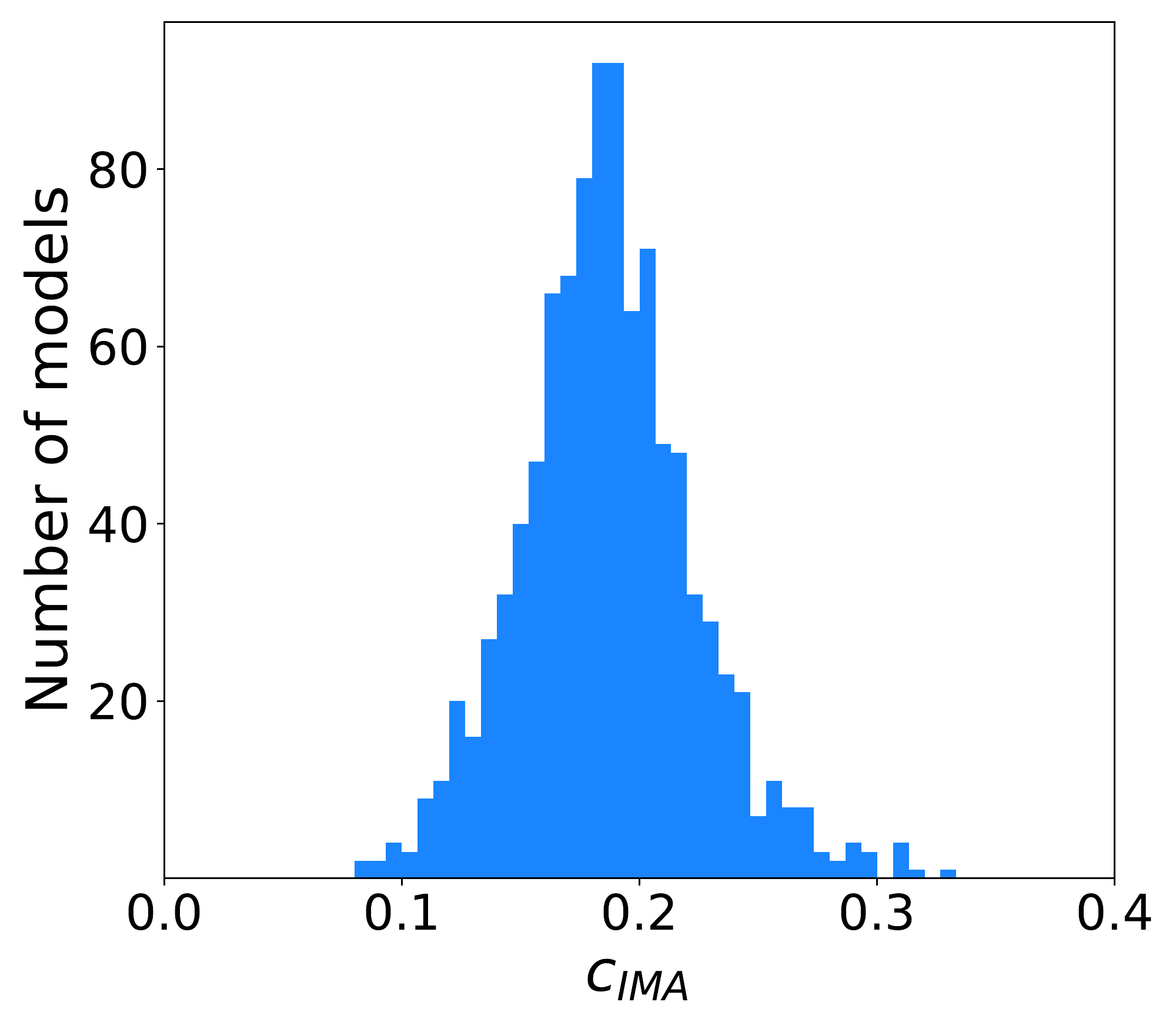}
 \put (\leftplace, \lowplace) {\textbf{\small(a)}}
\end{overpic}
    \end{subfigure}%
    \begin{subfigure}[b]{0.24\textwidth}
        \centering
        \begin{overpic}[height=\heightbottom cm]{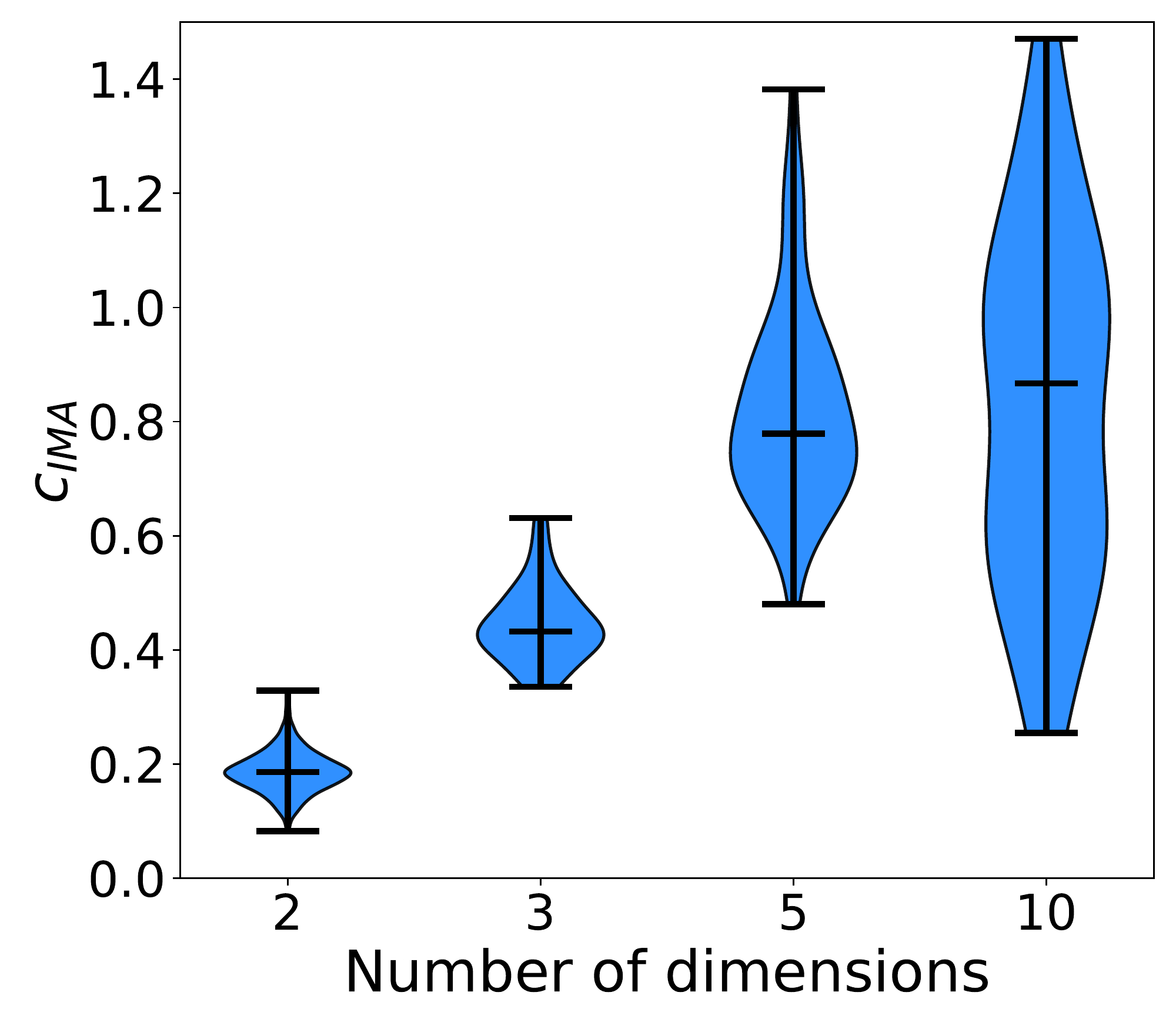}
 \put (\leftplace, \lowplace) {\textbf{\small(b)}}
\end{overpic}
    \end{subfigure}%
    \begin{subfigure}[b]{0.24\textwidth}
        \centering
                \begin{overpic}[height=\heightbottom cm]{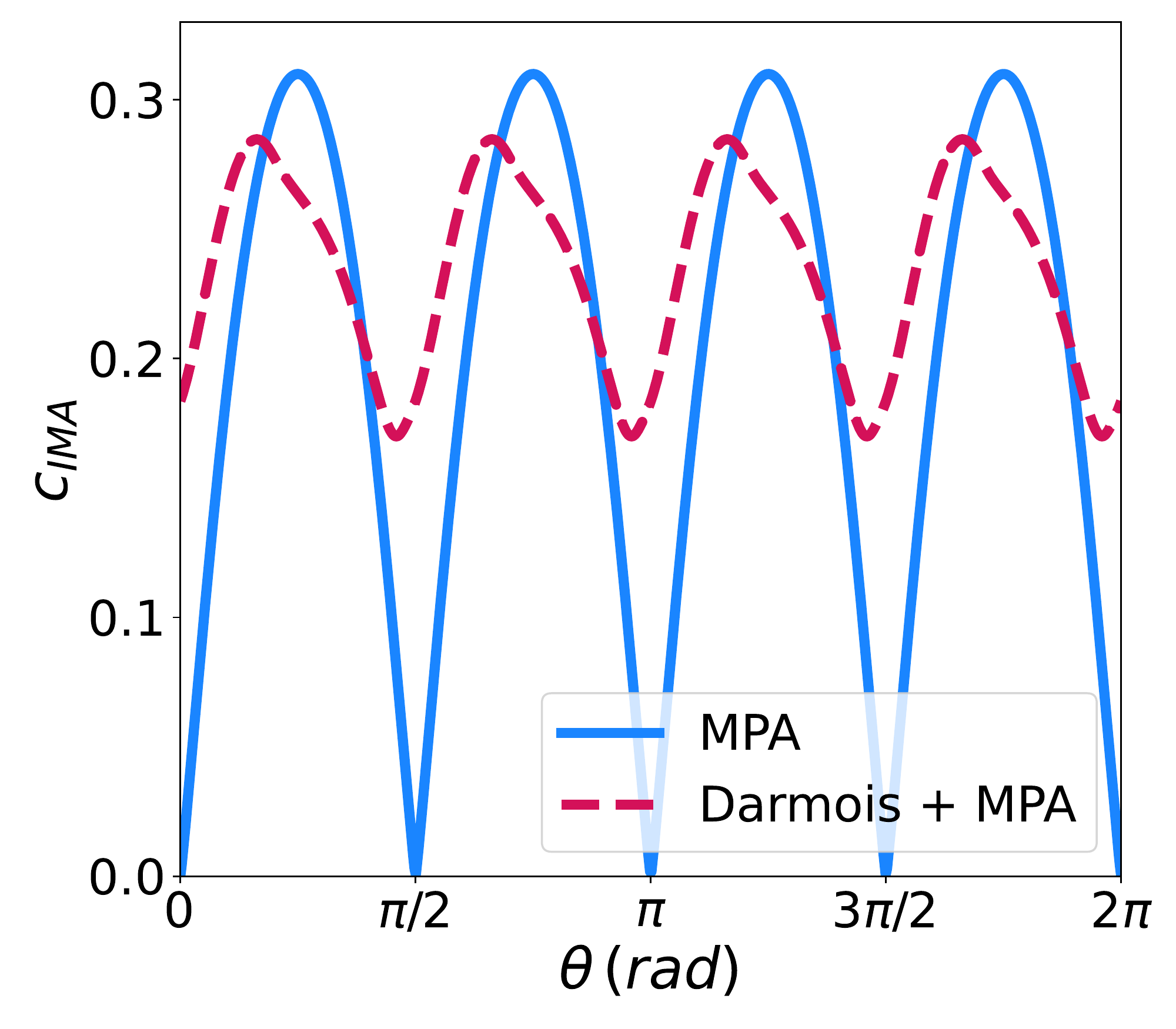}
 \put (\leftplace, \lowplace) {\textbf{\small(c)}}
\end{overpic}
    \end{subfigure}
    \begin{subfigure}[b]{0.24\textwidth}
        \centering
        \begin{overpic}[height=\heightbottom cm]{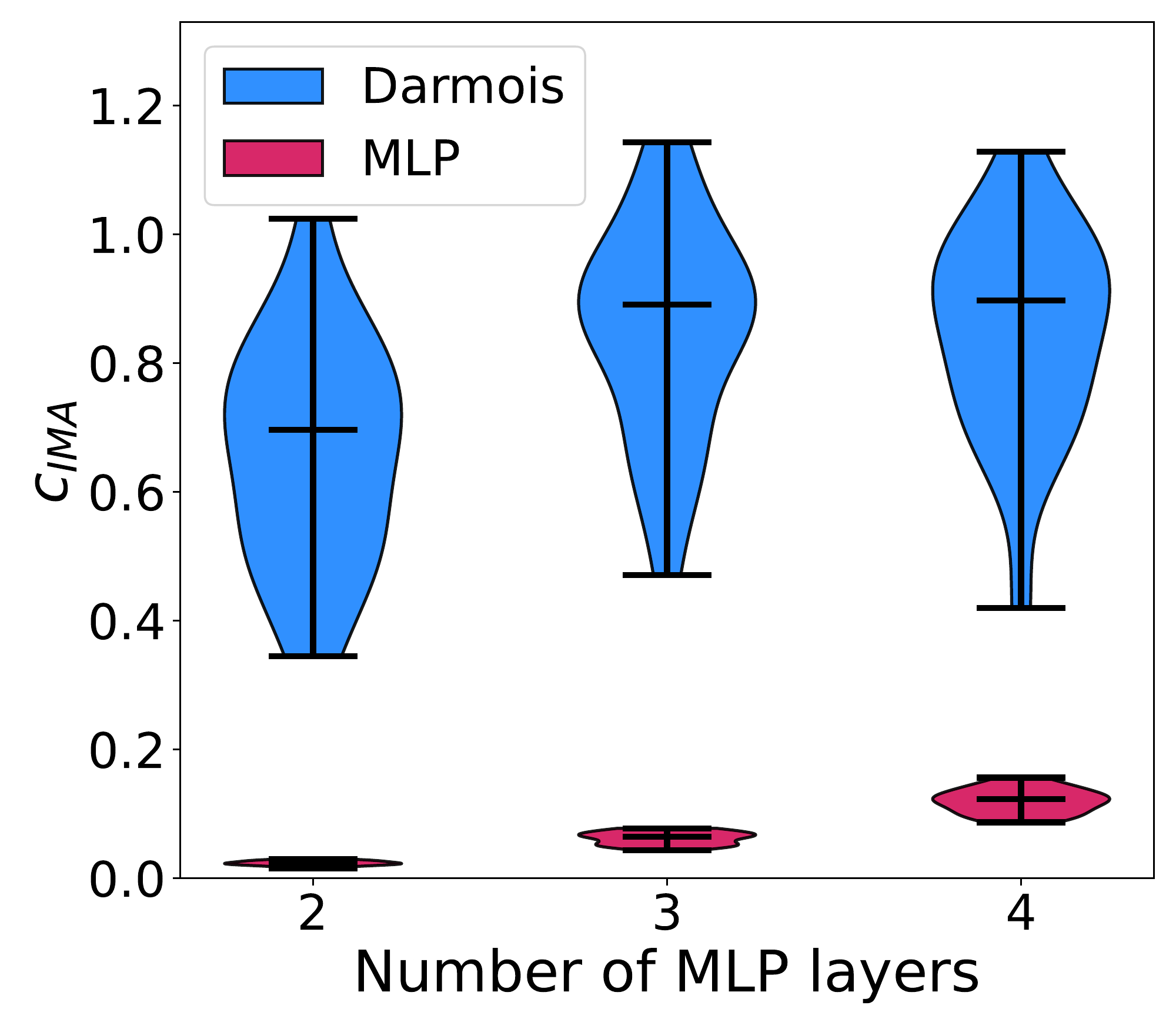}
 \put (\leftplace, \lowplace) {\textbf{\small(d)}}
\end{overpic}
    \end{subfigure}%
    \vspace{-0.5em}
    \caption{\small \textbf{Top.} Visual comparison of different nonlinear ICA solutions for $n=2$: \textit{(left to right)} true sources; observed mixtures;  Darmois solution; true unmixing, composed with the measure preserving automorphism (MPA) from~\eqref{eq:measure_preserving_automorphism_Gaussian} (with rotation by~$\nicefrac{\pi}{4}$); Darmois solution composed with the same MPA; maximum likelihood~($\lambda=0$); %
    and $C_\IMA$-regularised approach~($\lambda=1$).
    \textbf{Bottom.} Quantitative comparison of $C_\IMA$ for different spurious solutions:
    learnt Darmois solutions for \textbf{(a)} $n=2$, and \textbf{(b)} $n\in\{2, 3, 5, 10\}$ dimensions;
    \textbf{(c)} composition of the MPA~\eqref{eq:measure_preserving_automorphism_Gaussian} in $n=2$ dim.\ with the true solution (blue) and a  Darmois solution (red) for different angles. \textbf{(d)} 
    $C_\IMA$ distribution for true MLP mixing (red)  vs. Darmois solution (blue) for $n=5$ dim., $L\in\{2,3,4\}$ layers.
    }
    \label{fig:results1}
\end{figure}

Our theoretical results from~\cref{sec:IMA} suggest that $C_\IMA$ is a promising contrast function for nonlinear blind source separation. 
We test this empirically by evaluating the $C_\IMA$ of spurious nonlinear ICA solutions~(\cref{sec:experiment1_evaluation}), and using it as a learning objective to recover the true solution~(\cref{sec:experiment2_learning}).

We sample the ground truth sources from a uniform distribution in $[0,1]^n$; the reconstructed sources
are also mapped to the uniform hypercube as a reference measure via the CDF transform.
Unless otherwise specified, the ground truth mixing~$\fb$ is a M\"obius transformation~\cite{phillips1969liouville} (i.e., a conformal map) with randomly sampled parameters, thereby satisfying~\cref{principle:IMA}. In all of our experiments, we use JAX~\cite{jax2018github} and Distrax~\cite{distrax2021github}. For additional technical details, equations and plots see~\Cref{app:experiments}. The code to reproduce our experiments is available at \href{https://github.com/lgresele/independent-mechanism-analysis}{this link}.

\subsection{Numerical evaluation of the \texorpdfstring{$C_\IMA$}{numeval} contrast for spurious nonlinear ICA solutions
}
\label{sec:experiment1_evaluation}

\textbf{Learning the Darmois construction.} To
learn the Darmois construction from data, we use
 normalising flows%
, %
see~\cite{huang2018neural, papamakarios2019normalizing}. %
Since Darmois solutions have triangular Jacobian~(\cref{remark:triangular_Jacobian}), we use an
architecture based on 
residual flows~\cite{chen2019residualflows} which we constrain such that the Jacobian of the full model is 
triangular. This yields an expressive model which we %
train effectively via maximum likelihood.

\textbf{$C_\IMA$ of Darmois solutions.}
To check whether Darmois solutions (learnt from finite data) can be distinguished from the true one, as predicted by~\cref{thm:adm_darmois}, we generate $1000$ random mixing functions for $n=2$, compute the
$C_{\IMA}$ values of learnt solutions, 
and find that all values are indeed significantly larger than zero, see~\cref{fig:results1} \textbf{(a)}.
The same holds for higher dimensions,
see~\cref{fig:results1} \textbf{(b)} for results with $50$ random mixings for $n\in \{2, 3, 5 ,10\}$: with higher dimensionality, both the mean and variance of the $C_\IMA$ distribution for the learnt Darmois solutions generally attain higher values.\footnote{the latter possibly due to the increased difficulty of the learning task for larger $n$} We confirmed these findings for mappings which are not conformal, while still satisfying~\eqref{eq:IMA_principle}, in~\Cref{sec:app_eval}.

\textbf{$C_\IMA$ of MPAs.}
We also investigate the effect on $C_\IMA$ of applying an MPA $\ab^{\Rb}(\cdot)$ from~\eqref{eq:measure_preserving_automorphism_Gaussian} to the true solution or a learnt Darmois solution.
Results for $n=2$ dim.\ for different rotation matrices $\Rb$ (parametrised by the angle $\theta$) are shown in~\cref{fig:results1} \textbf{(c)}.
As expected, the behavior is periodic in $\theta$, and vanishes for the true solution (blue) at multiples of $\nicefrac{\pi}{2}$, i.e., when $\Rb$ is a permutation matrix, as predicted by~\cref{thm:IMA_identifiability_measure_preserving_automorphism}. For the learnt Darmois solution (red, dashed) $C_\IMA$ remains larger than zero.

\textbf{$C_\IMA$ values for random MLPs.} 
Lastly, we study
the behavior 
of 
spurious solutions based on the Darmois construction
under deviations from our assumption of $C_\IMA=0$ for the true mixing function.
To this end, we use invertible MLPs with
orthogonal weight initalisation and \texttt{leaky\_tanh} activations~\cite{gresele2020relative} as mixing functions; 
the more layers $L$ are added to the mixing MLP, the larger a deviation from our assumptions is expected. We compare the true mixing and learnt Darmois solutions over \tochange{$20$} realisations for each $L \in \{2, 3, 4\}$, $n=5$. 
Results are shown in figure~\cref{fig:results1} \textbf{(d)}: the $C_\IMA$ of the mixing MLPs grows with $L$; still, the one of the Darmois solution is typically higher.

\textbf{Summary.} We verify that spurious solutions can be distinguished from the true one based on~$C_\IMA$.

\subsection{Learning nonlinear ICA solutions with \texorpdfstring{$C_\IMA$}{learn}-regularised maximum likelihood
}
\label{sec:experiment2_learning}

\textbf{Experimental setup.} To use $C_\IMA$ as a learning signal, we consider a regularised maximum-likelihood approach, with the following objective:
${\Lcal(\gb) = \EE_{\xb}[\log p_\gb(\xb)] - \lambda \, C_\IMA(\gb^{-1}, p_\yb)}$, where $\gb$ denotes the learnt unmixing, $\yb = \gb(\xb)$ the reconstructed sources, and $\lambda\geq0$ a Lagrange multiplier. %
For $\lambda=0$, this corresponds to standard maximum likelihood estimation, whereas for $\lambda>0$, $\Lcal$ lower-bounds the likelihood, and recovers it exactly iff.\ $(\gb^{-1},p_\yb)\in\Mcal_\IMA$.
We train a residual flow $\gb$ (with full Jacobian) to maximise $\Lcal$.
For evaluation, we compute (i) the KL divergence to the true data likelihood, as a measure of goodness of fit for the learnt flow model; and (ii)
the mean correlation coefficient (MCC) between ground truth and reconstructed sources~\cite{hyvarinen2016unsupervised, khemakhem2020variational}.
We also introduce
(iii) a nonlinear extension of the Amari distance~\cite{amari1996new}
between the true mixing and the learnt unmixing, which
is larger than or equal to zero, with equality iff.\ %
the learnt model belongs to the BSS equivalence class~(\cref{def:bss_identifiability}) of the true solution, see~\Cref{sec:app_eval} for details. %

\textbf{Results.}
In~\cref{fig:results1} \textit{(Top)}, we show an 
example of the distortion induced by different \textit{spurious}
solutions for $n=2$, and contrast it with a solution learnt using our proposed objective \textit{(rightmost plot)}.
Visually, we find that the $C_\IMA$-regularised solution (with $\lambda=1$) recovers the true sources most faithfully.
Quantitative results for 50 learnt models for each $\lambda\in \{0.0, 0.5, 1.0 \}$ and \tochange{$n\in\{5, 7\}$} are summarised in~\cref{fig:results2} \tochange{ (see~\Cref{app:experiments} for additional plots%
)}.
As indicated by the KL divergence values \textit{(left)}, most trained models achieve a good fit to the data across all values of $\lambda$.\footnote{models with $n=7$ have high outlier KL values, seemingly less pronounced for nonzero values of $\lambda$%
}
We observe that using $C_\IMA$ (i.e.,
$\lambda>0$) is beneficial for BSS, both in terms of our nonlinear Amari distance \textit{(center, lower is better)} and  MCC \textit{(right, higher is better)}, though we do not observe a substantial difference between $\lambda=0.5$ and $\lambda=1$.\footnote{In~\Cref{sec:app_eval}, we also show that our method is superior to a linear ICA baseline, FastICA~\cite{hyvarinen1999fast}.}

\textbf{Summary:} $C_\IMA$ can be a useful learning signal to recover the true solution.

%

%
%
%
%
%
%
%

%

%
%
%
%
%
%
%
%
%
%
%
%
%
%
%
%
%
%

%
%
%
%
%
%
%
%
%
%

%
%
%
%
%

%

%
%
%
%
%
%

%

\newcommand\heightrow{2.9}
\newcommand\heightrowtwo{2.9}
\newcommand\heightamari{2.853}
\newcommand\heightmcc{2.853}
\newcommand\gapfive{0.002}

\begin{figure}%
    \hspace{-0.8 em}
    \begin{subfigure}[b]{0.15\textwidth}
        \centering
        \includegraphics[height=\heightrow cm]{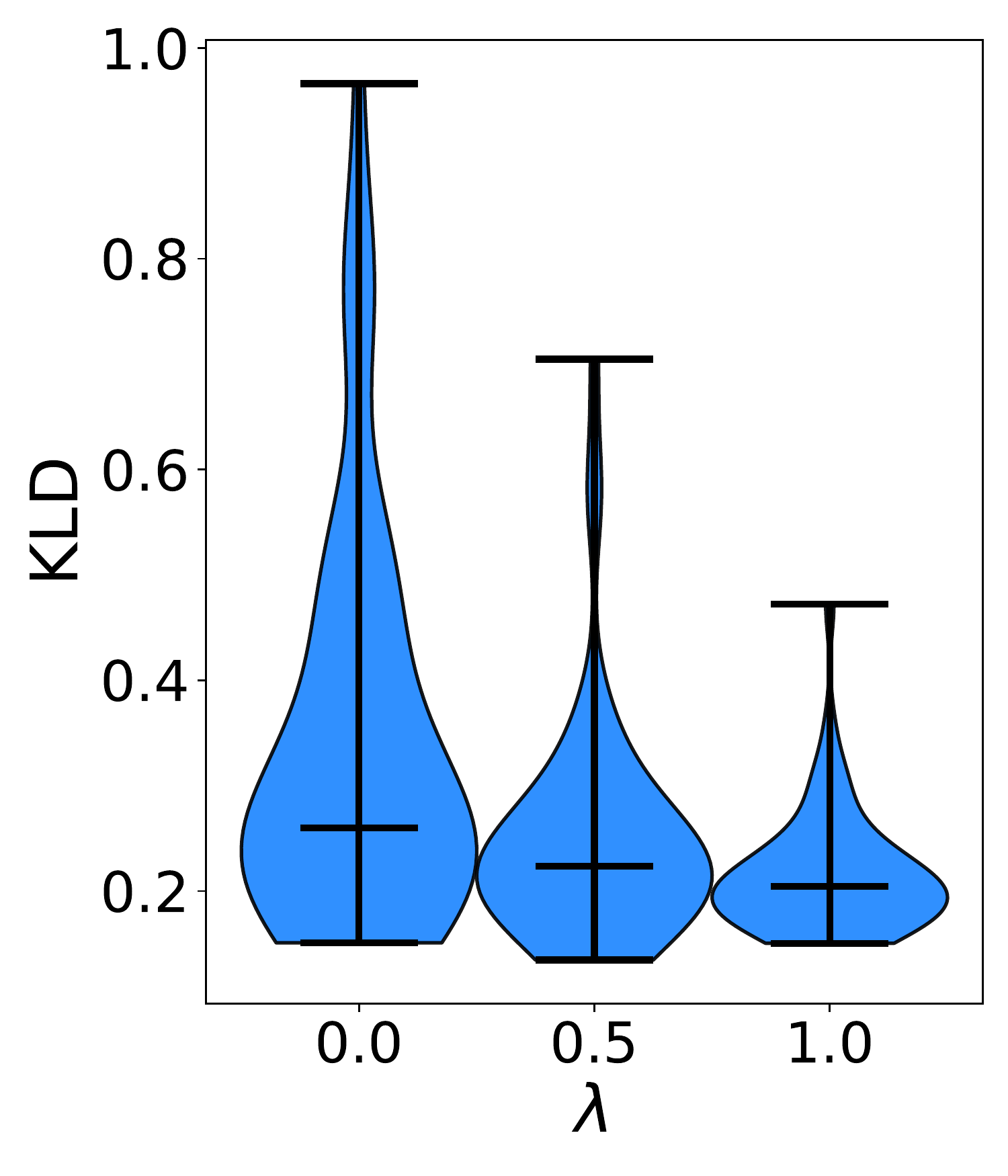}
    \end{subfigure}%
    \hspace{0.8 em}
    \begin{subfigure}[b]{0.15\textwidth}
        \centering
        \includegraphics[height=\heightrow cm]{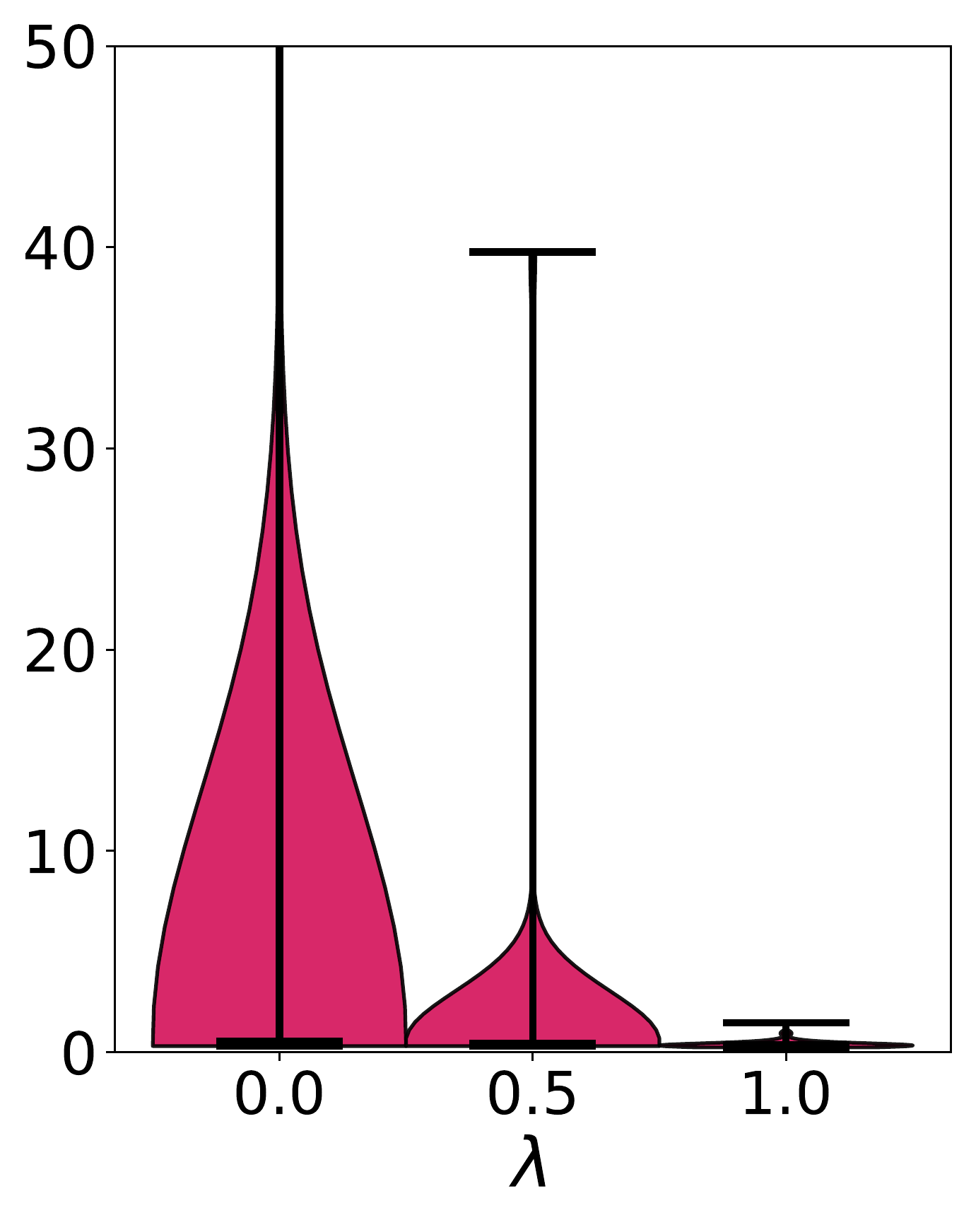}
    \end{subfigure}
    \hspace{0.005 em}
    \begin{subfigure}[b]{0.15\textwidth}
        \centering
        \includegraphics[height=\heightrow cm]{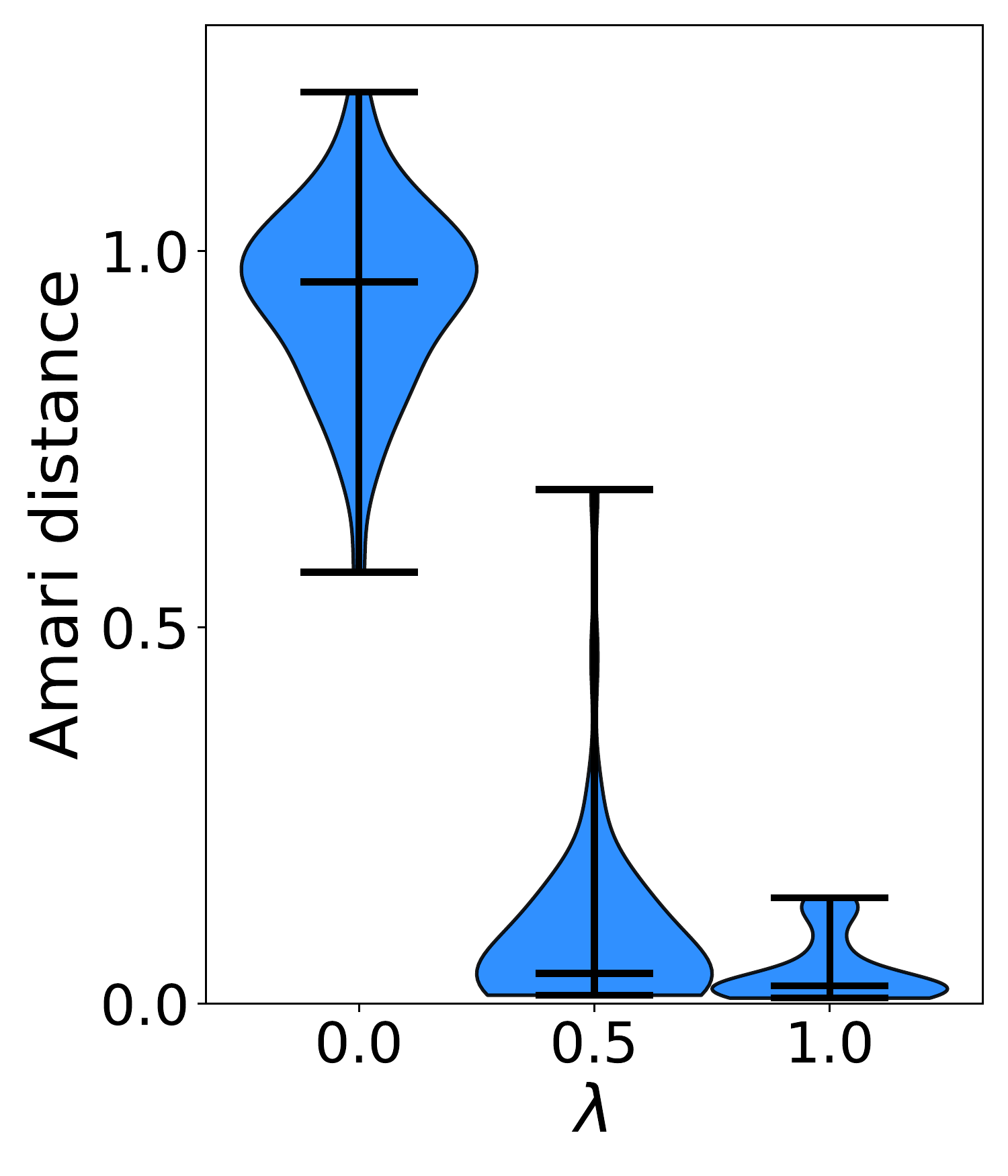}
    \end{subfigure}
    \hspace{0.5 em}
    \begin{subfigure}[b]{0.15\textwidth}
        \centering
        \includegraphics[height=\heightrow cm]{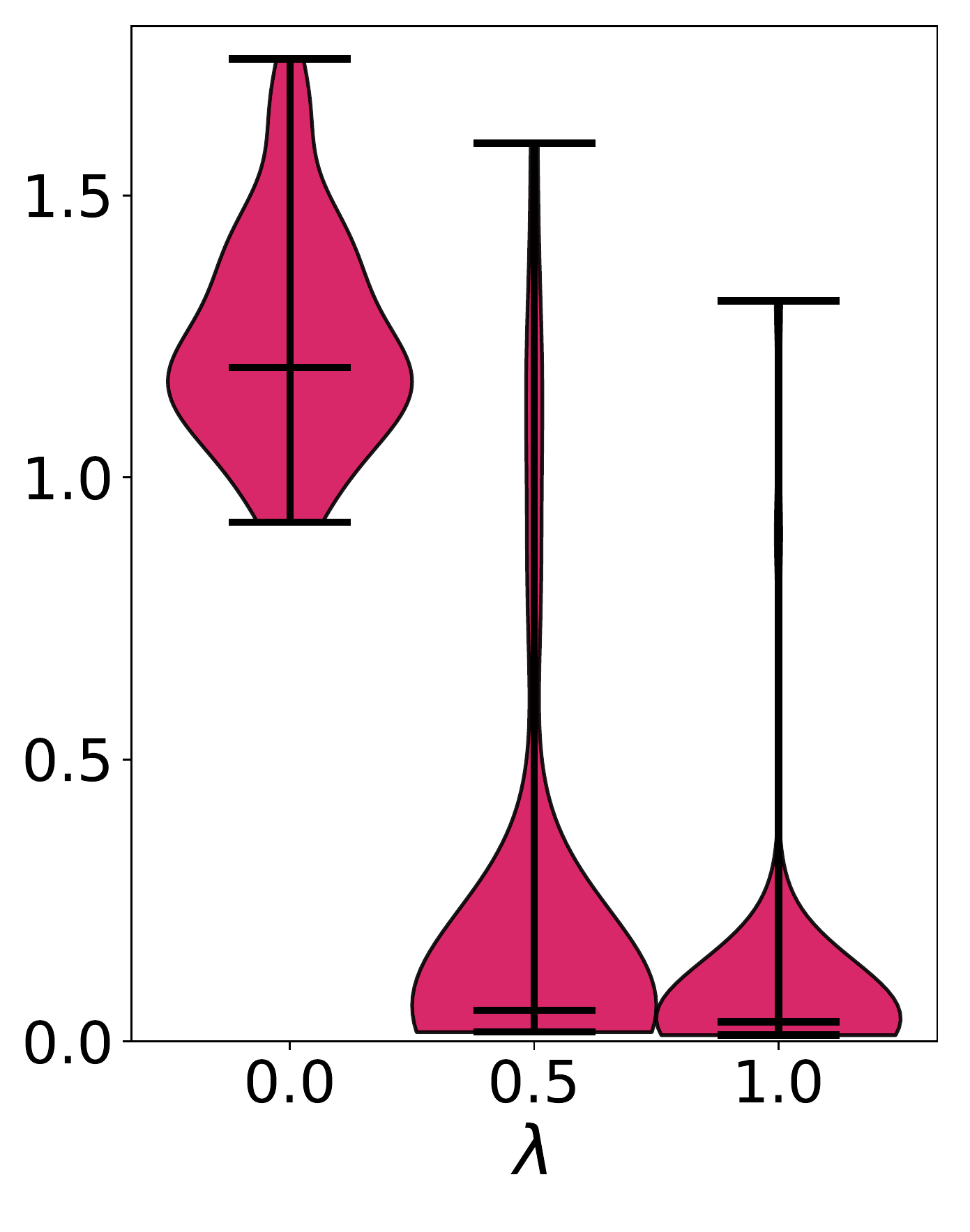}
    \end{subfigure}
    \hspace{0.005 em}
    \begin{subfigure}[b]{0.15\textwidth}
        \centering
        \includegraphics[height=\heightrow cm]{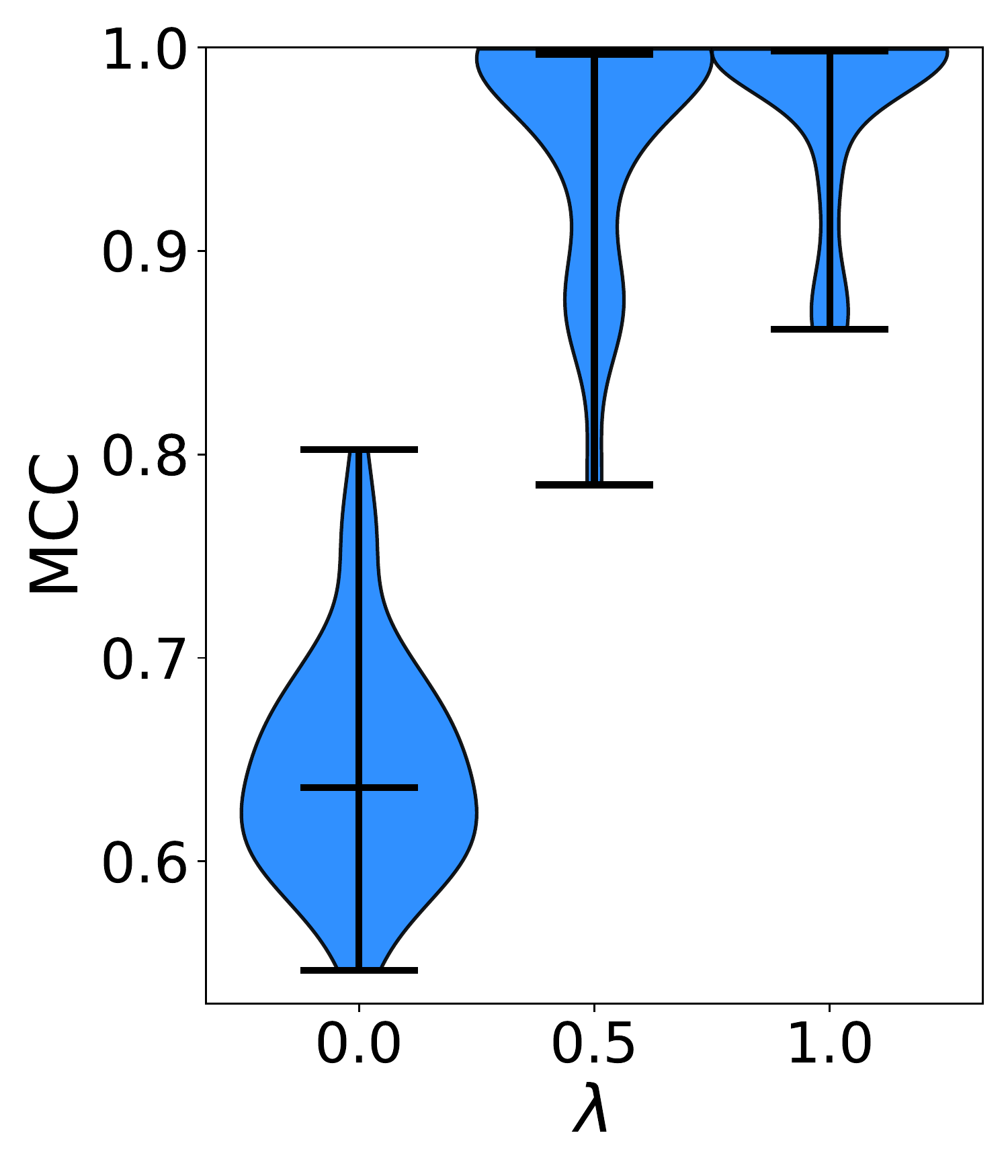}
    \end{subfigure}
    \hspace{0.55 em}
    \begin{subfigure}[b]{0.15\textwidth}
        \centering
        \includegraphics[height=\heightrow cm]{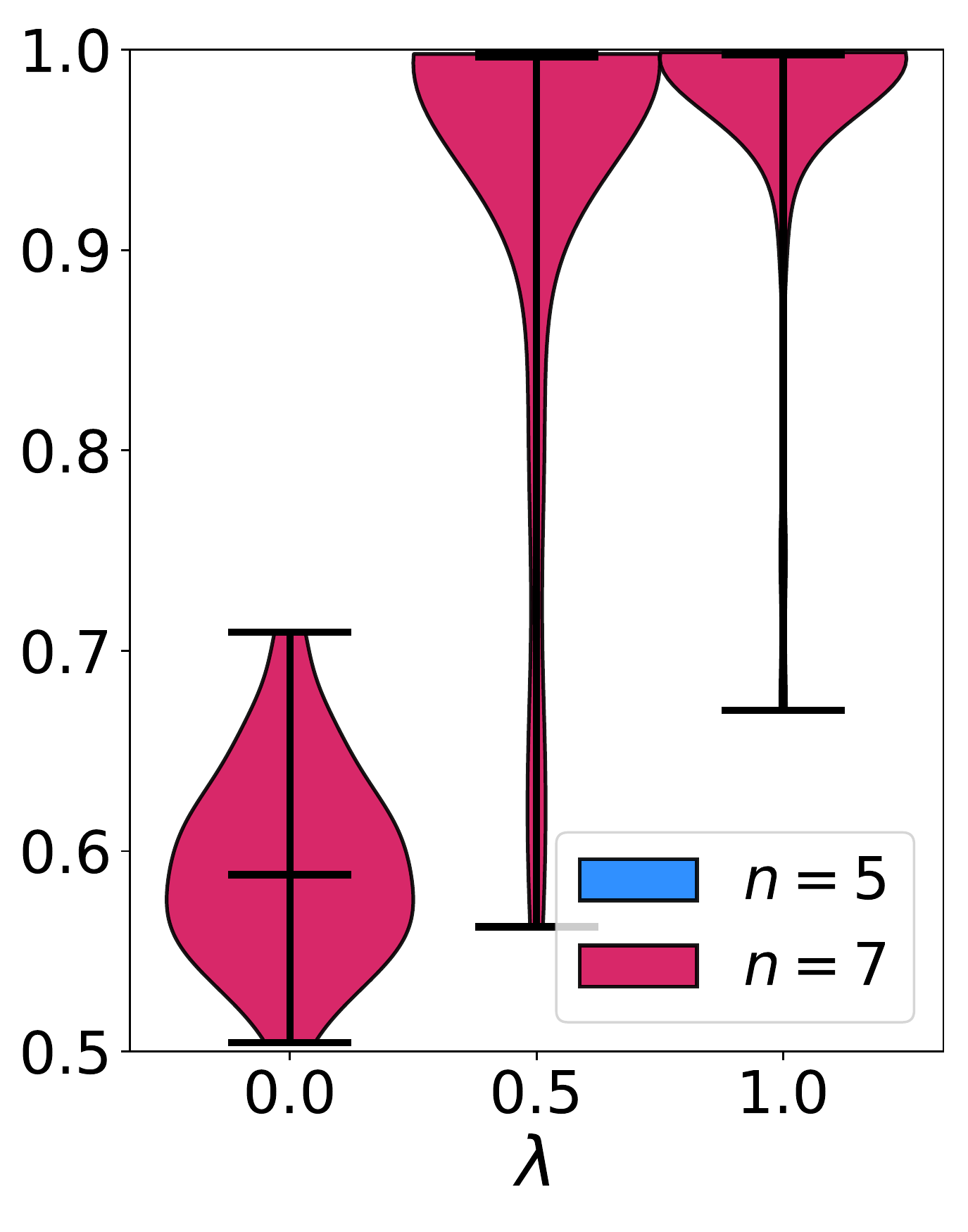}
    \end{subfigure}
    \vspace{-0.5em}
    \caption{\small BSS via $C_\IMA$-regularised 
    MLE
    for, side by side, \tochange{$n=5$ (blue) and $n=7$ (red)} dim. with $\lambda\in \{0.0,0.5,1.0\}$.
    \textit{(Left)} KL-divergence between ground truth likelihood and learnt model;
    \textit{(center)} nonlinear Amari distance given true mixing and learnt unmixing; \textit{(right)} MCC between true and reconstructed sources.
    }
    \label{fig:results2}
\end{figure}

%
%
\section{Discussion}
\label{sec:discussion}
\textbf{Assumptions on the mixing function.} Instead of relying on weak supervision in the form of auxiliary variables~\cite{hyvarinen2016unsupervised, hyvarinen2017nonlinear, hyvarinen2019nonlinear, gresele2020incomplete, halva2020hidden, khemakhem2020variational},
our IMA approach
places additional constraints on the functional form of the mixing process.
In a similar vein,
the \textit{minimal nonlinear distortion principle}~\cite{zhang2008minimal}
proposes to
favor solutions 
that are
as close to linear as possible. Another  %
example is the \textit{post-nonlinear model}~\cite{taleb1999source, zhang2009identifiability}, which assumes an element-wise nonlinearity applied after a linear mixing.
IMA is different in that it still allows for strongly nonlinear mixings~(see, e.g.,~\cref{fig:orthogonal_coordinate_transform}) provided that the columns of their Jacobians are (close to) orthogonal.
\looseness-1 In the related
field of disentanglement~\cite{bengio2013representation,locatello2019challenging}, a 
line of work that focuses on image generation with
adversarial networks~\cite{goodfellow2014generative} similarly proposes to constrain the ``generator'' function
via 
regularisation of its Jacobian~\cite{ramesh2018spectral} or Hessian~\cite{peebles2020hessian}, though mostly from an empirically-driven, rather than from an identifiability perspective as in the present work.

\textbf{Towards identifiability with $C_\IMA$.} The IMA principle rules out a large class of spurious solutions to nonlinear ICA. While we do not present a full identifiability result, our experiments show that $C_\IMA$ can be used to recover the BSS equivalence class, suggesting that identifiability might indeed hold, possibly under additional assumptions---e.g., for conformal maps~\cite{hyvarinen1999nonlinear}.

\textbf{IMA and independence of cause and mechanism.} While inspired by measures of independence of cause and mechanism as traditionally used for cause-effect inference~\cite{daniuvsis2010inferring, janzing2012information, janzing2010telling, zscheischler2011testing}, we view the IMA principle as addressing a different question,
in the sense that they evaluate independence between different elements of the causal model. 
Any nonlinear ICA solution that satisfies the IMA~\cref{principle:IMA} can be turned into one with uniform reconstructed sources---thus satisfying IGCI as argued in~\cref{sec:unsuitability_of_existing_ICM_measures}---through composition with an element-wise transformation which, according to~\cref{prop:global_IMA_contrast_properties} \textit{(ii)}, leaves the $C_\IMA$ value unchanged.
Both IGCI~\eqref{eq:IGCI_condition} and IMA~\eqref{eq:IMA_principle} can therefore be fulfilled simultaneosly, while the former on its own is inconsequential for BSS as shown in~\cref{prop:IGCI_insufficient_for_BSS}.

\textbf{BSS through algorithmic information.} 
Algorithmic information theory has previously been proposed as a unifying framework for identifiable approaches to \textit{linear} BSS~\citep{pajunen1998blind, pajunen1999blind}, in the sense that commonly-used contrast functions could, under suitable assumptions, be interpreted as proxies for the total complexity of the mixing and the reconstructed sources.
However, to the best of our knowledge, the problem of specifying suitable proxies for the complexity of \textit{nonlinear} mixing functions has not yet been
addressed.
We conjecture that our framework could be linked to this view, based on the additional assumption of algorithmic independence of causal mechanisms~\cite{janzing2010causal}, thus potentially representing an approach to \textit{nonlinear} BSS~by minimisation of algorithmic complexity.

\textbf{ICA for causal inference \& causality for ICA.}
Past advances in ICA have inspired novel causal discovery methods~\cite{shimizu2006linear,monti2020causal,khemakhem2021causal}.
The present work constitutes, to the best of our knowledge, the first effort to use ideas from causality (specifically ICM) for BSS.
An application of the IMA principle to causal discovery or causal representation learning~\cite{scholkopf2021toward} is an interesting direction for future work.

\textbf{Conclusion.} 
We introduce
IMA, a path to nonlinear BSS inspired by concepts from causality.
We postulate that the \textit{influences}
of different sources on the observed distribution should be approximately independent, and formalise this as an orthogonality condition on the columns of the Jacobian.
We prove that this constraint is generally violated by well-known spurious nonlinear ICA solutions, and propose a regularised maximum likelihood approach which we empirically demonstrate to be effective in recovering the true solution.
Our IMA principle holds exactly for orthogonal coordinate transformations, and is thus of potential interest for learning spatial representations~\cite{hinton1981frames}, robot dynamics~\cite{mistry2010inverse}, or physics problems where orthogonal reference frames are common~\cite{moon1971field}.

\clearpage
\section*{Acknowledgements}
The authors thank Aapo Hyv\"arinen, Adri\'an Javaloy Born\'as, Dominik Janzing, Giambattista Parascandolo, Giancarlo Fissore, Nasim Rahaman, Patrick Burauel, Patrik Reizinger, Paul Rubenstein, Shubhangi Ghosh, and the anonymous reviewers for helpful comments and discussions. 

\section*{Funding Transparency Statement}
This work was supported by the German Federal Ministry of Education and Research (BMBF): T\"ubingen AI Center, FKZ: 01IS18039B; and by the Machine Learning Cluster of Excellence, EXC number 2064/1 - Project number 390727645.

{\small
\bibliographystyle{plainnat}
\bibliography{main}
}
\clearpage

\section*{Checklist}

\begin{enumerate}

\item For all authors...
\begin{enumerate}
  \item Do the main claims made in the abstract and introduction accurately reflect the paper's contributions and scope?
    \answerYes{}
  \item Did you describe the limitations of your work?
    \answerYes{See~\cref{sec:discussion}, where we discuss limitations of our theory (e.g. lines 354-357) and open questions.}
  \item Did you discuss any potential negative societal impacts of your work?
    \answerNA{Our work is mainly theoretical, and we believe it does not bear immediate negative societal impacts.}
  \item Have you read the ethics review guidelines and ensured that your paper conforms to them?
    \answerYes{}
\end{enumerate}

\item If you are including theoretical results...
\begin{enumerate}
  \item Did you state the full set of assumptions of all theoretical results?
    \answerYes{We formally define the problem setting in~\cref{sec:background} and~\cref{sec:unsuitability_of_existing_ICM_measures}, and transparently state and discuss our assumptions in~\cref{sec:IMA}.
    }
	\item Did you include complete proofs of all theoretical results?
    \answerYes{Due to space constraints, full proofs and detailed explanations are mainly reported in~\cref{app:proofs} and~\cref{app:examples}; the proof of~\cref{prop:IGCI_insufficient_for_BSS} is given in the main text.}
\end{enumerate}

\item If you ran experiments...
\begin{enumerate}
  \item Did you include the code, data, and instructions needed to reproduce the main experimental results (either in the supplemental material or as a URL)?
    \answerYes{The code, data, and the configuration files are included in the supplemental material.}
  \item Did you specify all the training details (e.g., data splits, hyperparameters, how they were chosen)?
    \answerYes{All details, including hyperparameters, seed for random number generators, etc. are specified in the configuration files which will be included in the supplemental.}
	\item Did you report error bars (e.g., with respect to the random seed after running experiments multiple times)?
    \answerYes{We visualized the distribution of the considered quantities via histograms and violin plots, see e.g. \autoref{fig:results1} and \autoref{fig:results2}.}
	\item Did you include the total amount of compute and the type of resources used (e.g., type of GPUs, internal cluster, or cloud provider)?
    \answerYes{They are specified~\cref{app:experiments}.}
\end{enumerate}

\item If you are using existing assets (e.g., code, data, models) or curating/releasing new assets...
\begin{enumerate}
  \item If your work uses existing assets, did you cite the creators?
    \answerYes{We use the Python libraries JAX and Distrax and cited the creators in the article.}
  \item Did you mention the license of the assets?
    \answerYes{Both packages have Apache License 2.0; we report this in the appendices.}
  \item Did you include any new assets either in the supplemental material or as a URL?
    \answerYes{Implementations of our proposed methods and metrics will be provided in the supplemental.}
  \item Did you discuss whether and how consent was obtained from people whose data you're using/curating?
    \answerNA{}
  \item Did you discuss whether the data you are using/curating contains personally identifiable information or offensive content?
    \answerNA{}
\end{enumerate}

\item If you used crowdsourcing or conducted research with human subjects...
\begin{enumerate}
  \item Did you include the full text of instructions given to participants and screenshots, if applicable?
    \answerNA{}
  \item Did you describe any potential participant risks, with links to Institutional Review Board (IRB) approvals, if applicable?
    \answerNA{}
  \item Did you include the estimated hourly wage paid to participants and the total amount spent on participant compensation?
    \answerNA{}
\end{enumerate}

\end{enumerate}

\clearpage
\appendix
\begin{center}
{\centering \LARGE APPENDIX}
\vspace{0.8cm}
\sloppy
\end{center}

\section*{Overview}
\begin{itemize}
    \item \Cref{app:identifiability_and_linear_ICA} contains further elaboration on the notion of identifiability as used in the present work, as well as connections to linear ICA.
    \item \Cref{app:trace} contains additional discussion of existing ICM criteria and their relation to IMA.
    \item \Cref{app:proofs} presents the full proofs for all theoretical results from the main paper.
    \item \Cref{app:examples} contains a worked out computation of the value of $C_\IMA$ for the mapping from radial to Cartesian coordinates.
    \item \Cref{app:experiments} contains experimental details and additional results.
    \item \Cref{app:confmaps} contains additional background on conformal maps and M\"obius transformations
\end{itemize}

\section{Additional background on identifiability and linear ICA}
\label{app:identifiability_and_linear_ICA}
\label{app:linear_ICA}
In this Appendix, 
we provide additional background on the notion of identifiability and illustrate it using the example of linear ICA. 

\subsection{Identifiability in terms of equivalence relations}
\label{app:details_on_identifiability}

Traditionally, identifiability for a class of models $p_\theta$ for observed data $\xb$ parametrised by $\theta\in\Theta$ is expressed as the condition that there needs to be a one-to-one mapping between the space of models and the space of parameters, i.e., the model class $p_\theta$
is said to be identifiable if
\begin{equation}
\label{eq:identifiability_definition}
    \forall \theta, \theta'\in\Theta:
    \quad \quad 
    p_\theta(\xb)=p_{\theta'}(\xb) \forall \xb
    \quad
    \implies 
    \quad 
    \theta= \theta'.
\end{equation}
However, the equality on the RHS of~\eqref{eq:identifiability_definition} is a very strong condition which makes this type of (strong or unique) identifiability impractical for many settings.
For example, in the case of (linear or nonlinear) ICA, the ordering of the sources cannot be determined, so strong identifiability in the sense of~\eqref{eq:identifiability_definition} is infeasible.

The equality in parameter space on the RHS of the implication in~\eqref{eq:identifiability_definition} is therefore sometimes replaced by an equivalence relation $\sim$~\cite{khemakhem2020variational}, as is also the case for our~\cref{def:identifiability}.
An equivalence relation $\sim$ on a set $A$ is a binary relation between pairs of elements of $A$ which satisfies the following three properties:
\begin{enumerate}
    \item Reflexivity: $a\sim a$, $\forall a\in A$. 
    \item Symmetry: $a\sim b \implies b\sim a$, $\forall a,b\in A$.
    \item Transitivity: $(a\sim b) \land (b\sim c) \implies a\sim c $.
\end{enumerate}
An equivalence relation on a set $A$ imposes a partition into disjoint subsets. 
Each such subset corresponds to an equivalence class, i.e., the collection of all elements which are $\sim$-related to each other; for example, $[a]=\{b\in A:a\sim b\}$ denotes the equivalence class containing the element~$a$.

A trivial example of an equivalence relation is equality ($=$).
More useful examples in the context of ICA are equivalence up to permutation, rescaling, or scalar transformation.

Defining an appropriate equivalence class for the problem at hand therefore allows us to specify exactly the type of indeterminancies which cannot be resolved and up to which the true generative process can be recovered.
As argued in~\cref{sec:background}, for nonlinear ICA, the desired notion of identifiability---in the sense of the strongest feasible type of identifiable that is possible without further (parametric) assumptions---is captured by $\sim_\BSS$ from~\cref{def:bss_identifiability}.
We give another example for linear ICA in~\Cref{sec:linear_ICA}.

Since the generative process of nonlinear ICA~\eqref{eq:gen} is determined by the choice of mixing function and source distribution, the space $\Theta$ from~\eqref{eq:identifiability_definition}, in this case, corresponds to the product space of the space of mixing functions $\Fcal$ and source distributions $\Pcal$.
Moreover, the pushforward density $\fb_*p_\sb$ in~\cref{def:identifiability} corresponds to the density of the observed mixtures $p_\xb$, or $p_\theta(\xb)$ in~\eqref{eq:identifiability_definition}.

We deliberately choose to define identifiability and to express the observed distribution in terms of the source distribution and the mixing function---as opposed to in terms of the observed distribution and the unmixing function as in some prior work~\cite{hyvarinen2016unsupervised,hyvarinen2017nonlinear,hyvarinen2019nonlinear}---because this is aligned with the causal direction of data generation, and thus more consistent with the causal perspective at nonlinear BSS taken in the present work.
We also believe that, in this framework, separate constraints on the space of mixing functions $\Fcal$ and source distributions $\Pcal$ are expressed more naturally.

Next, we illustrate the above ideas for the well-studied case of linear ICA.

\subsection{Identifiability of linear ICA}
\label{sec:linear_ICA}
Linear ICA corresponds to the setting in which a linear mixing is applied to independent sources, i.e.,%
\begin{equation}
\label{eq:linear_ICA_model}
\xb = \Ab \sb,
\end{equation}
where $\Ab \in \mathbb{R}^{n \times n}$ is an invertible mixing matrix.
The source variables $\sb$ can be assumed to have zero mean
without affecting estimation of the mixing matrix,
and the ordering and
variances 
of the independent components cannot be determined, so
it is customary to assume $\EE[s_i^2]=1$~\cite{ICAbook}. 

Additionally, we can assume w.l.o.g. that the mixing matrix is orthogonal ($\Ab \Ab^\top = \Ib$), 
because we can always \textit{whiten} $\xb$ first through an invertible linear transformation and obtain an orthogonal mixing~\cite{ICAbook}, as explained in more detail in~\Cref{app:whitening}.

Now suppose that the reconstructed sources
\begin{equation}
    \yb = \Bb \xb = \Bb \Ab \sb
\end{equation}
have independent components for some orthogonal unmixing matrix~$\Bb \in \mathbb{R}^{n \times n}$.
Then $\Cb = \Bb \Ab$ is also orthogonal and
the following type of identifiability holds~\cite{darmois1953analyse,skitovic1953property, comon1994independent}.%
\begin{theorem}[Identifiability of linear ICA; based on Thm. 11 of~\cite{comon1994independent}] 
\label{thm:identifiability_of_linear_ICA}
Let $\sb$ be a vector of $n$ independent components, of which at most one is Gaussian and whose densities are not reduced to a point mass.
Let $\Cb \in\RR^{n\times n}$
be an orthogonal matrix. Then $\yb=\Cb\sb$ has (mutually) independent components iff.\
$\Cb=\bm\Db \Pb$, with $\Db$
a diagonal matrix and $\Pb$
a permutation matrix.%
\end{theorem}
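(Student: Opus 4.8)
The plan is to prove the easy direction directly and to obtain the nontrivial one from the \emph{Darmois--Skitovi\v{c} theorem}~\cite{darmois1953analyse,skitovic1953property}, combined with a support-counting argument that exploits orthogonality of $\Cb$. For the easy direction, if $\Cb=\Db\Pb$ then $y_k = d_k s_{\pi(k)}$ for a permutation $\pi$ and scalars $d_k\neq 0$, so the $y_k$ are independent because the $s_j$ are and invertible element-wise rescaling preserves independence. For the forward direction I would assume $\yb=\Cb\sb$ has mutually independent components and show the nonzero pattern of $\Cb$ is that of a permutation matrix; orthonormality of the rows then pins the nonzero magnitudes to one, giving $\Cb=\Db\Pb$.

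The heart of the argument is applying Darmois--Skitovi\v{c} to pairs of output coordinates. Fix $k\neq l$ and write $y_k=\sum_j C_{kj}s_j$ and $y_l=\sum_j C_{lj}s_j$ as two linear forms in the independent, non-degenerate variables $s_1,\dots,s_n$ (here the assumption that no density is a point mass is what guarantees non-degeneracy, so the theorem applies). Since $y_k$ and $y_l$ are independent by hypothesis, the theorem forces every index $j$ with $C_{kj}C_{lj}\neq 0$ to correspond to a Gaussian $s_j$. Because at most one $s_j$ is Gaussian, this means the supports $S_k:=\{j:C_{kj}\neq 0\}$ satisfy $S_k\cap S_l\subseteq\{g\}$ for all $k\neq l$, where $g$ is the (unique, if it exists) Gaussian index. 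In particular every \emph{non-Gaussian} column index $j\neq g$ lies in the support of at most one row, and hence in exactly one row since $\Cb$ is invertible.

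It remains to control the Gaussian column $g$, and this is exactly where orthogonality enters. Let $T=\{k:g\in S_k\}$. For any two rows $k,l\in T$ their supports meet only in $\{g\}$, so orthonormality gives $\sum_j C_{kj}C_{lj}=C_{kg}C_{lg}=0$; but both factors are nonzero by definition of $T$, a contradiction unless $|T|\le 1$. Thus column $g$ also occupies exactly one row (at least one by invertibility). Now each of the $n$ columns sits in precisely one of the $n$ rows, and no row is zero, so a pigeonhole/counting argument forces each row to contain exactly one nonzero entry, i.e.\ the nonzero pattern of $\Cb$ is a permutation. Finally, unit-norm rows give $|C_{k,\pi(k)}|=1$, so $\Cb=\Db\Pb$ with $\Db$ diagonal and $\Pb$ a permutation.

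The main obstacle is conceptual rather than computational: the entire forward direction rests on the Darmois--Skitovi\v{c} characterization, a classical but nontrivial result that I would invoke rather than reprove. Given that black box, the only delicate point is the treatment of the single admissible Gaussian index $g$ -- one must not conclude prematurely that supports are pairwise disjoint, but instead use orthogonality (not just invertibility) to rule out $g$ appearing in two rows. Keeping the ``at most one Gaussian'' and ``no point mass'' hypotheses tied to their exact roles (the former bounding overlaps, the latter licensing the theorem) is what makes the counting argument go through cleanly.
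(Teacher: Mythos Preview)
The paper does not actually supply its own proof of this theorem: it is stated in the appendix as a classical result, attributed to Comon~\cite{comon1994independent} and the Darmois--Skitovi\v{c} characterisation~\cite{darmois1953analyse,skitovic1953property}, and then used as a tool. Your proposal is therefore not competing against a proof in the paper but rather filling in the standard argument behind the citation.

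Your argument is correct and is essentially the classical route. The use of Darmois--Skitovi\v{c} on pairs $(y_k,y_l)$ to force every shared index to be Gaussian is exactly the mechanism underlying Comon's Theorem~11, and the paper's joint citation of~\cite{darmois1953analyse,skitovic1953property,comon1994independent} signals precisely this. The one place where your write-up adds value over a bare citation is the handling of the single admissible Gaussian index: you correctly observe that Darmois--Skitovi\v{c} alone allows the Gaussian column to appear in several rows, and you use row orthogonality---not merely invertibility---to rule this out via $\langle \text{row}_k,\text{row}_l\rangle = C_{kg}C_{lg}=0$. This is the right way to close the gap, and it explains why the theorem is phrased for orthogonal $\Cb$ (equivalently, after whitening, as the paper discusses in the surrounding text). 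Two minor cosmetic points: in the easy direction, orthogonality of $\Cb$ already forces $|d_k|=1$, so $\Db$ is a sign matrix, which you note at the end but could state up front; and the ``no point mass'' hypothesis is indeed the non-degeneracy input to Darmois--Skitovi\v{c}, as you identify.
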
%
\Cref{thm:identifiability_of_linear_ICA} shows that
the two ambiguities deemed unresolvable (scale and ordering of the sources) are, in fact, the only ambiguities,
as long as at most one of the $s_i$ is Gaussian.
That is, linear ICA is identifiable up to rescaling and permutation of the sources, i.e., linearly transforming the observations $\xb$
into independent components is equivalent to separating the sources.

More formally, in terms of an equivalence relation, if we take $\Fcal'$ from~\eqref{eq:identifiability} as the space of invertible $n\times n$ matrices and $\Pcal'$ as the space of source distributions with at most one Gaussian marginal, then linear ICA is $\sim_\textsc{lin}$-identifiable on $\Fcal'\times\Pcal'$
where the equivalence relation $\sim_\textsc{lin}$ on $\Fcal'$ is defined as
\[\Bb\sim_\textsc{lin} \Bb' \iff \exists \Db,\Pb \,\, \text{s.t.} \,\, \Bb = \Db\Pb\Bb'.\]

\paragraph{Beyond non-Gaussianity.} 
Two other deviations from a Gaussian i.i.d.\ setting lead to identifiability:  nonstationarity~\cite{pham2001blind} and time correlation~\cite{pham1997blind}. 
A general 
information-geometric framework links these three 
different
routes to identifiability~\cite{cardoso2001three}.

\subsection{Whitening in the context of linear ICA}
\label{app:whitening}
For completeness, we give a brief account of the role of \textit{whitening in linear ICA}, which was mentioned in~\ref{sec:linear_ICA} and which again plays a role in~\ref{app:trace_method}.
The following exposition is partly based on~\cite{ICAbook}, \S 7.4.2.
A zero-mean random vector, say $\yb$, is said to be \textit{white} if its components are uncorrelated and their variances equal unity. In other words, the covariance matrix %
of $\yb$ is equal to the identity matrix:
\[
\EE \left[\mathbf{y} \mathbf{y}^\top\right]=\mathbf{I}\,.
\]
It is always possible to whiten a  zero-mean random vector $\xb$ through a linear operation,
\begin{equation}
    \zb = \Vb \xb\,.
    \label{eq:whiten_linear_op}
\end{equation}
As an example, a popular method for whitening uses the eigenvalue decomposition (EVD) of the covariance matrix,
\[
\EE\left[\mathbf{x} \mathbf{x}^\top\right]=\mathbf{E} \mathbf{D} \mathbf{E}^\top
\]
where $\mathbf{E}$ is the orthogonal matrix of eigenvectors of $\EE\left[\mathbf{x} \mathbf{x}^\top\right]$ and $\mathbf{D}$ is the diagonal matrix of its eigenvalues, $\mathbf{D}=\operatorname{diag}\left(\lambda_{1}, \ldots, \lambda_{n}\right)$. 
Note that the covariance matrix is a symmetric matrix, therefore it is diagonalisable.
Whitening can then be performed by substituting in~\eqref{eq:whiten_linear_op} the matrix
\begin{equation}
\mathbf{V}=\mathbf{E} \mathbf{D}^{-1 / 2} \mathbf{E}^\top \,.
\label{eq:whiten_linear}
\end{equation}
so that 
\[
\EE[\zb\zb^\top]=\mathbf{E} \mathbf{D}^{-1 / 2} \mathbf{E}^\top \mathbf{E} \mathbf{D} \mathbf{E}^\top\mathbf{E} \mathbf{D}^{-1 / 2} \mathbf{E}^\top=\Ib
\]

\textbf{Whitening is only half ICA.}
Assume a linear ICA model,
\begin{equation}
\xb = \Ab \sb\,. \label{eq:linearrica}
\end{equation}
and suppose that the observed data is whitened, for example, by the matrix $\Vb$ given in~\eqref{eq:whiten_linear}. 
Whitening transforms the mixing matrix into a new one, $\tilde{\Ab}=\Vb\Ab$. 
We have from~\eqref{eq:linearrica} and~\eqref{eq:whiten_linear}
\[
\mathbf{z}=\mathbf{V A s}=\tilde{\mathbf{A}} \mathbf{s}
\]
Note that whitening does not solve linear ICA, since \textit{uncorrelatedness is weaker than independence}. 
To see this, consider any orthogonal transformation $\mathbf{U}$ of $\mathbf{z}$:
\[
\mathbf{y}=\mathbf{U} \mathbf{z}.
\]
Due to the orthogonality of $\mathbf{U},$ we have
\[
\EE\left[\mathbf{y} \mathbf{y}^\top\right]=\EE\left[\mathbf{U} \mathbf{z} \mathbf{z}^\top \mathbf{U}^\top\right]
=\Ub \EE\left[\mathbf{z} \mathbf{z}^\top\right] \Ub^T
=
\mathbf{U I U}^\top=\mathbf{I}\,,
\]
so, $\yb$ is white as well. Thus, we cannot tell if the independent components are given by $\mathbf{z}$ or $\mathbf{y}$ using the whiteness property alone. Since $\mathbf{y}$ could be any orthogonal transformation of $\mathbf{z},$ whitening gives the independent components only up to an orthogonal transformation. %

On the other hand, whitening is useful as a pre-processing step in ICA: its utility resides in the fact that the new mixing matrix $\tilde{\mathbf{A}}= \Vb \Ab$ is orthogonal. This can be seen from
\[
\EE\left[\mathbf{z z}^\top\right]=\tilde{\mathbf{A}} \EE\left[\mathbf{s s}^\top\right] \tilde{\mathbf{A}}^\top=\tilde{\mathbf{A}} \tilde{\mathbf{A}}^\top=\mathbf{I}.
\]
We can thus restrict the search for the (un)mixing matrix to the space of orthogonal matrices. Instead of having to estimate $n^{2}$ parameters (the elements of the original matrix $\mathbf{A}$), we only need to estimate an orthogonal mixing matrix $\tilde{\mathbf{A}}$ which contains $n(n-1) / 2$ degrees of freedom; e.g., in two dimensions, an orthogonal transformation is determined by a single angle parameter. For larger $n$, an orthogonal matrix contains only about half of the number of parameters of an arbitrary matrix.

Whitening thus ``solves half of the problem of ICA''. Because whitening is a very simple and standard procedure---much simpler than any ICA algorithm---it is a good idea to reduce the complexity of the problem this way. The remaining half of the parameters has to be estimated by some other method.

\clearpage
\section{Existing ICM criteria and their relationship to ICA and IMA }
\label{app:trace}

We now provide additional discussion of the ICM principle and its connection to ICA and IMA.
First, we introduce a linear ICM criterion and discuss its relation with linear ICA in~\Cref{app:trace_method}.

\subsection{Trace method}
\label{app:trace_method}
As mentioned in~\cref{sec:background_causality}, besides IGCI, another existing ICM criterion that is closely related to ICA due to also assuming a deterministic relation between cause $\cb$ and effect $\eb$ 
is 
the \textit{trace method}~\cite{janzing2010telling,zscheischler2011testing}.
The trace method
assumes a linear relationship, 
\begin{equation}
\label{eq:trace_method_model}
    \eb=\Ab\cb,
\end{equation}
and formulates ICM as an ``independence'' between the covariance matrix $\bm\Sigma$ of $\cb$ and the mechanism~$\Ab$ (which, as for IGCI, we can again think of as a degenerate conditional $p_{\eb|\cb}$) via 
the condition%
\begin{equation}
\label{eq:trace_condition}
\textstyle
\tau(\Ab\bm\Sigma\Ab^\top)=\tau(\bm\Sigma)\tau(\Ab\Ab^\top)   
\end{equation}
where $\tau(\cdot)$ denotes the renormalized trace. 
Intuitively, this condition~\eqref{eq:trace_condition} rules out a fine-tuning of $\Ab$ to the eigenvectors of $\bm\Sigma$ which would violate the assumption of no shared information between the cause distribution (specifically, its covariance structure) and the mechanism.

As with IGCI and nonlinear ICA, it can be seen by comparing~\eqref{eq:trace_method_model} and~\eqref{eq:linear_ICA_model} that \textit{the trace method assumes the same generative model as linear ICA} (where the cause $\cb$ corresponds to the independent sources $\sb$ and the effect to the observed mixtures $\xb$).
While the focus of the present work is on nonlinear ICA, we briefly discuss the usefulness of the trace method as a constraint for achieving identifiability in a linear ICA setting.

As is clear from~\eqref{eq:trace_condition}, the trace condition is trivially satisfied if the covariance matrix of the sources (causes) is the identity, $\Sigmab=\Ib$.
However, as explained in~\Cref{app:whitening}, in the context of linear ICA this can easily be achieved by whitening the data.
As with IGCI, the trace method was developed for cause-effect inference where both variables are observed, and thus relies on the observed cause distribution being informative.
This renders is unsuitable (on its own) to constrain the unsupervised representation learning problem of linear ICA problem where the sources are unobserved. 

Note, however, that this is qualitatively different from the IGCI argument presented in~\cref{sec:unsuitability_of_existing_ICM_measures}, as whitening on its own does not necessarily lead to independent variables, but only uncorrelated ones, and thus does not solve linear ICA---unlike the Darmois construction in the case of nonlinear ICA which also yields independent components.

\subsection{Information geometric interpretation of
the ICM principle}
\label{sec:information_geometric_ICM}
There is a well-established connection between IGCI 
and the trace method~\cite{janzing2012information}. At the heart of this derivation lies an information-geometric interpretation of the ICM principle 
for probability distributions, which we sketch in this section. 
First, we need to review some basic concepts.

\paragraph{Background on information geometry.}
Information geometry~\cite{amari2007methods, amari2021information} is a discipline in which ideas from differential geometry are applied to probability theory. Probability distributions correspond to points on a Riemannian manifold, known as \textit{statistical manifold}. Equipped with the Kullback-Leibler (KL) divergence, also called the relative entropy distance, as a premetric,\footnote{A premetric on a set $\mathcal{X}$ is a function $d:\mathcal{X}\times \mathcal{X}\to \mathbb{R}^{+}\cup\{0\}$ such that (i) $d(x,y) \geq 0$ for all $x$ and $y$ in $\mathcal{X}$ and (ii) $d(x,x)=0$  for all $x\in\Xcal$. 
} one can study the geometrical properties of the statistical manifold. %
For two probability distributions $P$ and $Q$,
 we denote their KL divergence by $\KL(P\|Q)$, which is defined for $P$ absolutely continuous with respect to $Q$ as:
\[
\KL (P\|Q)=\int dP \log \frac{dP}{dQ}\,.
\]
An interesting property of the KL divergence is its invariance to reparametrisation. Consider an invertible transformation $h$, mapping random variables $X$ and $Y$ to $h(X)$ and $h(Y)$, respectively (the domains and codomains being arbitrary spaces, e.g., discrete or Euclidean of arbitrary dimension).
Then the KL divergence between $P_X$ and $P_Y$ is preserved by the pushforward operation implemented by $h$, such that
\begin{equation}
\label{eq:KL_invariance}
    \KL (P_{h(X)}\|P_{h(Y)}) = \KL (P_X\|P_Y)\,.
\end{equation}

\begin{figure}[t]
    \centering
    \includegraphics[width=0.5\textwidth]{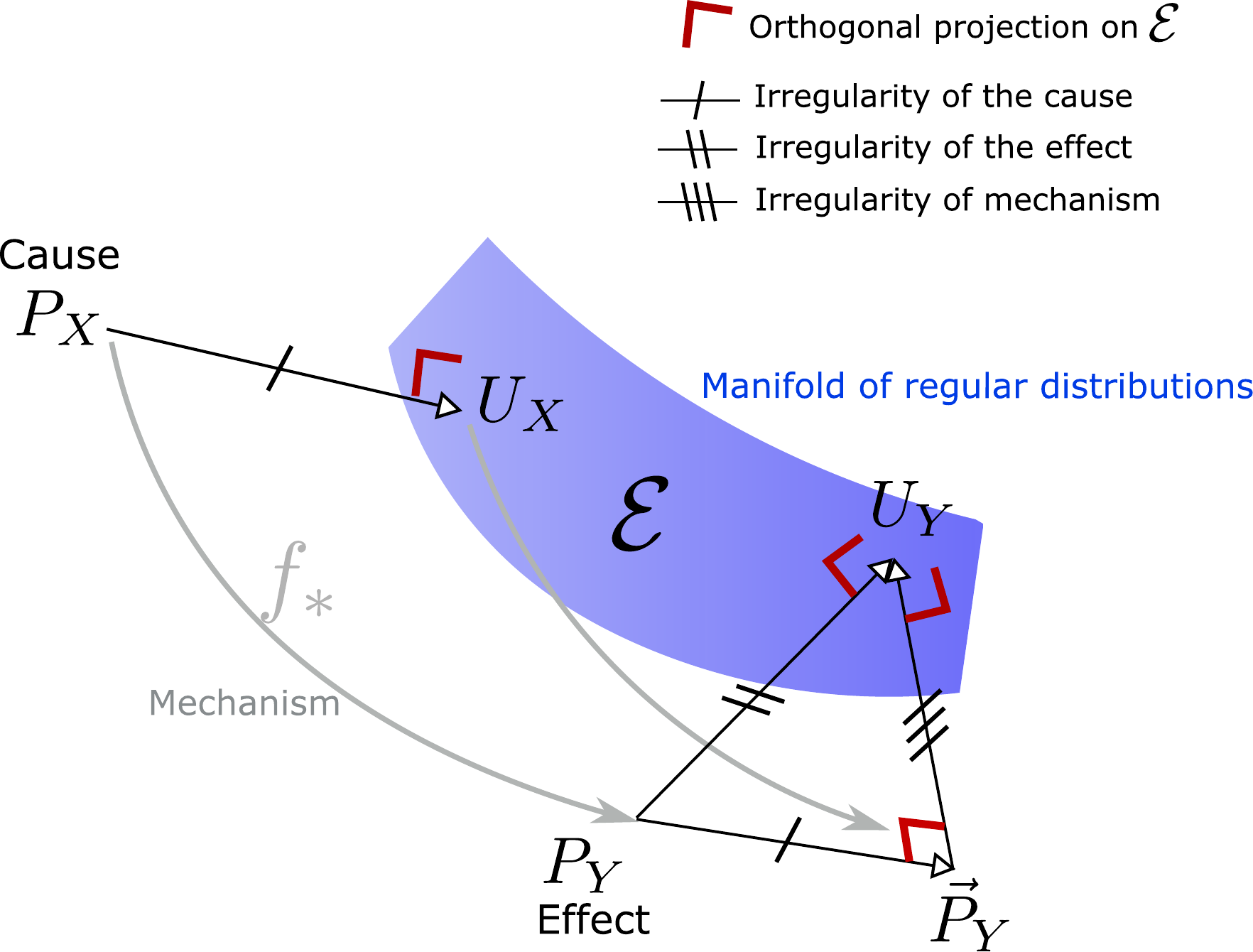}
    \caption{\small \textbf{Interpretation of the ICM principle as an orthogonality principle in information space.} The irregularity of the effect distribution, as measured by $\KL(P_Y\|U_Y)$, can be decomposed into the irregularities of the cause, as measured by $\KL(P_X\|U_X)$, and the irregularity of the mechanism $f$, as measured by $\KL(\vec{P}_Y\|U_Y)$. Here, $U_X$ and $U_Y$ denote the orthogonal projections of $P_X$ and $P_Y$ onto the manifold $\Ecal$ of regular distributions, and $\vec{P}_Y$ denotes the pushforward of the regular distribution $U_X$ via $f$. Note that the KL divergence is invariant to reparametrisation by invertible functions. }
    \label{fig:information_geometry}
\end{figure}

\paragraph{Interpretation of ICM as orthogonality condition in information space.
}
Consider a deterministic causal relationship of the form $Y:=f(X)$, and denote by $P_X$ and $P_Y$ the marginal distributions of the cause $X$ and the effect $Y$, respectively.
The ``irregularity'' of each distribution can be quantified by evaluating their divergence to a reference set $\mathcal{E}$ of ``regular'' distributions,\footnote{Here ``regular'' is only meant in an intuitive sense, not implying any further mathematical notion. If $\mathcal{E}$ is the set of Gaussians, for instance, the distance from  $\mathcal{E}$ measures non-Gaussianity.}  
\[
\KL(P_X\|\mathcal{E})=\inf_{U\in\mathcal{E}} \KL(P_X\|U),\; \quad \KL(P_Y\|\mathcal{E})=\inf_{U\in\mathcal{E}} \KL(P_Y\|U).
\]
Let us assume that these infima are reached at a unique point, their projections onto $\mathcal{E}$:
\[
U_X=\arg\min_{U\in\mathcal{E}} \KL(P_X\|U),\; \quad U_Y=\arg\min_{U\in\mathcal{E}} \KL(P_Y\|U).
\]
As elaborated in \cite[\S 4]{janzing2012information}, the choice of $\mathcal{E}$ is context-dependent. 
For example, in the context of the trace method \cite{janzing2010telling}, $X$ and $Y$ are assumed to be $n$-dimensional multivariate Gaussian random vectors, and $\mathcal{E}$ is taken as the set of multivariate \textit{isotropic} Gaussian distributions. 
In contrast, when IGCI is applied in contexts where the considered mechanism is a deterministic non-linear diffeomorphism, the reference distributions are typically uniform distributions~\cite{daniuvsis2010inferring, janzing2015justifying}.

Overall, it can be shown that the independence postulate underlying these approaches leads to the following decomposition of the irregularity of $P_Y$ (see~\cite[Thm.\ 2]{janzing2012information}):
\[
\KL(P_Y\|U_Y)=\KL(P_Y\|\vec{P}_Y)+\KL(\vec{P}_Y\|U_Y)
\]
where $\vec{P}_Y$ denotes the distribution of $f(U_X)$, i.e., the hypothetical distribution of the effect that would be obtained if the cause $X$ were replaced by  the random variable  $U_X$ (which corresponds to the closest regularly distributed random variable to $X$). 

Since applying the bijection $f^{-1}$ preserves the KL divergences, see~\eqref{eq:KL_invariance}, we can obtain the equivalent relation
\begin{equation}
\label{eq:addIrregIGCI}
\KL(P_Y\|U_Y)=\KL(P_X\|U_X)+\KL(\vec{P}_Y\|U_Y)\,.
\end{equation}

This relation can be interpreted as an \textit{orthogonality principle} in information space by considering the KL divergences as a generalization of the squared Euclidean norm for the difference vectors $\overrightarrow{P_YU_Y}$, $\overrightarrow{P_Y\vec{P_Y}}$ and $\overrightarrow{\vec{P_Y}U_Y}$.
It can thus be viewed as a Pythagorean theorem in the space of distributions, see~\cref{fig:information_geometry} for an illustration.

The orthogonality principle~\eqref{eq:addIrregIGCI} thus
captures a  decomposition of the irregularity $\KL(P_Y\|U_Y)
$ of $P_Y$ on the LHS into the sum of two irregularities on the RHS: the irregularity $\KL(P_X\|U_X)
$ of $P_X$, and the term $\KL(\vec{P}_Y\|U_Y)
$ which
measures the irregularity of the mechanism $f$ indirectly, via the ``irregularity'' of the distribution resulting from applying $f$ to a regular distribution $U_Y$.

Overall, the decomposition~\eqref{eq:addIrregIGCI} links the 
postulate of independence between the cause distribution, on the one hand, and the mechanism, on the other hand, to an \textit{orthogonality of their irregularities in information space} (namely the statistical manifold of information geometry). 
As proposed in \cite{janzing2012information}, this can be intuitively interpreted as a geometric form of independence if we assume that Nature chooses such irregularities independently of each other, and ``isotropically'' in a high-dimensional subspace of irregularities.

While, to date, we are not aware of similar results in the context of information geometry (i.e., on the statistical manifold), this intuition is supported by concentration of measure results in Euclidean spaces. Indeed, in high-dimensions, it is likely that two vectors are close to orthogonal if they are chosen independently according to a uniform prior~\cite{gorban2018blessing}. 

We will take inspiration of the decomposition~\eqref{eq:addIrregIGCI} to justify IMA in the following section.

\subsection{Decoupling of the influences in IMA and comparison with IGCI}
\label{app:ima_igci}

In contrast to~\Cref{sec:information_geometric_ICM}, in this section we will, for notational consistency with the main paper, assume that all distributions have a density with respect to the Lebesgue measure, and thus consider, with a slight abuse of notation, that the KL divergence is a distance between two densities on the relevant support, such that
\[
\KL(p\|q) = \int p(x)\log \frac{p(x)}{q(x)} dx\,.
\]

\paragraph{Overview.} In line with the information-geometric interpretation of IGCI presented in~\Cref{sec:information_geometric_ICM}, we also consider an interpretation of IMA in information space. 
We consider the KL-divergence between the observed density $p_{\xb}$ of $\xb=\fb(\sb)$ and an \textit{interventional} distribution~$p_{\widehat{\xb}}$ of $\widehat{\xb}=\widehat{\fb}(\sb)$, resulting from a soft intervention that replaces the mixing function $\fb$ with another mixing $\widehat{\fb}$. 
We take $\KL(p_{\xb}\|p_{\widehat{\xb}})$ as a measure of the causal effect of the soft intervention 
(or perturbation)
that turns $\fb$ into $\widehat{\fb}$---similarly to how $\KL(P_Y\|U_Y)$ is used as a measure of the irregularity of the effect distribution in the context of IGCI~(\Cref{sec:information_geometric_ICM}).

As we will show, under suitable assumptions, the functional form imposed on $\fb$ by the IMA~\Cref{principle:IMA} can lead to a decomposition of the \textit{causal effect} of an intervention on the mechanism into a sum of terms, corresponding to the causal effects of separate soft interventions on the mechanisms associated to each source. In contrast, IGCI decomposes \textit{irregularities} of the effect distribution into \textit{two terms, one irregularity of the cause and one irregularity of the mechanism}.

\paragraph{Soft-interventions on the individual mechanisms.}
Assume $\fb$ satisfies the IMA principle. We consider interventions performed through the element-wise transformation $\sib$ such that 
\[
\sib :\sb\mapsto 
\left[ 
\begin{array}{cc}
    \si_1 (s_1) \\
     \vdots \\
     \si_j (s_j)\\
     \vdots \\
     \si_n (s_n)
\end{array}
\right]\,.
\]
This can be seen as a composition  of  $n$ soft interventions $\{\sib_j\}$ on each individual source component~$j$, implemented through univariate smooth diffeomorphisms $\si_j$, such that
\[
\sib_j :\sb\mapsto 
\left[ 
\begin{array}{cc}
    s_1 \\
     \vdots \\
     \si_j (s_j)\\
     \vdots \\
     s_n
\end{array}
\right]\,,
\]
and $\sib = \sib_n \circ \cdots \circ \sib_1$ (in arbitrary order, since the individual $\sib_j$ commute). This soft intervention can be seen as turning the random variable $\sb$ into $\widehat{\sb}$, yielding the intervened observations $\widehat{\xb}=\fb (\widehat{\sb})$. Alternatively, the intervention on $\xb$ can be implemented by replacing $\fb$ by $\widehat{\fb}=\fb \circ \sib$---i.e., $\widehat{\xb}=\widehat{\fb} (\sb)$. Notably, since $\fb$ satisfies the IMA principle, so does $\widehat{\fb}$ (due to~\Cref{prop:global_IMA_contrast_properties}, \textit{(ii)}, since $\sib$ is an element-wise nonlinearity). Moreover, the partial derivatives of the intervened function are given by 
\[
\frac{\partial \widehat{\fb}}{\partial s_i}(\sb)= \frac{\partial \fb}{\partial s_i}(\sib(\sb))\left|\frac{d \si_i}{d s_i}\right|(s_i)\,.
\]
The classical change of variable formula for bijection $\fb$ yields the expression of the pushforward density of $\xb$ as
\[
p_{\xb}(\xb) = |J_{\fb}(\fb^{-1}(\xb))|^{-1} p_{\sb}(\fb^{-1}(\xb))\,,
\]
and for $\widehat{\xb}$ we get
\[
p_{\widehat{\xb}}(\widehat{\xb}) = |J_{\widehat{\fb}}(\widehat{\fb}^{-1}(\widehat{\xb}))|^{-1} p_{\sb}(\widehat{\fb}^{-1}(\widehat{\xb}))\,,
\]

\paragraph{Information geometric interpretation of IMA.}

Let us now compute the KL divergence between the intervened and observed distribution,
\begin{equation}
  \KL(p_{\xb}\|p_{\widehat{\xb}}) = \int p_{\xb}(\xb) \log \frac{p_{\xb}(\xb)}{p_{\widehat{\xb}}(\xb)} d\xb\,.  
  \label{eq:klintervnotinterv}
\end{equation}
Expressing the density of the observed variables as a pushforward of the density of the sources, and without additional assumptions on $\fb$ and $\widehat{\fb}$ besides smoothness and invertibility, we get,  
\begin{eqnarray*}
\KL(p_{\xb}\|p_{\widehat{\xb}}) & 
= &\mathlarger{\int} \left| \Jb_{\fb} (\fb^{-1}(\xb))\right|^{-1} p_{\sb}(\fb^{-1}(\xb)) \log \frac{ \left| \Jb_{\fb} (\fb^{-1}(\xb))\right|^{-1} p_{\sb}(\fb^{-1}(\xb))}{\left| \Jb_{\widehat{\fb}} (\widehat{\fb}^{-1}(\xb))\right|^{-1} p_{{\sb}}(\widehat{\fb}^{-1}(\xb))} d\xb\,.%
\end{eqnarray*}
We now consider a factorization of $\sb$ over a directed acyclic graph (DAG), such that
\[
p_{\sb}(\sb)= \prod_j p_{j}(s_j|\mbox{pa}(s_j))\,,
\]
where $\mbox{pa}(s_j)$ denotes the components associated to the parents of node $j$ in the DAG. Because $\sib$ is an element-wise transformation the factorization will be the same for $p_{\widehat{\sb}}$.

If we now additionally assume that $\fb$ and $\widehat{\fb}$ satisfy the IMA postulate, we get
\begin{eqnarray*}
\KL(p_{\xb}\|p_{\widehat{\xb}}) 
&= &\mathlarger{\int} \!\! \left| \Jb_{\fb} (\fb^{-1}(\xb))\right|^{-1}\!\! p_{\sb}(\fb^{-1}(\xb)) \sum_{i=1}^n \log \frac{ \left\| \frac{\partial \fb}{\partial s_i}( \fb^{-1}(\xb))\right\|^{-1} \!\!p_i(\fb^{-1}(\xb)_i|\mbox{pa}(\fb^{-1}(\xb)_i))}{\left\|  \frac{\partial \widehat{\fb}}{\partial s_i} (\widehat{\fb}^{-1}(\xb))\right\|^{-1} 
\!\!p_i(\widehat{\fb}^{-1}(\xb)_i|\mbox{pa}(\widehat{\fb}^{-1}(\xb)_i))}  d\xb\, . %
\end{eqnarray*}

By reparameterizing the integral in terms of the source coordinates, we get (using $\widehat{\fb}^{-1}=\sib^{-1}\circ {\fb}^{-1}$)
\begin{equation}\label{eq:KLdecomp1}
\KL(p_{\xb}\|p_{\widehat{\xb}}) 
=
\sum_{i=1}^n \mathlarger{\int}  p_{\sb}(\sb) \log \frac{ \left\| \frac{\partial \fb}{\partial s_i}( \sb)\right\|^{-1} p_i(\sb_i|\mbox{pa}(\sb_i))}{\left\|  \frac{\partial \widehat{\fb}}{\partial s_i} (\sib^{-1}(\sb))\right\|^{-1} p_i\left({\sib}^{-1}(\sb)_i|\mbox{pa}({\sib}^{-1}(\sb)_i)\right)}
d\sb\,.
\end{equation}
such that the $KL$ divergence can be written as a sum of $n$ terms, each associated to the intervention on a mechanism $\frac{\partial \fb}{\partial s_i}$.
Positivity of %
these terms would suggest that we can interpret each of them as quantifying the individual contribution of a soft intervention $\sib_j$ applied to the original sources.%

In the following, we propose a justification for the positivity of these terms \emph{in a restricted setting where only the $m$ leaf nodes of the graph are intervened on (with $1\leq m\leq n$).}\footnote{A leaf node in a DAG is one that does not have any descendants.}  %
In the special case of independent sources, all nodes are leaves and $m=n$. 

Under this assumption, we consider (without loss of generality) an ordering of the nodes such that the $m$ first nodes are the leaf nodes in the DAG. Then we argue that the terms of the right-hand side of \eqref{eq:KLdecomp1} associated to leaf nodes ($i \leq m$) are positive, as they correspond to the expectations of KL-divergences.
Indeed, taking one of the first $m$ terms, denoted $i$, we have the factorization 
\[
p_{\sb}(\sb)= p_i(s_i|\mbox{pa}(s_i))\prod_{j\neq i} p_{j}(s_j|\mbox{pa}(s_j))\,,
\]
where $\prod_{j\neq i} p_{j}(s_j|\mbox{pa}(s_j))$ does not depend on $s_i$ because node $i$ is a leaf node. Moreover, as non-leaf nodes are not intervened on, the transformation $\sib$ does not modify the value of any parent variables in these factorizations. As a consequence, the integral can be computed as an iterated integral with respect to $s_i$ and $\sb_{-i}\,$, where $\sb_{-i}$ denotes the vector including all source variables but $s_{i}$, such that
\begin{multline*}
    \mathlarger{\int}  p_{\sb}(\sb) \log \frac{ \left\| \frac{\partial \fb}{\partial s_i}( \sb)\right\|^{-1} \!\!p_i(\sb_i|\mbox{pa}(\sb_i))}
    {\left\|  \frac{\partial \widehat{\fb}}{\partial s_i} (\sib^{-1}(\sb))\right\|^{-1} \!\! p_i({\sib}^{-1}(\sb)_i|\mbox{pa}({\sib}^{-1}(\sb)_i))} d\sb \\
    = \mathbb{E}_{\sb_{-i}\sim \prod_{j\neq i} p_{j}(s_j|\mathrm{pa}(s_j))}
    \left[\mathlarger{\int}\!\! p(s_i|\mbox{pa}(s_{i})) 
    \log \frac{ \left\| \frac{\partial \fb}{\partial s_i}(s_i,\sb_{-i})\right\|^{-1} \!\!p_i(\sb_i|\mbox{pa}(\sb_i))}{\left\|  \frac{\partial \widehat{\fb}}{\partial s_i} 
    \!\!\left(\sigma_i^{-1}(s_i),\sib^{-1}(\sb)_{-i}\right)\right\|^{-1} 
    \!\!p_i({\si}^{-1}_i(s_i)|\mbox{pa}(s_i))} ds_i\right]\,.
\end{multline*}
As illustrated in Fig.~\ref{fig:KLmapping}, for a fixed $\sb_{-i}$, consider the straight line  $\mathcal{L}_{\sb_{-i}}=\{(s_i,\sb_{-i}):s_i\in \RR \}$ in source space (parallel to the $s_i$ coordinate axis). This line is mapped in observation space to the smooth curve $\fb[\mathcal{L}_{\sb_{-i}}]$, by $\fb$ in a smooth invertible way. Similarly, $\widehat{\fb}=\fb\circ \sib$ maps $\mathcal{L}_{\sib^{-1}(\sb_{-i})}$ to the same image curve, since $\widehat{\fb}[\mathcal{L}_{\sib^{-1}(\sb_{-i})}] = \fb\circ\sib[\mathcal{L}_{\sib^{-1}(\sb_{-i})}] = \fb[\mathcal{L}_{\sb_{-i}}]$. 

\begin{figure}
    \centering
    \includegraphics[width=.99\textwidth]{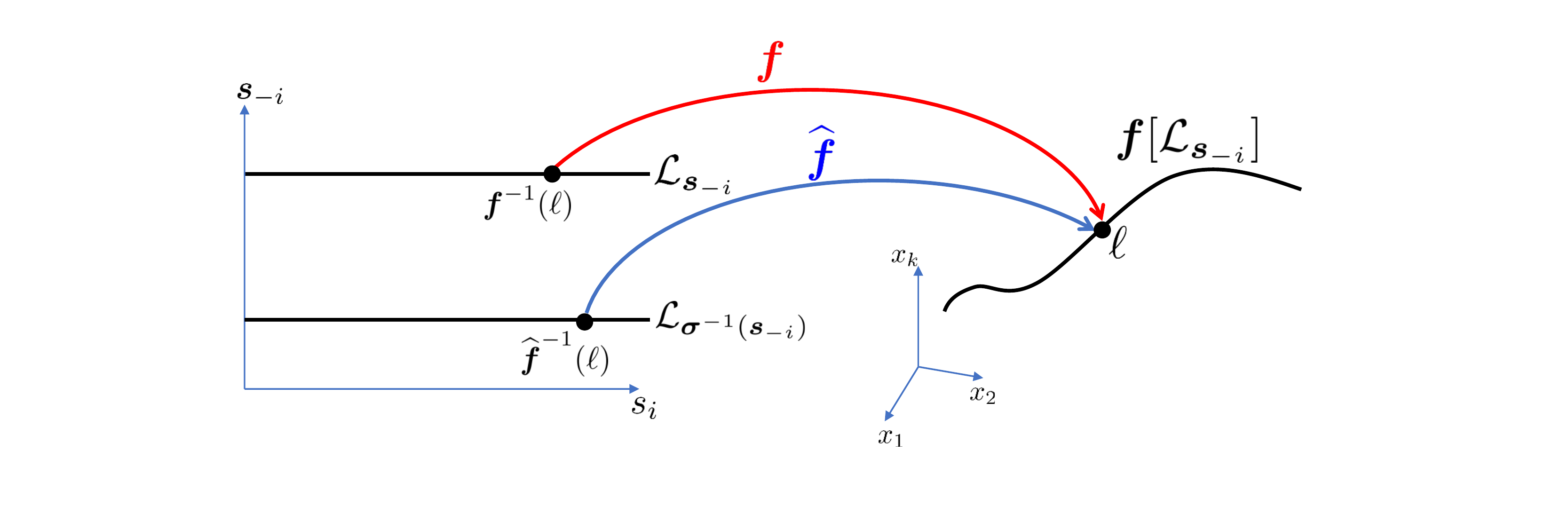}  
    \caption{Illustration of the mapping between lines in source space to a curve in observation space.
    $\Lcal_{\sb_{-i}}$ is the line obtained by varying $s_i$ while keeping the value of all other sources fixed to $\sb_{-i}$.
    $\mathcal{L}_{\sib^{-1}(\sb_{-i})}$ is then defined by applying the transformations in $[\sib^{-1}]_{-i}$ to $\Lcal_{\sb_{-i}}$. Both lines are mapped to the same image line $\fb[\mathcal{L}_{\sb_{-i}}]$.
    \label{fig:KLmapping}}
    \label{fig:my_label}
\end{figure}

By using the change of variable formula to represent the integral on $\fb[\mathcal{L}_{\sb_{-i}}]$ indexed by the curvilinear coordinate $\ell$, we get the expression of the pushfoward distribution $\fb_* p_i(\,.\,|\mbox{pa}(s_{i}))$ on the curve $\fb[\mathcal{L}_{\sb_{-i}}]$
\[
\Big[\fb_* p_i\left(\,.\,|\mbox{pa}(s_{i})\right)\Big](\ell)=\norm{\frac{\partial \fb}{\partial s_i}\left(\fb^{-1}(\ell),\sb_{-i}\right)}^{-1}
p_i\left(\fb^{-1}(\ell)|\mbox{pa}(s_i)\right) \,.
\]
where, to simplify notation, $\fb^{-1}(\ell)$ denotes in this context the coordinate $s_i$ on $\mathcal{L}_{\sb_{-i}}$ in bijection with the curvilinear coordinate $\ell$ on $\fb[\mathcal{L}_{\sb_{-i}}]$.

Similarly, we get the expression of the pushfoward distribution $\widehat{\fb}_* p_i(\,.\,|\sib^{-1}(\mbox{pa}(s_{i})))$ from $\mathcal{L}_{\sib^{-1}(\sb_{-i})}$ to the curve $\fb[\mathcal{L}_{\sb_{-i}}]$ (using again the fact that parent variables are not intervened on, and thus left unchanged by $\sib$)
\[
\left[\widehat{\fb}_* p_i(\,.\,|\sib^{-1}(\mbox{pa}(s_{i})))\right](\ell)=\norm{\frac{\partial \widehat{\fb}}{\partial s_i}\left(\widehat{\fb}^{-1}(\ell),\sib^{-1}(\sb)_{-i}\right)}^{-1} p_i\left(\widehat{\fb}^{-1}(\ell)|\mbox{pa}(s_i)\right) \,.
\]

These terms appear when rewriting the $i$-th term (for a leaf variable) in~\eqref{eq:KLdecomp1} as a curvilinear integral:
\begin{multline*}
\mathlarger{\int}  p_{\sb}(\sb) \log \frac{ \left\| \frac{\partial \fb}{\partial s_i}( \sb)\right\|^{-1} \!\!p_i(\sb_i|\mbox{pa}(\sb_i))}{\left\|  \frac{\partial \widehat{\fb}}{\partial s_i} (\sib^{-1}(\sb))\right\|^{-1} 
\!\!p_i\left({\sib}^{-1}(\sb)_i|\mbox{pa}({\sib}^{-1}(\sb)_i)\right)} d\sb \\
= \mathbb{E}_{\sb_{-i}\sim \prod_{j\neq i} p_{j}(s_j|\mathrm{pa}(s_j))}
\left[
\mathlarger{\int} \left\|\frac{\partial \fb}{\partial s_i}(\fb^{-1}(\ell),\sb_{-i})\right\|^{-1} \!\!p_i(\fb^{-1}(\ell)\mid\mbox{pa}(s_{i})) \right.\\
\left. \log \frac{ \left\| \frac{\partial \fb}{\partial s_i}(\fb^{-1}(\ell),\sb_{-i})\right\|^{-1} \!\!p_i\left(\fb^{-1}(\ell)\mid\mbox{pa}(s_{i})\right)}{\left\|  \frac{\partial \widehat{\fb}}{\partial s_i} (\widehat{\fb}^{-1}(\ell),\sib^{-1}(\sb)_{-i})\right\|^{-1} \!\!p_i\left(\widehat{\fb}^{-1}(\ell)\mid \mbox{pa}(s_{i})\right)} d\ell 
\right]\,.
\end{multline*}
The inner integral term can thus be interpreted as the KL divergence between two pushforward measures defined on $\fb_*[\mathcal{L}_{\sb_{-i}}]$ by $\fb$ and $\widehat{\fb}$, that we can denote by
\[
\KL \left(\fb_* p_i\left(\,.\,|\mbox{pa}(s_{i})\right)\,\|\,\widehat{\fb}_* p\left(\,.\,|\sib^{-1}(\mbox{pa}(\sb_{i}))\right) \right).
\]
To conclude, this implies that the causal effect of the soft intervention $\fb\rightarrow \widehat{\fb}$ can be decomposed as the following sum of $m$ positive terms associated to interventions on each leaf variable, plus an additional term for the remaining non-leaf variables, which further simplifies (in comparison to \eqref{eq:KLdecomp1}) due to the assumption that those variables are unintervened. 
\begin{multline}\label{eq:KLdecompIMA}
\KL(p_{\xb}\,\|\,p_{\widehat{\xb}}) 
= %
\sum_{i=1}^m \mathbb{E}_{\sb_{-i}\sim \prod_{j\neq i} p_{j}(s_j|\mathrm{pa}(s_j))}
\left[\KL\left(\fb_* p\left(\,.\,|\mbox{pa}(s_{i}\right)\,\|\,\widehat{\fb}_* p\left(\,.\,|\sib^{-1}(\mbox{pa}(s_{i}))\right)\right)\right]\\
+\sum_{i>m} \mathlarger{\int}  p_{\sb}(\sb) \log \frac{ \left\| \frac{\partial \fb}{\partial s_i}( \sb)\right\|^{-1} }{\left\|  \frac{\partial \widehat{\fb}}{\partial s_i} (\sib^{-1}(\sb))\right\|^{-1} }
d\sb \,.
\end{multline}
This expression suggests that the KL-divergences appearing in the first $m$ terms each reflect the causal effect of an intervention on the mechanism at the level of one single source coordinate $i$, turning $\frac{\partial {\fb}}{\partial s_i}$ into~$\frac{\partial \widehat{\fb}}{\partial s_i}$. When the sources are jointly independent, we have $m=n$ and the right hand side of \eqref{eq:KLdecompIMA} contains only positive terms. An interesting direction for future work would be to analyse the remaining term in the case of non unconditionally independent sources. 

In contrast to the decomposition~\eqref{eq:addIrregIGCI} in the context of IGCI, the IMA decomposition~\eqref{eq:KLdecompIMA} involves $m$ (expectations of) KL-divergence terms instead of two, each related to the intervention on the part of the mechanism $\frac{\partial \fb}{\partial s_i}$ that reflects the influence of a single source.

\subsection{Independence of cause and mechanism and IMA}
\label{app:violations_icm_ima}

We now discuss an example in which a formalisation of the principle of independence of cause and mechanism~\cite{janzing2010telling} is violated, and one in which the IMA principle is violated.

\subsubsection{Violations of independence of cause and mechanism}

In the context of the Trace method~\cite{janzing2010telling}, used in causal discovery, a technical example of fine-tuning can be constructed by taking a vector of i.i.d. random variables with arbitrary (not diagonal) covariance matrix $\Sigma$ as the cause, and by constructing the mechanism as a whitening matrix, turning the cause variables into uncorrelated (effect) variables. By doing so, the singular values and singular vectors of the matrix (the mechanism) are fine-tuned to the input covariance matrix (a property of the cause distribution)%
, and such fine-tuning can be quantified via the Trace method (see~\cite{janzing2010telling}, Section 1). %

\subsubsection{Violations of the IMA principle}

\paragraph{Technical example.}
As mentioned in~\cref{sec:unsuitability_of_existing_ICM_measures}, %
an example of a mixing function $\fb$ which is non-generic according to the IMA principle is an autoregressive function, for example an autoregressive normalising flow~\cite{papamakarios2019normalizing}, where the $k$-th component of the observations only depends on the $k$-th sources: intuitively, this would correspond to the unlikely cocktail-party setting where the $k$-th microphone only picks up the voices of the first  speakers. More precisely, as we show in~\cref{lemma:trijaccima}, this leads to positive $C_\IMA$ value for such mixing.

\paragraph{Pictorial example: Violations of the IMA principle in a cocktail party.} %
A cocktail party (\cref{fig:intuition}, left) may violate our IMA principle when the locations of several speakers and the room acoustics have been fine tuned to one another. This is for example the case in concert halls where the acoustics of the room have been fine-tuned to the position and configuration of multiple locations on the stage, where the sources (i.e., the voices of the actors or singers) are emitted---in order to make the listening experience as homogeneous as possible across the spectators (that is, the influence of each of the sources on the different listeners should not differ too much). This would lead to an increase in collinearity between the columns of the mixing's Jacobian, thus violating the IMA principle. 

Additionally, we recall that the ICM principle is often informally introduced by referencing the fine-tuning and non-generic viewpoints giving rise to certain visual illusions, such as the Beuchet chair (see~\cite{peters2017elements}, Section 2); in a similar vein, we can imagine that violations of the IMA principle in the cocktail party setting may be related to illusions in binaural hearing such as for example the Franssen effect, where the listener is tricked into incorrectly localizing a sound~\cite{schroeder1993listening}.%

\clearpage
\section{Proofs}
\label{app:proofs}
We now provide the proofs of all our theoretical results from the main paper.

\subsection{Proof of\texorpdfstring{~\Cref{prop:local_IMA_contrast_properties}}{}}

Before giving the proof, 
it is useful to rewrite the local IMA constrast~\eqref{eq:adm_single_point} as follows:
\begin{align}
    c_\IMA(\fb,\sb) =& \sum_{i = 1}^n \log \norm{\frac{\partial \fb}{\partial s_i}(\sb)} - \log \left|\Jb_\fb(\sb)\right| \nonumber\\
    =& \frac{1}{2} \left( \log \left|\diag \left(\Jb^\top_\fb(\sb) \Jb_\fb(\sb)\right) \right| - \log \left|\Jb^\top_\fb(\sb) \Jb_\fb(\sb) \right| \right) \nonumber \\
     =& \frac{1}{2} D_{KL}^{\text{left}}\left(\Jb^\top_\fb(\sb) \Jb_\fb(\sb)\right)\,,
    \label{eq:dkl_diag}
\end{align}
where the quantity in~\eqref{eq:dkl_diag} is called the left KL measure of diagonality of the matrix $\Jb^\top_\fb(\sb) \Jb_\fb(\sb)$~\cite{alyani2017diagonality} (see~\cref{remark:left_KL}):
\begin{align*}
    D_{KL}^{\text{left}}(\Ab) &= - \log | (\diag(\Ab))^{-\frac{1}{2}} \Ab (\diag(\Ab))^{-\frac{1}{2}} |\\
                              &= \log |\diag(\Ab)| - \log |\Ab|\,.
\end{align*}
From~\eqref{eq:dkl_diag}, it can be seen that $c_\IMA(\fb,\sb)$ is a function of $\Jb_\fb(\sb)$ only through $\Jb^\top_\fb(\sb) \Jb_\fb(\sb)$. 

\impropties*
\begin{proof} 
For ease of exposition, we denote the value of the Jacobian of $\fb$ evaluated at the point $\sb$ by $\Jb_\fb(\sb) = \Wb$.
The two properties can then be proved as follows:
\begin{enumerate}[(i)] 
        \item This is a consequence of Hadamard's inequality, applied to the expression on the RHS of~\eqref{eq:adm_single_point}, which states that, for a matrix $\Wb$ with columns $\wb_i$, $\sum_{i = 1}^n \log \norm{\wb_i} \geq \log |\Wb|$; equality in Hadamard's inequality is achieved iff. the vectors $\wb_i$ are orthogonal.
        \item We split the proof in three parts.
        \begin{enumerate}[a.]
            \item \textit{Invariance to left multiplication by an orthogonal matrix:}\\
            Let $\Wbt = \Ob \Wb$, with $\Ob$ an orthogonal matrix, i.e., $\Ob\Ob^\top=\Ib$.
            Then the property follows from  writing $c_\IMA(\fb,\sb)$ as in~\eqref{eq:dkl_diag}:
            \begin{align*}
                \frac{1}{2}\leftkl(\Wbt^\top \Wbt) =\frac{1}{2} \leftkl(\Wb^\top \Ob^\top  \Ob \Wb) 
                =\frac{1}{2} \leftkl(\Wb^\top \Ib \Wb) 
                = \frac{1}{2}\leftkl(\Wb^\top \Wb) %
            \end{align*}
            \item
            \textit{Invariance to right multiplication by a permutation matrix:}\\
            Let $\Wbt = \Wb \Pb$, with $\Pb$ a permutation matrix. Then $\Wbt$ is just $\Wb$ with permuted columns. Clearly, the sum of the log-column-norms does not change by changing the order of the summands.
            Further, $\log|\tilde{\Wb}| = \log|\Wb| + \log|\Pb| =\log|\Wb|$, because the absolute value of the determinant of a permutation matrix is one. 
            \item \textit{Invariance to right multiplication by a diagonal matrix}:\\ Let $\Wbt = \Wb \Db$, with $\Db$ a diagonal matrix. Consider the two terms on the RHS of~\eqref{eq:adm_single_point}.
            For the first term, we know that the columns of~$\tilde{\Wb}$ are scaled versions of the columns of $\Wb$, that is $\tilde{\wb}_i = d_i \wb_i$, where $d_i$ denotes the $i^\text{th}$ diagonal element of $\Db$.
            Then $\norm{\tilde{\wb}_i} = |d_i|\norm{\wb_i}$. 
            For the second term, we use the decomposition of the determinant: \[\log|\tilde{\Wb}| = \log|\Wb| + \log|\Db| = \log|\Wb| + \sum_{i=1}^n\log|d_i|.\]
            Taken together, we obtain
            \begin{align*}
            \sum_{i = 1}^n \log \norm{\wbt_i} - \log |\Wbt| 
            &= \sum_{i = 1}^n \log \left(|d_i| \norm{\wb_i}\right) -\left( \log |\Wb| + \sum_{i=1}^n\log|d_i|\right)\\
            &= \sum_{i = 1}^n \log \norm{\wb_i} + \sum_{i = 1}^n \log |d_i|  - \log |\Wb| - \sum_{i=1}^n\log|d_i|\\
            &= \sum_{i = 1}^n \log \norm{\wb_i}   - \log |\Wb|%
            \end{align*}%
        \end{enumerate}%
    \end{enumerate}%
\end{proof}

\subsection{Proof of\texorpdfstring{~\Cref{prop:global_IMA_contrast_properties}}{}}
\admproperties*
\begin{proof}
 The properties can be proved as follows:
    \begin{enumerate}[(i)]
        \item From property \textit{(i)} of~\cref{prop:local_IMA_contrast_properties}, we know that $c_\IMA(\fb, \sbb) \geq 0$. Hence, $C_\IMA(\fb, p(\sbb)) \geq 0$ follows as a direct consequence of integrating the non-negative quantity $c_\IMA(\fb, \sbb)$.
        
        Equality is attained iff.\ $c_\IMA(\fb, \sbb)=0$ almost surely w.r.t.\ $p_\sb$, which according to property \textit{(i)} of~\cref{prop:local_IMA_contrast_properties} occurs 
        iff.\  the columns of $\Jb_{\fb}(\sb)$ are orthogonal almost surely w.r.t.\ $p_\sb$.
        
        It remains to show that this is the case iff.\ $\Jb_{\fb}(\sb)$ can be written as $\Ob(\sb) \Db(\sb)$, with $\Ob(\sb)$ and $\Db(\sb)$  orthogonal and diagonal matrices, respectively.
        (To avoid confusion, note that \textit{orthogonal columns} need not have unit norm, whereas an \textit{orthogonal matrix} $\Ob$ satisfies $\Ob\Ob^\top=\Ib$.)
        
        The \textit{if} is clear since right multiplication by a diagonal matrix merely re-scales the columns, and hence does not affect their orthogonality. 
        
        For the \textit{only if}, let $\Jb_\fb(\sb)$ be any matrix with orthogonal columns $\jb_i(\sb)$, $\jb_i(\sb)^\top \jb_j(\sb)=0, \forall i\neq j$, and denote the column norms by $d_i(\sb)=||\jb_i(\sb)||$. Further denote the normalised columns of $\Jb_\fb(\sb)$ by $\ob_i(\sb)=\jb_i(\sb) / d_i(\sb)$ and let $\Ob(\sb)$ and $\Db(\sb)$ be the orthogonal and diagonal matrices with columns $\ob_i(\sb)$ and diagonal elements $d_i(\sb)$, respectively. Then $\Jb_\fb(\sb)=\Ob(\sb)\Db(\sb)$.
        \item Let $\fbt=\fb\circ\hb^{-1}\circ \Pb^{-1}$ and $\sbt = \Pb\hb(\sb)$,  where $\Pb\in\RR^{n\times n}$ is a permutation matrix and $\hb(\sb)=(h_1(s_1), ..., h_n(s_n))$ is an invertible element-wise function.
        Then
        \begin{equation}
        \label{eq:equality_of_global_contrasts}
            C_\IMA(\fbt,p_\sbt)
            =\int c_\IMA(\fbt,\sbt)p_\sbt(\sbt) d\sbt
            =\int c_\IMA(\fbt,\sbt)p_\sb(\sb) d\sb
        \end{equation}
        where, for the second equality, we have used the fact that 
        \[p_\sbt(\sbt)d\sbt = p_\sb(\sb)d\sb\,.\]
        since $\Pb \circ \hb$ is an invertible tranformation (see, e.g., ~\cite{rezendeshort}).
        It thus suffices to show that
        \begin{equation}
        \label{eq:equality_of_local_contrasts}            c_\IMA(\fbt,\sbt)=c_\IMA(\fb,\sb).
        \end{equation}
        at any point $\sbt=\Pb\hb(\sb)$.
        To show this, we write
        \begin{align}
            \Jb_\fbt(\sbt)
            &=\Jb_{\fb\circ\hb^{-1}\circ\Pb^{-1}}(\Pb\hb(\sb)) \nonumber \\
            &=\Jb_{\fb\circ\hb^{-1}}\left(\Pb^{-1}\Pb\hb(\sb)\right)\, \Jb_{\Pb^{-1}}\left(\Pb\hb(\sb)\right) \nonumber\\
            &=\Jb_{\fb\circ\hb^{-1}}(\hb(\sb))\, \Jb_{\Pb^{-1}}(\Pb\hb(\sb)) \nonumber\\
            &=\Jb_{\fb}(\hb^{-1}\circ\hb(\sb))\, \Jb_{\hb^{-1}}(\hb(\sb)) \, \Jb_{\Pb^{-1}}(\Pb\hb(\sb)) \nonumber\\
            &=\Jb_{\fb}(\sb) \, \Db(\sb) \Pb^{-1}
            \label{eq:Jacobian_relation}
        \end{align}
        where we have repeatedly used the chain rule for Jacobians, as well as that $\Pb^{-1}\Pb=\Ib$; that permutation is a linear operation, so $\Jb_\Pb(\sb)=\Pb$ for any $\sb$; and that $\hb$ (and thus $\hb^{-1}$) is an element-wise transformation, so the Jacobian $\Jb_{\hb^{-1}}$ is a diagonal matrix $\Db(\sb)$.
        
        The equality in~\eqref{eq:equality_of_local_contrasts} then follows from~\eqref{eq:Jacobian_relation} by applying property \textit{(ii)} of~\cref{prop:local_IMA_contrast_properties}, according to which $c_\IMA$ is invariant to right multiplication of the Jacobian $\Jb_\fb(\sb)$ by diagonal and permutation matrices.
        
        Substituting~\eqref{eq:equality_of_local_contrasts} into the RHS of~\eqref{eq:equality_of_global_contrasts}, we finally obtain
        \begin{equation*}
            C_\IMA(\fbt,p_\sbt)=C_\IMA(\fb,p_\sb).
        \end{equation*}%
        \end{enumerate}%
\end{proof}

\subsection{Remark on a similar condition to IMA, expressed in terms of the rows of the Jacobian}

We remark that the condition imposed by the IMA~\cref{principle:IMA} needs to be expressed in terms of the columns of the Jacobian, and would not lead to a criterion with desirable properties for BSS if it were instead expressed in terms of its rows (which correspond to gradients of the $f_i(\sb)$). One way to justify this is that, for the same condition expressed on the rows of the Jacobian, that is
\[
\sum_{i = 1}^n \log \norm{\nabla f_i(\sb)} - \log \left|\Jb_\fb(\sb)\right| =0\,, 
\]
property \textit{(ii)} of~\Cref{prop:local_IMA_contrast_properties} would not hold (because invariance would hold w.r.t.\ right, not left, multiplication with a diagonal matrix). As a consequence, the resulting global contrast would not be blind to reparametrisation of the source variables by permutation and element-wise invertible transformations, thereby not being a good contrast in the context of blind source separation. 

\subsection{Proof of\texorpdfstring{~\Cref{thm:adm_darmois}}{}}
Before proving the main theorem, we first introduce some additional details on the Jacobian of the Darmois construction~\cite{hyvarinen1999nonlinear} which will be important for the proof.

\paragraph{Jacobian of the Darmois construction for $n=2$.}
\label{app:Jacobian_Darmois}
Consider the Darmois construction for $n=2$, 
\begin{align*}
     y_1&=g^{\text{D}}_1(x_1):=F_{X_1}(x_1)=\PP_{X_1}(X_1\leq x_1)\\
     y_2&=g^{\text{D}}_2(y_1, x_2):=F_{X_2|Y_1=y_1}(x_2)=\PP_{X_2|Y_1=y_1}(X_2\leq x_2|Y_1=y_1)
\end{align*}

Its Jacobian takes the form
\begin{equation}
\label{eq:general_Jacobian_Darmois_2d}
    \Jb_{\gb^\text{D}} (\xb) =
    \begin{pmatrix}
    p(x_1) & 0\\
    c_{21}(\xb) & p(x_2|x_1)
    \end{pmatrix}\,,
\end{equation}
where 
\[
c_{21}(\xb)=\frac{\partial }{\partial x_1} \int_{-\infty}^{x_2} p(x'_2|x_1)dx'_2\,.
\]

\paragraph{Jacobian of the Darmois construction: general case.}
In the general case, the Jacobian of the Darmois construction will be 
\begin{equation}
\Jb_{\gb^\text{D}}(\xb) = 
\begin{pmatrix} 
p(x_1) & \cdots & 0 \\
 & \ddots & \vdots \\ 
\mathbf{C}(\xb) &  & p(x_n |x_1, \ldots, x_{n-1} )
\end{pmatrix} 
\label{eq:full_jacobian_darmois}
\end{equation}
where the components $c_{ji}(\xb_{1:j})$ of $\mathbf{C}(\xb)$ for all $i<j$ %
are defined  by
\begin{align*}
c_{ji}(\xb_{1:j})&=\frac{\partial }{\partial x_i} \int_{-\infty}^{x_j} p(x'_j| \xb_{1:j-1} ) dx'_j \,.
\end{align*}

It is additionally useful to introduce the following lemmas.

\begin{restatable}{lemma}{trijaccima}
\label{lemma:trijaccima}
A function $\fb$ with triangular Jacobian has $C_\IMA(\fb, p_\sb) = 0$ iff. its Jacobian is diagonal almost everywhere. Otherwise,  $C_\IMA(\fb, p_\sb) > 0$.%
\end{restatable}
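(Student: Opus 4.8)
The plan is to reduce the statement to the pointwise characterisation of the local contrast from \Cref{prop:local_IMA_contrast_properties} \textit{(i)} and then to exploit the rigidity of triangular matrices. Recall that $c_\IMA(\fb,\sb)\geq 0$, with equality if and only if the columns of $\Jb_\fb(\sb)$ are mutually orthogonal. Since $\fb$ is a smooth invertible mixing, $\Jb_\fb(\sb)$ is an invertible triangular matrix at every $\sb$, so in particular its diagonal entries are nonzero. The crux is therefore the linear-algebra claim: \emph{an invertible triangular matrix has orthogonal columns if and only if it is diagonal}.

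To prove this claim, I would first write any matrix $\Wb$ with orthogonal columns as $\Wb=\Ob\Db$, where $\Db$ is the diagonal matrix of the (nonzero) column norms and $\Ob$ has orthonormal columns; invertibility of $\Wb$ then forces $\Ob\in O(n)$ to be a genuine orthogonal matrix. If $\Wb$ is, say, lower triangular, then $\Ob=\Wb\Db^{-1}$ is a product of lower triangular matrices and hence itself lower triangular. But $\Ob\in O(n)$ satisfies $\Ob^\top=\Ob^{-1}$: the left-hand side is upper triangular, while the right-hand side, being the inverse of an invertible lower triangular matrix, is lower triangular, so both coincide with a diagonal matrix. Thus $\Ob$ is diagonal, and therefore $\Wb=\Ob\Db$ is diagonal as well. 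The converse is immediate, since a diagonal matrix trivially has orthogonal columns, and the identical argument (transposing the roles of upper and lower) settles the upper triangular case.

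Combining the two ingredients, $c_\IMA(\fb,\sb)=0$ at a point $\sb$ holds exactly when $\Jb_\fb(\sb)$ is diagonal. Integrating against $p_\sb$ and using that $c_\IMA(\fb,\cdot)$ is a nonnegative integrand (as in \Cref{prop:global_IMA_contrast_properties} \textit{(i)}), I would conclude that $C_\IMA(\fb,p_\sb)=\int c_\IMA(\fb,\sb)\,p_\sb(\sb)\,d\sb=0$ if and only if $c_\IMA(\fb,\sb)=0$ almost everywhere w.r.t.\ $p_\sb$, i.e.\ if and only if $\Jb_\fb$ is diagonal almost everywhere. If instead $\Jb_\fb$ fails to be diagonal on a set of positive measure, then $c_\IMA(\fb,\cdot)$ is strictly positive there, whence the integral is strictly positive, which is precisely the ``otherwise'' clause.

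I expect the only nontrivial step to be the triangular rigidity claim; the remainder is a direct appeal to the already-established sign and equality conditions for $c_\IMA$ and $C_\IMA$. One point worth stating explicitly is that invertibility of $\Jb_\fb(\sb)$ (guaranteed since $\fb$ is a diffeomorphism) is used twice: to form $\Db^{-1}$ and to guarantee $\Ob\in O(n)$; without it, an orthogonal-column triangular matrix need not be diagonal.
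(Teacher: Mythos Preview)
Your proof is correct and complete; the structure is sound, and the triangular-rigidity argument via the $\Ob\Db$ decomposition and the observation that $\Ob^\top=\Ob^{-1}$ forces a lower-triangular orthogonal matrix to be diagonal is a clean way to establish the key claim.

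The paper takes a more direct and elementary route. Rather than invoking \Cref{prop:local_IMA_contrast_properties}~\textit{(i)} and then proving the rigidity of triangular matrices with orthogonal columns, it simply writes out $c_\IMA(\fb,\sb)$ explicitly for a lower-triangular $\Wb=\Jb_\fb(\sb)$: the log-determinant term is $\sum_i\log|w_{ii}|$ (product of diagonal entries), while the column-norm term is $\sum_i\log\sqrt{\sum_j w_{ji}^2}$. Since each column norm satisfies $\sqrt{\sum_j w_{ji}^2}\geq|w_{ii}|$ with equality iff all off-diagonal entries in that column vanish, monotonicity of the logarithm gives $c_\IMA(\fb,\sb)\geq 0$ with equality iff $\Wb$ is diagonal, and the global conclusion follows by integration exactly as you argue. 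This bypasses Hadamard's inequality and the $\Ob\Db$ factorisation entirely, and does not need invertibility to form $\Db^{-1}$ (only that the diagonal entries are nonzero, which triangularity plus invertibility guarantees). Your approach has the merit of isolating a reusable linear-algebra fact and of hooking cleanly into the orthogonality characterisation already established in the paper; the paper's approach has the merit of being a two-line computation that requires nothing beyond the definition.
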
%
\begin{proof}
    Let $\fb$ have lower triangular Jacobian at $\sb$, and denote $\Jb_\fb(\sb) = \Wb$. Then we have 
    \[
    c_\IMA(\fb,\sb) = \sum_{i = 1}^n \log \left( \sqrt{\sum_{j=i}^n w^2_{ji} } \right) - \sum_{i = 1}^n \log \left| w_{ii} \right|\,,
    \]
    where $w_{ji} = [\Wb]_{ji} $. Since the logarithm is a strictly monotonically increasing function and since \[\sqrt{\sum_{j=1}^n w^2_{ji} } \geq | w_{ii} |\,,\] with equality iff.\ $w_{ji}=0, \forall j \neq i$ (i.e., iff.\ $\Wb$ is a diagonal matrix), we must have $c_\IMA(\fb,\sb)=0$ iff.\ $\Wb$ is diagonal. 
    
    $C_\IMA(\fb,p_\sb)$ is therefore equal to zero iff. $\fb$ has diagonal Jacobian almost everywhere, and it is strictly larger than zero otherwise. 
\end{proof}

\begin{restatable}{lemma}{elemwise}
\label{lemma:elemwise}
A smooth function $\fb:\RR^n \rightarrow \RR^n$ whose Jacobian is diagonal everywhere is an element-wise function, $\fb(\sb) = (f_1(s_1), ..., f_n(s_n))$.
\end{restatable}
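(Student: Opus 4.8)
The plan is to read off from the diagonal-Jacobian hypothesis that every off-diagonal partial derivative of $\fb$ vanishes, and then to use a coordinate-wise integration argument on the connected domain $\RR^n$ to conclude that each output coordinate depends only on the matching input coordinate.

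First I would recall that the $(i,j)$ entry of the Jacobian is $\Jb_\fb(\sb)_{ij} = \partial f_i/\partial s_j(\sb)$. Thus the assumption that $\Jb_\fb(\sb)$ is diagonal for \emph{every} $\sb$ is precisely the statement that $\partial f_i/\partial s_j(\sb) = 0$ for all $i \neq j$ and all $\sb \in \RR^n$.

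Next, I would fix an index $i$ and take any two points $\sb, \sb' \in \RR^n$ that share the same $i$-th coordinate, $s_i = s_i'$. I connect them by a polygonal path made of segments each parallel to a single coordinate axis $s_j$ with $j \neq i$ (changing one coordinate at a time; this is possible since $\RR^n$ is path-connected and only coordinates $j \neq i$ need to change). Along any such segment the one-variable fundamental theorem of calculus, combined with $\partial f_i/\partial s_j \equiv 0$, shows that $f_i$ is constant; chaining the segments gives $f_i(\sb) = f_i(\sb')$. Hence $f_i(\sb)$ is unchanged whenever the coordinates other than $i$ are varied, so $f_i$ depends on $s_i$ alone. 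Defining $\tilde f_i(t) := f_i(\sb)$ for any $\sb$ with $s_i = t$ (well defined by the previous sentence) yields $f_i(\sb) = \tilde f_i(s_i)$, and collecting the coordinates gives $\fb(\sb) = (\tilde f_1(s_1), \ldots, \tilde f_n(s_n))$, as claimed.

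I do not expect a genuine obstacle: the result is essentially immediate once the diagonal-Jacobian condition is rephrased as the vanishing of cross-partials. The only point requiring a little care is the passage from ``all off-diagonal partials vanish'' to ``$f_i$ depends only on $s_i$'', which relies on the domain being connected so that the vanishing derivative along coordinate directions can be integrated to constancy; since the domain is all of $\RR^n$ this holds automatically.
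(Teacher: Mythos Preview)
Your proof is correct and follows essentially the same idea as the paper: both arguments rest on the observation that the diagonal-Jacobian hypothesis forces all off-diagonal partials $\partial f_i/\partial s_j$ ($i\neq j$) to vanish identically, and hence each $f_i$ depends only on $s_i$. The paper phrases this last step as a one-line contradiction (``if $f_i$ depended on $s_j$ there would be a point with $\partial f_i/\partial s_j\neq 0$''), whereas you make the passage from vanishing cross-partials to coordinate-wise dependence explicit via the fundamental theorem of calculus along axis-parallel segments; your version is slightly more rigorous but the content is the same.
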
%
\begin{proof}
Let $\fb$ be a smooth function with diagonal Jacobian everywhere.

Consider the function $f_i(\sb)$ for any $i\in\{1, ..., n\}$. Suppose \textit{for a contradiction} that $f_i$ depends on $s_j$ for some $j\neq i$. Then there must be at least one point $\sb^*$ such that $\nicefrac{\partial f_i}{\partial s_j} (\sb^*)\neq 0$.
However, this contradicts the assumption that $\Jb_\fb$ is diagonal everywhere (since $\nicefrac{\partial f_i}{\partial s_j}$ is an off-diagonal element for $i\neq j$).
Hence, $f_i$ can only depend on $s_i$ for all $i$, i.e., $\fb$ is an element wise function.
\end{proof}%

We can now restate and prove~\Cref{thm:adm_darmois}.

\admdarmois*

\begin{proof}
First, the Jacobian $\Jb_{\gb^\text{D}}(\xb)$ of the Darmois construction $\gb^\text{D}$ is lower triangular $\forall\xb$, see~\eqref{eq:full_jacobian_darmois}.

Because CDFs are monotonic functions (strictly monotonically increasing given our assumptions on $\fb$ and $p_\sb$), $\gb^\text{D}$ is invertible.

We can thus apply the inverse function theorem (with $\fb^\text{D}=(\gb^\text{D})^{-1}$) to write
\[
\Jb_{\fb^\text{D}}(\yb)=\left(\Jb_{\gb^\text{D}}(\xb)\right)^{-1}
\]
Since the inverse of a lower triangular matrix is lower triangular, we conclude that $\Jb_{\fb^\text{D}}(\yb)$ is lower triangular for all $\yb=\gb^\text{D}(\xb)$. 

Now, according to~\Cref{lemma:trijaccima}, we have
$C_\IMA(\fb^\text{D}, p_{\ub}) > 0$, unless $\Jb_{\fb^\text{D}}$ is diagonal almost everywhere.

Suppose \textit{for a contradiction} that $\Jb_{\fb^\text{D}}$ is diagonal almost everywhere.

Since $\fb$ and $p_\sb$ are smooth by assumption, so is the push-forward $p_\xb=\fb_*p_\sb$, and thus also $\gb^\text{D}$ (CDF of a smooth density) and its inverse $\fb^\text{D}$. 
Hence, the partial derivatives $\nicefrac{\partial f^\text{D}_i}{\partial y_j}$, i.e., the elements of $\Jb_{\fb^\text{D}}$ are continuous.

Consider an off-diagonal element $\nicefrac{\partial f^\text{D}_i}{\partial y_j}$ for $i\neq j$. Since these are zero almost everywhere, and because continuous functions which are zero almost everywhere must be zero everywhere, we conclude that $\nicefrac{\partial f^\text{D}_i}{\partial y_j}=0$ everywhere for $i\neq j$, i.e., the Jacobian $\Jb_{\fb^\text{D}}$ is \textit{diagonal everywhere}. 

Hence, we conclude from~\cref{lemma:elemwise} that $\fb^\text{D}$ must be an element-wise function, $\fb^\text{D}(\yb)=(f^\text{D}_1(y_1), ..., f^\text{D}_1(y_n))$.

Since $\yb$ has independent components by construction, it follows that $x_i=f^\text{D}_i(y_i)$ and $x_j=f^\text{D}_j(y_j)$ are independent for any $i\neq j$. 

However, this constitutes a contradiction to the assumption that $x_i \not\independent x_j$ for some $x_j$. 

We conclude that $\Jb_{\fb^\text{D}}$ cannot be diagonal almost everywhere, and hence, by~\cref{lemma:trijaccima}, we must have $C_\IMA(\fb^\text{D},p_\ub)>0$.
\end{proof}
%
%
%

%
%
%
%
%

\subsection{Proof of\texorpdfstring{~\Cref{cor:IMA_identifiability_of_conformal_maps}}{}}
\confmapsadm*
\begin{proof}
The proof follows from property \textit{(i)} of~\Cref{prop:global_IMA_contrast_properties}: by definition, the Jacobian of conformal maps at any point $\sb$ can be written as $\Ob(\sb) \lambda(\sb)$, with $\lambda: \RR^n \rightarrow \RR$, which is a special case of $\Ob(\sb) \Db(\sb)$, with $\Db(\sb) = \lambda(\sb) \Ib $.
\end{proof}

\subsection{Proof of\texorpdfstring{~\Cref{cor:IMA_identifiability_of_linear_ICA}}{}}
\admidentlinear*
\begin{proof}
Since, by assumption, the mixing matrix is non-trivial (i.e., not the product of a diagonal and permutation matrix), and  at most one of the $s_i$ is Gaussian, according to~\Cref{thm:identifiability_of_linear_ICA} there must be at least one pair $x_i, x_j$, with $i \neq j$, such that $x_i \nindep x_j$.

We can then use the same argument as in the proof of~\cref{thm:adm_darmois}
to show that the Darmois construction has nonzero $C_\IMA$, whereas the linear orthogonal transformation $\Ab$ has orthogonal Jacobian, and thus $C_\IMA=0$ by property \textit{(i)} of~\cref{prop:global_IMA_contrast_properties}.
\end{proof}

\subsection{Proof of\texorpdfstring{~\Cref{thm:IMA_identifiability_measure_preserving_automorphism}}{}}
\thmMPA*
\begin{proof}
Recall the definition 
\[
 \ab^{\Rb}(p_\sb) =\Fb_\sb^{-1} \circ \bm\Phi \circ \Rb \circ \bm\Phi^{-1} \circ \Fb_\sb.
\]
For notational convenience, we denote $\sib = \bm\Phi^{-1} \circ \Fb_\sb$
and write
\[
 \ab^{\Rb}(p_\sb) =\sib^{-1} \circ \Rb \circ \sib.
\]
Note that, since both $\Fb_\sb$ and $\bm\Phi$ are element-wise transformations, so is $\sib$. 

First, by using property \textit{(ii)} of~\cref{prop:global_IMA_contrast_properties} (invariance of $C_\IMA$ to element-wise transformation), we obtain
\[
    C_\IMA(\fb\circ  \ab^{\Rb}(p_\sb), p_{\sb})
    =C_\IMA(\fb\circ \sib^{-1} \circ \Rb \circ \sib, p_{\sb})
    = C_\IMA(\fb\circ \sib^{-1} \circ \Rb,p_\zb)\,,
\]
with $\zb=\sib(\sb)$ such that $p_\zb$ is an isotropic Gaussian distribution.

Suppose \textit{for a contradiction} that $C_\IMA(\fb\circ \sib^{-1} \circ \Rb,p_\zb)=0$.

According to property \textit{(i)} of~\cref{prop:global_IMA_contrast_properties},
this entails that the matrix
\begin{equation}\label{eq:diagMP}
\Jb_{\fb\circ \sib^{-1} \circ \Rb}(\zb)^\top\Jb_{\fb\circ \sib^{-1} \circ \Rb}(\zb)=
\Rb^\top 
\, \Jb_{\sib^{-1}}(\zb)^\top 
\, \Jb_\fb(\sib^{-1}(\zb))^\top  
\, \Jb_\fb(\sib^{-1}(\zb)) 
\, \Jb_{\sib^{-1}}(\zb)
\,  \Rb\,
\end{equation}
is diagonal 
almost surely w.r.t.\ $p_\zb$.
Moreover, smoothness of $p_{\sb}$ and $\fb$ implies the matrix expression of~\eqref{eq:diagMP} is a continuous function of $\zb$. Thus \eqref{eq:diagMP} actually needs to be diagonal for all $\zb\in \RR^n$, i.e., \textit{everywhere} (c.f., the argument used in the proof of~\cref{thm:adm_darmois}, l.1008--1013).

Since $(\fb,p_\sb)\in\Mcal_\IMA$ by assumption, by property \textit{(i)} of~\cref{prop:global_IMA_contrast_properties}, the inner term on the RHS of~\eqref{eq:diagMP},
\[
\Jb_\fb( \sib^{-1}(\zb))^\top 
\,  
\Jb_\fb( \sib^{-1}(\zb)),
\]
is diagonal. 
Moreover, since $\sib$ is an element-wise transformation, 
$\Jb_{\sib^{-1}}(\zb)^\top$ and $\Jb_{\sib^{-1}}(\zb)$ 
are also diagonal. 
Taken together, this implies that
\begin{equation}
\label{eq:diagonal_inner_term}
\Jb_{\sib^{-1}}(\zb)  
\, 
\Jb_\fb( \sib^{-1}(\zb))^\top 
\,  
\Jb_\fb( \sib^{-1}(\zb))
\, 
\Jb_{\sib^{-1}}(\zb)    
\end{equation}
is diagonal (i.e., ~\eqref{eq:diagMP} is of the form $\Rb^\top \Db(\zb)\Rb$ for some diagonal matrix $\Db(\zb)$).

Without loss of generality, we assume the first component $s_1$ of $\sb$ is non-Gaussian and satisfies the assumptions stated relative to $\Rb$ (axis not invariant nor sent to another canonical axis).

Now, since both the Gaussian CDF $\bm \Phi$ and the CDF $\Fb_\sb$ are smooth (the latter by the assumption that of $p_\sb$ is a smooth density), $\sib$ is a smooth function, and thus has continuous partial derivatives.

By continuity of the partial derivative, the first diagonal element $\frac{\partial \sigma_1^{-1}}{\partial z_1}$ of $\Jb_{\sib^{-1}}$ must be strictly monotonic in a neighborhood of some $z_1^0$ (otherwise $\sigma_1$ would be an affine transformation, which would contradict non-Gaussianity of $s_1$).

On the other hand, our assumptions relative to $\Rb$ entail that there are at least two non-vanishing coefficients in the first row of $\Rb$ (i.e., first column of $\Rb^\top$).\footnote{In short, if this were not the case, this column would have a single non-vanishing coefficient, which would need to be one due to the unit norm of the rows of this orthogonal matrix. %
Such structure of the matrix $\Rb$ would entail that  the associated canonical basis vector $\eb_1$ is transformed by $\Rb^{-1}=\Rb^\top$ into a canonical basis vector $\eb_j$ which contradicts the assumptions.} 
Let us call $i\neq j$ such pair of coordinates, i.e., $r_{1j}\neq 0$ and $r_{1i}\neq 0$. 

Now consider the off-diagonal term $(i,j)$ of~\eqref{eq:diagMP}, which we assumed (for a contradiction) must be zero almost surely w.r.t.\ $p_\zb$.
Since the term in~\eqref{eq:diagonal_inner_term} is diagonal, this off-diagonal term is given by:
\[
\sum_{k=1}^n 
\left(\frac{d\sigma_k^{-1}}{dz_k}(z_k)\right)^2 \norm{\frac{\partial \fb}{ds_k}\circ \sib^{-1}(\zb)}^2 r_{ki} r_{kj}
=
\sum_{k=1}^n  \left(\frac{d\sigma_k^{-1}}{dz_k}(z_k)\right)^2 \lambda(\sib^{-1}(\zb))^2 r_{ki} r_{kj}=0\,.
\]
where for the first equality we have used the fact that $\fb$ is a conformal map with conformal factor~$\lambda(\sb)$ (by assumption), and 
where the second equality must hold
almost surely w.r.t.\ $p_\zb$.

Since $\fb$ is invertible, it has non vanishing Jacobian determinant. Hence, the conformal factor $\lambda$ must be a strictly positive function, so
\[
\lambda(\sib^{-1}(\zb))^2>0, \, \forall \zb.
\]
Thus, for almost all $\zb$, we must have:
\begin{equation}
\label{eq:contradiction_expression}
\sum_{k=1}^n  \left(\frac{d\sigma_k^{-1}}{dz_k}(z_k)\right)^2  r_{ki} r_{kj}=0\,.
\end{equation}
Now consider the first term $\left(\frac{d\sigma_1^{-1}}{dz_1}(z_1)\right)^2  r_{1i} r_{1j}$  in the sum.  

Recall that $r_{1i}r_{1j}\neq 0$,
and that $\frac{d\sigma_1^{-1}}{dz_1}(z_1)$
is strictly monotonic  on a neighborhood of $z_1^0$.

As  a consequence, $\left(\frac{d\sigma_1^{-1}}{dz_1}(z_1)\right)^2  r_{1i} r_{1j}$ is also strictly monotonic with respect to $z_1$  on a neighborhood of $z_1^0$ (where the  other variables $(z_{2},...,z_{n})$ are left  constant), while the other
terms in the sum in~\eqref{eq:contradiction_expression} are left constant because $\sib$ is an element-wise transformation.

This leads to a contradiction as~\eqref{eq:contradiction_expression} (which should be satisfied for all $\zb$) cannot stay constantly zero as $z_1$ varies within the neighbourhood of $z_1^0$. 

Hence our assumption that $C_\IMA(\fb\circ  \ab^{\Rb}(p_\sb), p_{\sb})=0$ cannot hold.

We conclude that $C_\IMA(\fb\circ  \ab^{\Rb}(p_\sb), p_{\sb})>0$.
\end{proof}

\clearpage
\section{Worked out example}
\label{app:examples}

\begin{example}[Polar to Cartesian coordinates]
Consider the following example of a nonlinear ICA model which represents a change of basis from polar to Cartesian coordinates:
\begin{align*}
    \xb
    =
    \begin{pmatrix}
    x_1 \\ x_2
    \end{pmatrix}
    =
    \fb(\sb)
    =
    \begin{pmatrix}
    f_1(\sb) \\ f_2(\sb)
    \end{pmatrix}
    =
    \begin{pmatrix}
    r \cos(\theta) \\ r\sin(\theta)
    \end{pmatrix}
\end{align*}
with sources
\begin{align*}
    \sb
    =
    \begin{pmatrix}
    s_1 \\ s_2
    \end{pmatrix}
    =
    \begin{pmatrix}
    r \\ \theta
    \end{pmatrix}
    , 
    \quad \quad
    r\sim U[0, R],
    \quad \quad
    \theta~\sim U[0, 2\pi],
\end{align*}

First, we consider the Jacobian of the true mixing $\fb$ which is given by:
\begin{equation*}
    \Jb_{\fb}(\sb)=\Jb_{\fb}(r,\theta)=
    \begin{pmatrix}
    \cos(\theta) & -r\sin(\theta)\\
    \sin(\theta) & r\cos(\theta)
    \end{pmatrix},
\end{equation*}
and its determinant and column norms are given by 
\begin{align*}
    \left|
    \det \Jb_{\fb}(\sb)
    \right|
    &=r
    \left(
    \cos^2(\theta)+\sin^2(\theta)
    \right)
    =r
    \\
    \norm{\frac{\partial \fb}{\partial s_1}(\sb)}
    &=
    \norm{\frac{\partial \fb}{\partial r}(r,\theta)}=\cos^2(\theta)+\sin^2(\theta)=1\\
    \norm{\frac{\partial \fb}{\partial s_2}(\sb)}
    &=
    \norm{\frac{\partial \fb}{\partial \theta}(r,\theta)}
    =r
    \left(
    \cos^2(\theta)+\sin^2(\theta)
    \right)
    =r
\end{align*}
In other words, the columns of $\Jb_{\fb}(\sb)$ are orthogonal for all $\sb$, so that $C_\IMA=0$ for the true solution.

Next, we apply the Darmois construction.

First, we write the joint density of $(x_1,x_2)$ using the change of variable formula:
\begin{equation*}
    p(x_1, x_2) = |\det \Jb_{\fb}(r,\theta)|^{-1} p(r,\theta)
    =
    r^{-1} \frac{1}{2\pi R}
    =
    \frac{1}{\sqrt{x_1^2+x_2^2}} \frac{1}{2\pi R}.
\end{equation*}
Next, we compute the marginal density $p(x_1)$. Note that the observations $\xb$ live on the disk of radius $R$, $\norm{\xb}\leq R$, so $p(x_1,x_2)=0$ whenever $x_1^2+x_2^2>R^2$.
\begin{equation*}
    p(x_1)
    =
    \int_{-\sqrt{R^2-x_1^2}}^{\sqrt{R^2-x_1^2}}
    p(x_1,x_2) dx_2
    =
    \frac{1}{2\pi R}
    \int_{-\sqrt{R^2-x_1^2}}^{\sqrt{R^2-x_1^2}}
    \frac{dx_2}{\sqrt{x_1^2+x_2^2}}
    =
    \frac{1}{2\pi R}
    \int_{-\sqrt{R^2-x_1^2}}^{\sqrt{R^2-x_1^2}}
    \frac{dx_2}{x_1 \sqrt{1+(\frac{x_2}{x_1})^2}}
\end{equation*}
Applying the change of variable $t=\frac{x_2}{x_1}$ with $dt=\frac{dx_2}{x_1}$, and using the integral $\int(1+t^2)^{-\frac{1}{2}} dt = \arcsinh(t)+C$, as well as the fact that $\arcsinh$ is an odd function, we obtain
\begin{equation*}
    p(x_1)
    =
    \frac{1}{2\pi R}
    \int_{-\sqrt{\left(\frac{R}{x_1}\right)^2-1}}^{\sqrt{\left(\frac{R}{x_1}\right)^2-1}}
    \frac{dt}{\sqrt{1+t^2}}
    =
    \frac{1}{\pi R}
    \arcsinh
    \left(
    \sqrt{
    \left(\frac{R}{x_1}\right)^2-1
    }
    \right)
\end{equation*}
Next, we compute the conditional density $p(x_2|x_1)$:
\begin{equation*}
    p(x_2|x_1)
    =
    \frac{p(x_1,x_2)}{p(x_1)}
    =
    \frac{
    (2\pi R)^{-1}
    \left(
    x_1^2+x_2^2
    \right)
    ^{-1}
    }
    {(\pi R)^{-1}\arcsinh
    \left(
    \sqrt{
    \left(\frac{R}{x_1}\right)^2-1
    }
    \right)
    }
    =
    \left(
    2\sqrt{x_1^2+x_2^2}\arcsinh
    \left(
    \sqrt{
    \left(\frac{R}{x_1}\right)^2-1
    }
    \right)
    \right)^{-1}
\end{equation*}

Finally, we compute the off-diagonal term in the general form of the inverse Jacobian for Damois-style solutions in~\eqref{eq:general_Jacobian_Darmois_2d}:
\begin{align*}
    c_{21}(\xb)
    &=
    \frac{\partial }{\partial x_1} \int_{-\infty}^{x_2} p(x_2|x_1) dx_2
    =
     \frac{\partial }{\partial x_1}
     \int_{-\sqrt{R^2-x_1^2}}^{x_2}
     \frac{dx_2}
     {
    2\sqrt{x_1^2+x_2^2}\arcsinh
    \left(
    \sqrt{
    \left(\frac{R}{x_1}\right)^2-1
    }
    \right)
    }
    \\
    &=
    \frac{1}{2}
    \frac{\partial }{\partial x_1}
    \left(
    \arcsinh
    \left(
    \sqrt{
    \left(\frac{R}{x_1}\right)^2-1
    }
    \right)^{-1}
    \int_{-\sqrt{R^2-x_1^2}}^{x_2}
     \frac{dx_2}
     {
    \sqrt{x_1^2+x_2^2}
    }
    \right)
    \\
    &=
    \frac{1}{2}
    \frac{\partial }{\partial x_1}
    \left(
    \arcsinh
    \left(
    \sqrt{
    \left(\frac{R}{x_1}\right)^2-1
    }
    \right)^{-1}
    \left(
    \arcsinh(x_2)
    -
    \arcsinh
    \left(
    -
    \sqrt{
    \left(\frac{R}{x_1}\right)^2-1
    }
    \right)
    \right)
    \right)
    \\
    &=
    \frac{1}{2}
    \frac{\partial }{\partial x_1}
    \left(
    \arcsinh
    \left(
    \sqrt{
    \left(\frac{R}{x_1}\right)^2-1
    }
    \right)^{-1}
    \left(
    \arcsinh(x_2)
    +
    \arcsinh
    \left(
    \sqrt{
    \left(\frac{R}{x_1}\right)^2-1
    }
    \right)
    \right)
    \right)
    \\
    &=
    \frac{1}{2}
    \frac{\partial }{\partial x_1}
    \left(
    1+\frac{\arcsinh(x_2)}{\arcsinh
    \left(
    \sqrt{
    \left(\frac{R}{x_1}\right)^2-1
    }
    \right)}
    \right)
    \\
    &=
    \frac{1}{2}
    \arcsinh(x_2)
    \frac{\partial }{\partial x_1}
    \left(
    \arcsinh
    \left(
    \sqrt{
    \left(\frac{R}{x_1}\right)^2-1
    }
    \right)^{-1}
    \right)
    \\
    &=
    -
    \frac{1}{2}
    \arcsinh(x_2)
    \arcsinh
    \left(
    \sqrt{
    \left(\frac{R}{x_1}\right)^2-1
    }
    \right)^{-2}
    \frac{\partial }{\partial x_1}
    \arcsinh
    \left(
    \sqrt{
    \left(\frac{R}{x_1}\right)^2-1
    }
    \right)
\end{align*}
Using the derivative $\frac{\partial}{\partial t}\arcsinh(t)=(t^2+1)^{-\frac{1}{2}}$ and repeatedly applying the chain rule, we obtain:
\begin{align*}
    c_{21}(\xb)
    &=
    -
    \frac{1}{2}
    \arcsinh(x_2)
    \arcsinh
    \left(
    \sqrt{
    \left(\frac{R}{x_1}\right)^2-1
    }
    \right)^{-2}
    \frac{x_1}{R}
    \frac{\partial }{\partial x_1}
    \left(
    \sqrt{
    \left(\frac{R}{x_1}\right)^2-1
    }
    \right)
    \\
    &=
    -
    \frac{1}{2}
    \arcsinh(x_2)
    \arcsinh
    \left(
    \sqrt{
    \left(\frac{R}{x_1}\right)^2-1
    }
    \right)^{-2}
    \frac{x_1}{R}
    \frac{1}{2}
    \frac{1}{\sqrt{
    \left(\frac{R}{x_1}\right)^2-1
    }}
    (-2) 
    R^2
    x_1^{-3}
    \\
    &=
    \frac{R}{2x_1 \sqrt{
    R^2-x_1^2
    }}
    \arcsinh(x_2)
    \arcsinh
    \left(
    \sqrt{
    \left(\frac{R}{x_1}\right)^2-1
    }
    \right)^{-2}
\end{align*}

Again, recall that this only holds inside the disk of radius $R$, otherwise $c_{12}=0$ (as the CDF will be zero or one, irrespective of $x_1$).

The $C_\IMA$ for the Darmois solution thus takes the form:
\begin{align*}
    C_\IMA^\text{Darmois}
    &=
    \int_{}^{} 
    \frac{1}{2}
    \log 
    \left(
    p(x_1)^{-2}+c_{21}(\xb)^2p(x_1,x_2)^{-2}
    \right)
    +
    \log 
    \left(
    p(x_2|x_1)^{-1}
    \right)
    -
    \log
    \left(
    p(x_1,x_2)^{-1}
    \right)
    d\sb
    \\
    &=
    \int_{}^{} 
    \frac{1}{2}
    \log 
    \Bigg[
    \left(
    \frac{1}{\pi R}
    \arcsinh
    \left(
    \sqrt{
    \left(\frac{R}{x_1}\right)^2-1
    }
    \right)
    \right)^{-2}
    \\
    &\quad 
    +
    \left(
    \frac{R}{2x_1 \sqrt{
    R^2-x_1^2
    }}
    \arcsinh(x_2)
    \arcsinh
    \left(
    \sqrt{
    \left(\frac{R}{x_1}\right)^2-1
    }
    \right)^{-2}
    \right)^2
    \left(
    \frac{1}{\sqrt{x_1^2+x_2^2}} \frac{1}{2\pi R}
    \right)^{-2}
    \Bigg]
    \\
    &\quad
    +
    \log 
    \left(
    2\sqrt{x_1^2+x_2^2}\arcsinh
    \left(
    \sqrt{
    \left(\frac{R}{x_1}\right)^2-1
    }
    \right)
    \right)
    \\
    &\quad
    -
    \log 
    \left(
    2\pi R
    \right)
    -
    \frac{1}{2}
    \log 
    \left(
    x_1^2+x_2^2
    \right)
    d\sb
    \\
    &=
    \int 
    \frac{1}{2}
    \log 
    \Bigg[
    \pi^2 R^2
    \arcsinh
    \left(
    \sqrt{
    \left(\frac{R}{x_1}\right)^2-1
    }
    \right)^{-2}
    \\
    &\quad 
    +
    \frac{R^2}{4x_1^2
    (R^2-x_1^2)}
    \arcsinh(x_2)^2
    \arcsinh
    \left(
    \sqrt{
    \left(\frac{R}{x_1}\right)^2-1
    }
    \right)^{-4}
    (x_1^2+x_2^2)4\pi^2 R^2
    \Bigg]
    \\
    &\quad
    +
    \log(2)
    +
    \frac{1}{2}
    \log(x_1^2+x_2^2)
    +
    \log
    \left(
    \arcsinh
    \left(
    \sqrt{
    \left(\frac{R}{x_1}\right)^2-1
    }
    \right)
    \right)
    \\
    &\quad
    -
    \log(2)
    -
    \log
    \left(
    \pi R
    \right)
    -
    \frac{1}{2}
    \log 
    \left(
    x_1^2+x_2^2
    \right)
    d\sb
    \\
    &=
    \int 
    \frac{1}{2}
    \log 
    \Bigg[
    \pi^2 R^2
    \arcsinh
    \left(
    \sqrt{
    \left(\frac{R}{x_1}\right)^2-1
    }
    \right)^{-2}
    \\
    &\quad+
    \frac{\pi^2R^4(x_1^2+x_2^2)}{x_1^2
    (R^2-x_1^2)}
    \arcsinh(x_2)^2
    \arcsinh
    \left(
    \sqrt{
    \left(\frac{R}{x_1}\right)^2-1
    }
    \right)^{-4}
    \Bigg]
    \\
    &\quad
    +
    \log
    \left(
    \arcsinh
    \left(
    \sqrt{
    \left(\frac{R}{x_1}\right)^2-1
    }
    \right)
    \right)
    -
    \log
    \left(
    \pi R
    \right)
    d\sb
        \\
    &=
    \int
    \frac{1}{2}
    \log 
    \left(
    1
    +
    \frac{R^2(x_1^2+x_2^2)\arcsinh(x_2)^2}
    {x_1^2
    (R^2-x_1^2)
    \arcsinh
    \left(
    \sqrt{
    \left(\frac{R}{x_1}\right)^2-1
    }
    \right)^{2}
    }
    \right)
    d\sb
    >0
\end{align*}
where the strict inequality in the last step follows from the fact that the fraction inside the logarithm, and hence the entire integrand, is strictly positive within the disk of integration.

We have thus shown that for the example of an orthogonal coordinate transformation from polar to Cartesian coordinates, which is not a conformal map, the $C_\IMA$ os the true solution is zero and that of the Darmois construction is strictly greater than zero, hence the two can be distinguished based on the value of the $C_\IMA$ contrast.
\end{example}

\clearpage
\section{Experiments}
\label{app:experiments}

The code for our experiments (enclosed in the supplemental material) is in Python; we use Jax~\cite{jax2018github}, Distrax~\cite{distrax2021github} and Haiku~\cite{haiku2020github} to implement our models; the Jacobian and $C_\IMA$ computation and optimisation are performed with the automatic differentiation tools provided in Jax.

\subsection{Sampling random M\"obius transformations.} 

In order to generate mixing functions with $C_\IMA=0$, we use M\"obius transformations (see~\Cref{app:confmaps} and in particular~\Cref{thm:highdimmoebius}, for additional details on this kind of functions) with randomly sampled parameters, as specified below. 
A M\"obius transformation $\fb^{\text{M}} : \RR^n \rightarrow \RR^n$ is given by
\begin{equation}
\label{eq:moebius_transf}
\fb^{\text{M}}(\sb)=\tb+\frac{r \Ab (\sb-\bb)}{\norm{\sb-\bb}^{\epsilon}}\,,
\end{equation}
with parameters $\bb, \tb \in \RR^n$, $r \in \RR$, $\Ab$ is an orthogonal matrix and $\epsilon \in \{0,2\}$. The flow models we train have an diagonal affine layer at the top with fixed shift and scale set to the mean and standard deviation of the training data, thereby normalizing the inputs. Hence, without loss of generality, we can set the $\tb$ parameter to zero and $r$ to one. Since $\epsilon = 0$ corresponds to a linear transformation, we generally set $\epsilon = 2$ in our experiments unless otherwise specified. We sample the orthogonal matrix through the \texttt{ortho\_group} function in \texttt{scipy.stats}~\cite{virtanen2020scipy}. To avoid singularities given by a vanishing denominator in the second term on the RHS of~\eqref{eq:moebius_transf}, which would yield observed distributions with strong outliers and therefore hard to fit for our models, we restrict $\bb$ to lie outside the unit square $\sb$ is sampled from. We achieve this by sampling $\bb$ from a normal distribution and reject the sample until it is located outside of the unit square.

\subsection{How to implement the Darmois construction}
\label{app:darmois_flows}

In the following, we describe how the Darmois construction can be implemented based on normalising flow models~\cite{papamakarios2019normalizing}. The key idea is that the components $g_i^{\text{D}}$ of the Darmois construction~\eqref{eq:Darmois_construction} are conditional (cumulative) density functions corresponding to %
a given factorisation $p(\xb) = \prod_{i=1}^n p(x_i| \xb_{1:i-1})$ of the likelihood. A flow model with triangular Jacobian can be used to maximise the likelihood of the observations under a change of variable respecting said factorisation, and learning to map the observed variables onto a given (factorised) base distribution. After training, and provided that the model is expressive enough, the CDF of each component of the reconstructed sources should match that of the base distribution. By further transforming each reconstructed variable through said CDF, we achieve a global mapping of the observations onto a Uniform distribution on the $n$-dimensional hypercube, with a triangular Jacobian, matching the transformation operated by the Darmois construction (see also see~\cite{papamakarios2019normalizing}, section 2.2).
Note that, for the purpose of computing the $C_\IMA$ of the Darmois construction, this final step can be omitted due to~\Cref{prop:global_IMA_contrast_properties}, \textit{(ii)}, stating that the contrast is blind to element-wise reparametrisations of the sources.

We remark that, while the possibility of using normalising flows to ``learn'' the Darmois construction is mentioned in~\cite{papamakarios2019normalizing, huang2018neural}, where a similar construction is mentioned in a theoretical argument to prove ``universal approximation capacity for densities'' for normalising flow models with triangular Jacobian, it has to the best of our knowledge not been tested empirically, since autoregressive modules with triangular Jacobian are typically used in combination with permutation, shuffling or linear layers which overall lead to architectures with a non-triangular Jacobian.

\paragraph{Expressive normalising flow with triangular Jacobian.} To obtain an expressive normalizing flow with triagular Jacobian, we modify the residual flow model \cite{chen2019residualflows}.\footnote{We describe how to implement a function with upper triangular Jacobian, but the reasoning can be extended to implement functions whose Jacobian is lower triangular.} A residual flow is a residual network which is made invertible through spectral normalization. Each layer is given by
\begin{equation}
    \zb' = \zb + \gb(\zb),
\end{equation}
where $\zb', \zb \in \RR^n$ and $\gb: \RR^n \rightarrow \RR^n$ is a small neural network. Due to the chain rule, for the Jacobian of the overall flow model to be triangular, a sufficient condition is that all the layers have triangular Jacobian. Since the Jacobian of $\fb(\zb) = \zb$ is the identity matrix, we can restrict our attention to the neural network $\gb$. In our experiments, this is going to be a fully connected network. If it has $l$ layers and $h\geq n$ hidden units, it is given by
\begin{equation}
    \gb(\zb) = \bb_1 + \Wb_1 \phi(\bb_2 + \Wb_2\phi(\bb_3 + \Wb_3\cdots\phi(\bb_l + \Wb_l\zb)\cdots)),
\end{equation}
where $\phi: \RR^n \rightarrow \RR^n$ is an element-wise nonlinearity, $\bb_1\in\RR^n$, $\bb_2,...,\bb_l\in \RR^h$ are the biases, and $\Wb_1 \in \RR^{n\times h}$, $\Wb_2,...,\Wb_{l-1} \in \RR^{h\times h}$, $\Wb_l \in \RR^{h\times n}$ are the weight matrices. In order for the Jacobian of $\gb$ to be triangular, $g_n(\zb)$ should only depend on $z_{n}$, $g_{n-1}(\zb)$ should only depend on $z_{n}$ and $z_{n-1}$, and so on. To achieve this, we make the weight matrices block triangular as indicated in \eqref{equ:triresflow_weightmat_1}, \eqref{equ:triresflow_weightmat_l}, and \eqref{equ:triresflow_weightmat_2}.

\newcommand\undermat[2]{%
  \makebox[0pt][l]{$\smash{\underbrace{\phantom{%
    \begin{matrix}#2\end{matrix}}}_{\text{$#1$}}}$}#2}
    
\begin{equation}
    \Wb_1 = \left(
    \begin{array}{rrrr}
    * & * &  & * \\
    \vdots & \vdots & & \vdots \\
    * & * &  & * \\
    0 & * &  & * \\
    \vdots & \vdots & & \vdots \\
    0 & * &  & * \\
    & & \ddots & \\
    0 & 0 &  & * \\
    \vdots & \vdots & & \vdots \\
    0 & 0 &  & * 
    \end{array}
    \right)
    \hspace{-0.35cm}
    \begin{tabular}{l}
    $\left.\lefteqn{\phantom{\begin{matrix} *\\ \vdots\\ *\ \end{matrix}}}\right\}h_1$\\
    $\left.\lefteqn{\phantom{\begin{matrix} *\\ \vdots\\ *\ \end{matrix}}}\right\}h_2$\\
    $\phantom{\ddots}$ \\
    $\left.\lefteqn{\phantom{\begin{matrix} *\\ \vdots\\ *\ \end{matrix}}}\right\}h_n$\\
    \end{tabular} 
    \label{equ:triresflow_weightmat_1}
\end{equation}

\begin{equation}
    \Wb_l = \left(
    \begin{array}{rrrrrrrrrr}
    * & \cdots & * & * & \cdots & * & & * & \cdots & * \\
    0 & \cdots & 0 & * & \cdots & * & & * & \cdots & * \\
     &  &  &  & & & \ddots & & & \\
    \undermat{h_1}{0 & \cdots & 0} & \undermat{h_2}{0 & \cdots & 0} & & \undermat{h_n}{* & \cdots & *} \\
    \end{array}
    \right)
    \label{equ:triresflow_weightmat_l}
\end{equation}
\vspace{0.3cm}

\begin{equation}
    \Wb_i = \left(
    \begin{array}{rrrrrrrrrr}
    * & \cdots & * & * & \cdots & * & & * & \cdots & * \\
    \vdots & \ddots & \vdots & \vdots & \ddots & \vdots & & \vdots & \ddots & \vdots \\
    * & \cdots & * & * & \cdots & * & & * & \cdots & * \\
    0 & \cdots & 0 & * & \cdots & * & & * & \cdots & * \\
    \vdots & \ddots & \vdots & \vdots & \ddots & \vdots & & \vdots & \ddots & \vdots \\
    0 & \cdots & 0 & * & \cdots & * & & * & \cdots & * \\
     &  &  &  & & & \ddots & & & \\
     0 & \cdots & 0 & 0 & \cdots & 0 & & * & \cdots & * \\
    \vdots & \ddots & \vdots & \vdots & \ddots & \vdots & & \vdots & \ddots & \vdots \\
    \undermat{h_1}{0 & \cdots & 0} & \undermat{h_2}{0 & \cdots & 0} & & \undermat{h_n}{* & \cdots & *} \\
    \end{array}
    \right)
    \hspace{-0.35cm}
    \begin{tabular}{l}
    $\left.\lefteqn{\phantom{\begin{matrix} *\\ \vdots\\ *\ \end{matrix}}}\right\}h_1$\\
    $\left.\lefteqn{\phantom{\begin{matrix} *\\ \vdots\\ *\ \end{matrix}}}\right\}h_2$\\
    $\phantom{\ddots}$ \\
    $\left.\lefteqn{\phantom{\begin{matrix} *\\ \vdots\\ *\ \end{matrix}}}\right\}h_n$\\
    \end{tabular} 
    \hspace{0.4cm}\text{for}\,\, i\in\{2, ..., l-1\}
    \label{equ:triresflow_weightmat_2}
\end{equation}
\vspace{0.3cm}

Here, $h_i$ is the number of hidden units dedicated to transforming $\zb_{i}$ with the constraint $\sum_{i=1}^n h_i = h$. We perform an even split such that the $h_i$ and $h_j$ differ by at most 1 for $i, j\in\{1, ..., n\}$. The weight matrices are restricted to be block triangular during optimization by setting the respective matrix elements to zero after each iteration of the optimizer. The model can simply be made and kept invertible using the same spectral normalization as is used for dense residual flows~\cite{chen2019residualflows}.
We train our model to map onto a standard Normal base distribution.

\subsection{Generating random MLP mixing functions}

In order to generate random MLP mixing functions, we adopt the same initalisation as in~\cite{gresele2020relative}: we initialise the square weight matrices to be orthogonal,\footnote{Note that orthogonality of the weight matrices in a MLP does not guarantee satisfying~\Cref{principle:IMA}, due to the element-wise nonlinearities between the layers, which overall lead to a Jacobian whose columns are in general not orthogonal.}
and use the \texttt{leaky\_tanh} invertible nonlinearity.

\subsection{Maximum likelihood with low \texorpdfstring{$C_\IMA$}{}}
\label{app:modified_ml} 
The modified maximum likelihood objective described in~\Cref{sec:experiment2_learning} can be written as follows:\footnote{while the objective in~\Cref{sec:experiment2_learning} involves an expectation over $p_\xb$, we consider the loss for a single point $\xb$ here, $\Lcal(\gb; \xb)$.}
\begin{align}
\Lcal(\gb; \xb) =& \log p(\xb) - \lambda \cdot c_\IMA (\gb^{-1}, p_\yb) \nonumber \\
=& \sum_{i=1}^n \log p_{y_i}(\gb^i(\xb)) + \log | \Jb_{\gb}(\xb)|
- \lambda \cdot \left( \sum_{i = 1}^n \log \norm{[\Jb_{\gb^{-1}}(\gb(\xb))]_i} - \log \left|\Jb_{\gb^{-1}}(\gb(\xb))\right| \right) \nonumber \\
=& \sum_{i=1}^n \log p_{y_i}(\gb^i(\xb)) + \log | \Jb_{\gb}(\xb)|
- \lambda \cdot \left( \sum_{i = 1}^n \log \norm{[\Jb^{-1}_{\gb}(\xb)]_i} + \log \left|\Jb_{\gb}(\xb)\right| \right) \nonumber \\
=& \sum_{i=1}^n \log p_{y_i}(\gb^i(\xb)) + (1-\lambda)\log | \Jb_{\gb}(\xb)|  - \lambda \sum_i \log 
\norm{[\Jb^{-1}_{\gb}(\xb)]_i} \label{eq:objective_ml_cima} \,, 
\end{align}
where $[\Jb^{-1}_{\gb}(\xb)]_i$ represents the $i$-th column of the inverse of the Jacobian of $\gb$ computed at $\xb$.

We use the same model as the one described in~\Cref{app:darmois_flows}, but without the constraint that the Jacobian should be triangular, and train with a Logistic base distribution.

Note that the computational efficiency of optimising objective~\eqref{eq:objective_ml_cima} is cubic in the input size $n$, due to a number of operations (matrix inversion, Jacobian and determinant computation via automatic differentiation, etc.) which are $\mathcal{O}(n^3)$. However, similarly to what already observed in~\cite{halva2020hidden}, we found that for data of moderate dimensionality computing and optimising objective~\eqref{eq:objective_ml_cima} with automatic differentiation is feasible. For example, training a residual flow with 64 layers for $10^5$ iterations takes roughly 5.3 hours for $n=2$, 5.7 hours for $n=5$, and 6.3 hours for $n=7$ on the same hardware (see section \ref{sec:app_eval}).
An interesting direction for future work would be to find computationally efficient ways of optimising~\eqref{eq:objective_ml_cima}.

\begin{figure}[p]
    \centering
    \begin{subfigure}{0.24\textwidth}
        \begin{overpic}[width=\textwidth]{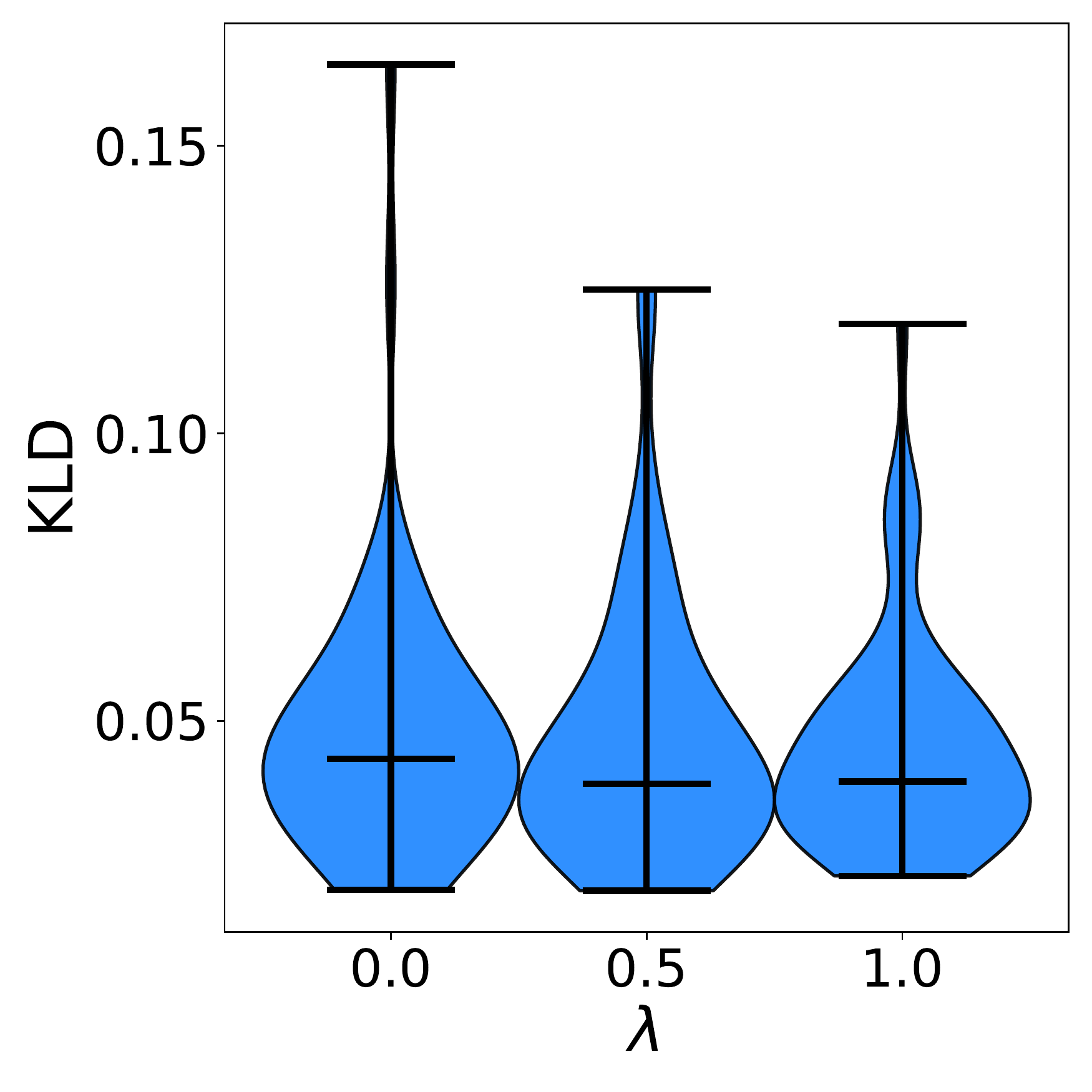}
            \put (10, 2) {\textbf{(a)}}
        \end{overpic}
    \end{subfigure}
    \begin{subfigure}{0.24\textwidth}
        \begin{overpic}[width=\textwidth]{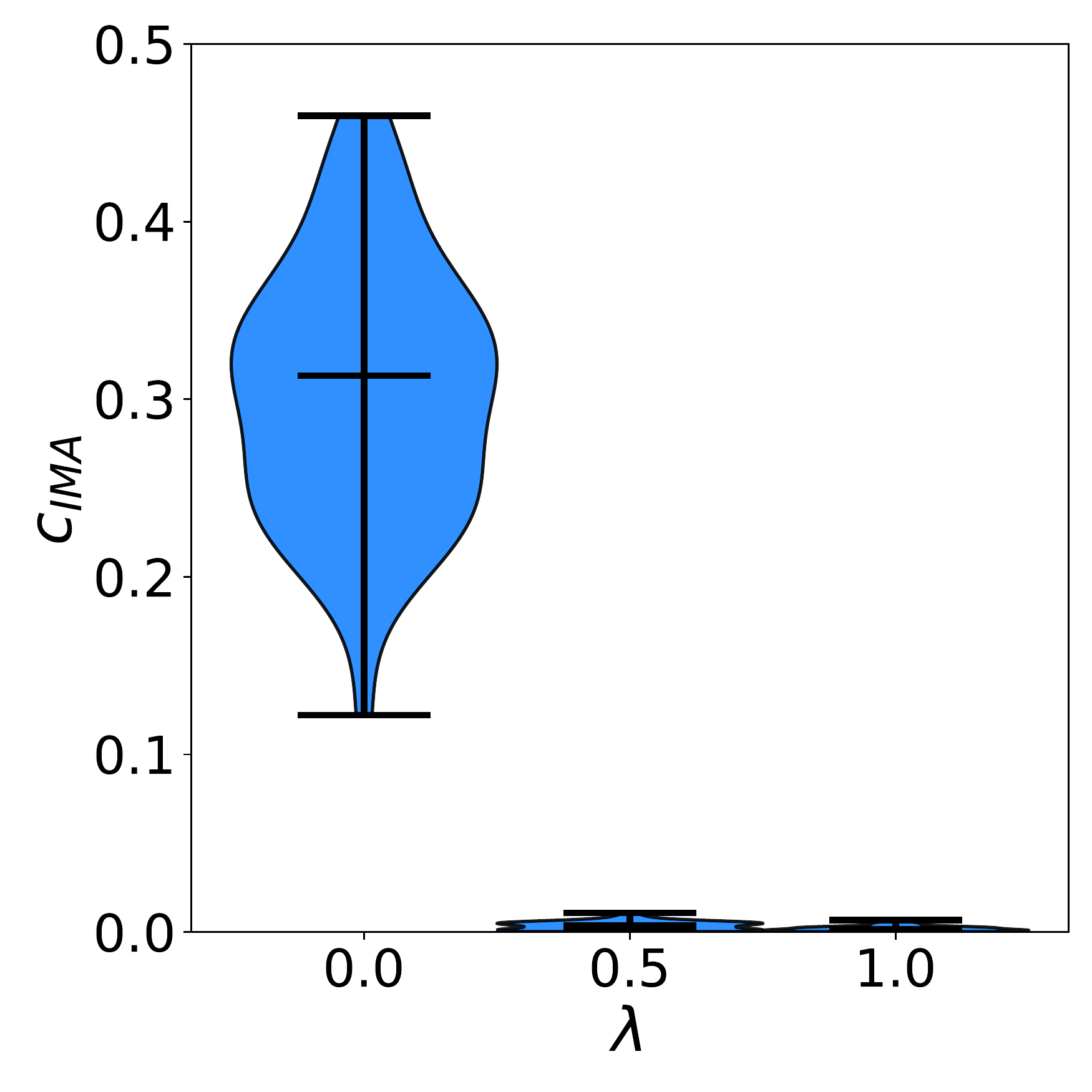}
            \put (10, 2) {\textbf{(b)}}
        \end{overpic}
    \end{subfigure}
    \begin{subfigure}{0.24\textwidth}
        \begin{overpic}[width=\textwidth]{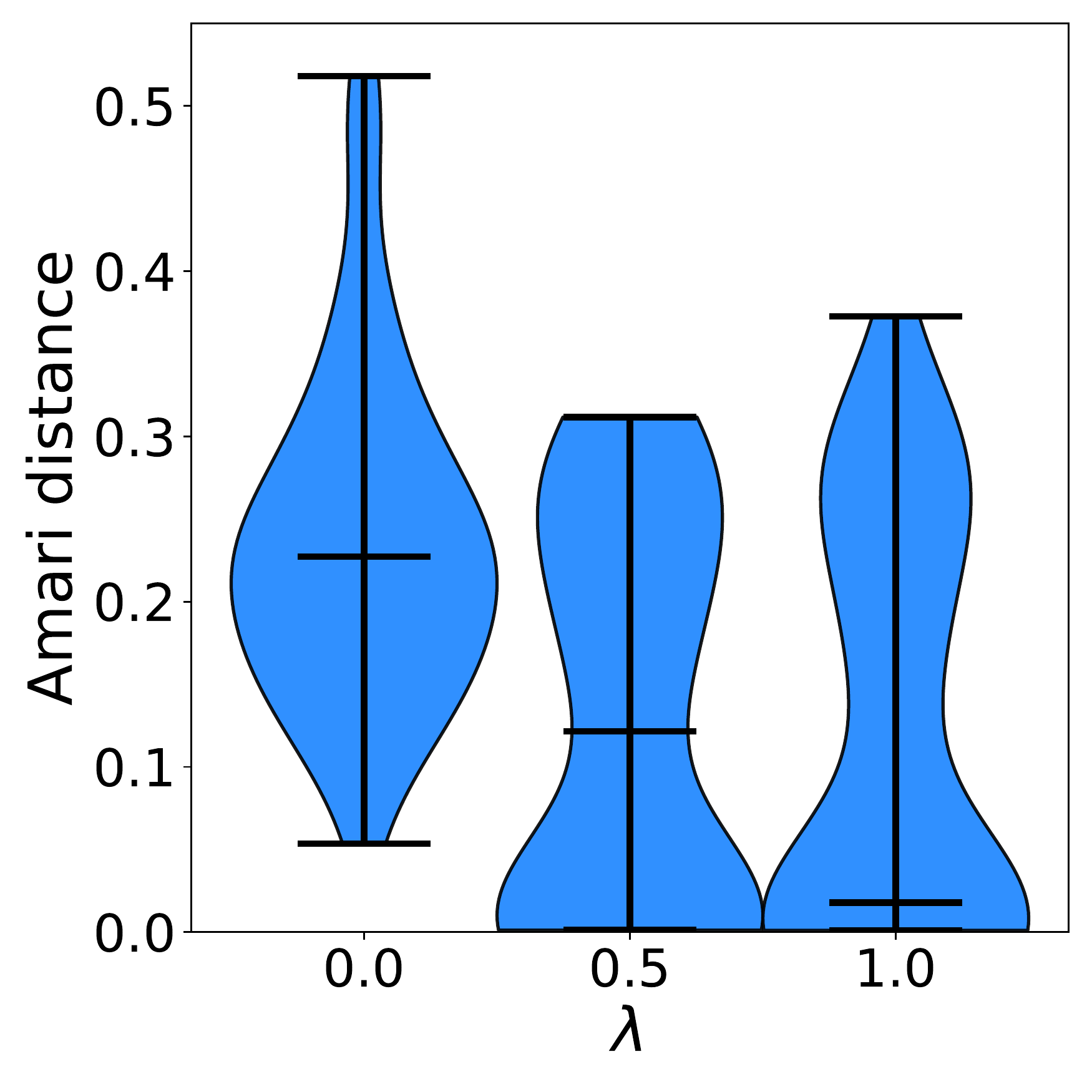}
            \put (10, 2) {\textbf{(c)}}
        \end{overpic}
    \end{subfigure}
    \begin{subfigure}{0.24\textwidth}
        \begin{overpic}[width=\textwidth]{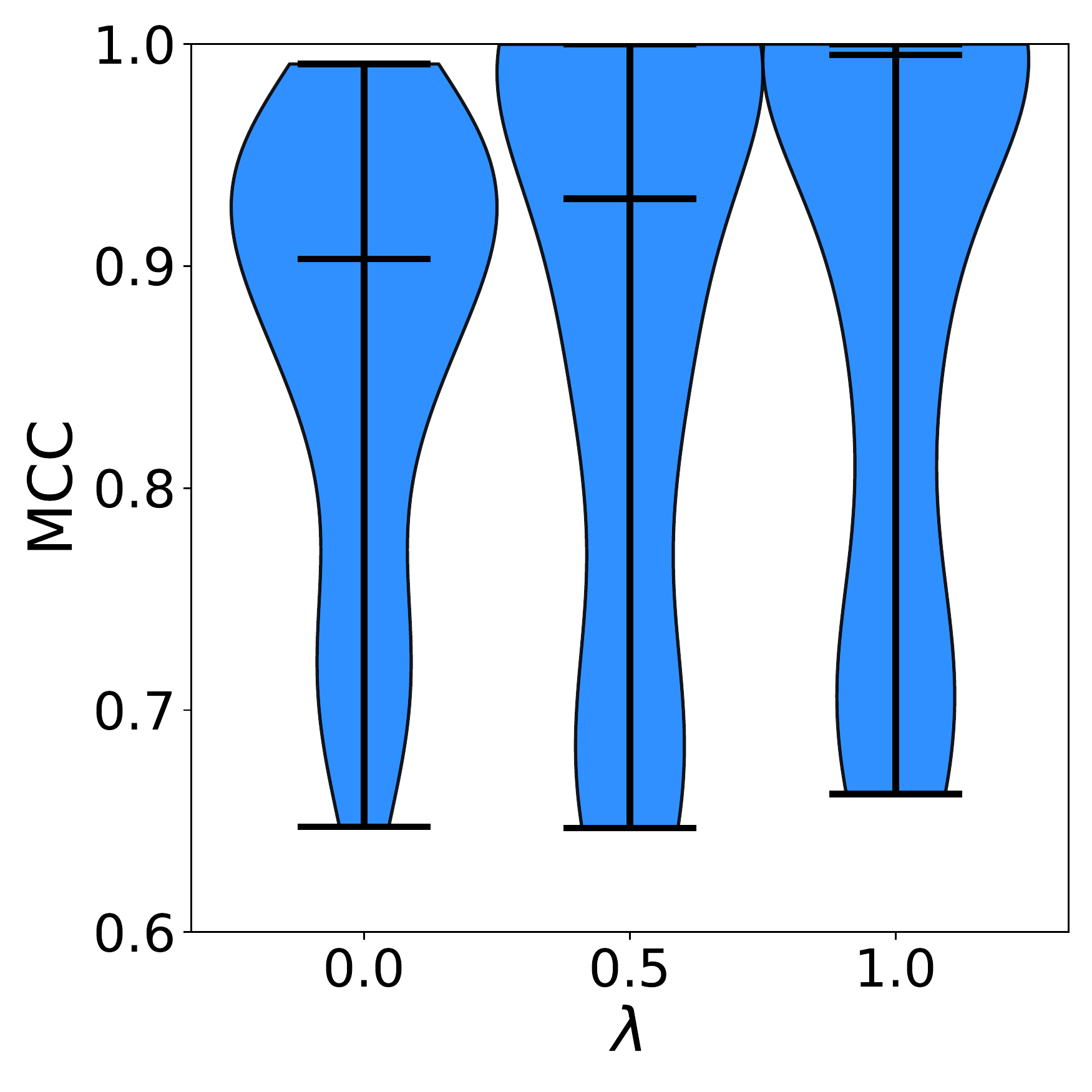}
            \put (10, 2) {\textbf{(d)}}
        \end{overpic}
    \end{subfigure}
    \\
    \begin{subfigure}{0.24\textwidth}
        \begin{overpic}[width=\textwidth]{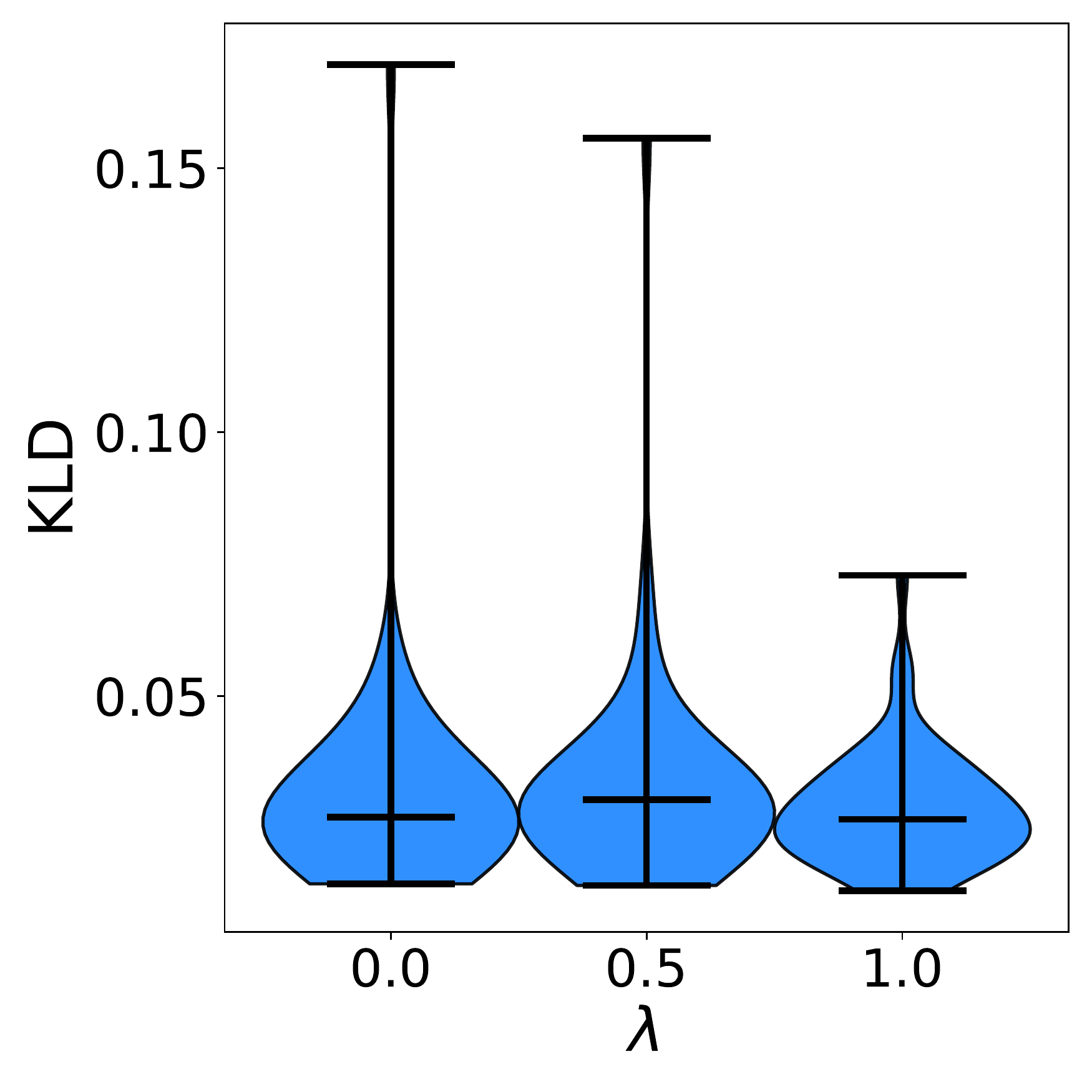}
            \put (10, 2) {\textbf{(e)}}
        \end{overpic}
    \end{subfigure}
    \begin{subfigure}{0.24\textwidth}
        \begin{overpic}[width=\textwidth]{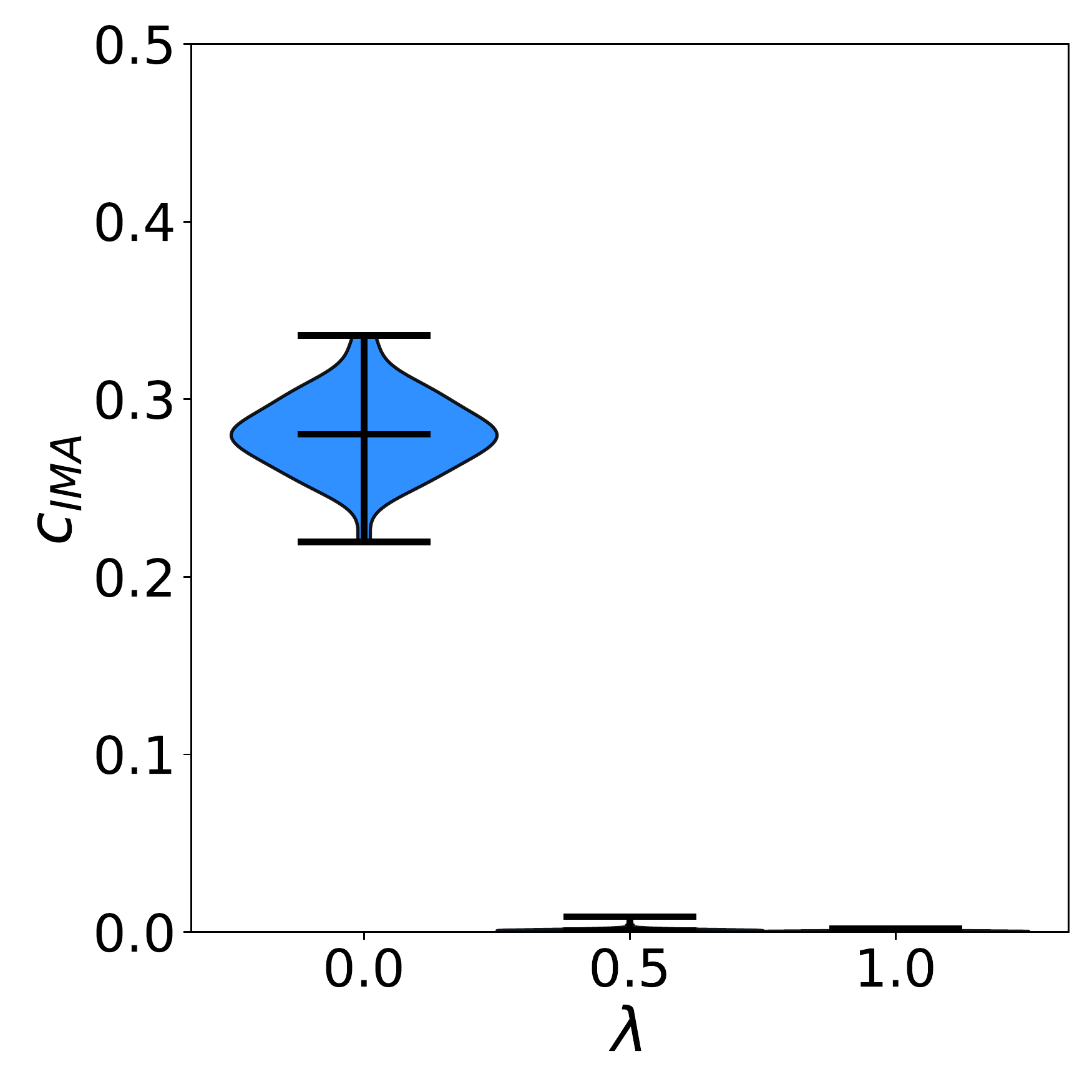}
            \put (10, 2) {\textbf{(f)}}
        \end{overpic}
    \end{subfigure}
    \begin{subfigure}{0.24\textwidth}
        \begin{overpic}[width=\textwidth]{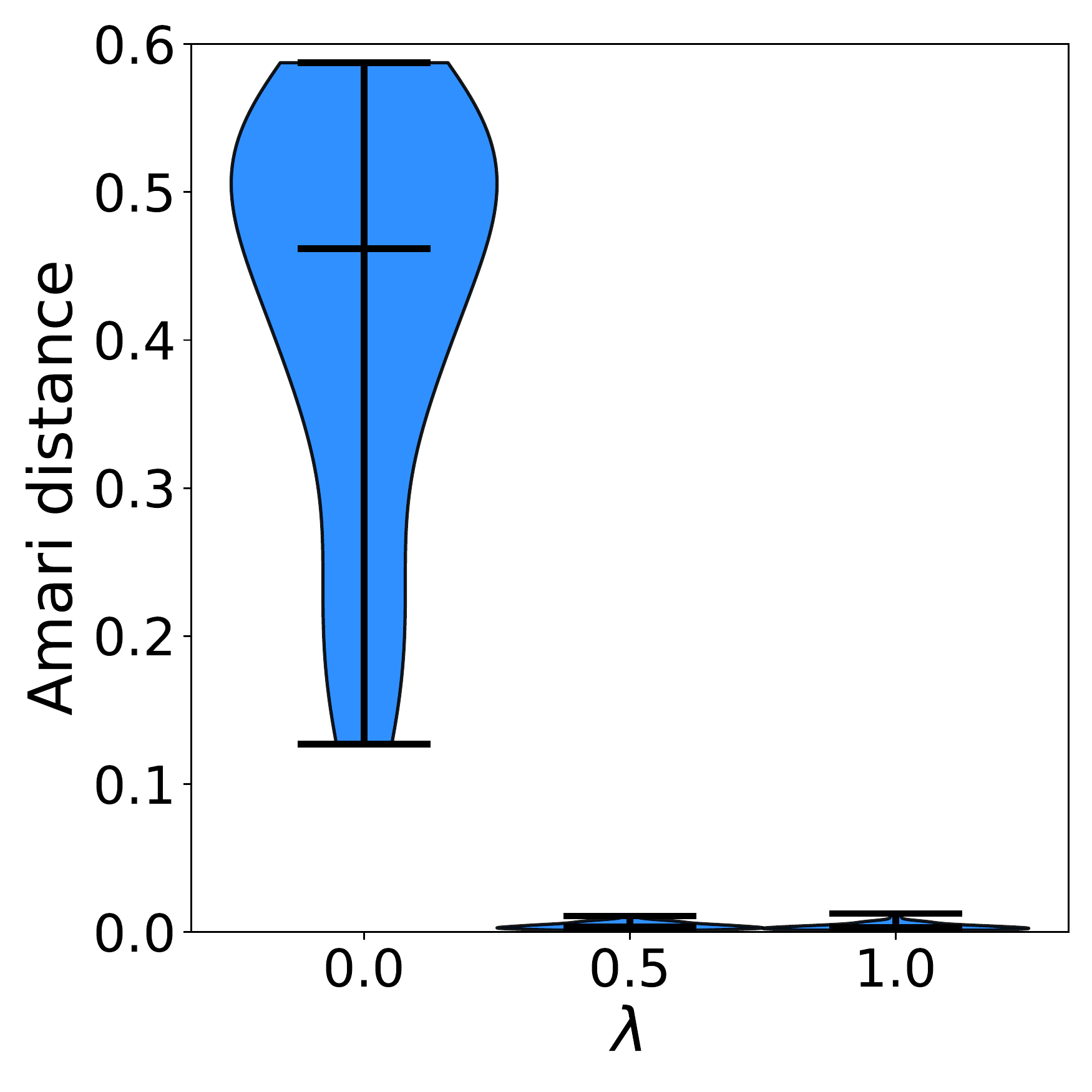}
            \put (10, 2) {\textbf{(g)}}
        \end{overpic}
    \end{subfigure}
    \begin{subfigure}{0.24\textwidth}
        \begin{overpic}[width=\textwidth]{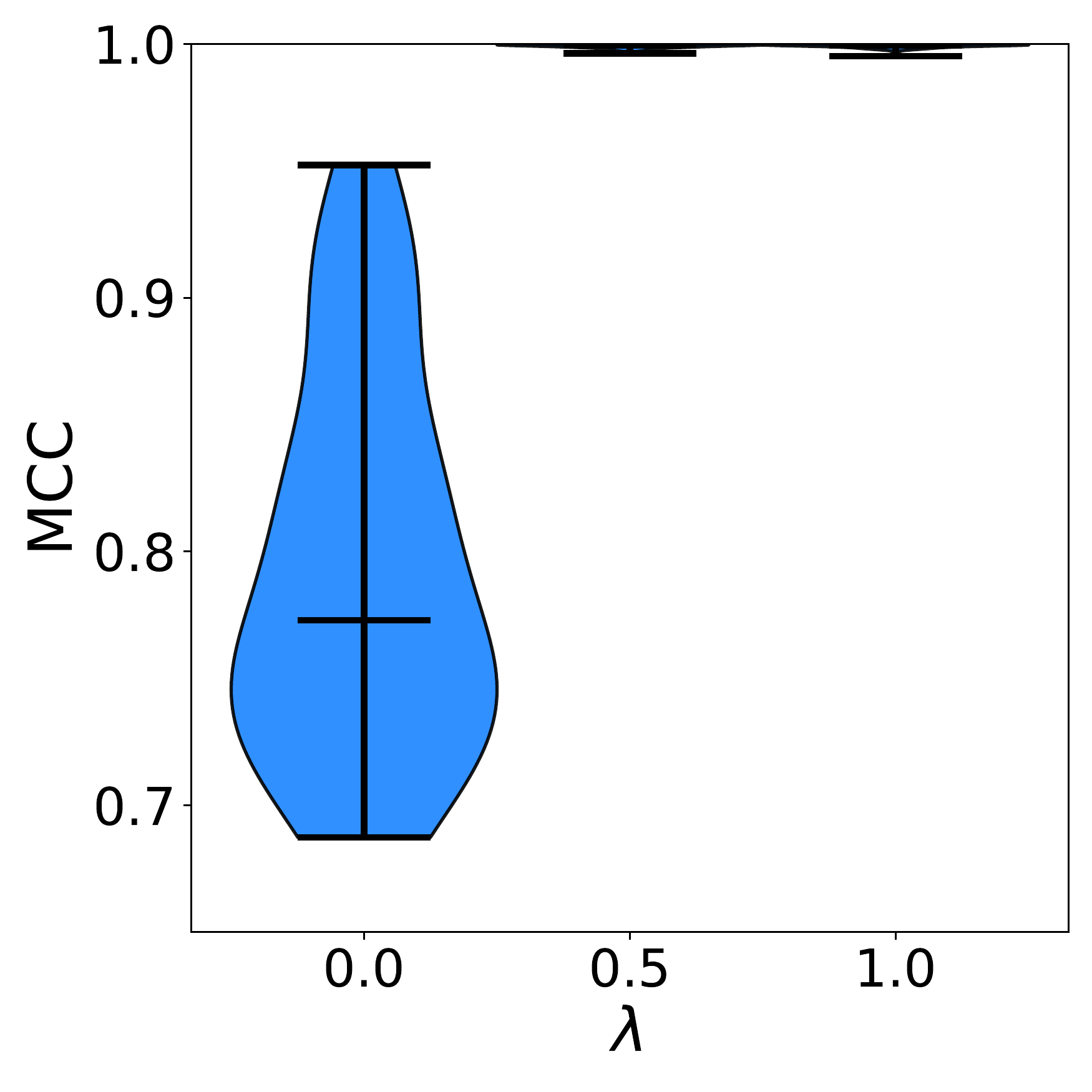}
            \put (10, 2) {\textbf{(h)}}
        \end{overpic}
    \end{subfigure}
    \caption{BSS via $C_\IMA$-regularised MLE for $n=2$ dimensions with $\lambda\in \{0.0,0.5,1.0\}$. The true mixing function is a randomly generated M\"obius transformation, nonlinear (with $\epsilon=2$) in \textbf{(a)}--\textbf{(d)} and linear (with $\epsilon=0$) transformation for \textbf{(e)}--\textbf{(h)}. For each type of transformation and $\lambda$, seeded runs are done. \textbf{(a)}, \textbf{(e)} KL-divergence between ground truth likelihood and learnt model; \textbf{(b)}, \textbf{(f)} $C_\IMA$ of the learnt models; \textbf{(c)}, \textbf{(g)} nonlinear Amari distance given true mixing and learnt unmixing; \textbf{(d)}, \textbf{(h)} MCC between true and reconstructed sources.}
    \label{fig:cima_obj_2d}
\end{figure}

\begin{figure}%
    \centering
    \begin{subfigure}{0.24\textwidth}
        \begin{overpic}[width=\textwidth]{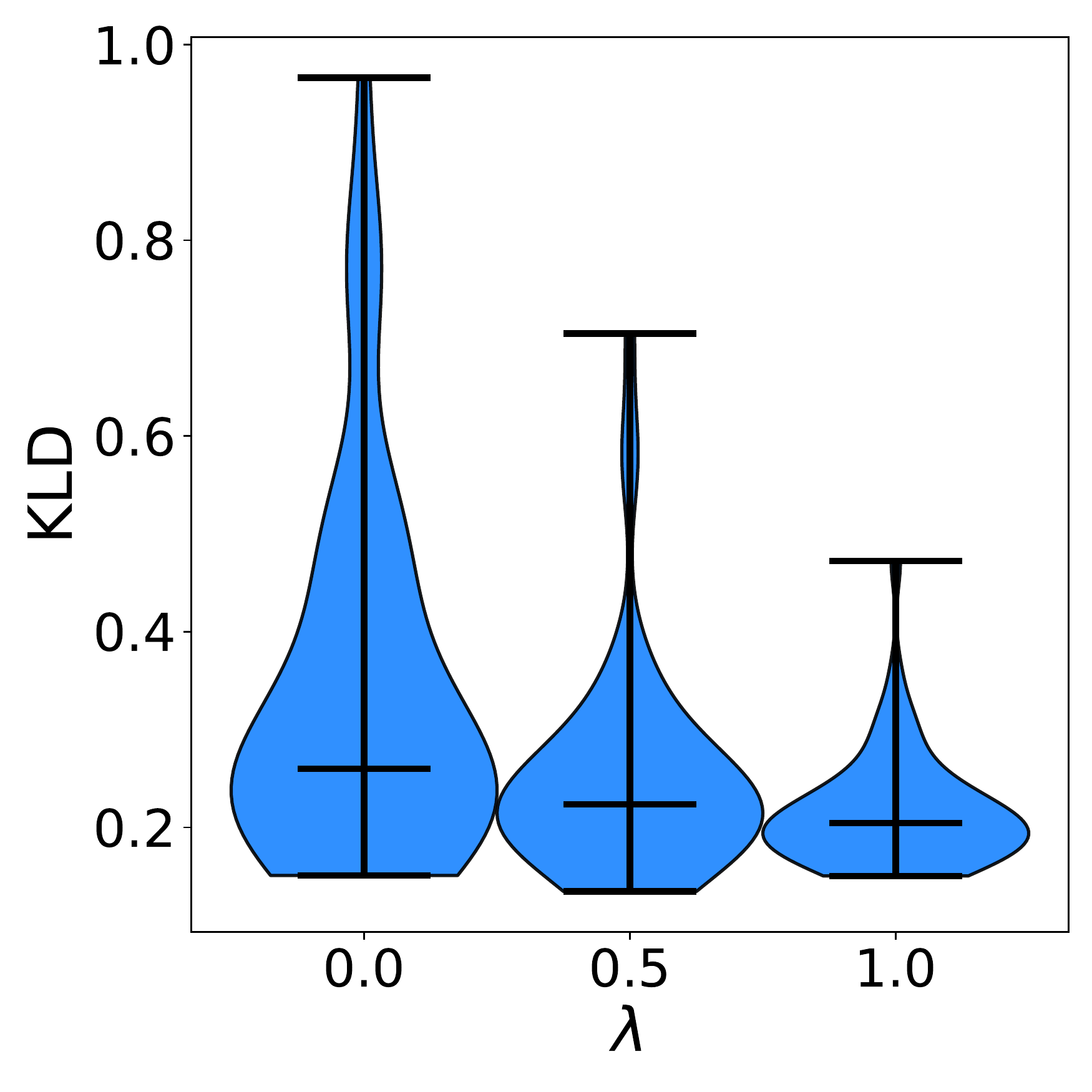}
            \put (10, 2) {\textbf{(a)}}
        \end{overpic}
    \end{subfigure}
    \begin{subfigure}{0.24\textwidth}
        \begin{overpic}[width=\textwidth]{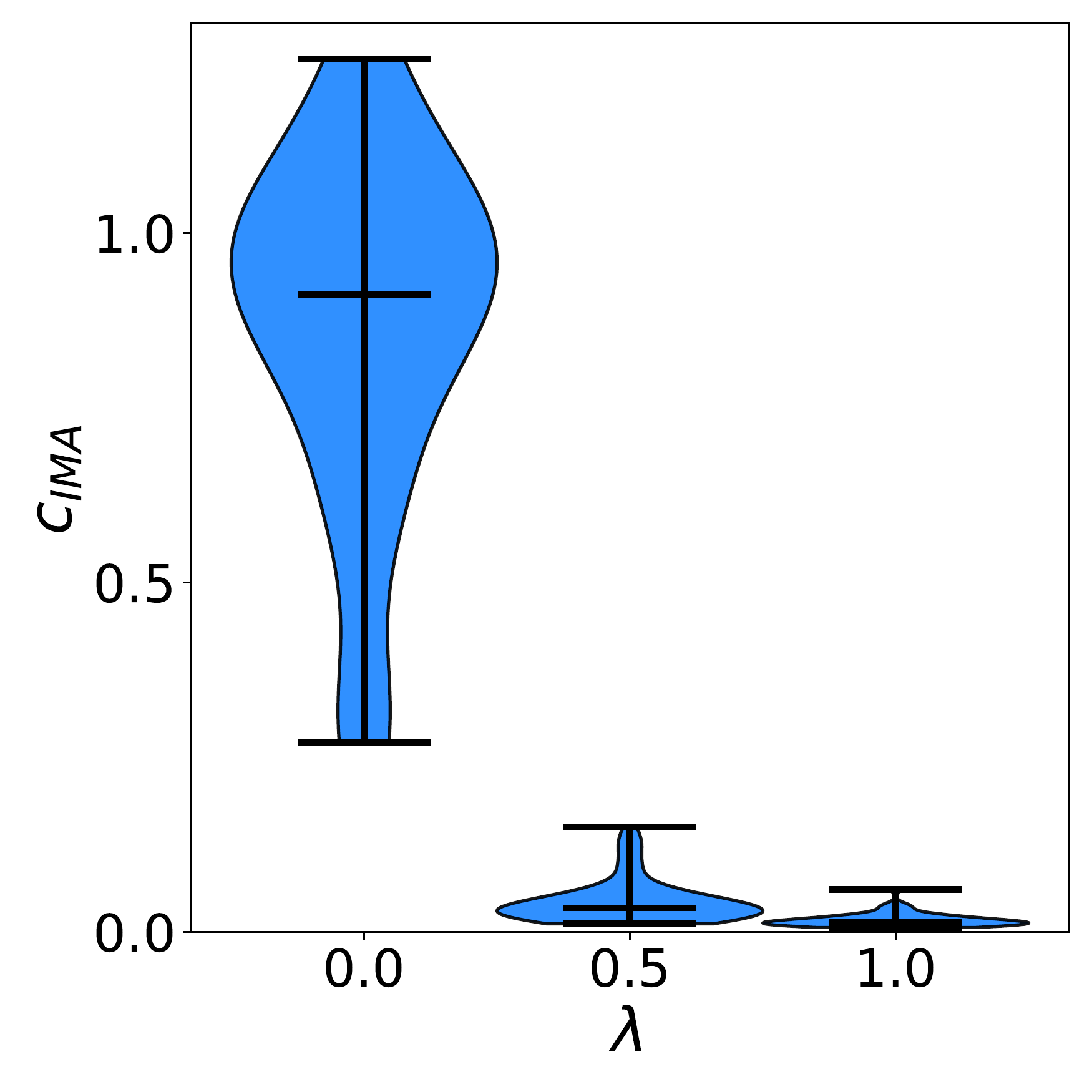}
            \put (10, 2) {\textbf{(b)}}
        \end{overpic}
    \end{subfigure}
    \begin{subfigure}{0.24\textwidth}
        \begin{overpic}[width=\textwidth]{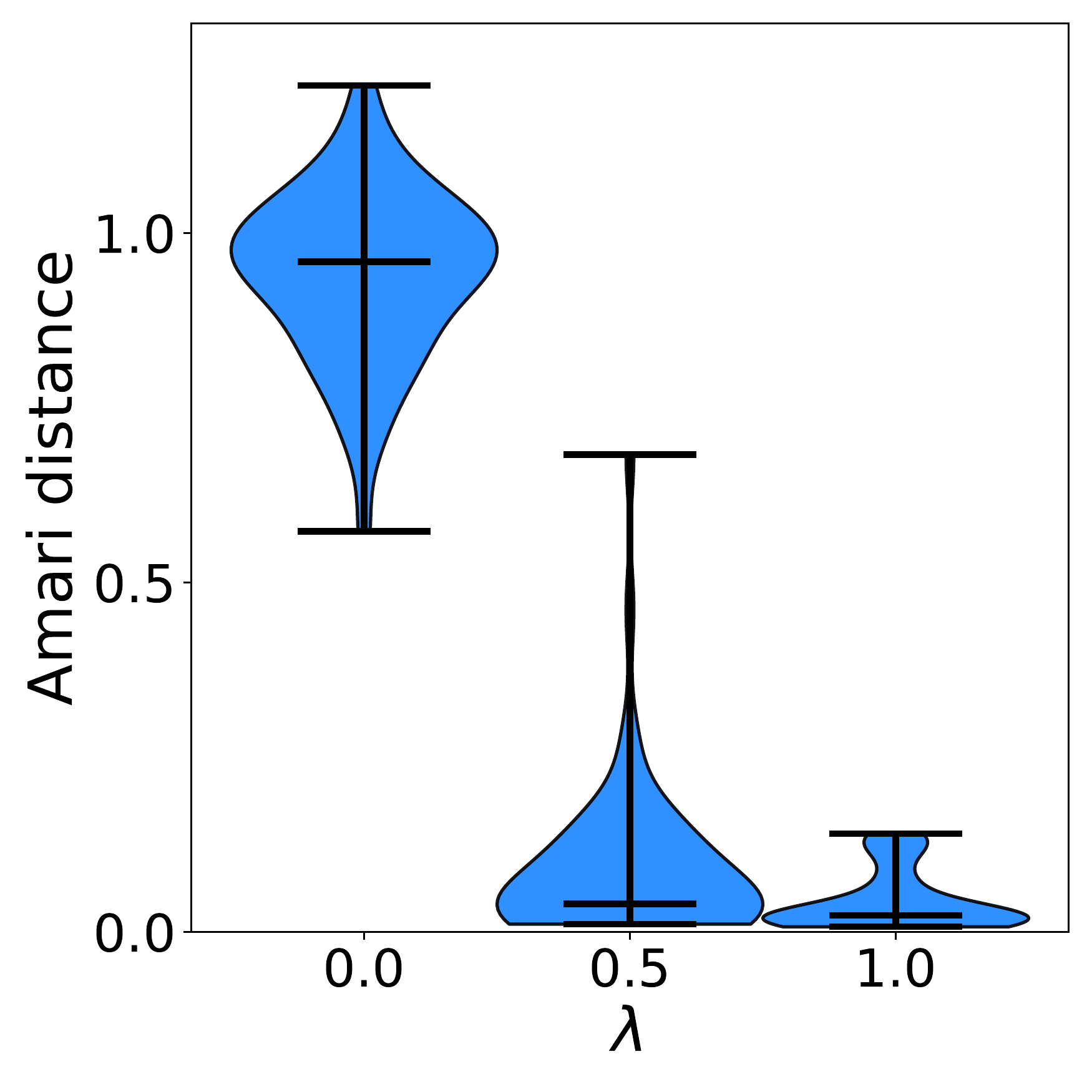}
            \put (10, 2) {\textbf{(c)}}
        \end{overpic}
    \end{subfigure}
    \begin{subfigure}{0.24\textwidth}
        \begin{overpic}[width=\textwidth]{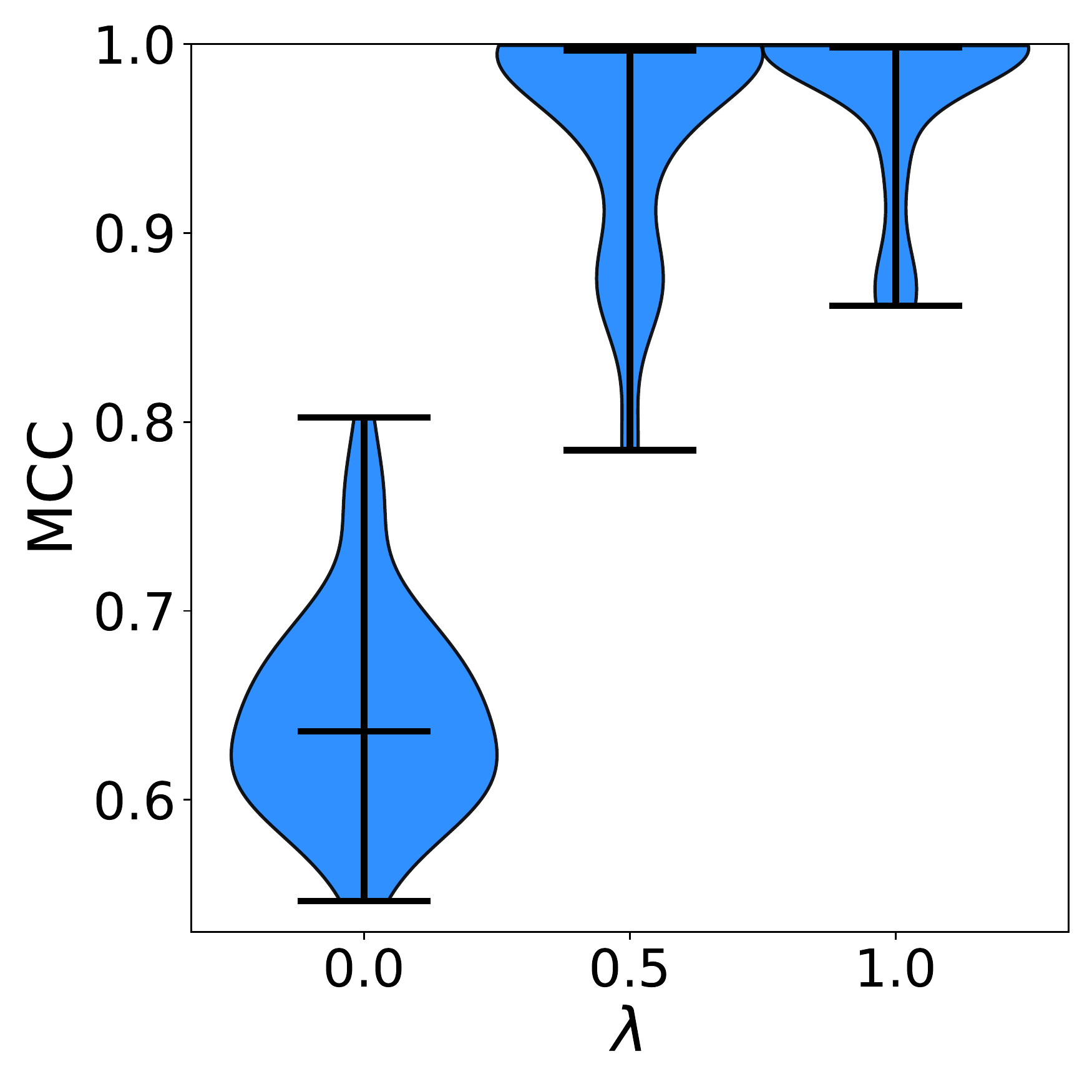}
            \put (10, 2) {\textbf{(d)}}
        \end{overpic}
    \end{subfigure}
    \\
    \begin{subfigure}{0.24\textwidth}
        \begin{overpic}[width=\textwidth]{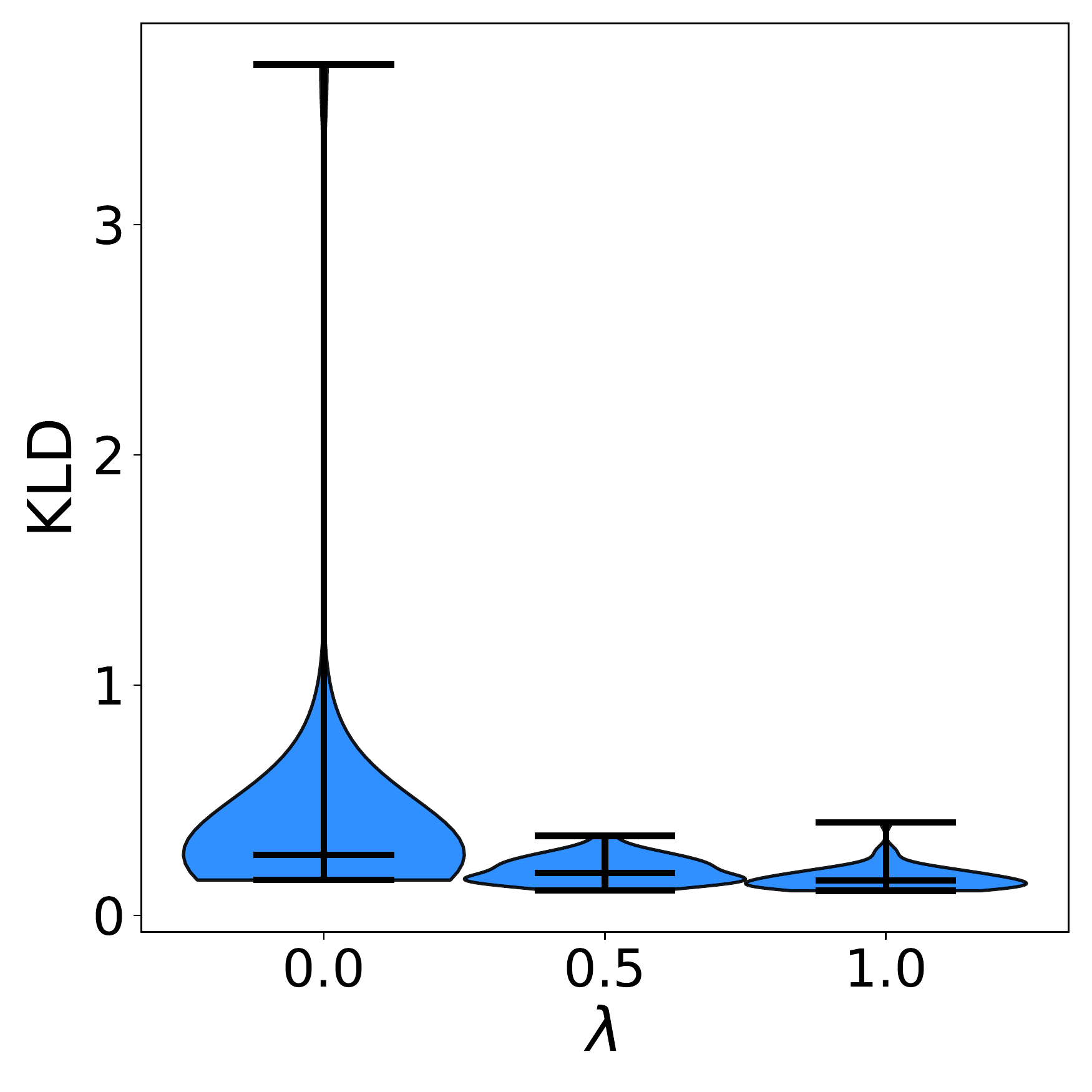}
            \put (10, 2) {\textbf{(e)}}
        \end{overpic}
    \end{subfigure}
    \begin{subfigure}{0.24\textwidth}
        \begin{overpic}[width=\textwidth]{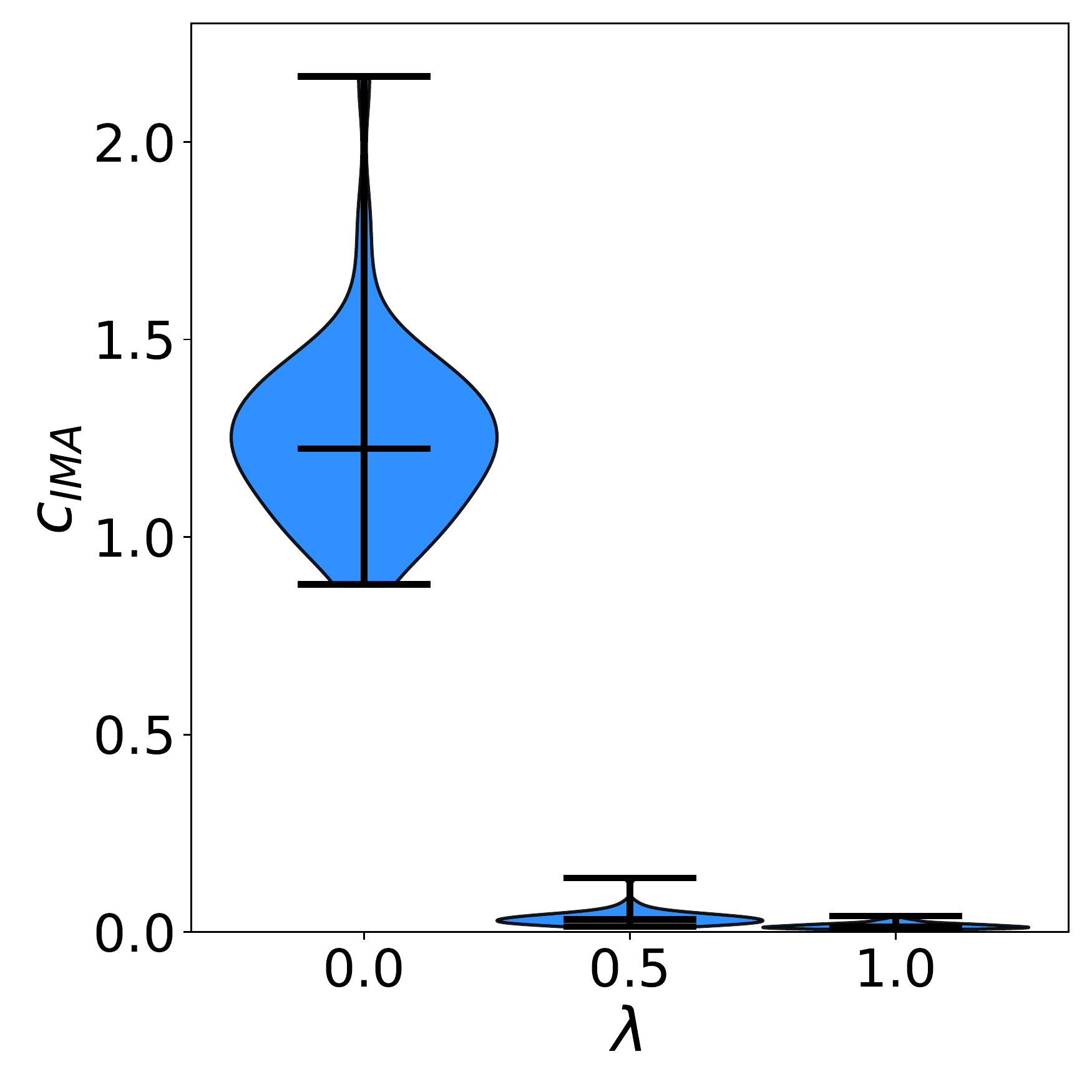}
            \put (10, 2) {\textbf{(f)}}
        \end{overpic}
    \end{subfigure}
    \begin{subfigure}{0.24\textwidth}
        \begin{overpic}[width=\textwidth]{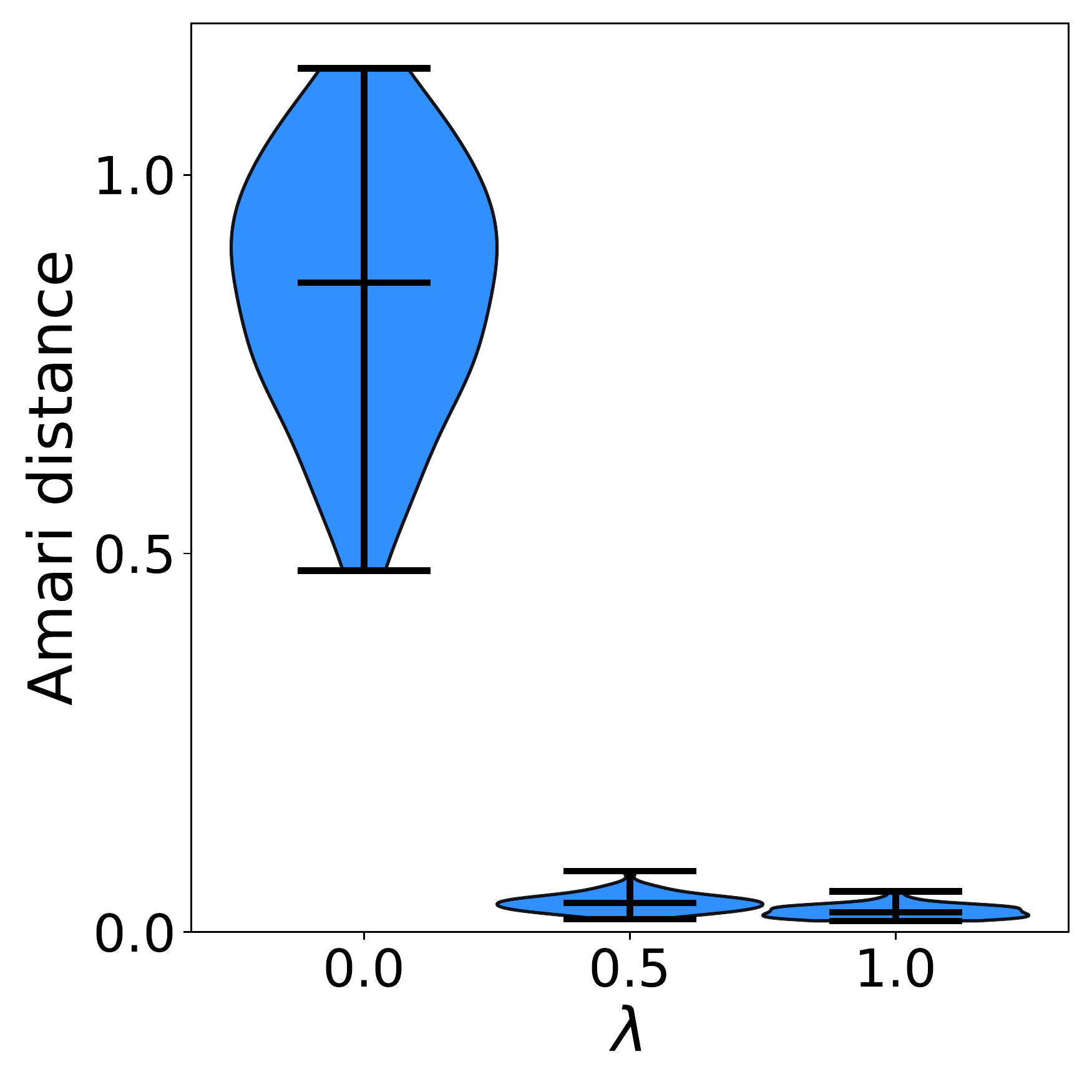}
            \put (10, 2) {\textbf{(g)}}
        \end{overpic}
    \end{subfigure}
    \begin{subfigure}{0.24\textwidth}
        \begin{overpic}[width=\textwidth]{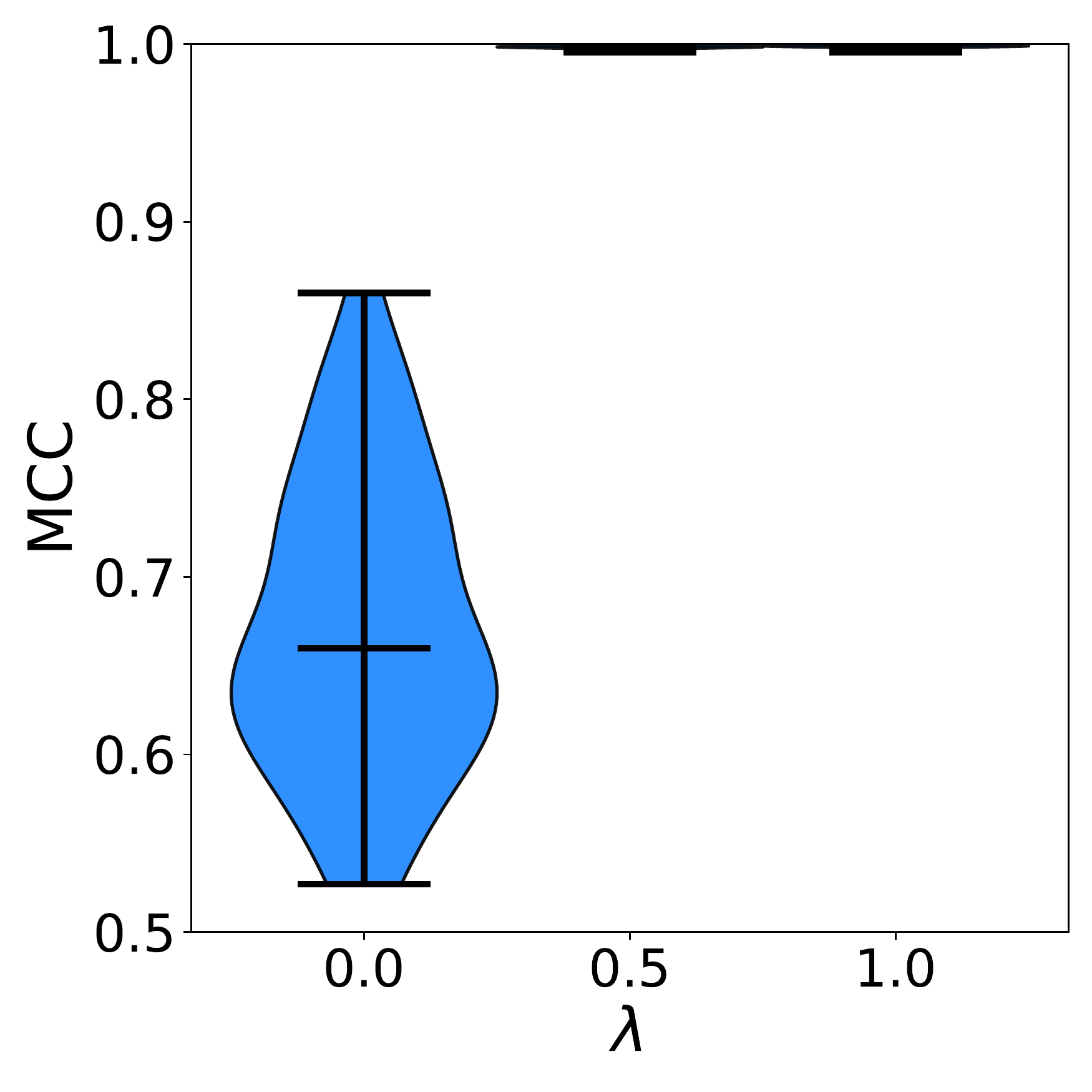}
            \put (10, 2) {\textbf{(h)}}
        \end{overpic}
    \end{subfigure}
    \caption{BSS via $C_\IMA$-regularised MLE for $n=5$ dimensions with $\lambda\in \{0.0,0.5,1.0\}$. The true mixing function is a randomly generated M\"obius transformation, nonlinear (with $\epsilon=2$) in \textbf{(a)}--\textbf{(d)} and linear (with $\epsilon=0$) transformation for \textbf{(e)}--\textbf{(h)}. For each type of transformation and $\lambda$, seeded runs are done. \textbf{(a)}, \textbf{(e)} KL-divergence between ground truth likelihood and learnt model; \textbf{(b)}, \textbf{(f)} $C_\IMA$ of the learnt models; \textbf{(c)}, \textbf{(g)} nonlinear Amari distance given true mixing and learnt unmixing; \textbf{(d)}, \textbf{(h)} MCC between true and reconstructed sources.}
    \label{fig:cima_obj_5d}
\end{figure}

\begin{figure}%
    \centering
    \begin{subfigure}{0.24\textwidth}
        \begin{overpic}[width=\textwidth]{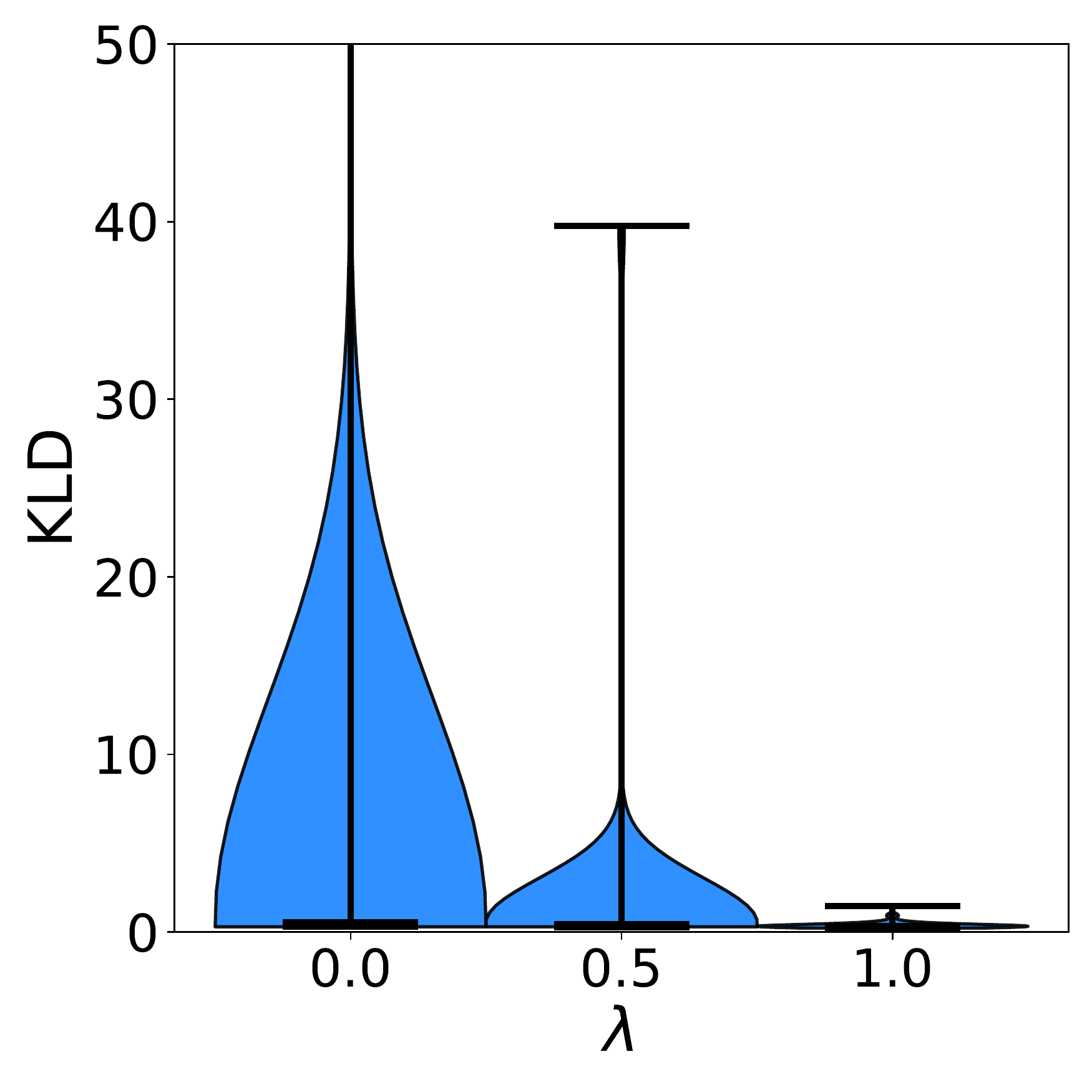}
            \put (10, 2) {\textbf{(a)}}
        \end{overpic}
    \end{subfigure}
    \begin{subfigure}{0.24\textwidth}
        \begin{overpic}[width=\textwidth]{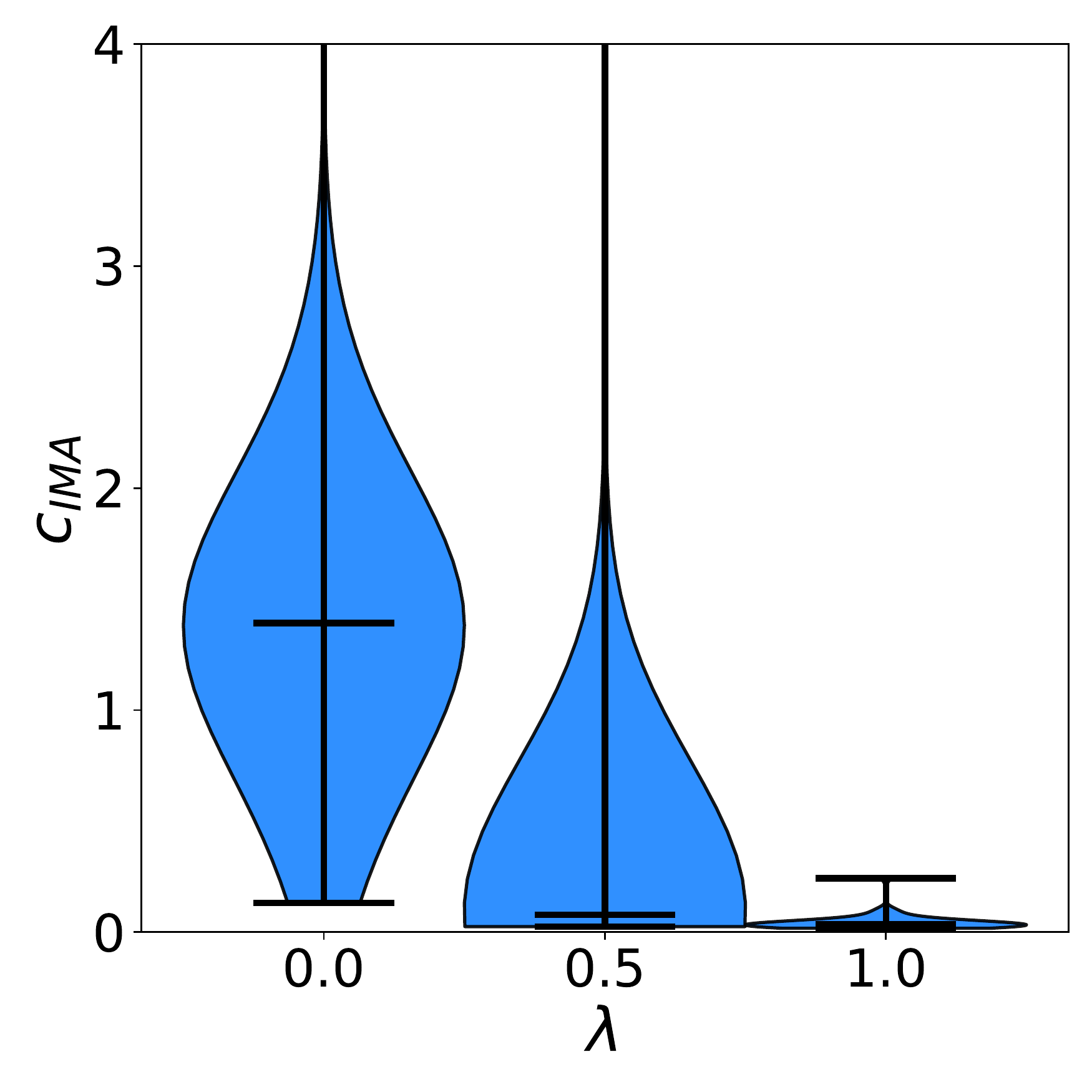}
            \put (10, 2) {\textbf{(b)}}
        \end{overpic}
    \end{subfigure}
    \begin{subfigure}{0.24\textwidth}
        \begin{overpic}[width=\textwidth]{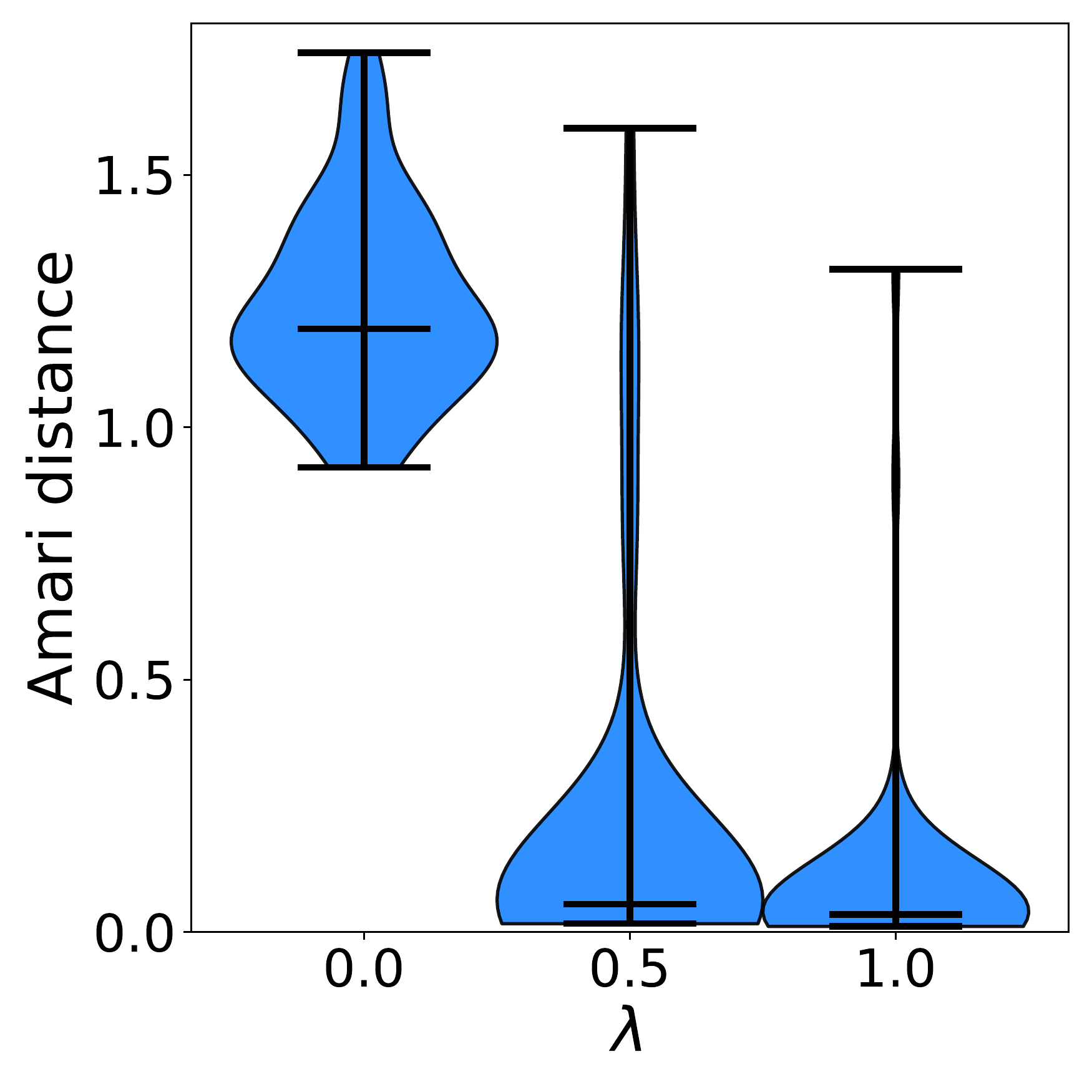}
            \put (10, 2) {\textbf{(c)}}
        \end{overpic}
    \end{subfigure}
    \begin{subfigure}{0.24\textwidth}
        \begin{overpic}[width=\textwidth]{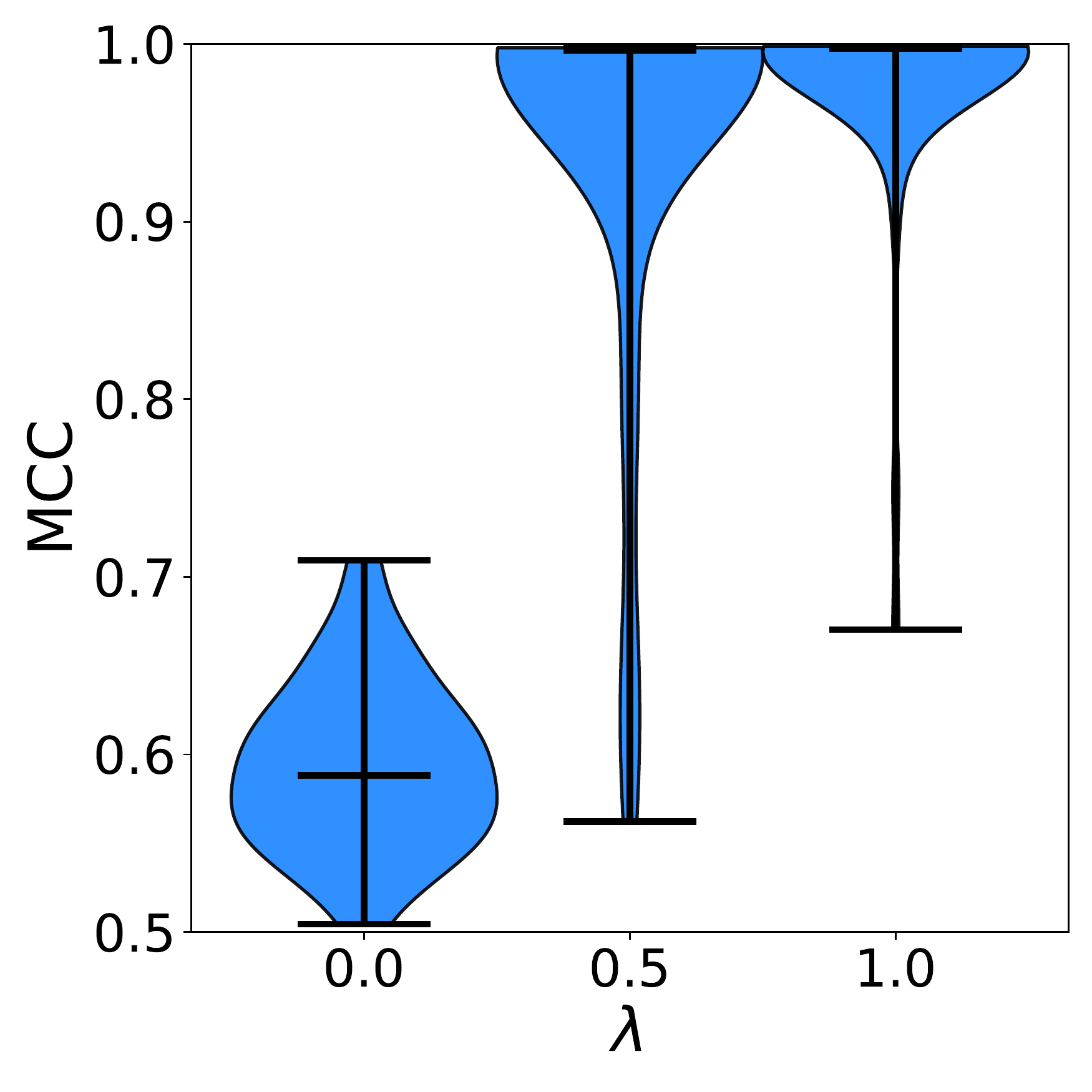}
            \put (10, 2) {\textbf{(d)}}
        \end{overpic}
    \end{subfigure}
    \caption{BSS via $C_\IMA$-regularised MLE for $n=7$ dimensions with $\lambda\in \{0.0,0.5,1.0\}$. The true mixing function is a randomly generated M\"obius transformation (with $\epsilon=2$). For each $\lambda$, seeded runs are done. \textbf{(a)} KL-divergence between ground truth likelihood and learnt model; \textbf{(b)} $C_\IMA$ of the learnt models; \textbf{(c)} nonlinear Amari distance given true mixing and learnt unmixing; \textbf{(d)} MCC between true and reconstructed sources.}
    \label{fig:cima_obj_7d}
\end{figure}

\renewcommand\folder{plots_perc_unif}
\begin{figure}[]

    \vspace{-1em}
    \centering
        \begin{minipage}{\minipagewidth \textwidth}
            \begin{subfigure}{1.0\textwidth}
            \centering
            \includegraphics[height=\height cm, keepaspectratio]{\folder/true_sources.pdf}
            \end{subfigure}
        \end{minipage}%
        \hspace{\gap em}
        \begin{minipage}{\minipagewidth \textwidth}
            \begin{subfigure}{1.0\textwidth}
            \centering
            \includegraphics[height=\height cm, keepaspectratio]{\folder/observations.pdf}
            \end{subfigure}
        \end{minipage}%
        \hspace{\gap em}
        \begin{minipage}{\minipagewidth \textwidth}
            \begin{subfigure}{1.0\textwidth}
            \centering
            \includegraphics[height=\height cm, keepaspectratio]{\folder/darmois.pdf}
            \end{subfigure}
        \end{minipage}%
        \hspace{\gap em}
        \begin{minipage}{\minipagewidth \textwidth}
            \begin{subfigure}{1.0\textwidth}
            \centering
            \includegraphics[height=\height cm, keepaspectratio]{\folder/mpa.pdf}
            \end{subfigure}
        \end{minipage}%
        \hspace{\gap em}
        \begin{minipage}{\minipagewidth \textwidth}
            \begin{subfigure}{1.0\textwidth}
            \centering
            \includegraphics[height=\height cm, keepaspectratio]{\folder/mpa_composed_darmois.pdf}
            \end{subfigure}
        \end{minipage}%
        \hspace{\gap em}
        \begin{minipage}{\minipagewidth \textwidth}
            \begin{subfigure}{1.0\textwidth}
            \centering
            \includegraphics[height=\height cm, keepaspectratio]{\folder/mle.pdf}
            \end{subfigure}
        \end{minipage}
        \hspace{\gap em}
        \begin{minipage}{\minipagewidth \textwidth}
            \begin{subfigure}{1.0\textwidth}
            \centering
            \includegraphics[height=\height cm, keepaspectratio]{\folder/cima_model.pdf}
            \end{subfigure}
        \end{minipage}
    \vspace{-0.5em}
    \caption{\small Visual comparison of different nonlinear ICA solutions for $n=2$: \textit{(left to right)} true sources; observed mixtures;  Darmois solution; true unmixing, composed with the measure preserving automorphism (MPA) from~\eqref{eq:measure_preserving_automorphism_Gaussian} (with rotation by~$\nicefrac{\pi}{4}$); Darmois solution composed with the same MPA; maximum likelihood~($\lambda=0$); %
    and $C_\IMA$-regularised approach~($\lambda=1$).
    }
    \label{fig:results_percuni}
    \vspace{-1.25em}
\end{figure}

When computing the $C_\IMA$ of the Darmois solutions of randomly generated functions, we restricted ourselves to Möbius transformations, i.e. conformal maps. However, there are also nonconformal maps satisfying $C_\IMA = 0$, e.g. the transformation of Cartesian to Polar coordinates, see~\Cref{app:examples}. To test whether the $C_\IMA$ of the Darmois solutions is actually bigger than $0$, we gener

\subsection{Evaluation}
\label{sec:app_eval}

\paragraph{Mean correlation coefficient.} %

To evaluate the performance of our method, we compute the mean correlation coefficient (MCC) between the original sources and the corresponding latents, see for example~\cite{khemakhem2020variational}. We first
compute the matrix of correlation coefficients between all pairs of ground truth and reconstructed sources. Then, we solve a linear
sum assignment problem (e.g. using the Hungarian algorithm) to match each reconstructed source to the ground truth one which has the highest correlation with it.
The MCC matrix contains the Spearman rank-order correlations between the ground truth and reconstructed sources, a measure which is blind to nonlinear invertible reparametrisations of the sources. 

\paragraph{Nonlinear Amari metric.}
While the MCC metric evaluates BSS by comparing ground truth and reconstructed sources, we propose an additional evaluation directly based on comparing the (Jacobians of the) true mixing and the learned unmixing. We take inspiration from an evaluation metric used in the context of linear ICA,
the Amari distance~\cite{amari1996new}: Given a learned unmixing $\Wb$ and the true mixing $\Ab$, and defining the matrix $\Rb = \Ab\Wb$, the Amari distance is defined as
\begin{equation}
d^{\text{Amari}}(\Rb) = \sum_{i=1}^{n}\left(\sum_{j=1}^{n} \frac{[\Rb]_{i j}^{2}}{\max _{l} [\Rb]_{i l}^{2}}-1\right)+\sum_{i=1}^{n}\left(\sum_{j=1}^{n} \frac{[\Rb]_{j i}^{2}}{\max _{l} [\Rb]_{l j}^{2}}-1\right)\,,
\label{eq:amari_distance}
\end{equation}
and is greater than or equal to zero, canceling if and only if $\Rb$ is a scale and permutation matrix, that is when the learned unmixing is matching the unresolvable ambiguities of linear ICA.

We extend this idea to the nonlinear setting:
Given a true mixing $\fb$ and a learned unmixing $\gb$, we define our nonlinear Amari distance as
\begin{equation}
d^{\text{n-Amari}}(\gb, \fb) = \EE_{\xb \sim p_\xb} \left[ d^{\text{Amari}}\left(\Jb_{\gb}(\xb) \Jb_{\fb}(\fb^{-1}(\xb))  \right) \right]\,.
\end{equation}
Then, according to the definition of Amari distance~\eqref{eq:amari_distance}, if the smooth function $\gb \circ \fb$ is a permutation composed with a scalar function, thus precisely matching the BSS equivalence class defined in~\Cref{def:bss_identifiability}, this would result in its Jacobian (that is, the product of the Jacobians $\Jb_{\gb}(\xb) \Jb_{\fb}(\fb^{-1}(\xb)) $) equalling the product of a diagonal matrix and a permutation matrix at every point $\xb$:
the quantity $d^{\text{n-Amari}}(\gb, \fb)$ would therefore be equal to zero. %

This metric can be of independent interest and potentially useful in contexts where the reconstructed sources might be a noisy version of the true ones, but the true unmixing is nevertheless identifiable. Our implementation is based on the one for the (linear) Amari distance provided in the code for~\cite{ablin2018faster}.

\paragraph{$C_\IMA$ of Darmois solutions for nonconformal maps satisfying the IMA principle.}

When computing the $C_\IMA$ of the Darmois solutions of randomly generated functions, we restricted ourselves to M\"obius transformations which are conformal maps. However, there are also nonconformal maps satisfying $C_\IMA = 0$, e.g., the transformation from polar to Cartesian coordinates with $n=2$, see~\Cref{app:examples}. To test whether the $C_\IMA$ of the Darmois solutions is actually bigger than $0$, we generate random radial transformations by imposing a random scale and shift before applying the radial transformation, compute the Darmois solution as we have done in~\Cref{sec:experiment1_evaluation}, and calculate its $C_\IMA$ on the test set. We did 50 runs and the results are shown in \cref{fig:darmois_rad_hist}.

\begin{figure}[h]
    \centering
    \includegraphics[width=0.4\textwidth]{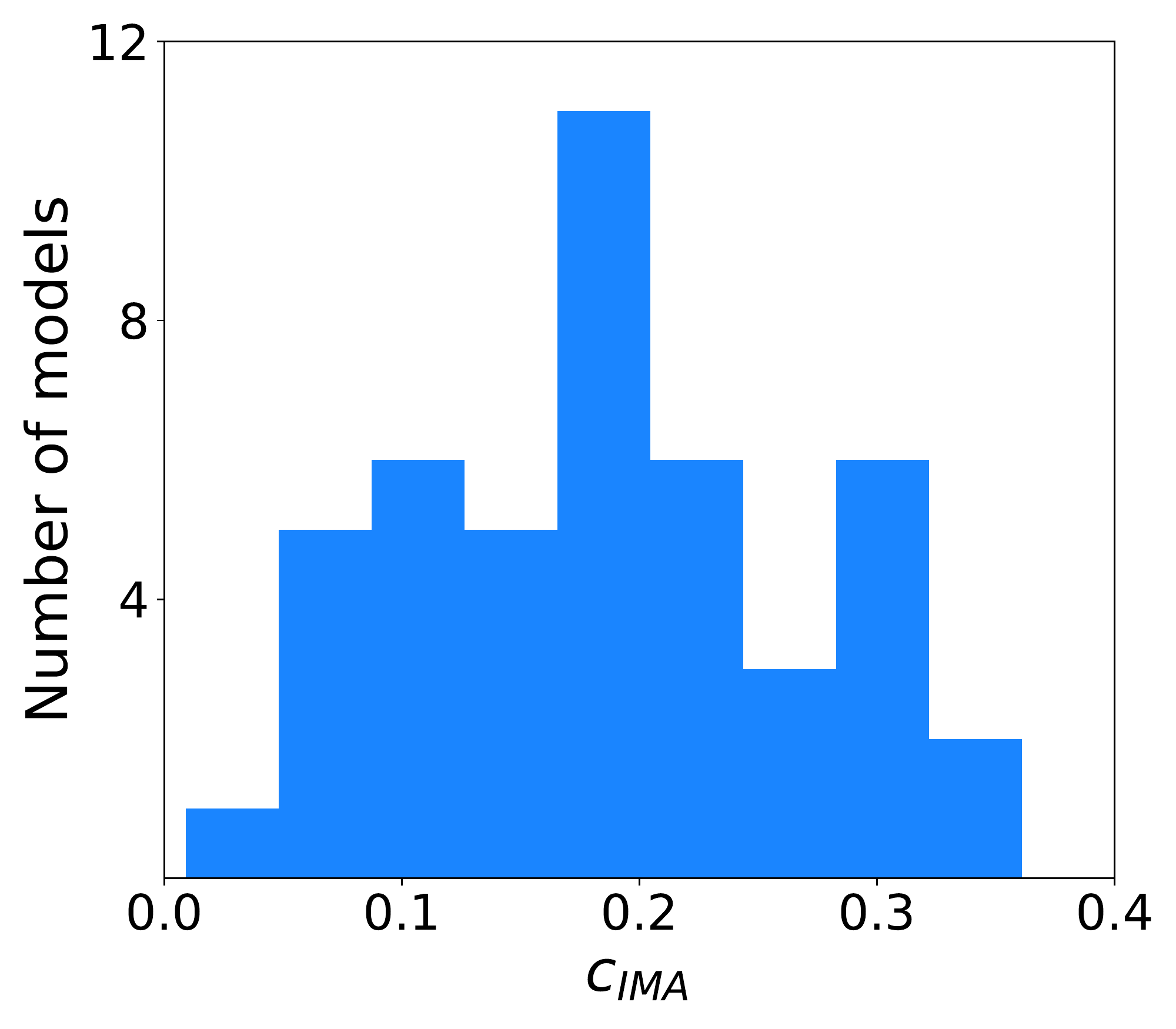}
    \caption{Histogram of the $C_\IMA$ values of the Darmois solutions of 50 randomly generated radial transformations.}
    \label{fig:darmois_rad_hist}
\end{figure}

Similar to \cref{fig:results1} \textbf{(a)} we can clearly see that all $C_\IMA$ values of the final models are larger than 0, with the smallest value being $0.01$. This confirms the result we have already shown theoretically.

\paragraph{Additional plots for~\Cref{sec:experiment2_learning}.}

We show additional plots for the quantitative experiments involving training with the objective described in~\eqref{eq:objective_ml_cima}, see ~\Cref{fig:cima_obj_2d}, ~\Cref{fig:cima_obj_5d} and~\Cref{fig:cima_obj_7d}.

For $\epsilon =0 $ (that is, ground truth mixing linear), there appears to be an almost perfect recovery of the ground truth sources (resp. unmixing function) for  $\lambda \in \{0.5, 1.0\}$, as can be seen by the high (resp. low) values of the MCC  (resp. nonlinear Amari distance) evaluations ; this is in stark contrast with the distribution of the MCC (resp. nonlinear Amari distance) values for models trained with $\lambda=0$, which are typically much higher (resp. lower), indicating that the learned solutions do not achieve blind source separation (see $n=2$, ~\Cref{fig:cima_obj_2d} \textbf{ (g), (h)}; $n=5$,~\Cref{fig:cima_obj_5d} \textbf{ (g), (h)}). All models achieve a comparably good fit, reflected in the KL-divergence values ($n=2$, ~\Cref{fig:cima_obj_2d} \textbf{ (e)}; $n=5$,~\Cref{fig:cima_obj_5d} \textbf{ (e)}). %

The trend is confirmed when the true mixing is nonlinear ($\epsilon = 2$), with slightly lower (resp. higher) values achieved with $C_\IMA$ regularisation for the MCC (resp. nonlinear Amari) metrics; this possibly due to the increased difficulty of fitting observations generated by a nonlinear mixing, as can be seen from the higher values of the KL-divergence ($n=2$, ~\Cref{fig:cima_obj_2d} \textbf{(a)}; $n=5$, ~\Cref{fig:cima_obj_5d} \textbf{(a)}; $n=7$, ~\Cref{fig:cima_obj_7d} \textbf{(a)});\footnote{The distribution of the KL values contains outliers, and seemingly more strongly for lower values of $\lambda$.} still, the beneficial effect of  $\lambda \in \{0.5, 1.0 \}$ with respect to models trained with $\lambda=0$ is clear, and is apparently stronger for $\lambda=1.0$  and with higher data dimensionality $n$ ($n=2$, ~\Cref{fig:cima_obj_2d} \textbf{ (c), (d)}; $n=5$,~\Cref{fig:cima_obj_5d} \textbf{ (c), (d)}; $n=7$,~\Cref{fig:cima_obj_7d} \textbf{ (c), (d)}).

We additionally plot the $C_\IMA$ values for the all trained models, for all values of $\lambda$. It can be seen that solutions found by unregularised maximum likelihood estimation typically learn functions with relatively high values of $C_\IMA$, while as expected the regularised version achieves low values ($n=2$, ~\Cref{fig:cima_obj_2d} \textbf{(b), (f)}; $n=5$, ~\Cref{fig:cima_obj_5d} \textbf{(b), (f)}; $n=7$, ~\Cref{fig:cima_obj_7d} \textbf{(b)}).

Finally, in figure \ref{fig:results_percuni}, we report the same plot as in \ref{fig:results1}, top row, but with a perceptually uniform colormap.

\paragraph{Comparison to FastICA.} 

We compared the performance of our proposed regularised maximum likelihood procedure to a state of the art method for linear ICA, FastICA~\cite{hyvarinen1999fast}, 
in the implementation from the Scikit-learn package~\cite{pedregosa2011scikit}, over $50$ repetitions. Our experiments show that our regularised method ($\lambda=0.5$, and particularly $\lambda=1.0$; $\lambda=0.0$ provides the unregularised nonlinear baseline) is superior in learning the true unmixing and reconstructing the sources. This indicates that the linearity assumption of FastICA does not allow enough flexibility to solve blind source separation in our setting, whereas our criterion does (see~\cref{fig:fastica2},~\cref{fig:fastica5} and~\cref{fig:fastica7}).\footnote{the experimental setting and the plots for the normalising flow models correspond to those already shown in the paper, but here we modified the $y$-axis scale to facilitate the comparison of all methods}
While the spread in the distributions of MCC and Amari distance can be largely attributed to the brittleness of neural networks, the median values for the MCC (resp. nonlinear Amari distance) are consistently higher (resp. lower) for our regularised method than for FastICA. In contrast, the performance of FastICA is consistently better than the unregularised baseline.

\begin{figure}[h!]
\centering
\begin{subfigure}{.5\textwidth}
  \centering
  \includegraphics[width=\linewidth]{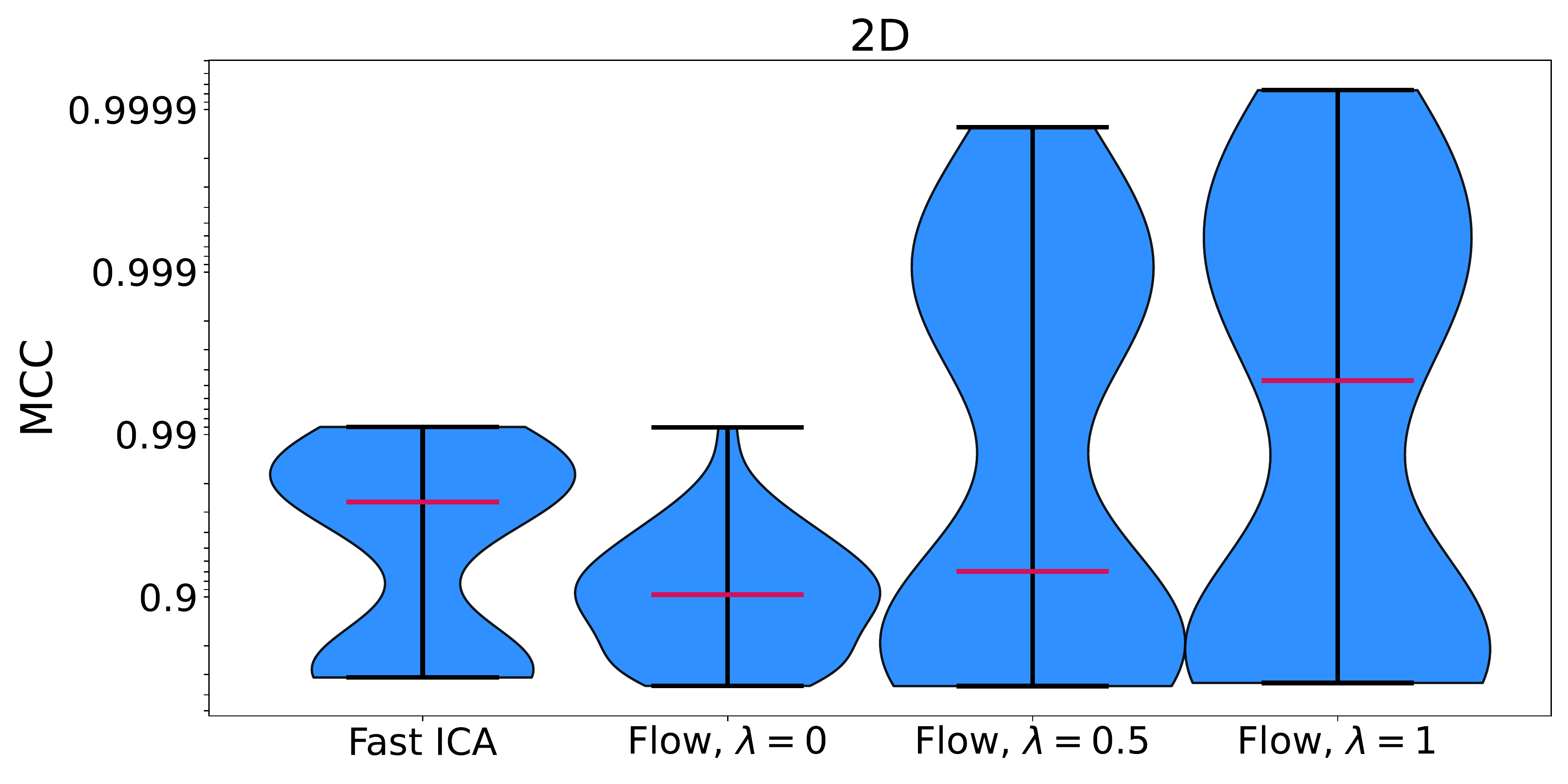}
  \caption{}
\end{subfigure}%
\begin{subfigure}{.5\textwidth}
  \centering
  \includegraphics[width=\linewidth]{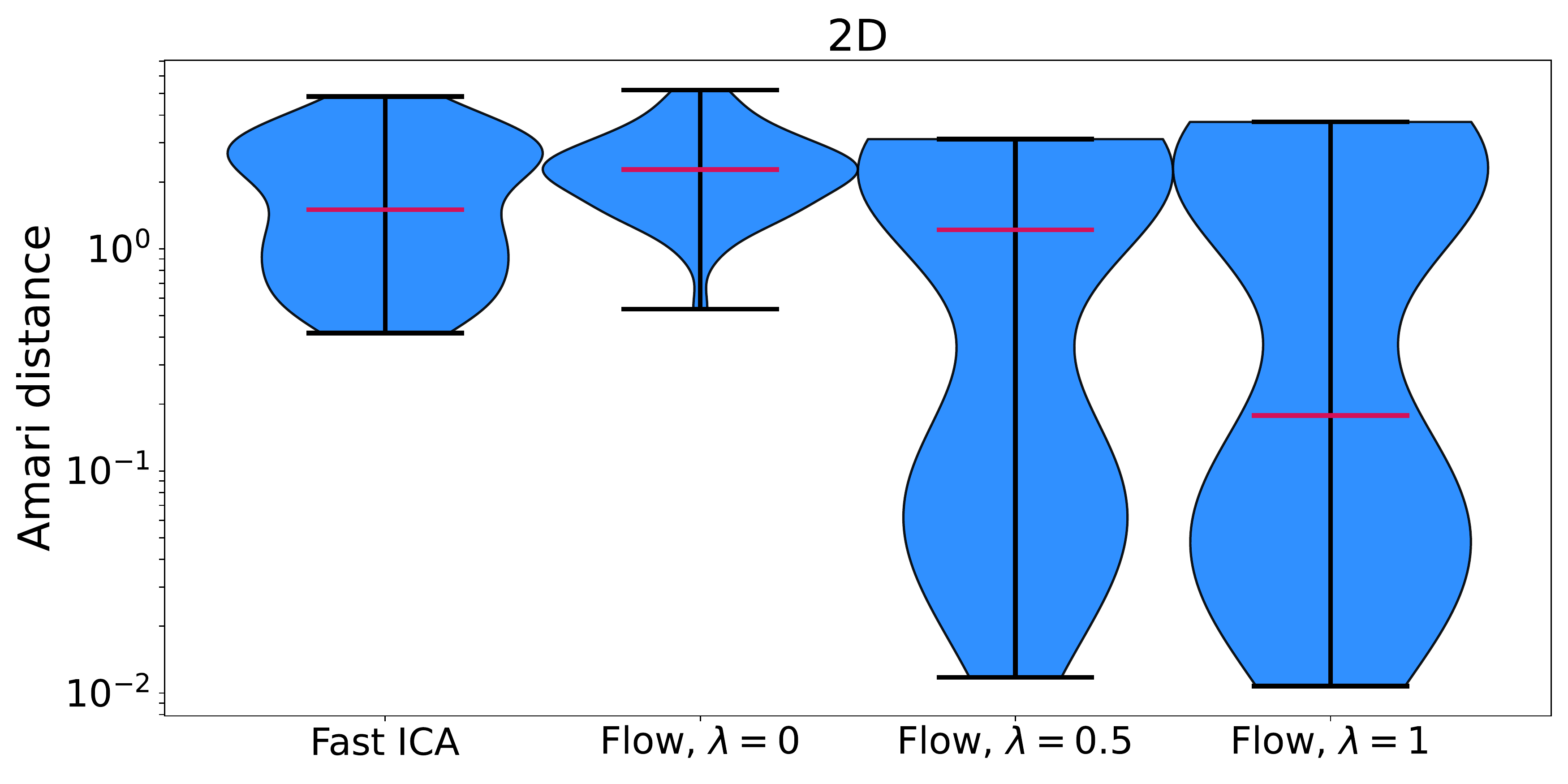}
  \caption{}
\end{subfigure}%
\caption{Comparison between FastICA and our normalising flow method with $\lambda \in \{ 0.0, 0.5, 1.0 \}$, $n=2$. (a) MCC; (b) Amari distance.}
\label{fig:fastica2}
\end{figure}

\begin{figure}[h!]
\centering
\begin{subfigure}{.5\textwidth}
  \centering
  \includegraphics[width=\linewidth]{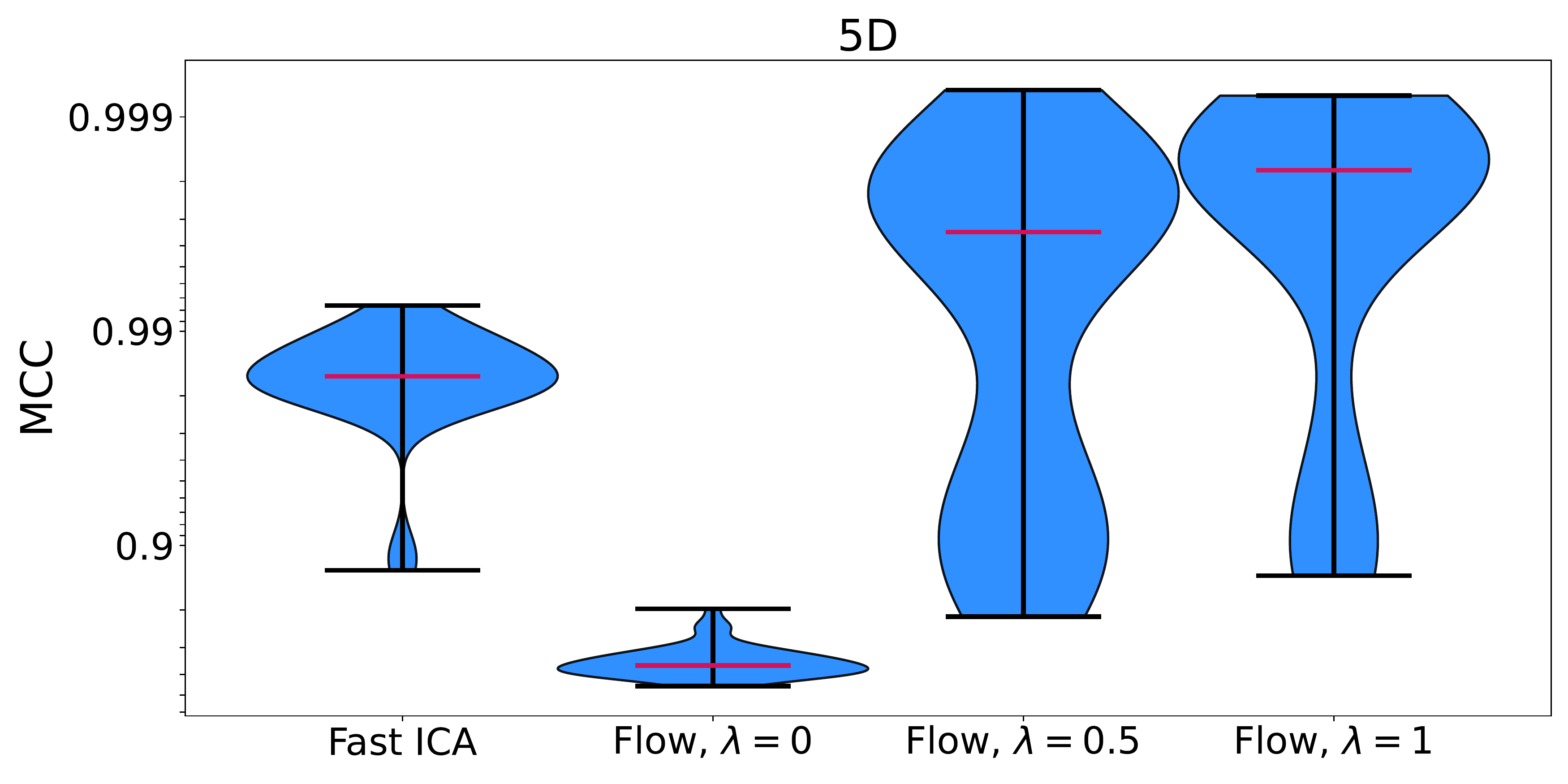}
  \caption{}
\end{subfigure}%
\begin{subfigure}{.5\textwidth}
  \centering
  \includegraphics[width=\linewidth]{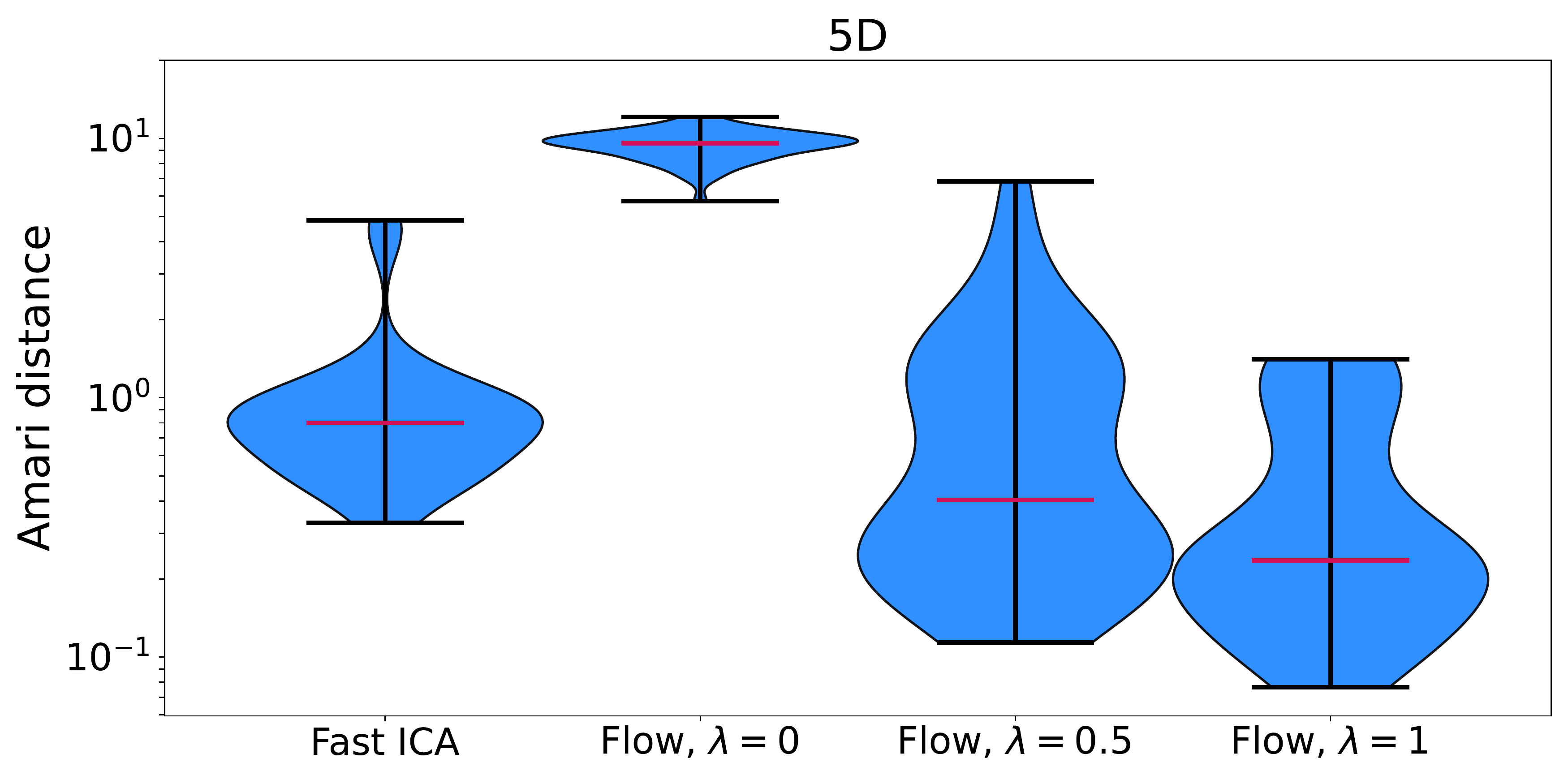}
  \caption{}
\end{subfigure}%
\caption{Comparison between FastICA and our normalising flow method with $\lambda \in \{ 0.0, 0.5, 1.0 \}$, $n=5$. (a) MCC; (b) Amari distance.}
\label{fig:fastica5}
\end{figure}

\begin{figure}[h!]
\centering
\begin{subfigure}{.5\textwidth}
  \centering
  \includegraphics[width=\linewidth]{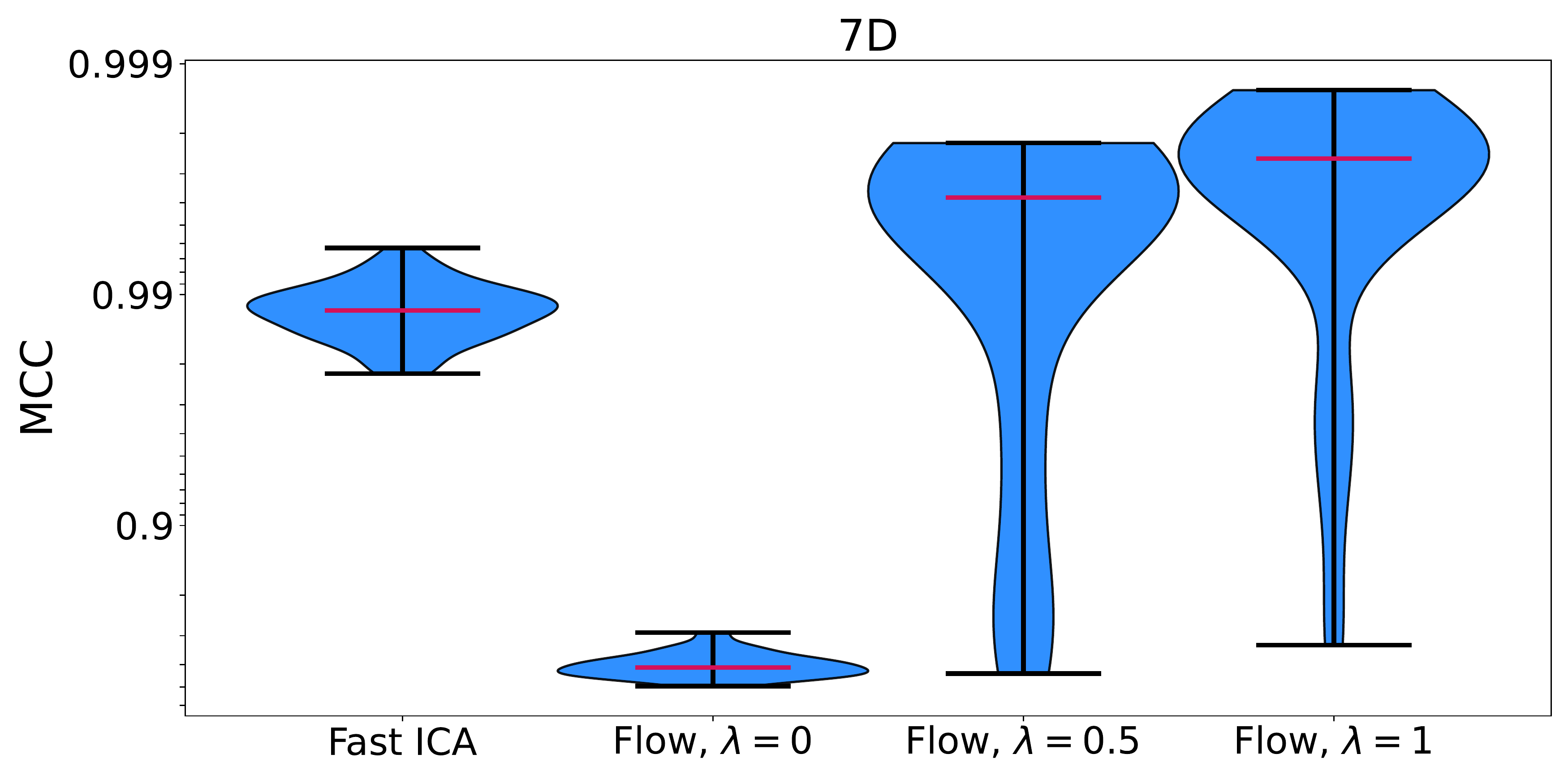}
  \caption{}
\end{subfigure}%
\begin{subfigure}{.5\textwidth}
  \centering
  \includegraphics[width=\linewidth]{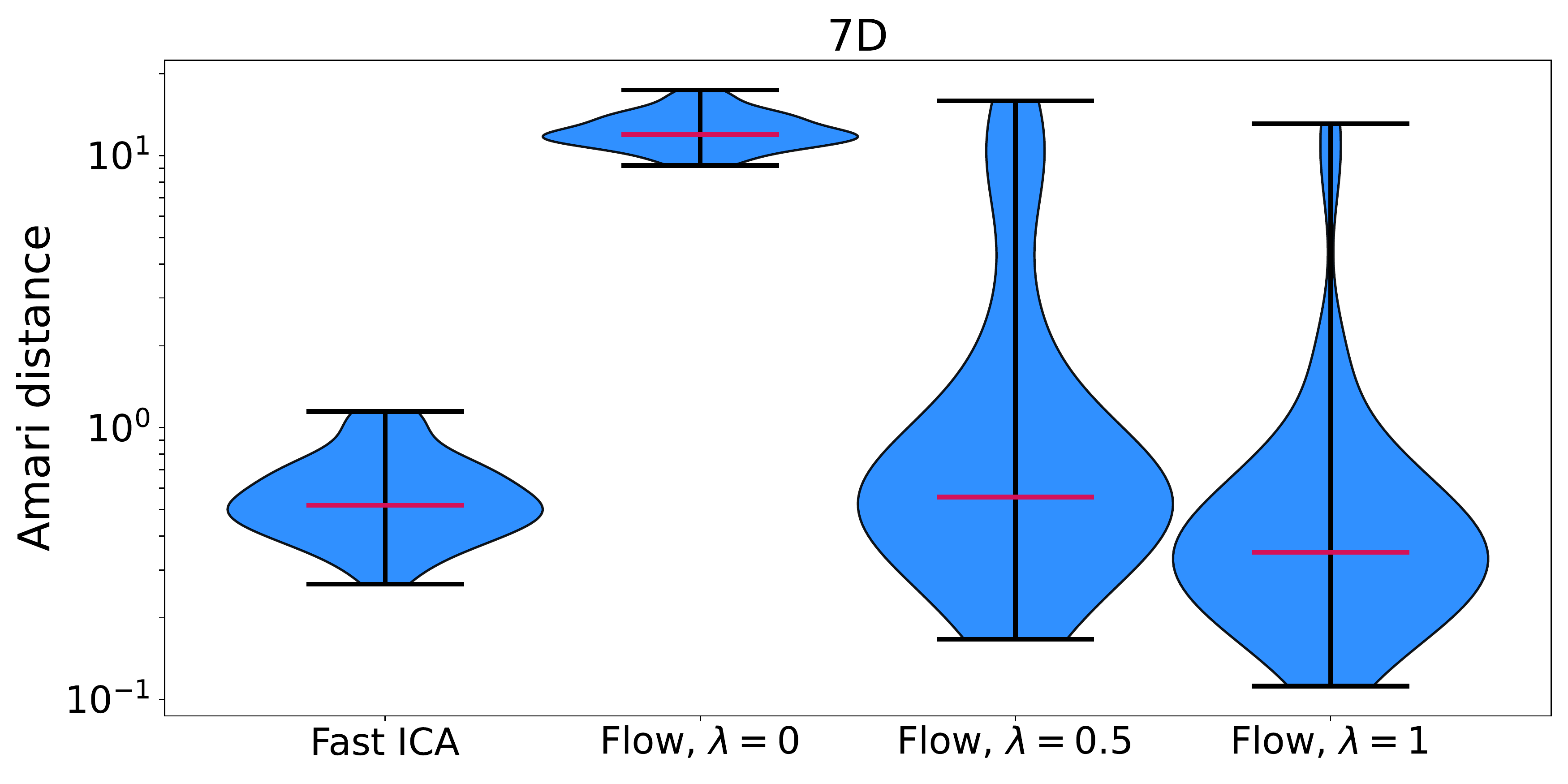}
  \caption{}
\end{subfigure}%
\caption{Comparison between FastICA and our normalising flow method with $\lambda \in \{ 0.0, 0.5, 1.0 \}$, $n=7$. (a) MCC; (b) Amari distance.}
\label{fig:fastica7}
\end{figure}

\paragraph{Details on resources used.} 
All models were trained on compute instances with 16 Intel Xeon E5-2698 CPUs and a Nvidia Geforce GTX980 GPU. The cluster we used has 204 thereof. Training the models took between 4 and 16 hours depending mainly on the dimensionality $n$ and number of samples in the dataset, and on the number of iterations used for training. Overall, we trained around 2000 models, amounting to roughly 18000 GPU hours.

\clearpage
\section{Additional background on conformal maps and M\"obius transformations}
\label{app:confmaps}

\paragraph{Similarities.} A \textit{similarity} of a Euclidean space is a bijection $\fb$ from the space onto itself that multiplies all distances by the same positive real number $r$, so that for any two points $\xb$ and $\yb$ we have
\[
    d ( \fb ( \xb ) , \fb ( \yb ) ) = r d ( \xb , \yb ) ,  \, 
\]
where $d(\xb,\yb)$ is the Euclidean distance from $\xb$ to $\yb$~\cite{smart1998modern}%
. The scalar $r$ is sometimes termed the ratio of similarity, the stretching factor and the similarity coefficient. When $r = 1$ a similarity is called an isometry (rigid transformation). Two sets are called similar if one is the image of the other under a similarity.

As a map $\fb : \RR^n \rightarrow \RR^n$, a similarity of ratio $r$ takes the form
\[
    \fb ( \xb ) = r \Ab \xb + \tb ,
\]
where $\Ab$ %
is a orthogonal matrixn $n \times n$ and $\tb \in \mathbb{R}^n$ is a translation vector.

Note that such a similarity $\fb$ has Jacobian $\Jb_\fb(\xb)=r\Ab$ for any $\xb$.

\paragraph{Conformal maps.} Conformal maps are angle preserving transformation, and in this sense, are a generalization of similarities. In short, let $U$ be an open subset of $\RR^n$, $\varphi:U\rightarrow \RR^n$ is a conformal map if, for two arbitrary curves $\gamma_1(t)$ and $\gamma_2 (t)$
on $\mathbb{R}^n$, where these curves intersect each other with angle $\theta$ in point $\pb\in U$, then  $\varphi \circ \gamma_1 (t)$ and $\varphi \circ \gamma_2 (t)$ intersect each other with the same
angle $\theta$ in the point $\varphi(\pb)$.

A characterisation of conformal maps directly related to orthogonal coordinate systems is the following.
\begin{proposition}[See e.g. \cite{soeten2011conformal}]
Let $U$ be an open subset of $\RR^n$ with a $C ^1$-function $\varphi  : U \rightarrow
\mathbb{R}^n$. Then $\varphi$ is conformal iff there exists a scalar function $\lambda : U \rightarrow \RR$ such that
$\lambda(\xb)^{-1} \Jb_{\varphi}(\xb)$ is an orthogonal matrix for all $\xb$ in $U$. We call $\lambda$ the scale factor of $\varphi$.
\end{proposition}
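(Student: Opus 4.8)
The plan is to reduce the angle-preservation property to a pointwise linear-algebraic statement about the Jacobian, and then establish the matching matrix characterisation. Fix $\xb\in U$ and write $\Jb=\Jb_\varphi(\xb)$. Since $U$ is open, any tangent direction at $\xb$ is realised by a smooth curve through $\xb$ (e.g.\ a short straight segment), and by the chain rule the tangent vector of $\varphi\circ\gamma$ at the intersection point equals $\Jb\,\gamma'$. Hence the angle between two image curves is precisely the angle between $\Jb\mathbf{u}$ and $\Jb\mathbf{v}$, where $\mathbf{u},\mathbf{v}$ are the tangents of the original curves. Conformality is therefore equivalent to requiring that, for every $\xb$ and all nonzero $\mathbf{u},\mathbf{v}$,
\begin{equation}
\frac{\langle \Jb\mathbf{u},\, \Jb\mathbf{v}\rangle}{\norm{\Jb\mathbf{u}}\,\norm{\Jb\mathbf{v}}} = \frac{\langle \mathbf{u},\, \mathbf{v}\rangle}{\norm{\mathbf{u}}\,\norm{\mathbf{v}}}\,.
\label{eq:angle_pres}
\end{equation}
Note that for the left-hand side to be defined we need $\Jb\mathbf{u}\neq 0$ whenever $\mathbf{u}\neq 0$, so the angle-preservation hypothesis already forces $\Jb$ to be invertible.

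For the forward direction I would extract the structure of $\Jb$ from~\eqref{eq:angle_pres} using only right angles. Taking $\mathbf{u}=\eb_i$, $\mathbf{v}=\eb_j$ with $i\neq j$ from the standard orthonormal basis gives $\langle\Jb\eb_i,\Jb\eb_j\rangle=0$, so the columns of $\Jb$ are mutually orthogonal and $\Jb^\top\Jb$ is diagonal with entries $d_i^2:=\norm{\Jb\eb_i}^2$. To show the $d_i$ coincide, I apply~\eqref{eq:angle_pres} to the orthogonal pair $\mathbf{u}=\eb_i+\eb_j$, $\mathbf{v}=\eb_i-\eb_j$ (which satisfy $\langle\mathbf{u},\mathbf{v}\rangle=0$): preservation of this right angle, combined with $\langle\Jb\eb_i,\Jb\eb_j\rangle=0$, yields $\langle\Jb\mathbf{u},\Jb\mathbf{v}\rangle=d_i^2-d_j^2=0$. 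Thus all column norms equal a common value $\lambda(\xb)>0$, so $\Jb^\top\Jb=\lambda(\xb)^2\Ib$, i.e.\ $\lambda(\xb)^{-1}\Jb$ is orthogonal. Continuity of $\lambda(\xb)=\norm{\Jb_\varphi(\xb)\eb_1}$ is immediate from $\varphi\in C^1$.

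The converse is the easy direction: if $\Jb=\lambda\Ob$ with $\Ob$ orthogonal, then $\langle\Jb\mathbf{u},\Jb\mathbf{v}\rangle=\lambda^2\langle\mathbf{u},\mathbf{v}\rangle$ and $\norm{\Jb\mathbf{u}}=\lambda\norm{\mathbf{u}}$, so~\eqref{eq:angle_pres} holds and the cosine of every angle is preserved. One conceptual point worth flagging is that the definition refers to the \emph{unsigned} angle $\theta$, which is why the characterising matrix lands in $O(n)$ (reflections allowed) rather than $SO(n)$.

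The main obstacle is not any single computation but the careful handling of the reduction step together with the nondegeneracy issue: I must argue rigorously that preserving angles between \emph{all} pairs of curves through $\xb$ is genuinely equivalent to the pointwise matrix condition~\eqref{eq:angle_pres} (this is where openness of $U$ and the chain rule are used to realise arbitrary tangent directions), and that the hypothesis itself guarantees $\Jb$ is invertible so that $\lambda(\xb)\neq 0$ and $\lambda(\xb)^{-1}\Jb$ is well defined. Once~\eqref{eq:angle_pres} is secured, the remaining linear algebra---orthogonal columns from right angles, equal column norms from the $\eb_i\pm\eb_j$ trick---is elementary.
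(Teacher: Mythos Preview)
The paper does not actually supply a proof of this proposition: it is quoted as background material with a citation (``See e.g.\ \cite{soeten2011conformal}'') and no argument is given. So there is nothing in the paper to compare your proof against.

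That said, your argument is correct and is the standard one. The reduction of curve-angle preservation to the pointwise linear condition~\eqref{eq:angle_pres} via the chain rule is the right first move, and the observation that the hypothesis forces $\Jb$ to be invertible is exactly what is needed to make $\lambda(\xb)^{-1}\Jb$ well defined. The forward direction---orthogonality of columns from $\eb_i\perp\eb_j$, then equality of column norms from the $\eb_i\pm\eb_j$ trick---is clean and complete, and the converse is indeed immediate. Your remark distinguishing unsigned angle preservation (landing in $O(n)$) from oriented angle preservation (which would give $SO(n)$) is a nice clarification that the paper's statement leaves implicit.
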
 

While it can be shown that \textit{linear} conformal maps are similarities, an interesting class of \textit{nonlinear} conformal maps are the unit radius sphere inversion (restriction to unit radius is only to avoid unnecessary notational complexity):
\begin{eqnarray*}
I_\bb:  \mathbb{R}^n\setminus \{0\} &\rightarrow &\mathbb{R}^n\setminus\{0\}\\
 \xb&\mapsto & \frac{\xb-\bb}{\|\xb-\bb\|^2}+\bb
\end{eqnarray*}
We can notice that such transformation leaves the hypersphere of center $\bb$ and radius 1 invariant, while the points outside of the unit ball are mapped to the interior of the unit ball, and vice-versa.

Interestingly, conformal maps in Euclidean spaces of dimension superior or equal to 3 can be restricted to two kinds according to the following result from Liouville.

\begin{theorem}[see e.g. \cite{tojeiro2007liouville}]\label{thm:highdimmoebius}
Let $f : U \rightarrow \mathbb{R}^n$ be a conformal map defined on a connected open subset
of Euclidean space $\mathbb{R}^n$ of dimension $n \geq 3$. Then $f = L_{|U}$ can be written either as the restriction of a 
similarity $L$ to $U$, or as the composition $f = I \circ L_{|U}$ of such a map with an inversion with respect
to a hypersphere of unit radius, centered at the origin.
\end{theorem}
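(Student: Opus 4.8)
The plan is to exploit the \emph{rigidity} of conformal maps that sets in for $n\ge 3$: in stark contrast to the plane (where conformal maps form the infinite-dimensional family of holomorphic/antiholomorphic functions), the conformality equations in dimension $\ge 3$ are heavily overdetermined and force $f$ into a finite-dimensional group. I would start from the pointwise conformality relation, written component-wise for $f=(f^1,\dots,f^n)$ as
\begin{equation}\label{eq:liou-conf}
\sum_{k=1}^n \partial_i f^k(\xb)\,\partial_j f^k(\xb) = \lambda(\xb)^2\,\delta_{ij},
\end{equation}
which is exactly $\Jb_f(\xb)^\top\Jb_f(\xb)=\lambda(\xb)^2\Ib$, i.e.\ $\Jb_f(\xb)=\lambda(\xb)\Ob(\xb)$ with $\Ob(\xb)\in O(n)$ (the characterisation of the Proposition preceding the theorem). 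The aim is to show that $(\lambda,\Ob)$ is pinned to exactly two shapes: $\lambda$ constant (giving a similarity $L$), or $1/\lambda$ a quadratic (giving an inversion precomposed with a similarity).

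First I would differentiate \eqref{eq:liou-conf} with respect to a third coordinate $x^l$, use the symmetry $\partial_l\partial_i f^k=\partial_i\partial_l f^k$ of mixed partials, and take the usual Christoffel-type linear combination over cyclic permutations of $(i,j,l)$. Writing $B_{il,j}:=\sum_k \partial_i\partial_l f^k\,\partial_j f^k$ (symmetric in $i\leftrightarrow l$), the permuted relations combine to give a closed formula for every second derivative,
\begin{equation}\label{eq:liou-second}
\sum_k \partial_i\partial_l f^k\,\partial_j f^k
= \tfrac12\Big(\delta_{ij}\,\partial_l(\lambda^2)+\delta_{jl}\,\partial_i(\lambda^2)-\delta_{il}\,\partial_j(\lambda^2)\Big).
\end{equation}
Since $\Jb_f$ is invertible ($\Jb_f^{-1}=\lambda^{-1}\Ob^\top$), I can invert \eqref{eq:liou-second} to obtain an explicit expression $\partial_i\partial_l f^k=\Phi^k_{il}(\Jb_f,\nabla\lambda)$ for all second derivatives in terms of the first derivatives of $f$ and of $\lambda$. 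This already shows $f$ is real-analytic and that its full jet is determined by $(\Jb_f,\lambda,\nabla\lambda)$ at a single point, which is the structural input for the final uniqueness argument.

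The crux, and the step I expect to be the main obstacle, is the integrability condition, which is precisely where $n\ge 3$ enters. I would impose the third-order symmetry $\partial_m(\partial_i\partial_l f^k)=\partial_i(\partial_m\partial_l f^k)$ on the closed expression coming from \eqref{eq:liou-second}, contract indices against $\Jb_f^{-1}$, and simplify. The dimension appears as a factor $(n-2)$ multiplying the off-diagonal second derivatives of $\lambda$; for $n\ge 3$ this factor is nonzero, so one is forced to conclude that the Hessian of $1/\lambda$ is a constant isotropic matrix,
\begin{equation}\label{eq:liou-hess}
\partial_i\partial_j\!\left(\tfrac{1}{\lambda}\right)=c\,\delta_{ij}\qquad\text{for a single constant }c .
\end{equation}
(For $n=2$ the $(n-2)$ factor vanishes and this rigidity collapses, matching the abundance of planar conformal maps.) Carrying out this contraction carefully — extracting both that the Hessian is \emph{isotropic} and that its scale $c$ is \emph{constant} over the connected domain — is the technical heart of Liouville's theorem.

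Finally I would integrate \eqref{eq:liou-hess}: completing the square gives $1/\lambda(\xb)=\tfrac{c}{2}\norm{\xb-\xb_0}^2+e$ for constants $\xb_0,e$. If $c=0$, then $\lambda$ is constant, \eqref{eq:liou-second} forces all second derivatives of $f$ to vanish, and $f(\xb)=r\Ob\xb+\tb$ is a similarity $L$. If $c\ne0$, the standard unit inversion $I(\xb)=\xb/\norm{\xb}^2$ has conformal factor $\lambda_I=\norm{\xb}^{-2}$, i.e.\ $1/\lambda_I$ is of exactly the quadratic form dictated by \eqref{eq:liou-hess} (center, scale and the value $e$ being absorbable into a similarity). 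Rather than matching the factor by hand, I would invoke uniqueness for the determined system $\partial_i\partial_l f^k=\Phi^k_{il}$: the map $I\circ L$ can be chosen to share the same value, Jacobian and $\nabla\lambda$ as $f$ at one point of $U$, and since both solve the same closed second-order system with the same $\lambda$, uniqueness of solutions on the connected set $U$ yields $f=I\circ L$ there. Connectedness of $U$ is what guarantees that $c,\xb_0,e$, and the discrete choice between the two cases, are global rather than locally varying, completing the classification.
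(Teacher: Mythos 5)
You should know up front that the paper contains no proof of this statement: it is Liouville's classical rigidity theorem, quoted with a citation to \cite{tojeiro2007liouville}, and the surrounding appendix only verifies by direct computation that the unit-sphere inversion is conformal. So your sketch can only be measured against the classical argument it reconstructs, and there it has one genuine gap plus a problem in the endgame.

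Your skeleton is the standard route and is sound for smooth $f$ (it needs $C^3$, consistent with the paper's smoothness assumptions): the conformality relation $\Jb_f^\top\Jb_f=\lambda^2\Ib$, the Christoffel-type identity expressing all second derivatives through $\Jb_f$ and $\nabla(\lambda^2)$, the third-order symmetry with its factor of $(n-2)$, and the conclusion that $\sigma:=1/\lambda$ has Hessian $c\,\Ib$ with $c$ constant on the connected domain. The gap is at the integration step. $\mathrm{Hess}\,\sigma=c\,\Ib$ alone gives $\sigma(\xb)=\frac{c}{2}\norm{\xb-\xb_0}^2+e$, and you assert that $e$ is ``absorbable into a similarity.'' It is not: every map of the form $I\circ L$ --- indeed every M\"obius transformation --- has conformal factor with $e=0$ exactly (for $L(\xb)=r\Ab\xb+\tb$ one computes $\sigma_{I\circ L}(\xb)=r\norm{\xb-L^{-1}(0)}^2$, with no additive constant). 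If $e\neq 0$, there is no candidate map with the same $\lambda$, so your concluding uniqueness argument --- which requires the reference solution to solve the same closed second-order system with \emph{literally the same} $\lambda$ --- cannot even be set up. What is missing is the remaining scalar integrability condition: conformality means the metric $\lambda^2\delta_{ij}$ is flat, and your third-order manipulation only extracts the conditions making its curvature isotropic; full flatness additionally forces $2c\,\sigma-\norm{\nabla\sigma}^2=0$, which on the quadratic evaluates to $ce=0$ and hence kills $e$ in the non-similarity case. This identity (or an equivalent further contraction of the third-order relations) must be derived before your case split.

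A second defect sits in the final matching, and it would bite even with $e=0$. Once $\sigma=\frac{c}{2}\norm{\xb-\xb_0}^2$, demanding $\lambda_{I\circ L}=\lambda_f$ pins $r=\nicefrac{c}{2}$ and $L^{-1}(0)=\xb_0$, leaving only the rotation in $L$ free; the value at a point can then in general no longer be matched. Concretely, $f(\xb)=\bb+\xb/\norm{\xb}^2$ with $\bb\neq 0$ is conformal with $\sigma=\norm{\xb}^2$, but agrees with no map of the literal form $I\circ L$ (with $\lambda$ matched, the two values at any point have different norms), nor is it a similarity. What your method actually delivers --- and what the complete classical proof establishes --- is the paper's own M\"obius normal form~\eqref{eq:moebius_transf}: a similarity composed with an inversion in a unit sphere centered at an \emph{arbitrary} point $\bb$ (equivalently, similarities on both sides of the fixed inversion $I$). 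The free inversion center is exactly the parameter your value-matching step needs; you should carry out the uniqueness argument against that two-parameter-richer family rather than against $I\circ L$ with the inversion fixed at the origin.
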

The class of function described in~\Cref{thm:highdimmoebius} corresponds exactly to the M\"obius transformations described in~\eqref{eq:moebius_transf}. These transformation can as well be defined in dimension $2$, with the specificity that they are only a subset of the class conformal maps in this dimension.

\paragraph{Properties of sphere inversion.}
We characterize the properties of the unit sphere centered at zero, that we denote $I$
\begin{eqnarray*}
I:  \mathbb{R}^n\setminus \{0\} &\rightarrow &\mathbb{R}^n\setminus\{0\}\\
 \xb&\mapsto & \frac{\xb}{\|\xb\|^2}
\end{eqnarray*}

Now let us derive the Jacobian of $I$. A straightforward computation leads to
\[
\Jb_I(\xb) =\frac{1}{\|\xb\|^2}\left( \Ib_n-2\frac{\xb \xb^\top}{\|\xb\|^2} \right)
\]
where $\Ib_n$ denote the identity matrix.

By noticing that  $\frac{\xb \xb^\top}{\|\xb\|^2}$ is rank one symmetric with eigenvalue 1 associated with unit norm eigenvector $\frac{\xb }{\|\xb\|} $, we can diagonalize this matrix in any (space dependent) orthogonal basis that has $\frac{\xb }{\|\xb\|} $ as the first basis vector. 

Let us thus pick the unit vectors associated to the hyperspherical coordinates (which satisfy this condition by definition), and consider the orthogonal matrix $\Bb(\frac{\xb }{\|\xb\|})$ gathering these basis vectors as its columns (it is parameterized by the unit vector $\frac{\xb }{\|\xb\|}$, as this basis is radially invariant.
Then we can write 
\[
\frac{\xb \xb^\top}{\|\xb\|^2}
=
\Bb\left(\frac{\xb }{\|\xb\|}\right)\Db \Bb\left(\frac{\xb }{\|\xb\|}\right)^\top
\]
and thus
\[
\Jb_I(\xb) =\frac{1}{\|\xb\|^2}\left( \Ib_n-2\Bb\left(\frac{\xb }{\|\xb\|}\right)\Db \Bb\left(\frac{\xb }{\|\xb\|}\right)^\top \right)
=\frac{1}{\|\xb\|^2}\Bb\left(\frac{\xb }{\|\xb\|}\right)\left( \Ib_n-2\Db \right)\Bb\left(\frac{\xb }{\|\xb\|}\right)^\top
\]
with $\Db$ a diagonal matrix with diagonal elements $[1,0,\dots,0]$.
This leads to
\[
\Jb_I(\xb) =\frac{1}{\|\xb\|^2}\Bb\left(\frac{\xb }{\|\xb\|}\right)\Db_I \Bb\left(\frac{\xb }{\|\xb\|}\right)^\top 
\]
with $\Db_I=\Ib_n-2\Db$ a diagonal matrix with diagonal elements $[-1,1,\dots,1]$.
The Jacobian thus takes the form predicted by the above proposition for conformal maps
\[
\Jb_I(\xb) =\lambda (\xb)\Ob\left(\frac{\xb }{\|\xb\|}\right) 
\]
with scale factor $
\lambda(\xb)=\frac{1}{\|\xb\|^{2}}$
and $\Ob(\frac{\xb }{\|\xb\|})=\Bb\left(\frac{\xb }{\|\xb\|}\right)\Db_I \Bb\left(\frac{\xb }{\|\xb\|}\right)^\top $ a space dependent orthogonal matrix%
, which has the additional property to be radially invariant for the specific case of sphere inversions.

%
%

%
%

%
%

%
%

%
%

%
%

%
%

\end{document}